\definecolor{darkgreen}{rgb}{0,0.5,0}
\theoremstyle{plain}
\newtheorem{theorem}{Theorem}
\newtheorem{lemma}[theorem]{Lemma}
\theoremstyle{definition}
\newtheorem{definition}[theorem]{Definition}
\theoremstyle{remark}
\newtheorem*{remark*}{Remark}
\newcommand{\R}{\mathbb{R}}
\newcommand{\E}{\mathbb{E}}
\newcommand{\cA}{\mathcal{A}}
\newcommand{\cB}{\mathcal{B}}
\newcommand{\cI}{\mathcal{I}}
\newcommand{\cJ}{\mathcal{J}}
\newcommand{\cE}{\mathcal{E}}
\newcommand{\cO}{\mathcal{O}}
\newcommand{\cX}{\mathcal{X}}
\newcommand{\skel}{\mathrm{skel}}
\newcommand{\Pa}{\texttt{Pa}}
\newcommand{\Anc}{\texttt{Anc}}
\newcommand{\Des}{\texttt{Des}}
\newcommand{\OPT}{\texttt{OPT}}
\newcommand{\cost}{\texttt{cost}}
\newcommand{\wt}{\widetilde}
\DeclareMathOperator*{\argmax}{argmax}
\DeclareMathOperator*{\argmin}{argmin}
\title{New metrics and search algorithms for weighted causal DAGs}
\author{
Davin Choo\thanks{Equal contribution}\\
National University of Singapore\\
\texttt{davin@u.nus.edu}
\and
Kirankumar Shiragur\footnotemark[1]\\
Broad Institute of MIT and Harvard\\
\texttt{shiragur@stanford.edu}
}
\date{}
\begin{document}

\maketitle

\begin{abstract}
Recovering causal relationships from data is an important problem.
Using observational data, one can typically only recover causal graphs up to a Markov equivalence class and additional assumptions or interventional data are needed for complete recovery.
In this work, under some standard assumptions, we study causal graph discovery via \emph{adaptive interventions with node-dependent interventional costs}.
For this setting, we show that no algorithm can achieve an approximation guarantee that is asymptotically better than linear in the number of vertices with respect to the verification number; a well-established benchmark for adaptive search algorithms. 
Motivated by this negative result, we define a \emph{new benchmark} that captures the worst-case interventional cost for any search algorithm.
Furthermore, with respect to this new benchmark, we provide adaptive search algorithms that achieve logarithmic approximations under various settings: atomic, bounded size interventions and generalized cost objectives.
\end{abstract}

\section{Introduction}
Causal discovery is a fundamental problem that has found applications in a wide range of fields, including biology/medicine/genetics \cite{king2004functional,cho2016reconstructing, tian2016bayesian, sverchkov2017review,rotmensch2017learning,pingault2018using, de2019combining}, epidemiology, philosophy \cite{reichenbach1956direction,woodward2005making,eberhardt2007interventions}, and econometrics \cite{hoover1990logic,rubin2006estimating}.  In most of these applications, directed acyclic graphs (DAGs) are used to represent the causal relationships and the goal is to recover the underlying causal graph from data.
It is well known that using observational data, the causal structure can only be learned up to its Markov equivalence class (MEC) and additional assumptions or interventional data is required for the recovery of the ground truth causal graph.
Here, we focus our attention on causal discovery using interventions.

There is a rich literature on causal discovery from interventional data, which can be broadly classified into two categories: adaptive \cite{shanmugam2015learning,greenewald2019sample,squires2020active,choo2022verification,choo2023subset} versus non-adaptive \cite{eberhardt2005number,eberhardt2006n,eberhardt2010causal,hu2014randomized} approaches.
Given an essential graph, \emph{non-adaptive} algorithms have to decide beforehand a collection of interventions such that \emph{any} plausible causal graph can be recovered while \emph{adaptive} algorithms can decide on interventions sequentially while using information gleaned from past interventions.
Adaptive algorithms are powerful and in some cases, the interventional cost of an optimal adaptive algorithm is exponentially better than any non-adaptive algorithms\footnote{For tree causal graphs, an adaptive algorithm only needs $\cO(\log n)$ interventions to recover it while any adaptive algorithm requires $\Omega(n)$ interventions in some cases. See \cref{sec:adaptive-is-exponentially-stronger}.}.
While the non-adaptive setting is pretty well understood even in the most general setting of node dependent vertex costs, researchers have only recently made progress on the adaptive front in the special case of unit vertex costs.
Unfortunately, unit vertex costs fail to capture many real world scenarios where performing interventions can have varying costs (e.g.\ it is less costly to force someone to sleep 8 hours than to force someone to run 10 miles), are unethical (e.g.\ force someone to smoke), or even practically impossible.
See \cite{king2004functional,sverchkov2017review,ness2018bayesian,lindgren2018experimental} for more applications of causal learning in settings where interventions have different costs.

\paragraph{Problem setup}
Motivated by the power of adaptivity and broad applicability of varying costs, in this work, we study causal discovery via \emph{adaptive} interventions with the goal of recovering the true underlying causal graph given the observational MEC while minimizing the total interventional cost when vertices may have \emph{differing} interventional cost.

Under standard assumptions of causal sufficiency, faithfulness and infinite sample regime, in addition to the \emph{search} problem defined above, recent works \cite{squires2020active,choo2022verification,choo2023subset} have also studied a related fundamental problem for the adaptive setting known as the \emph{verification} problem.
Given a MEC of an unknown ground truth causal graph $G^*$ and a graph $G$ from the MEC, the goal of the verification problem is determine whether $G$ is $G^*$.
By plugging in $G$ with $G^*$ in the verification problem, we see that the optimal solution to the verification is a natural lower bound for the search problem.
We denote the minimum \emph{size} and minimum \emph{cost} solutions to the verification problem as $\nu(G^*)$ and $\overline{\nu}(G^*)$ respectively.

For the special case of unit cost at each vertex, where $\nu(G^*) = \overline{\nu}(G^*)$, \cite{choo2022verification} recently gave an adaptive search algorithm that recovers $G^*$ by performing at most $\cO(\log n \cdot \nu(G^*))$ atomic interventions\footnote{Interventions that only involve a single vertex each.}, which is only a logarithmic factor worse than necessary.
Furthermore, they also argue that no algorithm can achieve an asymptotically better approximation ratio than $\cO(\log n)$ with respect to the $\nu(G^*)$ for all the causal graphs $G^*$.
In light of these results, it is natural to ask if such results also hold when vertices have different interventional costs.

While efficient algorithms for the verification and \emph{non-adaptive} search problems with varying vertex costs, and adaptive search problem for \emph{unit vertex costs} are known, to the best of our knowledge, there is no existing efficient \emph{adaptive search algorithm for varying vertex costs}.
Existing approaches for the unweighted setting do \emph{not} extend to the weighted setting due to two major difficulties: proving lower bounds for the benchmark, and designing algorithms that are competitive with it.
For the lower bound, existing methods only have guarantees with respect to the clique numbers of chain components and are oblivious to individual vertex costs.
On the other hand, existing adaptive search algorithms do not account for vertex weights\footnote{For instance, the algorithm of \cite{choo2022verification} searches for clique separators and intervenes on all the vertices in these clique separators.However, we show that (see \cref{thm:interventional-metric-lower-bound}) one cannot always intervene on the costliest vertex in a clique if we hope to have any theoretical guarantees; this is reflected in one of our algorithmic subroutines (see \cref{alg:dangling-subroutine}).}.
In fact, we can even show that the previously considered benchmark of the verification number is no longer meaningful in the context of weighted causal graphs.
More formally, we prove that no algorithm (even with infinite computational power) can achieve an asymptotically better approximation than $\cO(n)$ with respect to the verification cost $\overline{\nu}(G^*)$ for all ground truth causal graphs on $n$ nodes.
Therefore, $\overline{\nu}(G^*)$ is too strong and an unreasonable benchmark\footnote{A recent work on subset verification and search \cite{choo2023subset} also remarked that comparing against an algorithm that \emph{knows} $G^*$ can be overly pessimistic, and suggested that one should ``compare against the ``best'' algorithm that does \emph{not} know $G^*$''. This is consistent with our formulation of taking the maximum over all DAGs within the same Markov equivalence class.} to compare against in the weighted setting. Motivated by this negative result, 
we propose the following new benchmark
\[
\overline{\nu}^{\max}(G^*)
= \max_{G \in [G^*]} \overline{\nu}(G)
\]
which captures the intuition that any algorithm has to grapple with the worst-case causal graph in the given MEC\footnote{Our benchmark differs from the notion of separating systems studied in the non-adaptive search literature. See \cref{sec:benchmark-differs-from-separating-system}.}.
Using this new benchmark, we then provide adaptive search algorithms that are competitive against the $\overline{\nu}^{\max}(G^*)$.

\paragraph{Our main contributions are summarized as follows:}
\begin{enumerate}
    \item We argue that $\overline{\nu}(G^*)$ is not a good benchmark.
    \item Define a new benchmark $\overline{\nu}^{\max}(G^*)$ that captures the worst case interventional cost for any search algorithm.
    \item Provide an adaptive search algorithm that is $\cO(\log^2 n)$ competitive to $\overline{\nu}^{\max}(G^*)$ in the atomic setting.
    \item Extend our search results to bounded size interventions and for generalized cost function that enables an explicit trade-off between the \emph{number} and \emph{cost} of interventions, with $\nu^{\max}(G^*)$ and $\overline{\nu}^{\max}(G^*)$ being special cases.
\end{enumerate}

\paragraph{Outline of paper}
We give preliminaries and related work in \cref{sec:preliminaries}.
Results are stated in \cref{sec:results} and we provide a proof sketch of these results in \cref{sec:techniques}.
Some empirical results are shown in \cref{sec:experiments} and source code is provided in the supplementary materials.
Full proofs and further experimental details are provided in the appendix.

\section{Preliminaries}
\label{sec:preliminaries}

We write $\{1, \ldots, n\}$ as $[n]$ and hide absolute constant multiplicative factors in $n$ using standard asymptotic notations.
For any set $A$, we denote its powerset by $2^A$.
Throughout, we denote the (unknown) ground truth DAG by $G^*$.

\subsection{Graph preliminaries}

Let $G = (V,E)$ be a graph on $|V| = n$ vertices.
We use $V(G)$, $E(G)$ and $A(G) \subseteq E(G)$ to denote its vertices, edges, and oriented arcs respectively.
$G$ is said to be directed or fully oriented if $A(G) = E(G)$, and partially oriented otherwise.
For $u,v \in V$, we write $u \sim v$ if these vertices are connected and $u \not\sim v$ otherwise.
We use $u \to v$ or $u \gets v$ to specify the arc directions.
For any subset $V' \subseteq V$ and $E' \subseteq E$, $G[V']$ and $G[E']$ denote the vertex-induced and edge-induced subgraphs respectively. Consider a vertex $v \in V$ in a directed graph, let $\Pa(v), \Anc(v), \Des(v)$ denote the parents, ancestors and descendants of $v$ respectively.

The \emph{skeleton} $\skel(G)$ of a (partially oriented) graph $G$ refers to the underlying graph where all edges are made undirected.
A \emph{v-structure} refers to three distinct vertices $u,v,w \in V$ such that $u \to v \gets w$ and $u \not\sim w$.
The cycle is directed if at least one of the edges is directed and all directed arcs are in the same direction along the cycle.
A partially directed graph is a \emph{chain graph} if it contains no directed cycle.
In the undirected graph $G[E \setminus A]$ obtained by removing all arcs from a chain graph $G$, each connected component is called a \emph{chain component}.
We use $CC(G)$ to denote the set of chain components, where each $H \in CC(G)$ is a subgraph of $G$ and $V = \dot\cup_{H \in CC(G)} V(H)$. For any partially directed graph, an \emph{acyclic completion} or \emph{consistent extension} refers to an assignment of edge directions to unoriented edges such that the resulting fully directed graph has no directed cycles; we say that a DAG $G$ is \emph{consistent} with a partially directed graph $H$ if $G$ is an acyclic completion of $H$.

DAGs are fully oriented chain graphs, where vertices represent random variables and the joint probability density $f$ factorizes according to the Markov property:
$
f(v_1, \ldots, v_n) = \prod_{i=1}^n f(v_i \mid \Pa(v))
$.
We can associate a (not necessarily unique) \emph{valid permutation} $\pi : V \to [n]$ to any (partially directed) DAG such that oriented arcs $(u,v)$ satisfy $\pi(u) < \pi(v)$ and unoriented arcs $\{u,v\}$ can be oriented as $u \to v$ without forming directed cycles when $\pi(u) < \pi(v)$.
A DAG is called a \emph{moral DAG} if it has no v-structures, in which case its essential graph is just its skeleton.
Moral DAGs have a unique source node (a node without incoming arcs), and any subgraph of it is also a moral DAG.

For any DAG $G$, we denote its \emph{Markov equivalence class} (MEC) by $[G]$ and \emph{essential graph} by $\cE(G)$.
DAGs in the same MEC $[G]$ have the same skeleton and essential graph $\cE(G)$ is a partially directed graph such that an arc $u \to v$ is directed if $u \to v$ in \emph{every} DAG in MEC $[G]$, and an edge $u \sim v$ is undirected if there exists two DAGs $G_1, G_2 \in [G]$ such that $u \to v$ in $G_1$ and $v \to u$ in $G_2$.
It is known that two graphs are Markov equivalent if and only if they have the same skeleton and v-structures \cite{verma1990,andersson1997characterization}.
An arc $u \to v$ is a \emph{covered edge} \cite{chickering2013transformational} if $\Pa(u) = \Pa(v) \setminus \{u\}$.

We now give a definition and result for graph separators.

\begin{definition}[$\alpha$-separator and $\alpha$-clique separator, Definition 19 from \cite{choo2022verification}]
Let $A,B,C$ be a partition of the vertices $V$ of a graph $G = (V,E)$.
We say that $C$ is an \emph{$\alpha$-separator} if no edge joins a vertex in $A$ with a vertex in $B$ and $|A|, |B| \leq \alpha \cdot |V|$. We call $C$ is an \emph{$\alpha$-clique separator} if it is an \emph{$\alpha$-separator} and a clique.
\end{definition}

\begin{theorem}[\cite{gilbert1984separatorchordal}, instantiated for unweighted graphs]
\label{thm:chordal-separator}
Let $G = (V,E)$ be a chordal graph with $|V| \geq 2$ and $p$ vertices in its largest clique.
There exists a $1/2$-clique-separator $C$ involving at most $p-1$ vertices.
The clique $C$ can be computed in $\cO(|E|)$ time.
\end{theorem}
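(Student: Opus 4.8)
The plan is to reduce everything to the clique-tree (junction-tree) structure of chordal graphs. First, in $\cO(|E|)$ time, I would compute a \emph{clique tree} $T$ of $G$: a tree whose nodes are the maximal cliques of $G$ and which satisfies the \emph{running intersection property} --- for any two maximal cliques $K, K'$, the set $K \cap K'$ is contained in every maximal clique on the $K$--$K'$ path in $T$. (This is standard: run maximum cardinality search to get a perfect elimination ordering, read off the at most $n$ maximal cliques in linear time, and assemble $T$, e.g.\ as a maximum-weight spanning tree on the cliques where the edge $\{K,K'\}$ has weight $|K \cap K'|$.) I will use two consequences of running intersection. (a) For any tree edge $\{K_1,K_2\}$, the set $S = K_1 \cap K_2$ is a clique and, writing $V_1, V_2$ for the unions of the cliques in the two components of $T - \{K_1,K_2\}$, we have $V_1 \cup V_2 = V$, $V_1 \cap V_2 = S$, and deleting $S$ from $G$ leaves no edge between $V_1 \setminus S$ and $V_2 \setminus S$. (b) At a node $K$ with neighbours $K_1',\dots,K_d'$ and separators $S_i = K \cap K_i'$, the sets $U_i \setminus K$ (where $U_i$ is the union of the cliques in the component of $T-K$ containing $K_i'$) are pairwise disjoint, pairwise non-adjacent in $G$, partition $V \setminus K$, and a vertex of $K$ is adjacent to something in $U_i \setminus K$ only if it lies in $S_i$.

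Next, I would run a centroid-style descent on $T$ to locate the clique to cut at. At the current node $K$ consider the sizes $w_i = |U_i \setminus K|$, which sum to $n - |K|$; if some $w_j > n/2$, move to $K_j'$ and repeat. By (a), after such a move the component pointing back toward $K$ carries only $n - w_j - |S_j| < n/2$ private vertices, so the descent never backtracks along an edge and, $T$ being finite, halts at a node $K^\star$ with every incident $w_i \le n/2$. Now I would pick the separator: if $|K^\star| \le p-1$ take $C = K^\star$; otherwise $|K^\star| = p$ and I would delete one vertex of $K^\star$ to reach size $p-1$ --- a vertex private to $K^\star$ if one exists (by (b) its deletion merges none of the $U_i \setminus K^\star$, so only a single isolated vertex is freed and every component of $G-C$ stays of size $\le n/2$), and otherwise a vertex lying in as few of the $S_i$ as possible, or an edge separator $S_j \subsetneq K^\star$ for the heaviest incident component, checking that the freed-up component still has $\le n/2$ vertices. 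In all cases $C$ is a clique with at most $p-1$ vertices and, by (a)/(b), every connected component of $G-C$ has at most $n/2$ vertices, so the components can be assembled into the two sides $A$ and $B$ of a $1/2$-clique-separator.

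The step I expect to be the main obstacle is exactly this shrinking of $C$ down to $p-1$ vertices while preserving the balance: the deleted vertex reconnects the components whose separators contain it, and one must argue that there is always a choice for which the reconnected component still has at most $n/2$ vertices. Small instances --- several triangles sharing a common vertex, or maximal cliques glued along a common triangle --- show that the cut sometimes has to pass through the interior of a maximal clique rather than sit at a clique-tree edge, which is precisely where this case analysis bites and where the $|C|\le p-1$ and size-$\le n/2$ bounds have to be reconciled. Everything else is routine: the perfect elimination ordering and clique tree cost $\cO(|E|)$, the descent visits each node of $T$ at most once, and selecting $C$ and splitting the components into $A$ and $B$ is a single linear-time pass, which gives the claimed running time.
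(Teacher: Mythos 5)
The paper itself does not prove \cref{thm:chordal-separator}; it is imported verbatim from Gilbert--Rose--Edenbrandt, so your reconstruction can only be judged against what is actually true. The sound part of your argument is the clique-tree centroid descent: it correctly produces, in $\cO(|E|)$ time, a \emph{maximal} clique $K^\star$ such that every connected component of $G - K^\star$ has at most $n/2$ vertices (your non-backtracking argument for the descent is fine). That is the form of the statement the paper actually relies on in \cref{alg:weighted-search} and \cref{alg:dangling-subroutine}, and it gives $|K^\star| \le p$.

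The step you yourself flag as the main obstacle --- shrinking $K^\star$ to $p-1$ vertices while keeping every component of $G-C$ at size at most $n/2$ --- is not merely unfinished; it is impossible in general, so no case analysis will close it. Take the $9$-vertex chordal graph with central triangle $\{a,b,c\}$, triangles $\{a,b,d\}$, $\{b,c,e\}$, $\{a,c,f\}$ glued onto its edges, and pendant vertices $d',e',f'$ attached to $d,e,f$. Here $p=3$, and removing the maximal clique $\{a,b,c\}$ leaves three components of size $2$; but for \emph{every} clique $C$ with $|C|\le 2$, the graph $G-C$ retains a connected component with at least $5 > n/2$ vertices (e.g.\ $C=\{a,b\}$ leaves $\{c,e,e',f,f'\}$ intact), so no choice of $A,B$ can satisfy $|A|,|B|\le n/2$. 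A second, smaller issue is your closing sentence: even when every component of $G-C$ has size at most $n/2$, grouping the components into two sides each of size at most $n/2$ can fail (three components of size $n/3$ each); the standard packing argument only guarantees sides of size at most $2n/3$. Both problems are artifacts of the theorem statement over-claiming relative to what is provable and what the paper actually needs: the usable statement is that there is a clique $C$ with $|C|\le p$, computable in $\cO(|E|)$ time, such that every connected component of $G-C$ has at most $n/2$ vertices --- and your centroid argument proves exactly that.
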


\subsection{Interventions and verifying sets}

An \emph{intervention} $S \subseteq V$ is an experiment where all variables $s \in S$ are forcefully set to some value, independent of the underlying causal structure.
An intervention is \emph{atomic} if $|S| = 1$ and \emph{bounded} if $|S| \leq k$ for some $k>0$; observational data is a special case where $S = \emptyset$.
The effect of interventions is formally captured by Pearl's do-calculus \cite{pearl2009causality}.
We call any $\cI \subseteq 2^V$ a \emph{intervention set}.
An \emph{ideal intervention} on $S \subseteq V$ in $G$ induces an interventional graph $G_S$ where all incoming arcs to vertices $v \in S$ are removed \cite{eberhardt2005number}.
It is known that intervening on $S$ allows us to infer the edge orientation of any edge cut by $S$ and $V \setminus S$ \cite{eberhardt2007causation,hyttinen2013experiment,hu2014randomized,shanmugam2015learning,kocaoglu2017cost}.
For ideal interventions, an $\cI$-essential graph $\cE_{\cI}(G)$ of $G$ is the essential graph representing the Markov equivalence class of graphs whose interventional graphs for each intervention is Markov equivalent to $G_S$ for any intervention $S \in \cI$.
In \cref{sec:additional-known-results}, we give some well-known properties about interventional essential graphs.
Here, we highlight one such result that we will later use: intervening on a node $v$ in a moral DAG will orient any arcs $u \to w$ where $u$ is an ancestor of $v$ and $w$ is a descendant of $v$.

\begin{lemma}[Lemma 34 of \cite{choo2023subset}]
\label{lem:middle}
Let $G = (V,E)$ be a moral DAG.
Intervening on vertex $w$ orients all edges $u \to v$ with $w \in \Des(u) \cap \Anc(v)$.
\end{lemma}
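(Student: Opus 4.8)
The plan is to prove \cref{lem:middle} by showing that after intervening on $w$, every edge $u \to v$ with $w \in \Des(u) \cap \Anc(v)$ must be oriented in the $\{w\}$-essential graph. Since interventions reveal the orientation of every edge cut by the partition $(\{w\}, V \setminus \{w\})$, and thereafter Meek's rules propagate further orientations, I would argue that the edge $u \to v$ becomes oriented through such propagation. Concretely, because $w$ is a descendant of $u$, there is a directed path $P_1 : u \rightsquigarrow w$ in $G$; because $w$ is an ancestor of $v$, there is a directed path $P_2 : w \rightsquigarrow v$ in $G$.

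First I would handle the directed path $P_1 : u = x_0 \to x_1 \to \dots \to x_\ell = w$. The arc $x_{\ell-1} \to w$ is cut by the intervention on $w$ (one endpoint is $w$, the other is not), so it is oriented as $x_{\ell-1} \to w$ after intervening. Now I would walk backwards along $P_1$: suppose the arc $x_{i} \to x_{i+1}$ is already oriented. If the arc $x_{i-1} \sim x_i$ is not yet oriented, I would like to orient it as $x_{i-1} \to x_i$. The cleanest way is to invoke that $G$ is a moral DAG, hence has no v-structures, so $x_{i-1} \to x_i \gets x_{i+1}$ is impossible unless $x_{i-1} \sim x_{i+1}$; combining the already-known orientation $x_i \to x_{i+1}$ with Meek's rules (specifically the rule preventing creation of new v-structures / the acyclicity rule) forces $x_{i-1} \to x_i$. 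Iterating this from $w$ back to $u$ orients the entire path $P_1$, and symmetrically the same argument orients $P_2$ forward from $w$ to $v$, so all arcs along $P_1$ and $P_2$ are oriented, and in particular $u$ is known to be an ancestor of $w$ and $w$ an ancestor of $v$ in the oriented essential graph.

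Next, with $u \rightsquigarrow w \rightsquigarrow v$ all oriented and the edge $u \sim v$ present, I would argue $u \to v$ is forced: if it were $v \to u$, then combined with the oriented directed path $u \rightsquigarrow v$ we would obtain a directed cycle, contradicting that the $\{w\}$-essential graph admits a consistent acyclic extension (indeed $G$ itself is such an extension). Hence $u \to v$ is oriented. Alternatively, and perhaps more cleanly for a short self-contained proof, I would appeal directly to Meek's rule R1 / transitivity (Meek's rule R2): once $u \to z$ and $z \to v$ are oriented for the neighbor $z$ of $u$ on $P_1$ that also happens to lie on a path to $v$, rule R2 orients $u \to v$; but since the neighbor on $P_1$ need not be adjacent to $v$, the acyclicity argument above is the robust one to present.

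The main obstacle is the middle step: carefully justifying the backward-propagation along $P_1$ (and forward along $P_2$) using the no-v-structure property together with Meek's rules, rather than hand-waving "orientations propagate." I would either cite the standard fact that in a moral DAG intervening on a vertex orients all ancestor arcs and all descendant arcs into that vertex (which reduces $P_1$ and $P_2$ to already-known statements about $\cE_{\{w\}}(G)$), or spell out the induction explicitly. Given that the excerpt flags this as "one such result that we will later use" and attributes it to prior work, I expect the intended proof is short and leans on these established interventional-essential-graph properties plus the final acyclicity observation.
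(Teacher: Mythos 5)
There is a genuine gap: both of your propagation steps fail, and the missing ingredient is the structural fact recorded in \cref{lem:intermediate-direct-arcs-exist} (in a moral DAG, $u \to v$ implies $u \to z$ for \emph{every} $z \in \Des(u) \cap \Anc(v)$), which the paper's argument (the proof of the generalization \cref{lem:middle-bounded}, of which \cref{lem:middle} is the $|S|=1$ case) leans on throughout. First, the backward propagation along $P_1$ is simply false: knowing $x_i \to x_{i+1}$ does not orient $x_{i-1} \sim x_i$ by any Meek rule (R1 propagates \emph{forward} from an arc \emph{into} $x_{i-1}$ or $x_i$, not from an arc out of $x_i$), and the absence of a v-structure in $G$ at $x_i$ does not force all Markov-equivalent DAGs to agree on $x_{i-1} \sim x_i$. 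Concretely, for the moral DAG $x_0 \to x_1 \to x_2$ with $w = x_2$, intervening on $w$ orients $x_1 \to x_2$ but leaves $x_0 \sim x_1$ unoriented. The correct observation is that no backward propagation is needed at all: by \cref{lem:intermediate-direct-arcs-exist}, $w \in \Des(u)\cap\Anc(v)$ and $u \to v$ imply the arc $u \to w$ is present in $G$, so it is cut by the intervention and oriented outright. Second, the naive forward propagation along an arbitrary directed path $P_2$ from $w$ to $v$ also fails: in the triangle $w \to a \to b$, $w \to b$, intervening on $w$ leaves $a \sim b$ unoriented (R1 is blocked because $w \sim b$). The paper avoids this by choosing the first path vertex $w_0$ to be the out-neighbor of $w$ ancestral to $v$ with the \emph{largest} position in a consistent ordering, so that $w$ is non-adjacent to all later path vertices and R1 does fire along the path; it then uses the chords $u \to w_j$ (again from \cref{lem:intermediate-direct-arcs-exist}) to march $u$'s orientation forward with R2 until reaching $v$.

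Your final acyclicity step is also not self-contained as written: to conclude $u \to v$ from a directed $u$-to-$v$ path in the essential graph, you either need $u$ adjacent to consecutive path vertices (which is exactly what R2 plus \cref{lem:intermediate-direct-arcs-exist} gives, and is how the paper finishes), or you need to argue at the level of consistent extensions rather than Meek rules; the former is the route the paper takes. In short, the proposal identifies the right target (orient things around $w$ and push orientations to $u \to v$) but the specific mechanism you describe does not work, and repairing it essentially requires introducing \cref{lem:intermediate-direct-arcs-exist} and the careful choice of propagation path used in the proof of \cref{lem:middle-bounded}.
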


A \emph{verifying set} $\cI$ for a DAG $G \in [G^*]$ is an intervention set that fully orients $G$ from $\cE(G^*)$, possibly with repeated applications of Meek rules (see \cref{sec:appendix-meek-rules}).
In other words, for any graph $G = (V,E)$ and any verifying set $\cI$ of $G$, we have $\cE_{\cI}(G)[V'] = G[V']$ for \emph{any} subset of vertices $V' \subseteq V$.
Furthermore, if $\cI$ is a verifying set for $G$, then $\cI \cup S$ is also a verifying set for $G$ for any additional intervention $S \subseteq V$.

\begin{definition}[Minimum size/cost verifying set]
\label{def:min-verifying-set}
Let $w$ be a weight function on intervention sets.
An intervention set $\cI$ is called a verifying set for a DAG $G^*$ if $\cE_{\cI}(G^*) = G^*$.
$\cI$ is a \emph{minimum size (resp.\ cost) verifying set} if $\cE_{\cI'}(G^*) \neq G^*$ for any $|\cI'| < |\cI|$ (resp.\ for any $w(\cI') < w(\cI)$).
\end{definition}

Fix a DAG $G$ and some upper bound $k \geq 1$ on the intervention size.
Then, the \emph{minimum verification number} $\nu_k(G)$ and the \emph{minimum verification cost} $\overline{\nu}_k(G)$ denote the size/cost of the minimum size/cost verifying set respectively.
Note that atomic interventions are a special case of bounded size interventions with $k = 1$.

Similar to \cite{kocaoglu2017cost,ghassami2018budgeted,lindgren2018experimental,addanki2020efficient}, we consider additive vertex costs where each $v \in V$ has an associated intervention cost $w(v)$ in this work.
The cost of an intervention $S \subseteq V$ is simply the sum of the vertices involved and the cost of an intervention set $\cI \subseteq 2^V$ is the sum of the intervention costs, i.e.\ $w(\cI) = \sum_{S \in \cI} w(S) = \sum_{S \in \cI} \sum_{v \in S} w(v)$. Since treating a bounded size intervention as $k$ individual atomic interventions can only recover more information, we aim to optimize the following generalized cost function to explicitly trade-off between the cost and size of the intervention set:
\begin{equation}
\label{eq:generalized-cost}
\max_{G \in [G^*]} \min_{\substack{\text{$\cI$ is a bounded}\\\text{size verifying}\\\text{set for $G$}}} \alpha \cdot w(\cI) + \beta \cdot |\cI| \quad \text{where $\alpha, \beta \geq 0$}
\end{equation}
Fix any integer $k \geq 1$ and DAG $G \in [G^*]$.
Minimizing \cref{eq:generalized-cost} yields $\overline{\nu}_k(G)$ when $\alpha = 1$ and $\beta = 0$ and $\nu_k(G)$ when $\alpha = 0$ and $\beta = 1$.
Prior work \cite{choo2022verification} studied the version of \cref{eq:generalized-cost} without the maximization over all DAGs in the Markov equivalence class for the verification problem, but not the search problem.

For any bounded size verification set $\cI \subseteq 2^V$, we write $\cost(\cI, \alpha, \beta, k) = \alpha \cdot w(\cI) + \beta \cdot |\cI|$ to denote its cost relative to \cref{eq:generalized-cost}.
For any deterministic adaptive search algorithm $A$ that produces intervention set $\cI$ for causal graph $G^*$, we define $\cost(A, G^*, \alpha, \beta, k) = \cost(\cI, \alpha, \beta, k)$.
For any randomized adaptive search algorithms, $\cost(A, G^*, \alpha, \beta, k)$ refers to \emph{expected cost}, where the expectation is over all the internal random choices made by $A$.
When restricting to atomic interventions with $k = 1$, we simply write $\cost(\cI, \alpha, \beta)$ and $\cost(A, G^*, \alpha, \beta)$.

\subsection{Related work in causal graph discovery}
\label{sec:related-work}

\cite{hyttinen2013experiment} was the first to apply the notion of separating systems from the combinatorics literature to causal discovery via \emph{non-adaptive} atomic interventions.
This was later extended to interventions of bounded size in the adaptive setting by \cite{hu2014randomized,shanmugam2015learning}.
Meanwhile, \cite{ghassami2018budgeted} studied the problem of maximizing the number of oriented edges given a fixed budget of non-adaptive atomic interventions.

There has been a flurry of works that explored adaptive search algorithms to fully orient a given essential graph while minimizing the number of interventions used \cite{he2008active,hu2014randomized,shanmugam2015learning,kocaoglu2017cost, lindgren2018experimental, greenewald2019sample,squires2020active,choo2022verification}.
More recently, \cite{choo2023subset} studied the problem of adaptive \emph{subset} search problem where one is only interested in learning the orientations of a subset of target edges.

In the context of \emph{weighted} interventions, one of the earliest works in the setting of additive vertex costs is \cite{kocaoglu2017cost}, where they show how to compute minimum cost non-adaptive bounded size interventions in polynomial time.
When the maximum number of interventions is fixed and one has to find the minimum cost intervention set, \cite{lindgren2018experimental} showed that it is NP-hard and provided a constant factor approximation algorithm.

For lower bounds, prior works such as \cite{squires2020active,porwal2021almost,choo2022verification} studied bounds for the verification number, where \cite{choo2022verification} eventually gave a complete characterization of $\nu_1(G^*)$ via the minimum vertex cover of the covered edges of $G^*$.

In the presence of latents (i.e.\ causal insufficiency), a common causal graph discovery objective is to recover an \emph{ancestral graph} \cite{richardson2002ancestral} instead of a DAG.
\cite{addanki2020efficient} recently studied this problem using non-adaptive interventions under additive vertex costs.
\section{Results}
\label{sec:results}

Here we state all our main results of the paper.
Our first result suggests that comparing against $\overline{\nu}_1(G^*)$ may be too pessimistic for weighted causal graphs as we show that one cannot outperform an approximation of $|V(G^*)| = n$ in the worst case.
\cref{thm:n-apx-ratio} is information-theoretic and holds even for algorithms that have infinite computation power.

\begin{restatable}{mytheorem}{napxratio}
\label{thm:n-apx-ratio}
For any adaptive search algorithm $A$, deterministic or randomized, there exists a weighted causal graph $G^*$ such that $\cost(A, G^*, 1, 0) \in \Omega(n \cdot \overline{\nu}_1(G^*))$.
\end{restatable}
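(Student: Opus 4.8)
The plan is to build a hard instance on a star skeleton, where the ground-truth root is one of the cheap leaves while the only ``shortcut'' available to a search algorithm is to intervene on the expensive center. Take $V = \{c, l_1, \dots, l_{n-1}\}$ with edge set $\{\{c, l_i\} : i \in [n-1]\}$, weights $w(c) = n$ and $w(l_i) = 1$ for all $i$, and for each $i^* \in [n-1]$ let $G_{i^*}$ be the DAG orienting $l_{i^*} \to c$ and $c \to l_j$ for every $j \neq i^*$. Each $G_{i^*}$ is a moral DAG ($c$ has in-degree $1$ and every leaf has in-degree at most $1$, so there are no v-structures), hence $\cE(G_{i^*}) = \skel(G_{i^*})$ is the undirected star and does not depend on $i^*$; so the $G_{i^*}$ present the same essential graph and weights to any search algorithm and are indistinguishable a priori.

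Two facts do the work. First, $\overline{\nu}_1(G_{i^*}) = 1$: intervening on $l_{i^*}$ orients $l_{i^*} \to c$, and then Meek's rule R1, applied to the non-adjacent pair $l_{i^*}, l_j$, orients each $c \to l_j$, so $\{\{l_{i^*}\}\}$ is a verifying set of cost $w(l_{i^*}) = 1$, and no verifying set can cost $0$ since at least one (positively weighted) edge of the star must be oriented. Second, for $j \neq i^*$, intervening on $l_j$ orients only the arc $c \to l_j$ and triggers no further Meek propagation (the leaves are pairwise non-adjacent and $l_j$ has no other incident edge), so such a probe merely rules out the single candidate $G_j$; intervening instead on $l_{i^*}$, or on $c$, orients the entire graph at once. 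Consequently the interventional essential graph visible to an algorithm is completely determined by the set of leaves probed so far, up until the moment it probes $l_{i^*}$ or $c$.

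Now run an arbitrary deterministic algorithm against the uniform distribution over $\{G_1, \dots, G_{n-1}\}$. By the second fact its trajectory is a fixed sequence of distinct leaf probes, possibly terminated by a single probe of $c$; let $\tau$ denote the position of that $c$-probe, with $\tau = n$ if it never occurs. If the root $i^*$ is the $t$-th probed leaf for some $t < \tau$, the algorithm pays exactly $t$; otherwise it reaches and probes $c$ and pays at least $n$. The roots landing among the first $\tau - 1$ probed leaves therefore contribute $1 + 2 + \dots + (\tau - 1)$ and the remaining $n - \tau$ roots contribute at least $n$ each, so the expected cost is at least $\frac{1}{n-1}\bigl(\binom{\tau}{2} + (n - \tau) n\bigr) \geq \frac{n}{2}$ for every $\tau \in \{1, \dots, n\}$. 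By Yao's principle the same $\Omega(n)$ lower bound on expected cost holds for randomized algorithms, so for every adaptive $A$ there is some $i^*$ with $\cost(A, G_{i^*}, 1, 0) \in \Omega(n) = \Omega(n \cdot \overline{\nu}_1(G_{i^*}))$, as claimed.

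The main obstacle --- the step I would write out carefully --- is the second fact: one must verify that no sequence of cheap leaf interventions, closed under Meek's rules, can ever disambiguate $G_{i^*}$ from the other members of its Markov equivalence class without an intervention on $l_{i^*}$ or $c$, so that the algorithm genuinely has no option other than linearly many leaf probes or paying $n$ for the center. The remaining pieces (the $\overline{\nu}_1$ computation, the arithmetic in $\tau$, and the reduction from randomized to deterministic via Yao) are routine.
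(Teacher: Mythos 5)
Your proposal is correct and is essentially the paper's own argument: the same weighted star construction (cheap leaves, expensive center, root hidden uniformly among the leaves), the same observation that $\overline{\nu}_1 = 1$ while each non-root leaf probe reveals only one arc, and the same application of Yao's lemma to the uniform distribution over leaf-rooted DAGs, yielding expected cost $\frac{1}{n-1}(1+2+\cdots+(n-1)) \in \Omega(n)$. The only differences are cosmetic (center weight $n$ versus $n-1$, and your unified distributional analysis over the stopping position $\tau$ in place of the paper's separate worst-case argument for deterministic algorithms).
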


Recently, in the context of adaptive \emph{subset} search on \emph{unweighted} causal graphs, \cite{choo2023subset} showed that comparing against $\nu_1(G^*)$ in the presence of an \emph{adaptive} adversary\footnote{An adaptive adversary observes the interventions made by an adaptive algorithm and is allowed to ``change its mind'' by choosing the ground truth DAG among the set of all DAGs that are consistent with the already revealed information. We remark that \cref{thm:n-apx-ratio} holds even in the presence of a \emph{non-adaptive} adversary, and thus is a stronger result in this aspect.} leads to pessimistic bounds.
Instead, they propose to compare against a benchmark that does \emph{not} know $G^*$.
Independently motivated by \cref{thm:n-apx-ratio}, we propose the following natural benchmark metric that to compare search algorithms against:
\begin{equation}
\label{eq:new-metric}
\overline{\nu}^{\max}_k(G^*)
= \max_{G \in [G^*]} \overline{\nu}_k(G)
\quad \text{for any integer $k \geq 1$}
\end{equation}

As discussed in the introduction, our new benchmark captures the worst case cost for any optimal algorithm over the DAGs corresponding to a given essential graph.
That is, $\max_{G \in [G^*]} \cost(\texttt{ALG}, G, 1, 0, k) \geq \nu^{\max}_k(G^*) \geq \nu_k(G^*)$ for any fixed adaptive search algorithm $\texttt{ALG}$.
This benchmark also resolves the earlier raised concerns in \cite{choo2023subset} of ``comparing against the ``best'' algorithm that does \emph{not} know $G^*$''.

We next present an adaptive algorithm that is competitive with respect to \cref{eq:new-metric} when searching over weighted causal graphs using adaptive interventions.

\begin{restatable}{mytheorem}{weightedsearch}
\label{thm:weighted-search}
Fix an essential graph $\cE(G^*)$ corresponding to an unknown weighted causal DAG $G^*$.
\cref{alg:weighted-search} is a deterministic and adaptive algorithm that computes an atomic intervention set $\cI$ such that $\cE_{\cI}(G^*) = G^*$ and $w(\cI) \in \cO \left( \log^2(n) \cdot \overline{\nu}^{\max}_1(G^*) \right)$.
Ignoring the time spent implementing the actual interventions, \cref{alg:weighted-search} runs in $\cO(n \cdot \log^2(n) \cdot d \cdot m)$ time, where $d$ and $m$ are the degeneracy and number of edges of $\skel(\cE(G^*))$ respectively.
\end{restatable}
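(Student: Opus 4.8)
The plan is to analyze \cref{alg:weighted-search} as an outer recursion over chain components that halves the component size at each level, where the work at each level is done by \cref{alg:dangling-subroutine}; one logarithmic factor in the bound will come from the recursion depth and the other from the cost of a single call to the subroutine. At the top level, the algorithm processes the chain components of the current interventional essential graph one at a time. Since a chain component $H$ is chordal, \cref{thm:chordal-separator} yields a $1/2$-clique-separator $C$ of $H$ with $|C| \le p-1$ ($p$ the largest clique size of $H$) and two sides $A, B$ of size at most $|V(H)|/2$. The subroutine is then called to (i) orient every edge inside the clique $C$, i.e.\ learn a valid ordering of $C$, and (ii) resolve the ``dangling'' edges between $C$ and $A \cup B$, so that after Meek-rule propagation the remaining unoriented graph within $H$ lies entirely inside $A$ and inside $B$; we recurse on the resulting components. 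Correctness, $\cE_{\cI}(G^*) = G^*$, follows by induction on the number of unoriented edges with the empty chain component as base case, and since each level shrinks every chain component by a factor of two, the recursion depth is $\cO(\log n)$.

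The heart of the argument is the analysis of one call of \cref{alg:dangling-subroutine} on a clique $C$ of size $q \le p$. Recall (the weighted analogue of) the characterization of \cite{choo2022verification}: $\overline{\nu}_1(G)$ is bounded below by the minimum-weight vertex cover of the covered edges of $G$, and the covered edges inside a clique of a moral DAG are exactly the Hamiltonian path induced by the clique's topological order. The key point, underlying \cref{thm:interventional-metric-lower-bound}, is that one must not intervene on all of $C$ (nor on $C$ minus its costliest vertex): either can be arbitrarily more expensive than necessary. Instead the subroutine orients $C$ adaptively over $\cO(\log q)$ rounds, in the style of a weighted quicksort: it keeps $C$ as a collection of contiguous (in the still-unknown order) sub-cliques, and in each round, within each sub-clique, it intervenes on a single vertex that is cheap relative to that sub-clique's share of the Hamiltonian path's minimum-weight vertex cover. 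The structural fact that makes such a vertex available is that a minimum-weight vertex cover of a path must take one endpoint of every edge of a fixed perfect matching of the path, so every contiguous block contains a vertex, within a constant factor of the positional median, of weight at most the block's minimum-weight vertex cover; intervening on it orients all its incident edges by \cref{lem:middle} (it is an ancestor or descendant of every other vertex in its sub-clique) and roughly halves the sub-clique. Since the per-sub-clique shares sum to the minimum-weight vertex cover of the path and there are $\cO(\log q)$ rounds, orienting $C$ costs $\cO(\log q)$ times that cover; an analogous argument on the bipartite structure of the edges between $C$ and $A \cup B$ clears the dangling edges within the same asymptotic cost.

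To assemble the two levels, fix an outer-recursion level. The clique separators processed at that level, one per chain component, are pairwise vertex-disjoint and lie in distinct connected components of the undirected part; consequently there is a single DAG $G \in [G^*]$ that simultaneously realizes, inside each such clique, the ordering whose Hamiltonian path has the largest minimum-weight vertex cover. By the lower-bound analysis of \cite{choo2022verification} adapted to weights, the sum over these separators of those path covers is at most $\overline{\nu}_1(G) \le \overline{\nu}^{\max}_1(G^*)$, so by the previous paragraph the total intervention cost at this level is $\cO(\log n \cdot \overline{\nu}^{\max}_1(G^*))$; summing over the $\cO(\log n)$ levels gives $w(\cI) \in \cO(\log^2 n \cdot \overline{\nu}^{\max}_1(G^*))$. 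For the running time, $\cO(n)$ clique separators are processed in total, each call of \cref{alg:dangling-subroutine} runs $\cO(\log n)$ rounds, and a round recomputes the relevant interventional essential graph, applies Meek rules, and locates a clique separator, each in $\cO(dm)$ time for $d, m$ the degeneracy and edge count of $\skel(\cE(G^*))$, giving the stated $\cO(n \log^2 n \cdot dm)$ bound.

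The main obstacle is the analysis of \cref{alg:dangling-subroutine}: choosing, adaptively and without knowing the clique's order, a vertex on which intervening both makes geometric progress (halving a sub-clique, or clearing a constant fraction of the dangling edges) and has cost charged to the worst-case verification number. This rests on the structural inequality for minimum-weight vertex covers of paths noted above, together with a charging argument ensuring that the interventions of a single round --- across all sub-cliques of a separator, and across all separators at a fixed level --- can be paid for simultaneously by one DAG's verifying set. Making this charging close is the delicate part; the recursion-depth and running-time bookkeeping are comparatively routine.
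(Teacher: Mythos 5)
Your outer skeleton --- $\cO(\log n)$ phases from halving chain components with $1/2$-clique separators, times $\cO(\log n \cdot \overline{\nu}^{\max}_1(G^*))$ cost per phase --- matches the paper's decomposition into \cref{lem:logn-phases-suffice} and \cref{lem:cost-of-each-phase}. But the content of the per-phase step diverges, and your version has a genuine gap. You assert that intervening on $C$ minus its costliest vertex ``can be arbitrarily more expensive than necessary'' and therefore propose a weighted quicksort over the clique. Relative to the benchmark of this theorem, $\overline{\nu}^{\max}_1(G^*)$, that assertion is false: the $\zeta^{(1)}$ term of \cref{thm:interventional-metric-lower-bound} (obtained by using \cref{lem:clique-picking} to realize the DAG that orders the clique in descending weight) gives $w(V(C)) - \max_{v \in V(C)} w(v) \leq 2\,\zeta^{(1)}_{\cI,H}$, so intervening on all of $K_H$ except the single costliest vertex $v_H$ is already within a constant factor of the lower bound's contribution from that chain component, summable across components. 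This is exactly what \cref{alg:weighted-search} does, and it is why no quicksort is needed. Your quicksort instead requires a pivot that is simultaneously cheap and near the positional median of an ordering the algorithm does not know; you flag this yourself as ``the delicate part'' but do not resolve it, and the structural fact you cite does not resolve it (the cheap vertex covering the median edge of the covered-edge path is identifiable only once the order is known, whereas for a worst-case $G^*$ the cheapest vertex may sit at an extreme position and yield no halving). The central charging step of your argument is therefore open.

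You also mischaracterize the role of \cref{alg:dangling-subroutine}. After intervening atomically on $V(K_H)\setminus\{v_H\}$, every edge with an endpoint in that set is oriented --- including all internal edges of the separator --- so the only obstruction to the two partites separating is the set of unoriented edges incident to $v_H$. \texttt{ResolveDangling} does not orient the clique; it either intervenes on $v_H$ outright or locates, via $\cO(\log n)$ rounds of $1/2$-clique-separator search justified by \cref{lem:at-most-one-incoming} and \cref{lem:middle}, the source vertex of each component dangling from $v_H$. Its cost is charged to the $\zeta^{(2)}$ term, which is a \emph{minimum} over exactly the two options the subroutine branches on --- not a ``bipartite'' analogue of the clique bound. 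Without this piece you cannot argue that the partites actually disconnect before the next phase, which is what makes the $\cO(\log n)$ phase count (and hence the overall $\cO(\log^2 n)$ guarantee) valid.
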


\cref{thm:weighted-search} is the first competitive \emph{adaptive} algorithm for the weighted setting.
We later show that one can combine \cref{alg:weighted-search} with a simple algorithm in a black-box manner to get $w(\cI) \in \cO(\min\{ n \cdot \overline{\nu}(G^*), \log^2(n) \cdot \overline{\nu}^{\max}_1(G^*) \})$.
The closest comparable result for \emph{weighted} graph search is the \emph{non-adaptive} search algorithm of \cite{lindgren2018experimental} discussed in \cref{sec:related-work}.
However, note that the size of a separating system in the non-adaptive setting can be much larger than $\overline{\nu}^{\max}_1(G^*)$ even when all vertices have unit weight: in the case where the essential graph is a path on $n$ vertices, $\overline{\nu}^{\max}_1(G^*) = \nu^{\max}_1(G^*) = 1$ while any separating system on this path has size $\Omega(n)$; see \cref{sec:benchmark-differs-from-separating-system}.

By tweaking the algorithm of \cref{thm:weighted-search} appropriately, our next result provides competitive guarantees with respect to the generalized cost function \cref{eq:generalized-cost}.

\begin{restatable}{mytheorem}{weightedsearchgeneralized}
\label{thm:weighted-search-generalized}
Fix an essential graph $\cE(G^*)$ corresponding to an unknown weighted causal DAG $G^*$.
Suppose $\cI^*_1$ and $\cI^*_k$ are an atomic and bounded size verifying sets minimizing \cref{eq:generalized-cost} such that $\cost(\cI^*_1, \alpha, \beta, 1) = \OPT_1$ and $\cost(\cI^*_k, \alpha, \beta, k) = \OPT_k$.
Then, \cref{alg:weighted-search-generalized} runs in polynomial time and computes a bounded size intervention set $\cI$ in a deterministic and adaptive manner such that $\cE_{\cI}(G^*) = G^*$, and\\
1. $\cost(\cI, \alpha, \beta, 1) \in \cO \left( \log^2 n \cdot \OPT_1 \right)$\\
2. $\cost(\cI, \alpha, \beta, k) \in \cO \left( \log n \cdot (\log n + \log k) \cdot \OPT_k \right)$.
\end{restatable}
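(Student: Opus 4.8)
The plan is to derive the atomic bound from \cref{thm:weighted-search} by re-weighting. Set $w'(v) = \alpha\cdot w(v) + \beta$ for every $v\in V$, which is a valid non-negative weight since $\alpha,\beta\ge 0$. For any atomic intervention set $\cI$ we have $w'(\cI) = \sum_{\{v\}\in\cI}(\alpha w(v)+\beta) = \alpha\, w(\cI) + \beta\, |\cI| = \cost(\cI,\alpha,\beta,1)$, so for each $G\in[G^*]$ the cheapest atomic verifying set of $G$ under $w'$ has $w'$-weight exactly $\min_{\cI}\cost(\cI,\alpha,\beta,1)$. Hence the benchmark $\overline{\nu}^{\max}_1(G^*)$ of \cref{thm:weighted-search}, computed under $w'$, equals $\OPT_1$, and running \cref{alg:weighted-search} on $\cE(G^*)$ with weights $w'$ (in the same running time) returns an atomic verifying set $\cI$ with $\cost(\cI,\alpha,\beta,1) = w'(\cI)\in\cO(\log^2 n\cdot\OPT_1)$.

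\textbf{Part 2.} Here one cannot simply invoke Part 1 and pack the resulting atomic interventions into blocks of size $k$, since $\OPT_1$ can be a $\Theta(k)$ factor larger than $\OPT_k$. Instead, \cref{alg:weighted-search-generalized} runs the same recursive clique-separator scheme as \cref{alg:weighted-search} with two changes: (i) it uses the \emph{amortized} weights $w''(v) = \alpha\cdot w(v) + \beta/k$, and (ii) whenever \cref{alg:weighted-search} would intervene atomically on (a subset of) a clique separator $C$, it instead intervenes using a size-$k$-bounded, weight-aware \emph{labeling scheme} on $C$: a family of subsets of $C$, each of size at most $k$, that separates every pair of vertices of $C$. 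Since every set of the scheme lies inside $C$, it also separates each $v\in C$ from each neighbour of $v$ outside $C$; combined with Meek-rule propagation, intervening on the scheme reveals at least as many orientations as intervening on each vertex of $C$ individually. Thus the information driving the recursion is always a superset of what \cref{alg:weighted-search} would have, so the returned $\cI$ is still a verifying set, and the algorithm runs in polynomial time.

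For the cost I would combine two ingredients. First, splitting any $k$-bounded verifying set $\cI''$ of a DAG $G$ into singletons yields an atomic verifying set $\cI'$ whose $w''$-weight is at most $\alpha\, w(\cI'') + (\beta/k)\sum_{S\in\cI''}|S| \le \alpha\, w(\cI'') + \beta\, |\cI''| = \cost(\cI'',\alpha,\beta,k)$, using $|S|\le k$; minimizing over $\cI''$ and maximizing over $G\in[G^*]$ shows the $w''$-version of the benchmark of \cref{thm:weighted-search} is at most $\OPT_k$. Second, the labeling scheme in (ii) resolves a clique separator $C$ with $\cO(|C|/k + \log k)$ interventions of size $\le k$ while inflating the $w''$-weight charged to $C$ by only a constant factor. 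Propagating this through the $\cO(\log n)$ recursion levels and the weight-bucketing (``dangling'') subroutine of \cref{alg:dangling-subroutine} — which, exactly as in \cref{thm:weighted-search}, contributes one further $\cO(\log n)$ factor, arranged so that the $\log k$ and $\log n$ overheads \emph{add} rather than multiply — gives $\cost(\cI,\alpha,\beta,k)\in\cO(\log n\,(\log n+\log k)\cdot\OPT_k)$, since for a $k$-bounded set $\cI$ re-expressing $w''(\cI)$ in terms of $\cost(\cI,\alpha,\beta,k)$ loses only a constant.

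\textbf{Main obstacle.} The hard part is item (ii): I must exhibit a labeling scheme that resolves a single clique separator with $\cO(|C|/k + \log k)$ bounded interventions \emph{and} only a constant-factor increase in charged weight (a naive binary labeling costs a $\Theta(\log|C|)$ factor in weight), and argue that its $\log k$ overhead adds to rather than multiplies the $\cO(\log n)$ overhead of the dangling subroutine. I must also check that the orientations produced by the scheme together with Meek rules suffice for the recursion to proceed precisely as in \cref{alg:weighted-search}, so that no extra interventions are needed to reconstruct the state on which the atomic analysis relies.
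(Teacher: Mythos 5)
Your Part 1 is a correct and genuinely different route: by reweighting to $w'(v) = \alpha\cdot w(v) + \beta$ you make $\cost(\cdot,\alpha,\beta,1)$ coincide with $w'(\cdot)$, so $\OPT_1$ becomes exactly $\overline{\nu}^{\max}_1(G^*)$ under $w'$ and the atomic bound follows by invoking \cref{thm:weighted-search} as a black box. The paper instead re-derives the lower bound from scratch (the $\zeta^{(3)},\zeta^{(4)}$ terms of \cref{thm:interventional-metric-lower-bound-generalized}) and redoes the phase analysis; your reduction is more modular and buys the same conclusion with less work, modulo the cosmetic point that it analyzes \cref{alg:weighted-search} run on $w'$ rather than \cref{alg:weighted-search-generalized} with $k=1$ verbatim (the two differ only in how the costliest clique vertex is selected when $\alpha=0$, which does not affect the analysis).

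Part 2 has a genuine gap, and it sits exactly where you flagged it. The ingredient you need --- a family of subsets of a clique $C$, each of size at most $k$, that separates \emph{every pair} of vertices of $C$ using $\cO(|C|/k + \log k)$ sets while increasing the charged $w''$-weight by only a constant factor --- does not exist in general. Pair-separation forces distinct set-membership signatures on the vertices of $C$; if each vertex lay in only $\cO(1)$ sets (as constant weight inflation requires under uniform weights), a family of $N$ sets could only realize $N^{\cO(1)}$ distinct signatures, which is far below $|C|$ when $N \in \cO(|C|/k + \log k)$ and $k$ is comparable to $|C|$. Any separating family with blocks of size at most $k$ necessarily puts each vertex in $\Omega(\log_{\lceil |C|/k\rceil}|C|) = \Omega(\log k)$ sets (this is \cref{lem:labelling-scheme}), so insisting on full pair-separation at every step of the recursion yields the $\cO(\log^2 n\cdot\log k)$ bound, not the claimed one. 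The paper escapes this not by a better separating system but by weakening what the intermediate interventions must accomplish: during the $\cO(\log n)$ iterations of the dangling search it partitions the clique separator $K$ \emph{arbitrarily} into blocks of size at most $k$ (each vertex used once, so cost $\cO(\OPT_k)$ per iteration via the $\zeta^{(6)}$ term). This does not orient the edges inside $K$, but it orients all edges between $K$ and the rest, which suffices to identify the block containing the source of $K$ (\cref{lem:s_source}) and the unique chain component with only incoming arcs into $K$ (\cref{lem:at-most-one-incoming-generalized}, whose proof requires the new \cref{lem:middle-bounded} generalizing \cref{lem:middle} to set interventions). The full separating scheme, with its unavoidable $\cO(\log k)$ multiplicity, is invoked only once per phase on the main clique separator and once at the end of each dangling search on a clique of size at most $k$, which is precisely where the additive $\log n + \log k$ per phase comes from. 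Your proposal, which requires the recursion to always possess a superset of the atomic algorithm's information, cannot be completed as stated; you would need to redesign the invariant maintained by the search, as the paper does.
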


We remark that \cref{alg:weighted-search} is a special case of \cref{alg:weighted-search-generalized} (given in \cref{sec:appendix-generalized-cost}) with $\alpha = 1$, $\beta = 0$, and $k = 1$.

\textbf{Why study bounded size interventions?}\\
One may be able to reduce the \emph{number} of interventions performed if one is allowed to intervene on more than one vertex per intervention.
For instance, to fully recover the orientations of a clique on $n$ nodes, it is known that $\Omega(n)$ atomic interventions are required.
However, if bounded size interventions are allowed, the lower bound is only $\Omega(n/k)$ and $\tilde{\cO}(n/k)$ interventions suffice \cite{shanmugam2015learning}.
As interventions take up actual wall-clock time and adaptivity demands sequentiality in the decision process, the ability to perform bounded size interventions (ideally in parallel) is particularly important for time-sensitive scenarios.

\textbf{Significance of our new metric $\overline{\nu}^{\max}(G^*)$}\\
As the previous benchmark of $\overline{\nu}(G^*)$ is overly pessimistic, many algorithms will ``look the same'' (albeit all with terrible competitive ratios) when compared against $\overline{\nu}(G^*)$ and it is natural to ask if there is a \emph{meaningful} comparison that differentiates them.
The new benchmark $\overline{\nu}^{\max}(G^*)$ serves this purpose: an algorithm that is more competitive to $\overline{\nu}^{\max}(G^*)$ would have better worst-case guarantees.
Intuitively, $\overline{\nu}^{\max}(G^*)$ shifts the comparisons away from an idealistic ``how much will an oracle that knows $G^*$ pay?" to a weaker ``how much will the best possible algorithm that only knows $[G^*]$ pay?''.
The latter question is more realistic/reasonable, and as we have argued, more meaningful in problem instances where vertices have differing costs.

Many adaptive search algorithms guarantee an $\tilde{\cO}(n)$ approximation to $\overline{\nu}(G^*)$ which only implies an $\tilde{\cO}(n)$ approximation to $\overline{\nu}^{\max}(G^*)$, while \cref{alg:weighted-search} provably obtains a logarithmic competitive ratio to $\overline{\nu}^{\max}(G^*)$.
For instance, the following naive algorithm incurs a cost of $\cO(n \cdot \overline{\nu}(G^*))$, but does not yield meaningful guarantees against $\overline{\nu}^{\max}(G^*)$: intervene on vertices one-by-one in an ascending weight ordering until the entire graph is oriented.
The proof of $\cO(n \cdot \overline{\nu}(G^*))$ is straightforward: the weight $w(v_{final})$ of the final intervened vertex $v_{final}$ is a lower bound for $\overline{\nu}(G^*)$, and we intervened at most $n$ vertices, each of cost lower than $w(v_{final})$, before $v_{final}$.
In \cref{sec:appendix-blackbox-combination}, we formally show how to combine the this naive algorithm with our algorithms of \cref{thm:weighted-search} and \cref{thm:weighted-search-generalized} in a black-box manner to retain the guarantees against $\overline{\nu}^{\max}(G^*)$ whilst \emph{simultaneously} ensuring that at most $\cO(n \cdot \overline{\nu}(G^*))$ cost is incurred.
The high-level idea is to simulate both algorithms in parallel until the causal graph is recovered.

\section{Techniques}
\label{sec:techniques}

Here, we give some high-level technical ideas behind our algorithmic results (\cref{thm:weighted-search} and \cref{thm:weighted-search-generalized}).
We first describe how to lower bound the benchmark $\overline{\nu}^{\max}$ before giving our atomic adaptive algorithm (\cref{alg:weighted-search}).
Then, we explain how to tweak \cref{alg:weighted-search} to handle the generalized cost function with bounded size interventions.

\subsection{Lower bounding the benchmark}
\label{sec:lower-bounds}

For any interventional essential graph, we know that interventions within a chain component do not help to recover arcs within another chain component \cite{hauser2014two}.
Using this fact along with the proof strategy of \cite{choo2022verification} for lower bounding $\nu_1(G^*)$, we can show the following lower bound for $\overline{\nu}^{\max}_1(G^*)$.

\begin{restatable}{mytheorem}{interventionalmetriclowerbound}
\label{thm:interventional-metric-lower-bound}
For any DAG $G^*$ and its essential graph $\cE(G^*)$, we have
\[
\overline{\nu}^{\max}_1(G^*) \geq
\max_{\cI \subseteq V} \left\{ \sum_{\substack{H \in CC(\cE_{\cI}(G^*))\\|V(H)| \geq 2}} \max \left\{ \zeta^{(1)}_{\cI,H} , \zeta^{(2)}_{\cI,H} \right\} \right\}
\]
where we maximize over atomic intervention sets $\cI \subseteq V$,
\[
\zeta^{(1)}_{\cI,H} = \frac{1}{2} \cdot \max_{\text{clique } C \in H} \left\{ w(V(C)) - \max_{v \in V(C)} \left\{ w(v) \right\} \right\}
\]
and
\[
\zeta^{(2)}_{\cI,H} = \frac{1}{2} \cdot \max_{v \in V(H)} \bigg\{ \min \left\{ w(v), \gamma_{H,v} \right\} \bigg\}
\]
where $\gamma_{H,v} = \sum_{i=1}^t \max_{\substack{\text{clique } C_i:\\V(C_i) \subseteq V_i \cap N_H(v)}} \left\{ w(V(C_i)) \right\}$ with $V_1, \ldots, V_t \subseteq V(H)$ being vertex sets of the $t \geq 1$ disjoint connected components in $H[V(H) \setminus \{v\}]$.
\end{restatable}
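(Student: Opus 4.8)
The plan is to fix an arbitrary atomic intervention set $\cI\subseteq V$ and exhibit a single DAG $G\in[G^*]$ for which $\overline{\nu}_1(G)\geq\sum_H\max\{\zeta^{(1)}_{\cI,H},\zeta^{(2)}_{\cI,H}\}$, the sum being over chain components $H\in CC(\cE_{\cI}(G^*))$ with $|V(H)|\geq2$; since $\overline{\nu}^{\max}_1(G^*)\geq\overline{\nu}_1(G)$ and $\cI$ was arbitrary, maximizing over $\cI$ gives the statement. I would set this up with three ingredients. First, by the covered-edge characterization of atomic verifying sets \cite{choo2022verification}, the set of vertices intervened on by any atomic verifying set for $G$ is a vertex cover of the covered edges of $G$, so $\overline{\nu}_1(G)$ is at least the minimum weight of a vertex cover of any given sub-collection of the covered edges of $G$. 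Second, chain components of $\cE_{\cI}(G^*)$ are chordal and may be oriented independently as arbitrary v-structure-free DAGs, every such joint orientation together with the oriented arcs of $\cE_{\cI}(G^*)$ yielding a DAG consistent with $\cE_{\cI}(G^*)$, hence a member of $[G^*]$. Third --- and here one uses that $H$ is a chain component, not merely chordal --- all vertices of $H$ share a common parent set $P$ outside $H$: if $u\sim v$ in $H$ and $w\to u$ with $w\notin V(H)$, then the alternatives $w\not\sim v$ (Meek rule 1 would orient $u\to v$), $v\to w$ (Meek rule 2 would orient $v\to u$), and $w\sim v$ undirected (then $w,v$ lie in a common chain component, so $w\in V(H)$) are all impossible, whence $w\to v$; equality of outside-parent sets then propagates along $H$. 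Consequently, for an internal arc $u\to v$ of $H$, $\Pa_G(u)=P\cup\Pa_{G[H]}(u)$ and $\Pa_G(v)=P\cup\Pa_{G[H]}(v)$ as disjoint unions, so $u\to v$ is covered in $G$ iff it is covered in $G[H]$; thus the covered edges of $G$ lying inside chain components form a vertex-disjoint union over $H$, giving $\overline{\nu}_1(G)\geq\sum_H\bigl(\text{minimum weight of a vertex cover of the covered edges of }G[H]\bigr)$. It therefore suffices to work in each $H$ separately and to keep, per component, whichever of two candidate orientations gives the larger bound.

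The workhorse estimate I would record first: if $x_1-\cdots-x_m$ is a path with $w(x_1)\geq\cdots\geq w(x_m)$, then any independent set has weight at most $\tfrac12\bigl(w(\{x_1,\dots,x_m\})+w(x_1)\bigr)$ --- map each chosen $x_j$ with $j\geq2$ to the distinct, at-least-as-heavy, unchosen vertex $x_{j-1}$ --- so the path's minimum vertex-cover weight is at least $\tfrac12\bigl(w(\{x_1,\dots,x_m\})-w(x_1)\bigr)$. For the $\zeta^{(1)}$ bound, take a clique $C$ of $H$ attaining $\zeta^{(1)}_{\cI,H}$, listed in weight-non-increasing order. Since a clique of a chordal graph can be made a suffix of a perfect elimination ordering with any prescribed internal order (repeatedly peel a simplicial vertex lying outside $C$, then peel $C$), there is a v-structure-free orientation of $H$ whose valid permutation on $V(C)$ is weight-non-increasing; then $C$ is a tournament in $G[H]$ whose covered edges are exactly the consecutive pairs, i.e.\ a weight-non-increasing Hamiltonian path of $C$, and the estimate gives vertex-cover weight at least $\tfrac12\bigl(w(V(C))-\max_{v\in V(C)}w(v)\bigr)=\zeta^{(1)}_{\cI,H}$.

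For the $\zeta^{(2)}$ bound, take $v\in V(H)$ attaining $\zeta^{(2)}_{\cI,H}$, let $V_1,\dots,V_t$ be the components of $H[V(H)\setminus\{v\}]$, and let $C_i$ be a heaviest clique of $H[V_i\cap N_H(v)]$, so $\gamma_{H,v}=\sum_iw(V(C_i))$ and each $\{v\}\cup V(C_i)$ is a clique of $H$. I would orient $H$ so that $v$ is the unique source and, inside each $V_i$, the set $N_H(v)\cap V_i$ is an ancestral set whose topological prefix lists $V(C_i)$ as $c^{(i)}_1,\dots,c^{(i)}_{p_i}$ in weight-non-increasing order. Such an orientation exists because one can peel the vertices of $V_i\setminus N_H(v)$ off first in a perfect elimination ordering of $H[V_i]$ --- the crucial point, provable via clique trees using that $v$ has no neighbour in $V_i\setminus N_H(v)$, being that at every stage some vertex of the remaining part of $V_i\setminus N_H(v)$ is simplicial --- and making $N_H(v)\cap V_i$ ancestral is exactly what forbids a v-structure $v\to u\gets x$ with $x\notin N_H(v)$. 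In this orientation $\Pa_{G[H]}(c^{(i)}_j)=\{v,c^{(i)}_1,\dots,c^{(i)}_{j-1}\}$, so the arcs $v\to c^{(i)}_1$ and $c^{(i)}_j\to c^{(i)}_{j+1}$ are all covered, and hence the covered edges of $G[H]$ contain a spider with centre $v$ and legs $v-c^{(i)}_1-\cdots-c^{(i)}_{p_i}$, vertex-disjoint apart from $v$. A vertex cover of this spider either contains $v$, costing at least $w(v)$, or omits $v$, in which case it contains every $c^{(i)}_1$ and covers each leg-path $c^{(i)}_1-\cdots-c^{(i)}_{p_i}$ while already holding its heaviest vertex, which by the estimate costs at least $\tfrac12w(V(C_i))$ per leg and so at least $\tfrac12\gamma_{H,v}$ in total; hence the minimum vertex-cover weight of the covered edges of $G[H]$ is at least $\min\{w(v),\tfrac12\gamma_{H,v}\}\geq\tfrac12\min\{w(v),\gamma_{H,v}\}=\zeta^{(2)}_{\cI,H}$.

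Finally I would assemble the pieces: choosing, in each chain component $H$ with $|V(H)|\geq2$, the better of the two orientations is legitimate because chain components are oriented independently and their covered edges occupy disjoint vertex sets, while components of size $1$ carry no covered edges; this yields $G\in[G^*]$ with $\overline{\nu}_1(G)\geq\sum_{H:\,|V(H)|\geq2}\max\{\zeta^{(1)}_{\cI,H},\zeta^{(2)}_{\cI,H}\}$, and maximizing over $\cI$ completes the proof. The main obstacle, I expect, is the pair of chordal-graph existence claims used to build the orientations --- realizing a prescribed clique as a PEO-suffix, and realizing $N_H(v)\cap V_i$ as an ancestral set with $C_i$ at its front --- both because they need careful arguments via perfect elimination orderings and clique trees and because they are exactly where chordality of the chain components of $\cE_{\cI}(G^*)$ is indispensable; a secondary check is that independently chosen v-structure-free orientations of these chain components really do assemble into a DAG lying in $[G^*]$.
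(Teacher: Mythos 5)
Your proposal is correct and follows essentially the same route as the paper's proof: for a fixed $\cI$ you realize, per chain component, a worst-case consistent orientation that makes a weight-sorted clique (resp.\ $v$ followed by weight-sorted cliques in the dangling components) a prefix, identify the resulting covered edges as a weight-sorted path (resp.\ spider), and apply the covered-edge/vertex-cover characterization with the same two-case split on whether the cover uses $v$, obtaining the identical factor-$\tfrac12$ bounds. The only differences are in scaffolding: the paper decomposes across chain components via the Hauser--B\"uhlmann restriction lemma (\cref{lem:hauser-bulmann-strengthened}) and realizes the prefix orientations by citing maximal clique picking plus patching (\cref{lem:clique-picking}), where you use a direct common-outside-parents argument and perfect-elimination-ordering/clique-tree constructions; these are interchangeable, and the PEO existence claims you flag as the main obstacle are exactly what \cref{lem:clique-picking} packages.
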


The two cases of \cref{thm:interventional-metric-lower-bound} are pictorially illustrated by \cref{fig:case1} and \cref{fig:case2} respectively: we lower bound the $\zeta^{(1)}$ and $\zeta^{(2)}$ terms via the minimum cost vertex cover of the covered edges constructed in each figure.
In $\zeta^{(1)}$, $w(V(C)) - \max_{v \in V(C)} w(v)$ corresponds to the sum of the weight of all clique vertices \emph{except} the costliest one.
In $\zeta^{(2)}$, we check whether it is cheaper to intervene on a particular vertex $v$ or a clique in each connected component ``dangling'' from $v$.
The proof for both cases relies on being able to pick a ``worst case ordering'' on the vertices that are consistent with the given essential graph $\cE_{\cI}(G^*)$.

To argue that we can always fix such an ordering of our choice, we combine a ``patching'' result of \cite{choo2023subset} (see the second point of \cref{thm:moral-patching}) with the ``maximal clique picking'' procedure of \cite{wienobst2021polynomial} from the literature of MEC size counting.
Informally, \cite{wienobst2021polynomial} showed that all possible DAGs consistent with any given essential graph can be generated by repeating procedure: Picking a maximal clique $C$ to be the prefix maximal clique; orient all incident edges out of $C$; apply Meek rules until convergence; repeat.

\begin{restatable}{mylemma}{cliquepicking}
\label{lem:clique-picking}
Fix an interventional essential graph $\cE_{\cI}(G^*)$ corresponding to an arbitrary moral DAG $G^*$ and intervention set $\cI \subseteq 2^V$.
For any clique $C$ (not necessarily maximal) in any chain component of $\cE_{\cI}(G^*)$ and any permutation ordering $\pi$ on the vertices $V(C)$ of $C$, there exists a DAG $G$ consistent with $\cE_{\cI}(G^*)$ such that $u \to v$ if and only if $\pi(u) < \pi(v)$ for any two clique vertices $u,v \in V(C)$.
\end{restatable}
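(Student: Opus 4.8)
The plan is to reduce the statement to a single chain component and then build a suitable acyclic moral orientation of that component from the perfect elimination ordering (PEO) structure of chordal graphs. Let $H$ be the chain component of $\cE_{\cI}(G^*)$ containing $C$; since $G^*$ is moral, $H$ is a connected chordal graph. First I would extend $C$ to a maximal clique $C' \supseteq V(C)$ of $H$ (greedily adding vertices adjacent to all of $C$) and extend the given order $\pi$ arbitrarily to a total order $\pi'$ on $V(C')$. I claim it suffices to produce an acyclic orientation $\vec H$ of $H$ that has no v-structures and satisfies $u \to v \iff \pi'(u) < \pi'(v)$ for all $u,v \in V(C')$: indeed, by the fact that interventions inside one chain component do not affect orientations inside another \cite{hauser2014two}, together with the patching result of \cite{choo2023subset} (the second point of \cref{thm:moral-patching}), one may combine $\vec H$ with appropriate acyclic moral orientations of the remaining chain components and with the already-oriented arcs of $\cE_{\cI}(G^*)$ into a single DAG $G$ consistent with $\cE_{\cI}(G^*)$; restricting $G$ to $V(C) \subseteq V(C')$ then gives exactly the claimed orientation.

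To construct $\vec H$, I would use the standard correspondence between v-structure-free acyclic orientations of a chordal graph and reverses of its PEOs: for an ordering $\tau$ of $V(H)$, orienting $u \to v$ whenever $\tau(u) < \tau(v)$ yields an acyclic orientation in which every vertex's parent set $\{u \sim v : \tau(u) < \tau(v)\}$ is a clique --- hence no v-structure --- if and only if the reverse of $\tau$ is a PEO of $H$. So it is enough to find a PEO $\sigma$ of $H$ whose restriction to $V(C')$ is the reverse of $\pi'$ and then set $\tau$ to be the reverse of $\sigma$, which makes $\tau$ restricted to $V(C')$ equal to $\pi'$. By the clique-tree property of chordal graphs --- root a clique tree of $H$ at a node for the maximal clique $C'$ and repeatedly peel off a simplicial vertex that belongs only to some leaf clique other than $C'$ --- there is a PEO of $H$ whose last $|V(C')|$ vertices are exactly $V(C')$; moreover, since $V(C')$ is a clique occupying a suffix of the ordering, permuting these last $|V(C')|$ positions among themselves preserves the PEO property (it leaves the ``later-neighbor'' set of every vertex outside $C'$ unchanged, and every vertex of $C'$ keeps as its later-neighbors a subset of the clique $V(C')$). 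Hence we may take $\sigma$ to order $V(C')$ as the reverse of $\pi'$, giving the desired $\vec H$.

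An alternative route that avoids the explicit reduction is to invoke \cite{wienobst2021polynomial} directly: run its clique-picking procedure on $\cE_{\cI}(G^*)$, designating (the maximal extension $C'$ of) $C$ as the prefix maximal clique in the chain component of $C$ and choosing its intra-clique ordering to be $\pi'$; the procedure then outputs precisely a DAG consistent with $\cE_{\cI}(G^*)$ whose orientation on $C'$, hence on $C$, agrees with $\pi'$. Either phrasing works; I would present the PEO argument as the main line since it is self-contained modulo the cited results.

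The main obstacle is justifying the reduction: one must be sure that an acyclic moral orientation of a single chain component, chosen in isolation, always extends to a globally consistent DAG, i.e.\ that gluing it to the rest of $\cE_{\cI}(G^*)$ creates no new v-structures and contradicts none of the already-directed arcs. This is exactly the content of the patching lemma of \cite{choo2023subset}, so once that is cited the remaining work is the elementary verification sketched above; extending $C$ to $C'$, the PEO characterization of v-structure-free orientations, and the suffix-clique permutation argument are all routine.
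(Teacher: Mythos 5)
Your proposal is correct and follows essentially the same route as the paper: reduce to the chain component $H$ via the patching result of \cref{thm:moral-patching}, extend $C$ to a maximal clique $C'$, make $C'$ a prefix of an acyclic moral orientation of $H$, and then impose the desired internal order on $C'$ (hence on $C$). The only difference is in the last step, and it is cosmetic: the paper justifies the free reordering of $C'$ by a second application of the patching theorem (intervening on $V(H) \setminus V(C')$ leaves $C'$ as the sole chain component, which, being a clique, admits any acyclic orientation), whereas you give a self-contained PEO/clique-tree argument showing that a suffix clique of a perfect elimination ordering can be permuted arbitrarily; both are valid.
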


Roughly speaking, given a (not necessarily maximal) clique $C'$ and an ordering $\pi$, \cref{lem:clique-picking} follows by first picking a maximal clique containing $C'$ to be the prefix via ``maximal clique picking'', and then picking the vertices within $C'$ one by one according to $\pi$ via ``patching''.

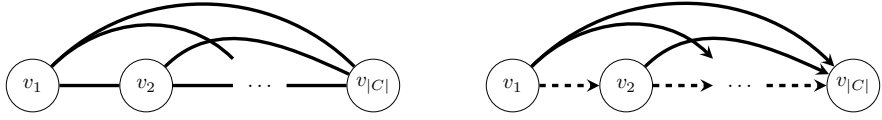
\begin{figure}[tb]
    \centering
    \resizebox{0.7\linewidth}{!}{%
    \begin{tikzpicture}
%
%
\node[draw, circle, minimum size=25pt, inner sep=2pt] at (0,0) (u1) {$v_1$};
\node[draw, circle, minimum size=25pt, inner sep=2pt, right=of u1] (u2) {$v_2$};
\node[minimum size=25pt, inner sep=2pt, right=of u2] (udots) {$\ldots$};
\node[draw, circle, minimum size=25pt, inner sep=2pt, right=of udots] (un) {$v_{|C|}$};
\draw[ultra thick] (u1) -- (u2);
\draw[ultra thick] (u1) to[out=45,in=135] (udots);
\draw[ultra thick] (u1) to[out=45,in=135] (un);
\draw[ultra thick] (u2) -- (udots);
\draw[ultra thick] (u2) to[out=45,in=155] (un);
\draw[ultra thick] (udots) -- (un);

%
%
\node[draw, circle, minimum size=25pt, inner sep=2pt] at (8,0) (d1) {$v_1$};
\node[draw, circle, minimum size=25pt, inner sep=2pt, right=of d1] (d2) {$v_2$};
\node[minimum size=25pt, inner sep=2pt, right=of d2] (ddots) {$\ldots$};
\node[draw, circle, minimum size=25pt, inner sep=2pt, right=of ddots] (dn) {$v_{|C|}$};
\draw[ultra thick, -stealth, dashed] (d1) -- (d2);
\draw[ultra thick, -stealth] (d1) to[out=45,in=135] (ddots);
\draw[ultra thick, -stealth] (d1) to[out=45,in=135] (dn);
\draw[ultra thick, -stealth, dashed] (d2) -- (ddots);
\draw[ultra thick, -stealth] (d2) to[out=45,in=155] (dn);
\draw[ultra thick, -stealth, dashed] (ddots) -- (dn);
\end{tikzpicture}
    }
    \caption{Consider clique $C$ involving vertices $v_1, v_2, \ldots, v_{|C|}$ with $w(v_1) \geq w(v_2) \geq \ldots \geq w(v_{|C|})$. By \cref{lem:clique-picking}, there exists a DAG consistent with this essential graph by choosing any vertex ordering of our choice within $C$ (see right figure). Covered edges $\{v_1 \to v_2, v_2 \to v_3, \ldots, v_{|C|-1} \to v_{|C|}\}$ are dashed.
    \vspace{-10pt}
    }
    \label{fig:case1}
\end{figure}

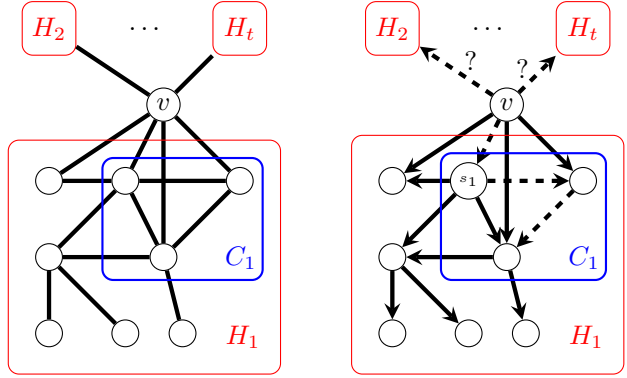
\begin{figure}[tb]
    \centering
    \resizebox{0.5\linewidth}{!}{%
    \begin{tikzpicture}
%
%
\node[draw, circle, minimum size=10pt, inner sep=2pt] at (0,0) (uv) {$v$};
\node[draw, red, rectangle, rounded corners, minimum size=20pt, inner sep=2pt] at (-1.5,1) (uv2) {$H_2$};
\node[draw, red, rectangle, rounded corners, minimum size=20pt, inner sep=2pt] at (1,1) (uvt) {$H_t$};
\node[] at ($(uv2)!0.5!(uvt)$) (uvdots) {$\ldots$};
\node[draw, circle, minimum size=10pt, inner sep=2pt] at ($(uv) + (-0.5,-1)$) (u1) {};
\node[draw, circle, minimum size=10pt, inner sep=2pt] at ($(uv) + (1,-1)$) (u2) {};
\node[draw, circle, minimum size=10pt, inner sep=2pt] at ($(uv) + (0,-2)$) (u3) {};
\node[draw, circle, minimum size=10pt, inner sep=2pt] at ($(uv) + (-1.5,-1)$) (u4) {};
\node[draw, circle, minimum size=10pt, inner sep=2pt] at ($(uv) + (-1.5,-2)$) (u5) {};
\node[draw, circle, minimum size=10pt, inner sep=2pt] at ($(uv) + (0.25,-3)$) (u6) {};
\node[draw, circle, minimum size=10pt, inner sep=2pt] at ($(uv) + (-1.5,-3)$) (u7) {};
\node[draw, circle, minimum size=10pt, inner sep=2pt] at ($(uv) + (-0.5,-3)$) (u8) {};
\draw[ultra thick] (uv) -- (uv2);
\draw[ultra thick] (uv) -- (uvt);
\draw[ultra thick] (uv) -- (u1);
\draw[ultra thick] (uv) -- (u2);
\draw[ultra thick] (uv) -- (u3);
\draw[ultra thick] (uv) -- (u4);
\draw[ultra thick] (u1) -- (u2);
\draw[ultra thick] (u1) -- (u3);
\draw[ultra thick] (u1) -- (u4);
\draw[ultra thick] (u1) -- (u5);
\draw[ultra thick] (u2) -- (u3);
\draw[ultra thick] (u3) -- (u5);
\draw[ultra thick] (u3) -- (u6);
\draw[ultra thick] (u5) -- (u7);
\draw[ultra thick] (u5) -- (u8);

\node[fit=(u1)(u2)(u3)(u4)(u5)(u6)(u7)(u8), draw, red, rounded corners, inner sep=10pt] (uv1) {};
\node[red] at ($(uv1.south east) + (-0.5,0.5)$) {$H_1$};
\node[fit=(u1)(u2)(u3), draw, thick, blue, rounded corners] (uc1) {};
\node[blue] at ($(uc1.south east) + (-0.3,0.3)$) {$C_1$};

%
%
\node[draw, circle, minimum size=10pt, inner sep=2pt] at (4.5,0) (dv) {$v$};
\node[draw, red, rectangle, rounded corners, minimum size=20pt, inner sep=2pt] at (3,1) (dv2) {$H_2$};
\node[draw, red, rectangle, rounded corners, minimum size=20pt, inner sep=2pt] at (5.5,1) (dvt) {$H_t$};
\node[] at ($(dv2)!0.5!(dvt)$) (dvdots) {$\ldots$};
\node[draw, circle, minimum size=10pt, inner sep=2pt] at ($(dv) + (-0.5,-1)$) (d1) {\tiny $s_1$};
\node[draw, circle, minimum size=10pt, inner sep=2pt] at ($(dv) + (1,-1)$) (d2) {};
\node[draw, circle, minimum size=10pt, inner sep=2pt] at ($(dv) + (0,-2)$) (d3) {};
\node[draw, circle, minimum size=10pt, inner sep=2pt] at ($(dv) + (-1.5,-1)$) (d4) {};
\node[draw, circle, minimum size=10pt, inner sep=2pt] at ($(dv) + (-1.5,-2)$) (d5) {};
\node[draw, circle, minimum size=10pt, inner sep=2pt] at ($(dv) + (0.25,-3)$) (d6) {};
\node[draw, circle, minimum size=10pt, inner sep=2pt] at ($(dv) + (-1.5,-3)$) (d7) {};
\node[draw, circle, minimum size=10pt, inner sep=2pt] at ($(dv) + (-0.5,-3)$) (d8) {};
\draw[ultra thick, -stealth, dashed] (dv) -- node[above, pos=0.3]{$?$} (dv2);
\draw[ultra thick, -stealth, dashed] (dv) -- node[above, pos=0.1]{$?$} (dvt);
\draw[ultra thick, -stealth, dashed] (dv) -- (d1);
\draw[ultra thick, -stealth] (dv) -- (d2);
\draw[ultra thick, -stealth] (dv) -- (d3);
\draw[ultra thick, -stealth] (dv) -- (d4);
\draw[ultra thick, -stealth, dashed] (d1) -- (d2);
\draw[ultra thick, -stealth] (d1) -- (d3);
\draw[ultra thick, -stealth] (d1) -- (d4);
\draw[ultra thick, -stealth] (d1) -- (d5);
\draw[ultra thick, -stealth, dashed] (d2) -- (d3);
\draw[ultra thick, -stealth] (d3) -- (d5);
\draw[ultra thick, -stealth] (d3) -- (d6);
\draw[ultra thick, -stealth] (d5) -- (d7);
\draw[ultra thick, -stealth] (d5) -- (d8);

\node[fit=(d1)(d2)(d3)(d4)(d5)(d6)(d7)(d8), draw, red, rounded corners, inner sep=10pt] (dv1) {};
\node[red] at ($(dv1.south east) + (-0.5,0.5)$) {$H_1$};
\node[fit=(d1)(d2)(d3), draw, thick, blue, rounded corners] (dc1) {};
\node[blue] at ($(dc1.south east) + (-0.3,0.3)$) {$C_1$};
\end{tikzpicture}
    }
    \caption{Consider the chain component $H$ with vertex $v$ and ``dangling'' connected components $H_1, H_2, \ldots, H_t$ in $H[V(H) \setminus \{v\}]$. Suppose $C_1$ is the costliest clique within $H[V(H_1) \cap N_H(v)]$. By \cref{lem:clique-picking}, there exists a DAG consistent with this essential graph by letting $v$ be the prefix within $H$ and then letting $C_1$ be the prefix within $H_1$ and choosing any vertex ordering of our choice within $C_1$ (see right figure). Covered edges are dashed. For any connected component $H_i$ with source node $s_i$, the arc $v \to s_i$ is a covered edge while $v \to u$ is \emph{not} a covered edge, for $u \in V(H_i) \setminus \{s_i\}$.
    \vspace{-10pt}
    }
    \label{fig:case2}
\end{figure}

Since vertex costs are additive and intervening on vertices in each bounded intervention set atomically can only recover more information about the causal graph, our next result provide lower bounds of the benchmark for bounded size interventions with its atomic counterpart.

\begin{restatable}{theorem}{relatek}
\label{thm:relate-k}
For any DAG $G^*$ and integer $k \geq 1$, $\overline{\nu}^{\max}_k(G^*) \geq \overline{\nu}^{\max}_1(G^*)$ and $\nu^{\max}_k(G^*) \geq \lceil \nu^{\max}_1(G^*) / k \rceil$.
\end{restatable}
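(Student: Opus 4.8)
The plan is to prove the two inequalities separately, each by a short comparison argument between the size/cost-$k$ and the atomic verification problems.

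For the first inequality $\overline{\nu}^{\max}_k(G^*) \geq \overline{\nu}^{\max}_1(G^*)$, I would argue at the level of a single DAG $G \in [G^*]$ and then take the maximum over the equivalence class. Fix $G$ and let $\cI_k$ be a minimum-cost bounded-size verifying set for $G$, so $w(\cI_k) = \overline{\nu}_k(G)$. The key observation, stated explicitly in the paragraph preceding the theorem, is that ``treating a bounded size intervention as $k$ individual atomic interventions can only recover more information''; formally, if we replace each $S \in \cI_k$ by the collection of atomic interventions $\{\{v\} : v \in S\}$, the resulting atomic intervention set $\cI_1$ satisfies $\cE_{\cI_1}(G) = \cE_{\cI_k \cup \{\text{refinements}\}}(G) = G$ (this uses the monotonicity of information under refining interventions, which follows from the standard interventional-essential-graph properties referenced in \cref{sec:additional-known-results}). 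Crucially, because costs are \emph{additive over vertices}, $w(\cI_1) = \sum_{S \in \cI_k} \sum_{v \in S} w(v) = w(\cI_k)$, so $\cI_1$ is an atomic verifying set of the same cost. Hence $\overline{\nu}_1(G) \leq w(\cI_1) = w(\cI_k) = \overline{\nu}_k(G)$. Taking the maximum over $G \in [G^*]$ gives $\overline{\nu}^{\max}_1(G^*) \leq \overline{\nu}^{\max}_k(G^*)$.

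For the second inequality $\nu^{\max}_k(G^*) \geq \lceil \nu^{\max}_1(G^*)/k \rceil$, I would again fix $G \in [G^*]$. Let $\cI_k$ be a minimum-\emph{size} bounded-size verifying set for $G$, so $|\cI_k| = \nu_k(G)$. Refine it as above into an atomic verifying set $\cI_1$ by splitting each of the (at most $k$) vertices of each $S \in \cI_k$ into its own atomic intervention; then $|\cI_1| \leq k \cdot |\cI_k|$, and $\cI_1$ is a verifying set for $G$ by the same monotonicity argument. Therefore $\nu_1(G) \leq k \cdot \nu_k(G)$, i.e.\ $\nu_k(G) \geq \nu_1(G)/k$, and since $\nu_k(G)$ is an integer, $\nu_k(G) \geq \lceil \nu_1(G)/k \rceil$. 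Now I want to move the maximum over $[G^*]$ inside. Let $G'$ be a DAG attaining $\nu_1(G') = \nu^{\max}_1(G^*)$. Then $\nu^{\max}_k(G^*) \geq \nu_k(G') \geq \lceil \nu_1(G')/k \rceil = \lceil \nu^{\max}_1(G^*)/k \rceil$, as desired.

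I do not anticipate a serious obstacle here; the only point requiring a little care is justifying that refining a verifying set into atomic interventions preserves the verifying property — i.e.\ that more (finer) interventions never lose orientation information. This is the monotonicity of $\cE_{\cdot}(G)$ under taking supersets/refinements of the intervention set, which is standard and is exactly the sentence quoted from the paragraph above the theorem (and also implicit in the closing remark after \cref{def:min-verifying-set} that adding interventions to a verifying set keeps it verifying). Everything else is bookkeeping with the additive cost function and the ceiling.
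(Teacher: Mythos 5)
Your proof is correct and follows essentially the same route as the paper: atomize each bounded-size intervention into singletons, observe that the total cost is unchanged (resp.\ the number of interventions grows by at most a factor of $k$), and take the maximum over $[G^*]$. The only cosmetic differences are that the paper certifies the atomized set is still a verifying set via the covered-edge separation characterization (\cref{thm:necessary-and-sufficient}) rather than your generic refinement-monotonicity argument, and it cites the known per-DAG bound $\nu_k(G) \geq \lceil \nu_1(G)/k \rceil$ (\cref{thm:nu-k-to-1}) instead of re-deriving it.
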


\subsection{A competitive adaptive search algorithm}
\label{sec:upper-bounds}

Here, we present an adaptive search algorithm (\cref{alg:weighted-search}) that is competitive with respect to the lower bounds we presented in the previous section and proves \cref{thm:weighted-search}.
A known result of \cite{choo2023subset} (see \cref{thm:moral-patching}) allows us to ignore all oriented arcs in an interventional essential graph, without loss of generality, we can always assume that the underlying causal graph is a moral DAG. Given an essential graph of a moral DAG, \cref{alg:weighted-search} adaptively computes and outputs an atomic intervention set with cost competitive to $\overline{\nu}^{\max}_1(G^*)$.

\begin{algorithm}[tb]
\begin{algorithmic}[1]
\caption{Atomic weighted adaptive search.}
\label{alg:weighted-search}
    \Statex \textbf{Input}: Essential graph $\cE(G^*)$ of a moral DAG $G^*$ and weight function $w : V \to \R$.
    \Statex \textbf{Output}: Atomic intervention set $\cI$ s.t.\ $\cE_{\cI}(G^*) = G^*$.
    \State Initialize $i=0$ and $\cI_0 = \emptyset$.
    \While{$\cE_{\cI_{i}}(G^*)$ still has unoriented edges}
        \State Initialize $\cJ_i \gets \emptyset$
        \For{$H \in CC(\cE_{\cI_{i}}(G^*))$ of size $|H| \geq 2$}
            \State Find 1/2-clique separator $K_H$ via \cref{thm:chordal-separator}.
            \State Let $S = \{\{v\} : v \in V(K_H)) \setminus \{v_H\}\}$ be an atomic intervention set \emph{without} the costliest vertex
            \Statex\hspace{\algorithmicindent}\hspace{\algorithmicindent}$v_H = \argmax_{v \in V(K_H)} w(v)$.
            \State Intervene on $S$ and add $S$ to $\cJ_i$.
            \State Let $Z_{v_H} \in CC(\cE_{\cI_i \cup S}(G^*))$ be the chain component containing $v_H$ \emph{after} intervening on $S$.
            \If{$v_H$ is \emph{not} singleton in $Z_{v_H}$}
                \State Add output of $\texttt{ResolveDangling}$ to $\cJ_i$.
            \EndIf
        \EndFor
        \State Update $\cI_{i+1} \gets \cI_i \cup \cJ_i$ and $i \gets i+1$.
    \EndWhile
    \State \textbf{Return} $\cI_i$
\end{algorithmic}
\end{algorithm}

\begin{algorithm}[tb]
\begin{algorithmic}[1]
\caption{\texttt{ResolveDangling}}
\label{alg:dangling-subroutine}
    \Statex \textbf{Input}: Interventional essential graph $\cE_{\cI'}(G^*)$ for some intervention $\cI' \subseteq 2^V$, weight function $w : V \to \R$, and a chain component $H$ of $\cE_{\cI'}(G^*)$ that contains vertex $v \in V(H)$ with $t$ disjoint connected components $H_1, \ldots, H_t$ in $H[V(H) \setminus \{v\}]$.
    \Statex \textbf{Output}: Atomic intervention set $\cI$ such that all the outgoing edges of $v$ within $H$ are oriented in $\cE_{\cI \cup \cI'}(H)$.
    \State Initialize $\cI \gets \emptyset$.
    \If{$w(v) \leq \sum_{i=1}^t \max_{\text{clique $C$ in $H_i \cap N_H(v)$}} w(C)$}
        \State Intervene on $v$; Set $\cI \gets \cI \cup \{\{v\}\}$.
    \Else
        \For{$i \in \{1, \ldots, t\}$}
            \State Initialize $V' \gets V(H_i) \cap N_H(v)$.
            \While{$\skel(\cE_{\cI \cup \cI'}(G^*))[V']$ is \emph{not} a clique}
                \State Find a 1/2-clique separator $K$ of $H_i[V']$.
                \State Intervene on $K$ atomically; Add $V(K)$ to $\cI$.
                \State By \cref{lem:at-most-one-incoming}, find the chain component $Q$ with only incoming arcs into $K$ (if it exists).
                \If{$Q$ exists}
                    \State Set $V' \gets V(Q)$.
                \Else
                    \State Set $V' \gets \emptyset$.
                    \State \textbf{break}
                \EndIf
            \EndWhile
            \State Intervene on $V'$ atomically; Set $\cI \gets \cI \cup V'$.
        \EndFor
    \EndIf
    \State \textbf{Return} $\cI$
\end{algorithmic}
\end{algorithm}

On a high level, \cref{alg:weighted-search} is rather similar to the algorithm of \cite{choo2022verification}: both algorithms repeatedly apply \cref{thm:chordal-separator} to compute 1/2-clique separators $K_H$ so that we can break up the chain components and recurse on smaller sized chain components.
Such an approach is useful because the lower bound of \cref{thm:interventional-metric-lower-bound} ensures that the interventions done in each disjoint chain component can be summed up together to compare against $\overline{\nu}^{\max}_1(G^*)$.

Unfortunately, we \emph{cannot} fully intervene on \emph{all} vertices in the clique separators unlike the \emph{unweighted} adaptive search algorithm of \cite{choo2022verification}.
In the weighted setting, the costliest vertex $v_H$ in a clique separator $K_H$ may be enormous cost\footnote{In the extreme case, consider the example where $w(v_H) \gg \sum_{v \in V \setminus\{v_H\}} w(v)$. In this example, intervening on \emph{everything but $v_H$} in an atomic fashion trivially recovers \emph{any} DAG in $[G^*]$, i.e.\ $\overline{\nu}^{\max}_1(G^*) \leq \sum_{v \in V \setminus\{v_H\}} w(v) \ll w(v_H)$.} and the first case of \cref{thm:interventional-metric-lower-bound} only guarantees that we can remain competitive by intervening on \emph{all but $v_H$}.
As there may be connected components ``dangling off $v_H$'', we invoke \texttt{ResolveDangling} (\cref{alg:dangling-subroutine}) to ensure that the partites induced by the 1/2-clique separators will indeed be separated while we use the second case of \cref{thm:interventional-metric-lower-bound} to bound the cost of interventions used\footnote{If the computed clique separator in Step 8 of \cref{alg:weighted-search} involves only 1 node, $S = \emptyset$ and we break up the partites via \texttt{ResolveDangling} (\cref{alg:dangling-subroutine}).}.
Denoting an iteration of the while loop in \cref{alg:weighted-search} as a \emph{phase}, we show the following two lemmas about \cref{alg:weighted-search} whose combinations directly yields \cref{thm:weighted-search}.

\begin{restatable}{mylemma}{lognphasessuffice}
\label{lem:logn-phases-suffice}
\cref{alg:weighted-search} terminates after $\cO(\log n)$ phases.
\end{restatable}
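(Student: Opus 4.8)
The plan is to track, across phases, the size of the largest chain component that still contains unoriented edges and show it shrinks by a constant factor each phase. Fix a phase $i$ and let $H \in CC(\cE_{\cI_i}(G^*))$ be any chain component with $|V(H)| \geq 2$. The algorithm computes a $1/2$-clique separator $K_H$ of $H$ via \cref{thm:chordal-separator} (recall every chain component of an interventional essential graph is chordal), partitioning $V(H) \setminus V(K_H)$ into sets $A, B$ with $|A|, |B| \leq |V(H)|/2$ such that no edge of $H$ joins $A$ to $B$. The key claim is: after the interventions performed for $H$ in phase $i$ (namely intervening on $V(K_H) \setminus \{v_H\}$ and, if needed, the atomic interventions returned by \texttt{ResolveDangling}), every chain component of $\cE_{\cI_{i+1}}(G^*)$ contained in $V(H)$ has at most $|V(H)|/2 + O(1)$ vertices — in fact, I expect the bound $\lceil |V(H)|/2 \rceil$ or $\lceil |V(H)|/2 \rceil + 1$. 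Granting this claim, if $n_i$ denotes the size of the largest chain component with unoriented edges after phase $i$, then $n_{i+1} \leq n_i / 2 + O(1)$, so $n_i$ drops below $2$ after $O(\log n)$ phases, at which point no unoriented edges remain and the while loop exits.

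The heart of the argument is proving the claim, i.e.\ that intervening on $V(K_H) \setminus \{v_H\}$ together with resolving the danglers off $v_H$ suffices to "cut" $H$ along the separator. First, intervening on $V(K_H) \setminus \{v_H\}$ orients (by the cut property of interventions and Meek rule propagation) every edge incident to a vertex of $V(K_H) \setminus \{v_H\}$; in particular it orients every edge between $A$ and $V(K_H)$ and between $B$ and $V(K_H)$ except possibly those incident to $v_H$. So the only way $A$ and $B$ could remain in a common chain component is through unoriented edges $a \sim v_H$ and $v_H \sim b$ with $a \in A$, $b \in B$. These are exactly the "outgoing edges of $v_H$ within its residual chain component" that \texttt{ResolveDangling} is designed to orient: by its stated output guarantee, after invoking it on the chain component $Z_{v_H}$ containing $v_H$, all edges incident to $v_H$ inside $Z_{v_H}$ become oriented. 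Once every edge touching $v_H$ is oriented and every edge touching $V(K_H)\setminus\{v_H\}$ is oriented, no path from $A$ to $B$ through $V(K_H)$ survives as unoriented, so $A$ and $B$ (together with $V(K_H)$, which is now fully oriented) split into distinct chain components, each of size at most $|V(H)|/2$ plus possibly a small additive term from the vertices of $V(K_H)$ that could attach to $A$ or $B$. I would make this precise using \cref{lem:middle} / the ancestor-descendant orientation fact and the Meek-rule closure property, and handle the degenerate case $|V(K_H)| = 1$ (where $S = \emptyset$) separately, noting that then \texttt{ResolveDangling} alone does the cutting.

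The main obstacle I anticipate is the bookkeeping around the vertices of the clique separator $K_H$ itself and the danglers: one must verify that \texttt{ResolveDangling}'s internal loop (which repeatedly finds $1/2$-clique separators of the dangling components $H_i \cap N_H(v_H)$ and uses \cref{lem:at-most-one-incoming} to track the single surviving chain component with only incoming arcs) terminates \emph{within a single phase} and does not itself blow up the component sizes — but since that subroutine is invoked once per phase per chain component and its job is only to orient edges at $v_H$, it does not interfere with the factor-$2$ shrinkage of the overall component. A secondary subtlety is ensuring the $O(1)$ additive slack does not accumulate into a non-logarithmic bound; this is handled by the standard observation that the recurrence $n_{i+1} \leq n_i/2 + c$ still yields $n_i \leq n_0/2^i + 2c$, hence $O(\log n)$ phases. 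Everything else — chordality of chain components, correctness of Meek-rule propagation, the cut property of interventions — is inherited from the cited results and is routine.
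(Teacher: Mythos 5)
Your proposal is correct and follows essentially the same route as the paper's proof: the $1/2$-clique separator halves each chain component, and the guarantee of \texttt{ResolveDangling} (\cref{lem:dangling-subroutine}) that all outgoing edges of $v_H$ get oriented is exactly what the paper invokes to conclude that the partites $A$ and $B$ are genuinely disconnected despite not intervening on $v_H$. Your extra care about the additive slack from $v_H$ possibly joining one partite (and the resulting recurrence $n_{i+1} \leq n_i/2 + c$) is a detail the paper addresses in its Section 4.2 discussion rather than in the proof itself, but it does not change the approach.
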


\begin{restatable}{mylemma}{costofeachphase}
\label{lem:cost-of-each-phase}
Each phase in \cref{alg:weighted-search} incurs a cost of $\cO(\log(n) \cdot \overline{\nu}^{\max}_1(G^*))$.
\end{restatable}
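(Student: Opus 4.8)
The plan is to analyze a single phase $i$ of \cref{alg:weighted-search} by decomposing the cost incurred into two parts --- the cost spent intervening on the ``clique separator minus costliest vertex'' sets $S$ across all chain components, and the cost spent inside the \texttt{ResolveDangling} calls --- and bound each part against the corresponding term in the lower bound of \cref{thm:interventional-metric-lower-bound}. The crucial structural fact that makes this work is that, at the start of phase $i$, the interventional essential graph $\cE_{\cI_i}(G^*)$ has disjoint chain components $H \in CC(\cE_{\cI_i}(G^*))$, and by \cref{thm:interventional-metric-lower-bound} instantiated with the intervention set $\cI = \cI_i$ we have $\overline{\nu}^{\max}_1(G^*) \geq \sum_{H : |V(H)| \geq 2} \max\{\zeta^{(1)}_{\cI_i, H}, \zeta^{(2)}_{\cI_i, H}\}$. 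So it suffices to show that the cost this phase spends within each such $H$ is $\cO(\log n)$ times $\max\{\zeta^{(1)}_{\cI_i,H}, \zeta^{(2)}_{\cI_i,H}\}$; since the $H$'s are disjoint and interventions in one component neither help nor are charged against another, summing over $H$ gives the claim.

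First I would handle the set $S$ for a fixed component $H$. By construction $S = \{\{v\} : v \in V(K_H) \setminus \{v_H\}\}$ where $K_H$ is a $1/2$-clique separator of $H$ and $v_H$ is its costliest vertex, so $w(S) = w(V(K_H)) - \max_{v \in V(K_H)} w(v) \leq 2\,\zeta^{(1)}_{\cI_i, H} \leq 2 \max\{\zeta^{(1)}_{\cI_i,H}, \zeta^{(2)}_{\cI_i,H}\}$, using that $K_H$ is one particular clique in $H$ so its ``weight minus costliest'' value is at most the max over all cliques that defines $\zeta^{(1)}$. This part costs no logarithmic factor at all. Next I would handle the \texttt{ResolveDangling} call on $(H, v_H)$. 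The ``if'' branch intervenes only on $v_H$ at cost $w(v_H)$, and the branch condition is exactly $w(v_H) \leq \sum_{i=1}^t \max_{\text{clique } C \text{ in } H_i \cap N_H(v_H)} w(C) = \gamma_{H, v_H}$, so $w(v_H) \leq \min\{w(v_H), \gamma_{H,v_H}\} \cdot 1 \leq 2\,\zeta^{(2)}_{\cI_i, H}$ once we observe $v_H \in V(H)$ is a valid choice of $v$ in the max defining $\zeta^{(2)}$. For the ``else'' branch, the algorithm loops over the $t$ dangling components $H_i$, and in each one it repeatedly takes $1/2$-clique separators and recurses on the single ``only incoming arcs'' chain component $Q$ given by \cref{lem:at-most-one-incoming}; each such recursion halves the relevant vertex set, so there are $\cO(\log n)$ rounds per $H_i$, and in each round we pay the weight of one clique $K$ that lives inside $H_i \cap N_H(v_H)$ (after restriction), which is at most $\max_{\text{clique } C \text{ in } H_i \cap N_H(v_H)} w(C)$ up to the fact that restricting can only shrink the available cliques. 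Summing the $\cO(\log n)$ rounds over all $i$ gives total cost $\cO(\log n) \cdot \sum_{i=1}^t \max_{\text{clique } C \text{ in } H_i \cap N_H(v_H)} w(C) = \cO(\log n) \cdot \gamma_{H, v_H}$; combined with the branch condition $w(v_H) > \gamma_{H,v_H}$ in this case, we get $\gamma_{H,v_H} = \min\{w(v_H), \gamma_{H,v_H}\} \leq 2\,\zeta^{(2)}_{\cI_i, H}$, so the else-branch cost is $\cO(\log n) \cdot \zeta^{(2)}_{\cI_i,H}$.

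Putting the two parts together, the cost spent in component $H$ during phase $i$ is $\cO(\log n) \cdot \max\{\zeta^{(1)}_{\cI_i, H}, \zeta^{(2)}_{\cI_i, H}\}$, and summing over all $H \in CC(\cE_{\cI_i}(G^*))$ with $|V(H)| \geq 2$ and invoking \cref{thm:interventional-metric-lower-bound} with $\cI = \cI_i$ yields a total phase cost of $\cO(\log n \cdot \overline{\nu}^{\max}_1(G^*))$. The main obstacle I anticipate is the bookkeeping in the \texttt{else} branch of \texttt{ResolveDangling}: one must verify that every clique $K$ intervened on there really can be charged to some clique inside the \emph{original} neighborhood $H_i \cap N_H(v_H)$ (the vertex sets $V'$ shrink via the ``only incoming arcs'' components $Q$, so one needs that cliques in these nested subgraphs are still cliques in the original $H_i \cap N_H(v_H)$), and that the number of recursion rounds is genuinely $\cO(\log n)$ --- this is where \cref{lem:at-most-one-incoming} and the $1/2$-separator guarantee of \cref{thm:chordal-separator} must be combined carefully. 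A secondary subtlety is making sure that the ``$\zeta^{(2)}$'' accounting is not double-charged across the two pieces ($S$ versus \texttt{ResolveDangling}) when $\zeta^{(1)}$ and $\zeta^{(2)}$ are comparable; taking the crude bound of charging both pieces to $2\max\{\zeta^{(1)},\zeta^{(2)}\}$ and absorbing the constant handles this.
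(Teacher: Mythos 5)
Your proposal is correct and follows essentially the same route as the paper: the phase cost is split into the $V(K_H)\setminus\{v_H\}$ interventions (charged to the $\zeta^{(1)}$ terms, with no logarithmic loss) and the \texttt{ResolveDangling} cost (charged to the $\zeta^{(2)}$ terms, with an $\cO(\log n)$ factor from the halving of $V'$ via \cref{thm:chordal-separator} and \cref{lem:at-most-one-incoming}), then summed over chain components against \cref{thm:interventional-metric-lower-bound} instantiated at $\cI = \cI_i$. The paper delegates the second part to \cref{lem:dangling-subroutine} whereas you inline it and make the per-component charging to $\zeta^{(2)}_{\cI_i,H}$ explicit before summing, which is a faithful (arguably cleaner) rendering of the same argument.
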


The first logarithmic factor in \cref{lem:logn-phases-suffice} is due to the halving of the size of the chain components in each phase while the second logarithmic factor in \cref{lem:cost-of-each-phase} is due to the subroutine \texttt{ResolveDangling}, which tries to find a prefix clique in each dangling component. The description of the subroutine \texttt{ResolveDangling} is provided in \cref{alg:dangling-subroutine} and the guarantees of this subroutine are summarized in the following lemma.

\begin{restatable}{mylemma}{danglingsubroutine}
\label{lem:dangling-subroutine}
Fix an interventional essential graph $\cE_{\cI'}(G^*)$ corresponding to an unknown weighted causal moral DAG $G^*$ and some intervention $\cI' \subseteq 2^V$.
Let $H$ be a chain component of $\cE_{\cI'}(G^*)$ containing a vertex $v \in V(H)$.
Then, \cref{alg:dangling-subroutine} returns an atomic intervention set $\cI$ such that all the outgoing edges of $v$ within $H$ are oriented in $\cE_{\cI' \cup \cI}(H)$ and $w(\cI) \in \cO(\log n \cdot \overline{\nu}^{\max}_1(G^*))$.
\end{restatable}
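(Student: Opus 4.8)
The plan is to handle the two branches of \cref{alg:dangling-subroutine} separately; in each I establish both that all outgoing edges of $v$ within $H$ become oriented and that the total weight of the interventions is $\cO(\log n \cdot \overline{\nu}^{\max}_1(G^*))$. We may assume $|V(H)| \ge 2$, since otherwise $v$ has no incident edge in $H$ and there is nothing to prove; then, as $H$ is connected, $v$ has at least one neighbour in each of the $t$ connected components $H_1, \dots, H_t$ of $H[V(H) \setminus \{v\}]$. Throughout I write $N_i = V(H_i) \cap N_H(v)$ and $M_i = \max_{\text{clique } C \text{ in } H_i \cap N_H(v)} w(C)$, so that the quantity $\gamma_{H,v}$ of \cref{thm:interventional-metric-lower-bound} equals $\sum_{i=1}^{t} M_i$ and the branch condition of \cref{alg:dangling-subroutine} is exactly $w(v) \le \gamma_{H,v}$. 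The basic numeric input in both branches is that \cref{thm:interventional-metric-lower-bound}, applied to the interventional essential graph $\cE_{\cI'}(G^*)$ and the chain component $H$ (its derivation goes through for an arbitrary $\cI' \subseteq 2^V$ since \cref{lem:clique-picking} is stated for such $\cI'$), yields $\overline{\nu}^{\max}_1(G^*) \ge \zeta^{(2)}_{\cI',H} \ge \tfrac{1}{2} \min\{w(v), \gamma_{H,v}\}$.

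In the first branch ($w(v) \le \gamma_{H,v}$) the algorithm intervenes only on $v$. Since $H$ is a chain component of $\cE_{\cI'}(G^*)$ every edge incident to $v$ in $H$ is unoriented, and an atomic intervention on $v$ orients every edge in the cut $(\{v\}, V \setminus \{v\})$, in particular all edges of $v$ within $H$; correctness follows. For the cost, $\min\{w(v), \gamma_{H,v}\} = w(v)$, so by the displayed bound the incurred cost $w(v)$ is at most $2\,\overline{\nu}^{\max}_1(G^*)$.

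In the second branch ($w(v) > \gamma_{H,v}$) the displayed bound gives $\gamma_{H,v} \le 2\,\overline{\nu}^{\max}_1(G^*)$, so it suffices to show that processing each component $H_i$ (the body of the \textbf{for} loop, starting from $V' = N_i$) costs $\cO(\log n \cdot M_i)$ and orients every outgoing edge of $v$ into $N_i$; summing over $i$ then gives $w(\cI) \in \cO(\log n) \cdot \sum_i M_i = \cO(\log n \cdot \gamma_{H,v}) \subseteq \cO(\log n \cdot \overline{\nu}^{\max}_1(G^*))$. For the cost: since $H$ is a UCCG, $H_i$ is chordal; $N_i$ induces a connected subgraph of $H_i$ (a standard chordal-graph fact --- take a shortest $H_i$-path between two vertices of $N_i$ and apply chordality to the cycle it forms with $v$); and every later $V' = V(Q)$ is a chain component, hence connected. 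So whenever the \textbf{while} loop runs, $H_i[V']$ is a connected chordal graph that is not a clique, and \cref{thm:chordal-separator} supplies a $1/2$-clique-separator $K$; as $Q$ is connected and avoids $V(K)$ it lies on one side of the separator, so $|V(Q)| \le |V'|/2$ and the loop performs $\cO(\log |N_i|) = \cO(\log n)$ iterations before terminating. Each set intervened on --- every $V(K)$ and the residual clique $V'$ --- is a clique whose vertices all lie in $N_i$, hence has weight at most $M_i$; so $H_i$ contributes weight $\cO(\log n) \cdot M_i$, as claimed.

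The remaining step, which I expect to be the crux, is the correctness of the second branch: that processing $H_i$ orients every edge $v \to u$ with $u \in N_i$. Intervening on a clique $K$ orients all edges incident to $V(K)$, in particular every edge between $v$ and $V(K)$; hence after an iteration the only outgoing edges of $v$ into $V'$ possibly still unoriented lead into chain components of the residual graph on $V' \setminus V(K)$ other than the component $Q$ recursed on --- by \cref{lem:at-most-one-incoming}, the unique one all of whose arcs to $K$ point into $K$. For such a dropped component containing a child $u$ of $v$, there is an arc out of $K$ into that component, and chasing it, using that $v$ is adjacent to all of $V' \subseteq N_i$, the absence of v-structures in $G^*[V(H)]$, \cref{lem:middle}, and the Meek rules, shows that $v \to u$ gets oriented --- the same style of argument as in the correctness analyses of the clique-separator search algorithms of \cite{choo2022verification,choo2023subset}. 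Inducting down the recursion (which ends either at a fully intervened residual clique, or at an empty residual set once no upstream component remains) handles all of $N_i$, and taking the union over $H_1, \dots, H_t$, whose neighbour sets partition $N_H(v)$, shows all outgoing edges of $v$ within $H$ are oriented.
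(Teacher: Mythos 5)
Your treatment of the if-branch and your entire cost analysis match the paper's: both rest on the $\zeta^{(2)}$ term of \cref{thm:interventional-metric-lower-bound} (so each round of cliques drawn from the sets $V(H_i) \cap N_H(v)$ costs at most $\gamma_{H,v} \leq 2\,\overline{\nu}^{\max}_1(G^*)$) and on the halving property of the $1/2$-clique separators to bound the number of while-loop iterations by $\cO(\log n)$. That part is fine.

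The gap is in the correctness of the else-branch, which you yourself flag as the crux and then do not prove. The paper's argument is global rather than per-iteration: it identifies the target vertex $u_i = \argmin_{u \in V(H_i) \cap N_H(v)} \pi(u)$, which (by morality of $H$) is the unique source of $H_i$, so that every $z \in V(H_i)$ is a descendant of $u_i$; it then shows that intervening on the \emph{set} $\{u_i\}_{i \in [t]}$ orients all outgoing edges of $v$ in $H$ --- via \cref{lem:middle} when $\pi(v) < \pi(u_i)$ (since $u_i \in \Des(v) \cap \Anc(z)$), and via Meek rule R1 triggered by the oriented arc $u_1 \to v$ for the components with $\pi(u_i) > \pi(v)$ --- and finally argues that the while-loop's terminal clique contains $u_i$, so the algorithm does intervene on it. Your proposal never identifies $u_i$ or the fact that it is the source of $H_i$; instead you claim that for every chain component \emph{dropped} at an iteration (one with an arc out of $K$ into it), the edges $v \to u$ into it are \emph{already} oriented at the moment of dropping, "by the same style of argument as in [the clique-separator papers]". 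That claim is not established, and it is not clearly true: e.g.\ if $s$ is the source of $K$ with $s \to v$ and $s \to u$ and $v \to u$, intervening on $K$ orients $s \to v$ and $s \to u$ but neither \cref{lem:middle} (which needs $s \in \Des(v)$) nor Meek R1/R2 need fire on $v \sim u$; such an edge may only become oriented \emph{later}, once $u_1 \to v$ is revealed by the terminal intervention in component $H_1$ and R1 propagates across components. So the per-iteration induction you propose does not close on its own; the argument has to be routed through the eventual intervention on the sources $\{u_i\}$, which is exactly the step your sketch omits.
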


In the remainder, we briefly discuss the technical idea behind \cref{alg:dangling-subroutine}. Let $\pi$ be an arbitrary consistent ordering of vertices corresponding to the unknown underlying DAG $G^*$. Suppose there are $t$ disjoint connected components $H_1, \ldots, H_t$ after removing $v_H$, then the cost incurred by \cref{alg:dangling-subroutine} is made competitive by using the second case of \cref{thm:interventional-metric-lower-bound}.
See \cref{fig:trace} for an illustration.
If $w(v_H)$ is at most the sum of weights of the heaviest clique across all $\{H_i \cap N_H(v)\}_{i \in [t]}$, then we can simply intervene on $v_H$ to disconnect the partites.
Otherwise, within each disjoint connected component $H_i$, we will search for and intervene on the source vertex $u_i = \argmin_{u \in V(H_i) \cap N_H(v)} \pi(u)$ within each $H_i$.
As we will search for $u_i$ using 1/2-clique separators, \cref{lem:at-most-one-incoming} guarantees we find it in $\cO(\log n)$ iterations.

\begin{restatable}{mylemma}{atmostoneincoming}
\label{lem:at-most-one-incoming}
Let $\cE_{\cI}(G)$ be the interventional essential graph of a moral DAG $G = (V,E)$ with respect to intervention set $\cI \subseteq V$.
Fix any chain component $H \in CC(\cE_{\cI}(G))$ and vertex $v \in V(H)$.
If $v$ is the source node of $H$, then there are no chain components of $\cE_{\cI \cup \{v\}}(H)$ with only incoming arcs into $v$ in $G$.
Otherwise, if $v$ is not the source node of $H$, then there is exactly one chain component of $\cE_{\cI \cup \{v\}}(H)$ with only incoming arcs into $v$ in $G$.
Furthermore, without further interventions, we can decide if such a chain component exist (and find it) in polynomial time.
\end{restatable}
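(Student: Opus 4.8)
The plan is to first reduce to a single moral DAG and then split the statement into two structural facts. Since $H$ is a chain component of $\cE_{\cI}(G)$, every edge inside $V(H)$ is unoriented in $\cE_{\cI}(G)$, so $D := G[V(H)]$ is a vertex-induced subgraph of a moral DAG and hence itself a moral DAG; being moral it has a unique source, which is ``the source node of $H$''. Moreover $\cE_{\cI\cup\{v\}}(H)$ agrees on $V(H)$ with $\cE_{\{v\}}(D)$, which by the standard characterization of (interventional) essential graphs (see \cref{sec:additional-known-results} and \cref{sec:appendix-meek-rules}) is obtained from $\skel(D)$ by orienting the edges incident to $v$ as in $D$ --- call these the \emph{seeds} --- and closing under Meek's rules. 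I will prove two claims about $\cE_{\{v\}}(D)$: \textbf{(A)} all parents $\Pa_D(v)$ lie in one chain component $Q$; \textbf{(B)} $Q$ contains no descendant of $v$. These suffice: if $v$ has no parent (equivalently, $v$ is the source of $D$) then no chain component is incident to $v$ via an incoming arc, giving the first case; otherwise $Q$ is nonempty and incident to $v$, only through incoming arcs by \textbf{(B)}, hence an incoming component, while every other chain component incident to $v$ contains no parent of $v$ by \textbf{(A)} and is therefore not incoming --- so $Q$ is the unique one.

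For \textbf{(A)} I will establish, by induction over the Meek-closure, the invariant that every oriented arc $x\to y$ of $\cE_{\{v\}}(D)$ has $y\in\{v\}\cup\Des_D(v)$. The seeds $p\to v$ (for $p\in\Pa_D(v)$) and $v\to c$ (for $c\in\Ch_D(v)$) satisfy it. Each Meek rule produces an arc whose head is forced to agree with $D$: for R2, R3, R4 this head coincides with the head of an already-oriented arc in the triggering pattern, so the invariant is preserved; for R1, which orients $b\to c$ from an oriented $a\to b$ and an edge $b-c$, the invariant gives $b\in\{v\}\cup\Des_D(v)$ and then $b\to c$ in $D$ forces $c\in\Des_D(v)$. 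Since $\Pa_D(v)$ is disjoint from $\{v\}\cup\Des_D(v)$, no edge internal to $\Pa_D(v)$ is ever oriented; and $\Pa_D(v)$ is a clique because $D$ is moral, so it stays connected in the unoriented part of $\cE_{\{v\}}(D)$ and lies in a single chain component $Q$.

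For \textbf{(B)} the heart is the subclaim: every arc $z\to w$ of $D$ with $w\in\Des_D(v)$ and $z\notin\{v\}\cup\Des_D(v)$ is oriented in $\cE_{\{v\}}(D)$. I will prove it by induction on the length $\delta(w)$ of a shortest directed $v\leadsto w$ path, using as a lemma that every edge on such a shortest path is oriented ($v\to q_1$ is a seed and R1 propagates along the path, the required non-adjacencies following from minimality of the path and acyclicity). If $\delta(w)=1$, morality forces $z$ to be a parent of $v$, and \cref{lem:middle} (or R2 through $z\to v\to w$) orients $z\to w$. If $\delta(w)\ge2$, the penultimate path vertex $q$ is a parent of $w$ with $q\in\Des_D(v)$ and $\delta(q)<\delta(w)$, so $q\to w$ is oriented; morality makes $z\sim q$, acyclicity forces $z\to q$ in $D$, the inductive hypothesis orients $z\to q$, and R2 through $z\to q\to w$ orients $z\to w$. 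Given the subclaim, \textbf{(B)} follows: if some $w\in\Des_D(v)$ belonged to $Q$, an unoriented path in $\cE_{\{v\}}(D)$ from $\Pa_D(v)$ to $w$ (which avoids $v$) would have a first edge $z_i\sim z_{i+1}$ with $z_i\notin\{v\}\cup\Des_D(v)$ and $z_{i+1}\in\Des_D(v)$; acyclicity makes this edge $z_i\to z_{i+1}$ in $D$, so the subclaim orients it, contradicting that it lies on an unoriented path. Finally, the algorithmic claim is immediate: compute $\cE_{\cI\cup\{v\}}(\cdot)$ by running Meek's rules, check whether any edge incident to $v$ points into $v$, and if so output the chain component of any such parent; all of this is polynomial. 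I expect the subclaim powering \textbf{(B)} to be the main obstacle --- since descendants of $v$ can remain mutually connected by unoriented edges one cannot reason componentwise, and it is exactly morality (parents of a vertex forming a clique) that lets the R2 induction push orientations across the boundary of $\Des_D(v)$.
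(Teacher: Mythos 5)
Your proof is correct and follows essentially the same route as the paper's, which obtains this lemma as the $K=\{v\}$ special case of \cref{lem:at-most-one-incoming-generalized}: uniqueness there likewise comes from morality forcing $\Pa(v)$ to be a clique whose internal edges stay unoriented under Meek rules, and the existence/purity of the incoming component likewise comes from a \cref{lem:middle}-style argument that intervening on $v$ orients every edge crossing into $\Des(v)$. Your subclaim in (B) is in fact a slightly more careful rendering of the paper's terser assertion that ancestors and descendants of $v$ must land in different chain components (it handles unoriented paths that pass through vertices that are neither), so nothing is missing.
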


Consider an arbitrary connected component $H_i$ amongst $H_1, \ldots, H_t$ with source vertex $u_i$.
If $\pi(v_H) < \pi(u_i)$, \cref{lem:middle} tells us that intervening on $u_i$ will orient all $v_H \to z$ arcs for $z \in H_i$ which disconnects $H_i$ from $v_H$, and thus from other components $H_j$.
Meanwhile, if $\pi(v_H) > \pi(u_i)$, then there is an arc from $H_i$ to $v_H$.
Note that intervening on $u_i$ may not disconnect $H_i$ from $v_H$, but it will disconnect $H_i$ from the other components\footnote{Without loss of generality, suppose $\pi(u_1) = \min_{i \in \{1, \ldots, t\}} \pi(u_i)$ and $\pi(v_H) > \pi(u_1)$. Orienting the arc $u_1 \to v_H$ triggers Meek rule R1 to orient all $v_H \to z$ arcs for $z \not\in H_1$, thus disconnecting the $H_i$'s from each other.}.
Nonetheless, we can still conclude that the resulting connected component has size at most halved since $H_i$ was part of a partite resulting from a 1/2-clique separator -- it may include at most an additional vertex $v_H$ but will not include $u_i$ (since we intervene on $u_i$).
We prove this formally in the appendix.

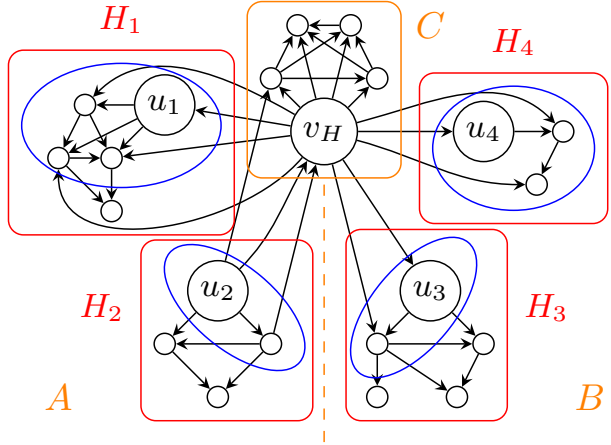
\begin{figure}[tb]
    \centering
    \resizebox{0.5\linewidth}{!}{%
    \begin{tikzpicture}
\node[draw, circle, inner sep=2pt] at (0,0) (vh) {\small $v_H$};
\node[draw, circle, inner sep=2pt] at ($(vh) + (-0.5,0.5)$) (k2) {};
\node[draw, circle, inner sep=2pt] at ($(vh) + (0.5,0.5)$) (k3) {};
\node[draw, circle, inner sep=2pt] at ($(vh) + (0.25,1)$) (k4) {};
\node[draw, circle, inner sep=2pt] at ($(vh) + (-0.25,1)$) (k5) {};

\draw[-stealth] (vh) -- (k2);
\draw[-stealth] (vh) -- (k3);
\draw[-stealth] (vh) -- (k4);
\draw[-stealth] (vh) -- (k5);
\draw[-stealth] (k2) -- (k3);
\draw[-stealth] (k2) -- (k4);
\draw[-stealth] (k2) -- (k5);
\draw[-stealth] (k3) -- (k4);
\draw[-stealth] (k3) -- (k5);
\draw[-stealth] (k4) -- (k5);

\node[draw, circle, inner sep=2pt] at ($(vh) + (-1.5,0.25)$) (u1) {\small $u_1$};
\node[draw, circle, inner sep=2pt] at ($(u1) + (-0.75,0)$) (u12) {};
\node[draw, circle, inner sep=2pt] at ($(u1) + (-1,-0.5)$) (u13) {};
\node[draw, circle, inner sep=2pt] at ($(u1) + (-0.5,-0.5)$) (u14) {};
\node[draw, circle, inner sep=2pt] at ($(u1) + (-0.5,-1)$) (u15) {};

\draw[-stealth] (u1) -- (u12);
\draw[-stealth] (u1) -- (u13);
\draw[-stealth] (u1) -- (u14);
\draw[-stealth] (u12) -- (u13);
\draw[-stealth] (u12) -- (u14);
\draw[-stealth] (u13) -- (u14);
\draw[-stealth] (u13) -- (u15);
\draw[-stealth] (u14) -- (u15);

\node[draw, circle, inner sep=2pt] at ($(vh) + (-1,-1.5)$) (u2) {\small $u_2$};
\node[draw, circle, inner sep=2pt] at ($(u2) + (0.5,-0.5)$) (u22) {};
\node[draw, circle, inner sep=2pt] at ($(u2) + (-0.5,-0.5)$) (u23) {};
\node[draw, circle, inner sep=2pt] at ($(u2) + (0,-1)$) (u24) {};

\draw[-stealth] (u2) -- (u22);
\draw[-stealth] (u2) -- (u23);
\draw[-stealth] (u22) -- (u24);
\draw[-stealth] (u22) -- (u23);
\draw[-stealth] (u23) -- (u24);

\node[draw, circle, inner sep=2pt] at ($(vh) + (1,-1.5)$) (u3) {\small $u_3$};
\node[draw, circle, inner sep=2pt] at ($(u3) + (-0.5,-0.5)$) (u32) {};
\node[draw, circle, inner sep=2pt] at ($(u3) + (-0.5,-1)$) (u33) {};
\node[draw, circle, inner sep=2pt] at ($(u3) + (0.5,-0.5)$) (u34) {};
\node[draw, circle, inner sep=2pt] at ($(u3) + (0.25,-1)$) (u35) {};

\draw[-stealth] (u3) -- (u32);
\draw[-stealth] (u3) -- (u34);
\draw[-stealth] (u32) -- (u33);
\draw[-stealth] (u32) -- (u34);
\draw[-stealth] (u32) -- (u35);
\draw[-stealth] (u34) -- (u35);

\node[draw, circle, inner sep=2pt] at ($(vh) + (1.5,0)$) (u4) {\small $u_4$};
\node[draw, circle, inner sep=2pt] at ($(u4) + (0.75,0)$) (u42) {};
\node[draw, circle, inner sep=2pt] at ($(u4) + (0.5,-0.5)$) (u43) {};

\draw[-stealth] (u4) -- (u42);
\draw[-stealth] (u42) -- (u43);

\draw[-stealth] (vh) -- (u1);
\draw[-stealth] (vh) to[out=150,in=45] (u12);
\draw[-stealth] (vh) to[out=230,in=270] (u13);
\draw[-stealth] (vh) -- (u14);

\draw[-stealth] (u2) to[out=45,in=240] (vh);
\draw[-stealth] (u2) -- (k2);
\draw[-stealth] (u22) -- (vh);

\draw[-stealth] (vh) -- (u3);
\draw[-stealth] (vh) -- (u32);

\draw[-stealth] (vh) -- (u4);
\draw[-stealth] (vh) to[out=15,in=135] (u42);
\draw[-stealth] (vh) to[out=345,in=180] (u43);

\draw[dashed, orange] ($(vh) + (0,-0.5)$) -- ($(vh) + (0,-3)$);
\node[orange] at ($(vh) + (1,1)$) {$C$};
\node[orange] at ($(vh) + (-2.5,-2.5)$) {$A$};
\node[orange] at ($(vh) + (2.5,-2.5)$) {$B$};

\node[draw, blue, fit=(u1)(u12)(u13)(u14), ellipse, inner xsep=-1pt, inner ysep=-1pt, yshift=-1pt] (h1_neighbors) {};
\node[draw, blue,fit=(u2)(u22), ellipse, rotate=140, inner ysep=-5pt] (h2_neighbors) {};
\node[draw, blue,fit=(u3)(u32), ellipse, rotate=50, inner ysep=-5pt] (h3_neighbors) {};
\node[draw, blue,fit=(u4)(u42)(u43), ellipse, inner xsep=-1pt, inner ysep=-1pt] (h4_neighbors) {};

\node[draw, orange, fit=(vh)(k2)(k3)(k4)(k5), rounded corners] (clique-separator) {};
\node[draw, red, fit=(u1)(u12)(u13)(u14)(u15)(h1_neighbors), rounded corners] (h1) {};
\node[draw, red, fit=(u2)(u22)(u23)(u24)(h2_neighbors), rounded corners] (h2) {};
\node[draw, red, fit=(u3)(u32)(u33)(u34)(u35)(h3_neighbors), rounded corners] (h3) {};
\node[draw, red, fit=(u4)(u42)(u43)(h4_neighbors), rounded corners] (h4) {};

\node[thick, red] at ($(h1_neighbors) + (0,1)$) {\small $H_1$};
\node[thick, red] at ($(h2_neighbors) + (-1.25,0)$) {\small $H_2$};
\node[thick, red] at ($(h3_neighbors) + (1.25,0)$) {\small $H_3$};
\node[thick, red] at ($(h4_neighbors) + (0,1)$) {\small $H_4$};
\end{tikzpicture}
    }
    \caption{
    Consider the moral DAG $G^*$ above where $C$ is a 1/2-clique separator with vertices in $A = V(H_1) \cup V(H_2)$ and $B = V(H_3) \cup V(H_4)$, and $v_H$ is the costliest vertex in $C$.
    If we were to intervene on every single vertex in $C$, as per the algorithm of \cite{choo2022verification}, then the partites $A$ and $B$ will be disconnected.
    However, $v_H$ may be very costly and \cref{thm:interventional-metric-lower-bound} only gives approximation guarantees when intervening on $\cI = V(C) \setminus \{v_H\}$.
    Since the incident edges of $v_H$ may remain unoriented in $\cE_{\cI}(G^*)$, the partites may still be connected, e.g. the arcs $u_2 \to v_H \to u_3$ remain unoriented in $\cE_{\cI}(G^*)$.
    We say that connected components $H_1$, $H_2$, $H_3$, and $H_4$ are ``dangling'' from $v_H$ in $\cE_{\cI}(G^*)$.
    By \cref{lem:middle}, it suffices to intervene on all the source vertices $u_i$ in each $H_i$, and thus \texttt{ResolveDangling} searches for $u_i$ amongst the neighbors of $v_H$ in each $H_i$ (the blue ellipses).
    \vspace{-10pt}
    }
    \label{fig:trace}
\end{figure}

\subsection{Handling the generalized cost objective}
\label{sec:handling-generalized-cost}

To handle the generalized cost objective of \cref{eq:generalized-cost}, we make three algorithmic tweaks to the algorithms presented in the previous section.
Firstly, we change the condition of Line 2 in \cref{alg:dangling-subroutine} to account for the $\alpha$-$\beta$ trade-off in \cref{eq:generalized-cost}.
Secondly, to compute bounded size interventions to follow orient a clique, we apply the labelling scheme of \cref{lem:labelling-scheme} to use bounded sized interventions when intervening on cliques via the subroutine \texttt{CliqueIntervention} (\cref{alg:clique-intervention-subroutine}) with guarantees given in \cref{lem:clique-intervention-subroutine-generalized}.
Finally, when searching for a prefix clique in the while-loop \cref{alg:dangling-subroutine}, if one directly applies \texttt{CliqueIntervention} on each clique separator $K_{H_i}$, then one can show an $\cO(\log^2 n \cdot \log k)$ approximation.
To obtain an $\cO(\log n \cdot (\log n + \log k))$ approximation, we show that it suffices to partition $V(K_{H_i})$ into groups of size at most $k$ and intervening on them.
Note that this will \emph{not} necessarily orient all internal edges of $V(K_{H_i})$ but is sufficient for the purposes of locating a prefix clique.

\begin{lemma}[\cite{shanmugam2015learning}]
\label{lem:labelling-scheme}
Let $(n,k,a)$ be parameters where $k \leq n/2$.
There is a polynomial time labeling scheme that produces distinct $\ell$ length labels for all elements in $[n]$ using letters from the integer alphabet $\{0\} \cup [a]$ where $\ell = \lceil \log_a n \rceil$.
In every digit (or position), any integer letter is used at most $\lceil n/a \rceil$ times.
This labelling scheme is a separating system: for any $i,j \in [n]$, there exists some digit $d \in [\ell]$ where the labels of $i$ and $j$ differ.
\end{lemma}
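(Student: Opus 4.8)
The final statement to prove is Lemma~\ref{lem:labelling-scheme}, a combinatorial construction attributed to Shanmugam et al.

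\textbf{The plan.} The goal is to build, for parameters $(n,k,a)$ with $k \le n/2$, a family of distinct length-$\ell$ labels over alphabet $\{0\} \cup [a]$ with $\ell = \lceil \log_a n \rceil$, such that (i) in every digit position each letter appears at most $\lceil n/a \rceil$ times, and (ii) the labels form a separating system. The natural approach is simply to use a \emph{balanced} $a$-ary encoding: think of each element of $[n]$ as a distinct string in $\{0\} \cup [a])^{\ell}$. Since $a^{\ell} = a^{\lceil \log_a n \rceil} \ge n$, there are enough strings. The only subtlety is to choose \emph{which} $n$ strings so that no digit position is over-used by any single letter.

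\textbf{Key steps.} First, I would observe that $\ell = \lceil \log_a n \rceil$ suffices for distinctness: the alphabet has $a$ symbols (writing $\{0,1,\dots,a-1\}$ or equivalently $\{0\}\cup[a-1]$; note the paper writes $[a]$ but means $a$ symbols including $0$ — I will just say ``an alphabet of size $a$'') so there are $a^\ell \ge n$ available strings. Second, for the balance condition, I would construct the labels digit-by-digit via a greedy/counting argument, or more cleanly: enumerate $[n]$ and assign label of element $i$ to be the base-$a$ representation of $i-1$, padded to length $\ell$. Then in each digit position $d$, as $i$ ranges over $\{0,1,\dots,n-1\}$, the value of digit $d$ cycles through $\{0,\dots,a-1\}$ with period $a^{d}$ (or some such), so each letter appears either $\lfloor n/a \rfloor$ or $\lceil n/a \rceil$ times in that position — one needs to check the ``boundary'' block where the cycle is truncated, but truncation only \emph{decreases} counts, so the bound $\lceil n/a \rceil$ holds. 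Actually the cleanest version: the most significant digit might be unbalanced (e.g.\ only value $0$ appears if $n \le a^{\ell-1}$), but the claim only asks for an \emph{upper} bound $\lceil n/a \rceil$ per letter, which is trivially satisfied when a letter appears $0$ times and is satisfied by the cyclic-block argument otherwise. Third, the separating-system property is immediate from distinctness: if $i \ne j$ then their labels differ, hence differ in \emph{some} digit $d$.

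\textbf{Main obstacle.} The only genuine thing to verify carefully is the per-digit balance bound $\lceil n/a \rceil$, and in particular making sure the standard base-$a$ counting encoding actually achieves it (rather than needing a fancier construction). The resolution is the observation that for each fixed digit position, the sequence of digit-values over $i = 0, 1, \dots$ is periodic and within each full period every symbol occurs equally often; restricting to the first $n$ terms, each symbol occurs at most $\lceil n/a\rceil$ times. The hypothesis $k \le n/2$ plays no role in \emph{this} lemma's construction itself (it is a running assumption from the interventions setting ensuring $\ell \ge 1$ and that bounded interventions are nontrivial); I would note this and not belabor it. I expect the author's proof to be essentially this balanced-encoding argument, possibly phrased via an explicit ``write $i$ in base $a$'' formula or via a round-robin assignment, together with the one-line separating-system deduction.
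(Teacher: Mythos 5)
This lemma is imported verbatim from \cite{shanmugam2015learning}; the paper gives no proof of it, so there is nothing in-paper to compare against. Judged on its own merits, your proposed proof has a genuine gap: the plain base-$a$ encoding does \emph{not} satisfy the per-digit balance bound. Your ``cyclic-block'' argument is only valid for the least significant digit. At position $d$ the digit values of $0,1,\dots,n-1$ come in runs of length $a^d$ (each full period of length $a^{d+1}$ contains $a^d$ copies of each symbol), so after truncating to the first $n$ integers a single symbol can appear up to $\lfloor n/a^{d+1}\rfloor\cdot a^d + a^d \approx n/a + a^d$ times, and $a^d$ can itself be of order $n/a$. Concretely, take $n=6$, $a=3$, so $\ell=\lceil\log_3 6\rceil=2$: the labels are $00,01,02,10,11,12$ and in the most significant position the letter $1$ appears $3$ times, exceeding $\lceil n/a\rceil=2$. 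This is not a boundary case that ``only decreases counts''; it is a violation by a letter that appears, not one that appears zero times, and it occurs for a nonzero letter, so it cannot be waved away by exempting $0$.

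A secondary issue: you dismiss the alphabet $\{0\}\cup[a]$ (which has $a+1$ symbols) as a typo, but in the source construction the letter $0$ is genuinely special --- it is the ``not intervened in this round'' symbol, it is exempt from the balance requirement (the bound is claimed only for the \emph{integer} letters in $[a]$), and correspondingly \cref{alg:clique-intervention-subroutine} only forms interventions $S_{x,y}$ for $y\in[a]$. Collapsing to an $a$-letter alphabet changes the statement and, as the example shows, still does not rescue the balance claim. A correct proof needs a construction that explicitly balances each position --- e.g.\ assigning, at each digit, the $n$ elements to the $a$ nonzero letters in contiguous blocks of size at most $\lceil n/a\rceil$ chosen so that elements sharing a prefix of earlier digits are split evenly, or equivalently the recursive block-partition used by \cite{shanmugam2015learning} --- rather than the positional base-$a$ expansion. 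Your deduction of the separating property from distinctness is fine and is the easy part.
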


\begin{restatable}{mylemma}{cliqueinterventionsubroutinegeneralized}
\label{lem:clique-intervention-subroutine-generalized}
Given a set of clique vertices $V(C) \subseteq V$ and integer $k \geq 1$, \cref{alg:clique-intervention-subroutine} returns a set $S \subseteq 2^{V(C)}$ such that each partite in $S$ has at most $k$ vertices.
When $k = 1$, $|S| = |V(C)|$ and each vertex appears exactly once in $S$.
When $k > 1$, $|S| \in \cO(\log k \cdot |V(C)| / k)$ and each vertex appears at most $\cO(\log k)$ times in $S$.
\end{restatable}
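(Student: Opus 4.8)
The plan is to directly analyze the subroutine \texttt{CliqueIntervention} (\cref{alg:clique-intervention-subroutine}), which is built on the labelling scheme of \cref{lem:labelling-scheme}. First I would handle the easy case $k=1$: here a bounded size intervention of size at most $1$ is just an atomic intervention, so the only sensible output is $S = \{\{v\} : v \in V(C)\}$, giving $|S| = |V(C)|$ with each vertex appearing exactly once; this also correctly orients every internal edge of the clique since any pair of clique vertices is separated by the singleton interventions. For the main case $k > 1$, the plan is to apply \cref{lem:labelling-scheme} with the parameter choice $a = \lceil |V(C)| / k \rceil$ (so that $\lceil |V(C)|/a\rceil \leq k$, matching the ``at most $k$ times per digit'' guarantee with the ``at most $k$ vertices per partite'' requirement). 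The label length is then $\ell = \lceil \log_a |V(C)| \rceil$. For each digit $d \in [\ell]$ and each letter $c \in \{0\} \cup [a]$, form the partite $S_{d,c} = \{v \in V(C) : \text{the $d$-th digit of $v$'s label is } c\}$; the output is the collection of all nonempty such partites.

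Next I would verify the three claimed bounds. For the partite size bound: by \cref{lem:labelling-scheme}, in every digit each letter is used at most $\lceil |V(C)|/a \rceil \leq k$ times, so $|S_{d,c}| \leq k$. For the count $|S| \in \cO(\log k \cdot |V(C)|/k)$: there are $\ell$ digits and at most $a+1$ letters per digit, so $|S| \leq \ell \cdot (a+1) = \cO(\lceil \log_a |V(C)| \rceil \cdot a)$; substituting $a = \Theta(|V(C)|/k)$ gives $\log_a |V(C)| = \Theta(\log |V(C)| / \log(|V(C)|/k))$, and the bound $\cO(\log k \cdot |V(C)|/k)$ should fall out after checking that $\log |V(C)| / \log(|V(C)|/k) \in \cO(\log k)$ in the relevant regime $k \le |V(C)|/2$ (when $k$ is very close to $|V(C)|$ one must be slightly careful, but there $|V(C)|/k = \Theta(1)$ forces $a$ to a constant, $\ell = \Theta(\log|V(C)|)$, and $\log k = \Theta(\log |V(C)|)$, so it still holds). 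For the per-vertex appearance bound: each vertex has exactly one label, so it belongs to exactly one partite per digit, hence appears in exactly $\ell = \lceil \log_a |V(C)| \rceil$ partites, which is $\cO(\log k)$ by the same computation.

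Finally I would confirm the ``separating system'' property is what makes this a valid clique-intervention set (orienting all internal clique edges) — this is needed for correctness of the algorithm even though it is not literally part of the lemma's statement: for any two clique vertices $u, v$, \cref{lem:labelling-scheme} gives a digit $d$ where their labels differ, so the pair is cut by $S_{d, c}$ for $c$ equal to $u$'s $d$-th digit, and intervening on $S_{d,c}$ orients the edge $u \sim v$. The main obstacle I anticipate is the asymptotic bookkeeping in the count bound: reconciling $\ell \cdot a = \cO(\log k \cdot |V(C)|/k)$ across the full parameter range requires carefully splitting into the regime where $|V(C)|/k$ is large (so $a$ grows, $\ell$ shrinks to $\cO(1)$ plus a correction, and $\log k$ dominates) versus where $|V(C)|/k$ is a small constant (so $a = \cO(1)$, $\ell = \Theta(\log |V(C)|)$, and $\log k = \Theta(\log |V(C)|)$); getting the ceilings to not break the bound at the boundary $k = |V(C)|/2$ is the fiddly part, but nothing conceptually deep is involved.
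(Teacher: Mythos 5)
Your proposal is correct and follows essentially the same route as the paper: apply the labelling scheme with $a \approx \lceil |V(C)|/k \rceil$, bound $|S|$ by $\ell \cdot a$ and the per-vertex multiplicity by $\ell = \lceil \log_a |V(C)| \rceil$, and verify $\ell \in \cO(\log k)$ by splitting at $k = |V(C)|/2$ exactly as you anticipate. The only cosmetic difference is that \cref{alg:clique-intervention-subroutine} actually sets $a = \lceil |V(C)|/k' \rceil$ with $k' = \min\{k, |V(C)|/2\}$, which is precisely the device that resolves the boundary regime you flagged (it forces $a \geq 2$ and keeps the labelling-scheme precondition $k' \leq |V(C)|/2$ satisfied), so your "fiddly" case reduces to $a = 2$, $\ell = \lceil \log_2 |V(C)| \rceil \leq \lceil \log_2 2k \rceil$.
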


In terms of analysis, to lower bound \cref{eq:generalized-cost}, we individually lower bound the cost and size terms.
For instance,
\[
\max_{G \in [G^*]} \min_{\substack{\text{$\cI$ is a bounded}\\\text{size verifying}\\\text{set for $G$}}} \alpha \cdot w(\cI) + \beta \cdot |\cI|
\geq \max_{G \in [G^*]} \min_{\substack{\text{$\cI$ is a bounded}\\\text{size verifying}\\\text{set for $G$}}} \alpha \cdot w(\cI)
= \alpha \cdot \overline{\nu}^{\max}_k(G^*) \;.
\]
Similarly, $\beta \cdot \nu^{\max}_k(G^*)$ is also a lower bound.
Then, we can further use \cref{thm:relate-k} to lower bound $\overline{\nu}^{\max}_k(G^*)$ and $\nu^{\max}_k(G^*)$ via $\overline{\nu}^{\max}_1(G^*)$ and $\nu^{\max}_1(G^*)$ respectively.
The additional $\log k$ term for non-atomic interventions occurs because of the multiplicity of vertices in the output of \texttt{CliqueIntervention} (see \cref{lem:clique-intervention-subroutine-generalized}).

In summary, our tweaked algorithm for the generalized cost objective has $\cO(\log n)$ phases, similar to \cref{alg:weighted-search}, and we incur a cost of $\cO((\log n + \log k) \cdot \OPT_k)$ in each phase.
A description of the tweaked algorithm and a more detailed analysis of it is provided in the appendix.

\section{Experiments}
\label{sec:experiments}

Since \texttt{ALG} (\cref{alg:weighted-search}) is a special case of \texttt{ALG-GENERALIZED} (\cref{alg:weighted-search-generalized}) when $\alpha = 0$, $\beta = 1$, and $k = 1$, we implement and benchmark \texttt{ALG-GENERALIZED} against a synthetic dataset.

We modified the experimental setup used by \cite{squires2020active,choo2022verification,choo2023subset} to run on \emph{weighted} causal graphs and measure the generalized cost incurred for varying $\alpha$ and $\beta$ values.
We ran experiments for $\alpha \in \{0,1\}$ and $\beta = 1$ on two different types of weight classes for a graph on $n$ vertices:
\begin{description}
    \item[Type 1] The weight of each vertex is independently sampled from an exponential distribution $\exp(n^2)$ with parameter $n^2$. This is to simulate the setting where there is a spread in the costs of the vertices.
    \item[Type 2] A randomly chosen $p=0.1$ fraction of vertices are assigned weight $n^2$ while the others are assigned weight $1$. This is to simulate the setting where there are a few randomly chosen high cost vertices.
\end{description}

We have 4 sets of experiments in total and
\cref{fig:some-experiments} shows a subset of them. More experimental details and results are given in \cref{sec:appendix-experiments}, where we also investigate the impact of size for bounded size interventions.

\begin{figure}[tb]
\centering
\begin{subfigure}[b]{0.47\linewidth}
    \centering
    \includegraphics[width=\linewidth]{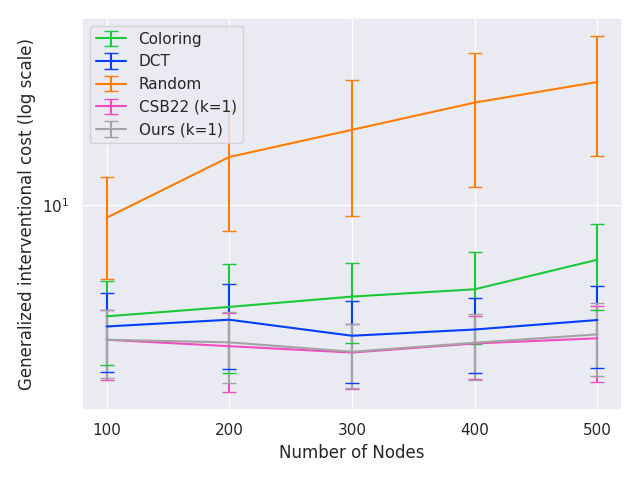}
    \caption{Type 1, $\alpha = 0$, $\beta = 1$}
\end{subfigure}
\begin{subfigure}[b]{0.47\linewidth}
    \centering
    \includegraphics[width=\linewidth]{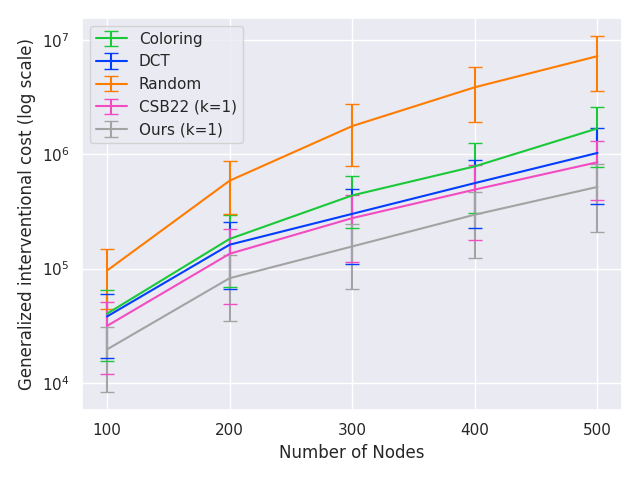}
    \caption{Type 1, $\alpha = 1$, $\beta = 1$}
\end{subfigure}
\\
\begin{subfigure}[b]{0.47\linewidth}
    \centering
    \includegraphics[width=\linewidth]{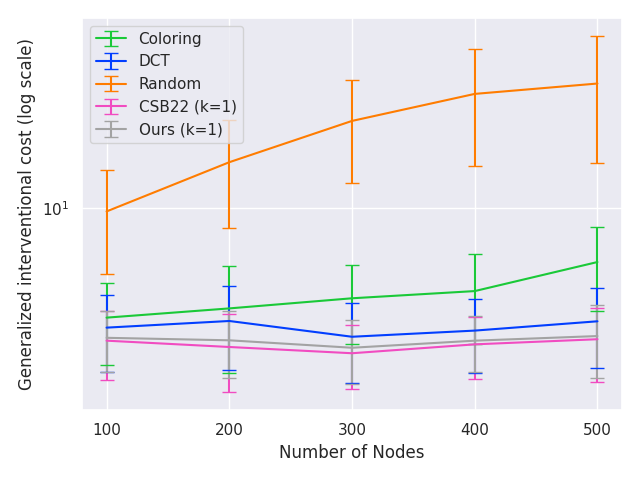}
    \caption{Type 2, $\alpha = 0$, $\beta = 1$}
\end{subfigure}
\begin{subfigure}[b]{0.47\linewidth}
    \centering
    \includegraphics[width=\linewidth]{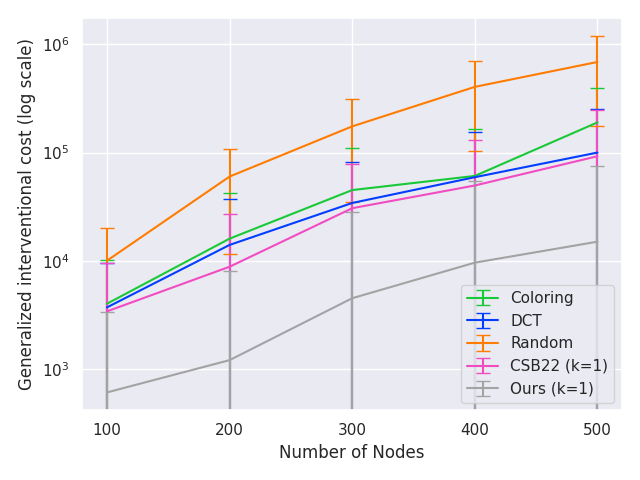}
    \caption{Type 2, $\alpha = 1$, $\beta = 1$}
\end{subfigure}
\caption{Experimental results for atomic interventions (log scale)}
\label{fig:some-experiments}
\end{figure}

\subsection{Qualitative discussion of experimental results}

For any intervention set $\cI \subseteq 2^V$ that fully orients the given causal graph, the Y-axis measures the generalized cost $\alpha \cdot w(\cI) + \beta \cdot |\cI|$.
So, fixing either $\alpha$ or $\beta$, and scaling the other will recover any possible observable trend (ignoring the magnitude of the values on the Y-axis).
As our experiments were for atomic interventions, the parameter setting of $(\alpha, \beta) = (0,1)$ precisely recovers the unweighted atomic intervention setting, so \cref{fig:some-experiments} attempts to illustrate what happens when we set $\alpha = 0$ and $\alpha = 1$.

When $\alpha = 0$, the generalized cost function is simply the number of interventions used that the other state-of-the-art methods were designed for.
Here, \texttt{ALG-GENERALIZED} incurs a similar cost despite having additional overheads to ensure theoretical guarantees for general $\alpha \geq 0$.

For $\alpha > 0$, the generalized cost function is affected by the vertex weights, and \texttt{ALG-GENERALIZED} incurs noticeably less generalized cost than the others already when $\alpha = 1$ (note that the plot is in log scale).
This gap will only increase as we increase the value of $\alpha$ to make the generalized cost put more weightage on the total additive vertex cost of the intervention $\cI$.

As our experimental instances were randomly generated, it does look like existing algorithms, such as \cite{choo2022verification}, is competitive with our weight-sensitive algorithm \texttt{ALG-GENERALIZED} on such random instances, even though they are oblivious to vertex weights.
However, we can easily create many instances where these algorithms performs arbitrarily worse.
For instance, consider the star graph $G^*$ on $n$ nodes where the leaves have weight 1 and the centroid has weight $w \gg n$; imagine $w = n^{10000}$.
On $G^*$, \cite{choo2022verification} will intervene on the centroid, incurring $w$ while \texttt{ALG-GENERALIZED} will never intervene on the centroid and in the worst case intervene on all the leaves (paying at most $n-1$) to fully orient $G^*$ from $\cE(G^*)$.

In terms of running time, \texttt{ALG-GENERALIZED} has a similar running time\footnote{\texttt{ALG-GENERALIZED} is faster than all benchmarked algorithms except \cite{choo2022verification}. This is expected as both are based on 1/2-clique separators but \texttt{ALG-GENERALIZED} has additional computational overheads to handle dangling components.} as the other state-of-the-art algorithms across all experiments.

\section{Conclusion and future directions}

In our work, we make standard assumptions of causal sufficiency, faithfulness, and infinite sample regime.
As these assumptions may be too strong in some practical settings, one should view our work as providing theoretical foundations to the \emph{feasibility} of the weighted search problem (i.e.\ what \emph{can} be done in an optimistic setting) and it is of paramount practical importance to weaken/remove such assumptions in future work. In addition, we also state some possible future directions that we think are interesting:

\begin{enumerate}
    \item Understand the optimal approximation ratio with respect to our new benchmark $\overline{\nu}^{\max}(G^*) = \max_{G \in [G^*]} \overline{\nu}(G)$.
        \cite{choo2022verification} tells us that a $\log(n)$ factor in approximation is \emph{unavoidable} even in the unweighted case, but is $\log^2(n)$ necessary in the weighted case?
        Is there an $\Omega(\log^2 n)$ lower bound construction, or is our analysis too loose, or is there another algorithm that achieves $\log n$ approximation in the weighted case?
    \item Design subset search algorithms \`{a} la \cite{choo2023subset} that are competitive with respect to $\overline{\nu}^{\max}(G^*)$, for both unweighted and weighted settings.
    \item Provide an efficient algorithm to compute $\overline{\nu}^{\max}(G^*)$.
        We remark that, for a given $G^*$, efficient computation of $\nu_1(G^*)$ and $\overline{\nu}_1(G^*)$ are known \cite{choo2022verification}.
\end{enumerate}

\section*{Acknowledgements}
This research/project is supported by the National Research Foundation, Singapore under its AI Singapore Programme (AISG Award No: AISG-PhD/2021-08-013).
Part of this work was done while the authors were visiting the Simons Institute for the Theory of Computing.
We would like to thank Saravanan Kandasamy, Jiaqi Zhang, and the ICML reviewers for valuable feedback and discussions.

\bibliography{refs}
\bibliographystyle{alpha}

\newpage
\appendix

\section{Adaptive versus non-adaptive interventions}
\label{sec:appendix-adaptive-versus-nonadaptive}

Separating systems are the central mathematical objects for non-adaptive intervention design.
Roughly speaking, a separating system on a set of elements is a collection of subsets such that for every pair of elements from the set, there exists at least one subset which contains exactly one element from the pair.

Instead of all pairs of elements, let us consider the (typically smaller) $G$-separating system for a given graph $G$.
It is known \cite{kocaoglu2017cost} that the optimal non-adaptive intervention set to learn a moral DAG $G^*$ is a $\skel(G^*)$-separating system.

\begin{definition}[$G$-separating system; Definition 3 of \cite{kocaoglu2017cost}]
Given an undirected graph $G = (V,E)$, a set of subsets $\cI \subseteq 2^V$ is a $G$-separating system if for every edge $\{u,v\} in E$, there exists $I \in \cI$ such that either ($u \in I_i$ and $v \not\in I_i$) or ($u \not\in I_i$ and $v \in I_i$).
\end{definition}

\begin{theorem}[Theorem 1 of \cite{kocaoglu2017cost}]
\label{thm:g-separating-system}
For any undirected graph $G$, an intervention set $\cI$ learns \emph{every} causal graph $D$ with $\skel(D) = G$ if and only if $\cI$ is a $G$-separating system.
\end{theorem}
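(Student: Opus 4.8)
The plan is to prove both directions of the equivalence by leveraging the basic fact, already recalled in the excerpt, that intervening on a set $S$ orients exactly those edges in the cut between $S$ and $V \setminus S$, together with the Markov-equivalence characterization via skeleton and v-structures. For the ``if'' direction, suppose $\cI$ is a $G$-separating system and let $D$ be any causal DAG with $\skel(D) = G$. I would first observe that observational data already reveals $\skel(D) = G$ and all v-structures, so the only thing left to determine are the orientations of the remaining (undirected-in-$\cE(D)$) edges. For each edge $\{u,v\} \in E$, the separating-system property gives some $I \in \cI$ with exactly one of $u,v$ in $I$; intervening on $I$ therefore cuts the edge $\{u,v\}$ and orients it. Hence every edge of $D$ is oriented after performing all interventions in $\cI$, so $\cE_\cI(D) = D$, i.e.\ $\cI$ learns $D$. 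Since $D$ was an arbitrary DAG with skeleton $G$, $\cI$ learns every such $D$.

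For the ``only if'' direction I would argue by contrapositive: suppose $\cI$ is \emph{not} a $G$-separating system, so there is an edge $\{u,v\} \in E$ such that for every $I \in \cI$, either both $u,v \in I$ or both $u,v \notin I$. The goal is to construct two distinct causal DAGs $D_1, D_2$ with skeleton $G$ that are \emph{indistinguishable} under all interventions in $\cI$, which shows $\cI$ cannot learn every such DAG. The natural candidates are a DAG $D_1$ in which $\{u,v\}$ is oriented $u \to v$ and a DAG $D_2$ obtained by reversing just that one arc to $v \to u$; one needs to choose $D_1$ so that $\{u,v\}$ is a \emph{covered edge} (equivalently, an undirected edge in the essential graph $\cE(D_1)$), which is always possible for any edge whose orientation is not forced — and since $G$ is only assumed to be the skeleton, we are free to pick $D_1$ so that $u \sim v$ is undirected in $\cE(D_1)$ and reversing it yields a Markov-equivalent DAG $D_2$. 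Then $D_1$ and $D_2$ have the same skeleton and the same v-structures (reversing a covered edge preserves both), so they lie in the same MEC and are observationally indistinguishable. It remains to check that for each $I \in \cI$, the interventional graphs $(D_1)_I$ and $(D_2)_I$ are Markov equivalent: since $I$ contains both or neither of $u,v$, the edge $\{u,v\}$ is never in the cut induced by $I$, so $I$ fails to orient it; and one verifies (using that reversing a covered edge does not create or destroy v-structures, and that this remains true after deleting incoming arcs at $I$) that $(D_1)_I$ and $(D_2)_I$ still share skeleton and v-structures. Hence no intervention in $\cI$ distinguishes $D_1$ from $D_2$, so $\cI$ does not learn both.

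The main obstacle I anticipate is the careful bookkeeping in the converse direction: one must ensure the chosen $D_1$ genuinely has $\{u,v\}$ as a covered edge and that reversing it stays within the MEC, and then that \emph{every} interventional essential graph $\cE_{\{I\}}(D_1)$ with $I \in \cI$ leaves $\{u,v\}$ unoriented — this requires arguing that the Meek rules (triggered by orientations from $I$) do not transitively force $u \to v$ or $v \to u$. The clean way around this is to pick $D_1$ so that the reversal of $u \sim v$ yields $D_2$ with $(D_1)_I$ and $(D_2)_I$ \emph{identical except for that one arc's direction} and to directly check they are Markov equivalent by the skeleton/v-structure criterion, rather than tracking Meek-rule propagation; the covered-edge property is exactly what makes this local swap legitimate both before and after intervening. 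Everything else — the forward direction and the use of the cut-orientation fact — is routine given the preliminaries already established in the excerpt.
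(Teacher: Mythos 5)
This theorem is imported from \cite{kocaoglu2017cost} and the present paper gives no proof of it, so there is nothing internal to compare against; judged on its own, your argument is correct and is essentially the standard one. The forward direction is immediate from the cut-orientation fact. For the converse, the one step you leave implicit --- that for any edge $\{u,v\}$ of $G$ there is a DAG $D_1$ with $\skel(D_1)=G$ in which $u\to v$ is a covered edge --- is easy to make explicit: take any topological order beginning $u,v$, so that $\Pa(u)=\emptyset$ and $\Pa(v)=\{u\}$. With that in hand, your case analysis goes through cleanly: if an intervention $I$ contains both of $u,v$, then the differing arc is deleted from both interventional graphs and $(D_1)_I=(D_2)_I$ literally; if $I$ contains neither, the parent sets of $u$ and $v$ are untouched, so $u\to v$ remains covered in $(D_1)_I$ and the reversal preserves skeleton and v-structures, hence Markov equivalence --- exactly the ``local swap'' you describe, with no need to track Meek-rule propagation.
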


\paragraph{Path example}
Consider an essential graph which is an undirected path on $n$ vertices.
There are $n$ possible DAGs corresponding to this Markov equivalence class, each of which can be uniquely identified by picking one of the vertices as a source and orienting all edges away from it.
By \cref{thm:g-separating-system}, we see that $\Omega(n)$ atomic interventions are necessary.

\subsection{Adaptive can be exponentially stronger}
\label{sec:adaptive-is-exponentially-stronger}

Consider an essential graph which is an undirected path on $n$ vertices described above where we know that one has intervene on at least $\Omega(n)$ vertices using non-adaptive atomic interventions.
If we allow adaptive interventions, $\cO(\log n)$ atomic interventions suffice by simulating binary search: intervene on the ``center'' vertex to uncover its incident edge orientations; orient one half using Meek rule R1; repeat.

\subsection{New benchmark is different from separating system}
\label{sec:benchmark-differs-from-separating-system}

Recall our newly proposed metric:
$
\overline{\nu}^{\max}_k(G^*)
= \max_{G \in [G^*]} \overline{\nu}_k(G)
$
for any integer $k \geq 1$.

Consider an essential graph which is an undirected path on $n$ vertices described above where we know that one has intervene on at least $\Omega(n)$ vertices using non-adaptive atomic interventions.
Under our newly proposed metric, $\overline{\nu}^{\max}_1(G^*) = \nu^{\max}_1(G^*) = 1$ since intervening on the source vertex always suffices to fully orient the entire DAG.

\section{Meek rules}
\label{sec:appendix-meek-rules}

\paragraph{Remark}
This section of well-known facts is adapted from the appendices of \cite{choo2022verification,choo2023subset}.

Meek rules are a set of 4 edge orientation rules that are sound and complete with respect to any given set of arcs that has a consistent DAG extension \cite{meek1995}.
Given any edge orientation information, one can always repeatedly apply Meek rules till a fixed point to maximize the number of oriented arcs.

\begin{definition}[Consistent extension]
A set of arcs is said to have a \emph{consistent DAG extension} $\pi$ for a graph $G$ if there exists a permutation on the vertices such that (i) every edge $\{u,v\}$ in $G$ is oriented $u \to v$ whenever $\pi(u) < \pi(v)$, (ii) there is no directed cycle, (iii) all the given arcs are present.
\end{definition}

\begin{definition}[The four Meek rules \cite{meek1995}, see \ref{fig:meek-rules} for an illustration]
\hspace{0pt}
\begin{description}
    \item [R1] Edge $\{a,b\} \in E \setminus A$ is oriented as $a \to b$ if $\exists$ $c \in V$ such that $c \to a$ and $c \not\sim b$.
    \item [R2] Edge $\{a,b\} \in E \setminus A$ is oriented as $a \to b$ if $\exists$ $c \in V$ such that $a \to c \to b$.
    \item [R3] Edge $\{a,b\} \in E \setminus A$ is oriented as $a \to b$ if $\exists$ $c,d \in V$ such that $d \sim a \sim c$, $d \to b \gets c$, and $c \not\sim d$.
    \item [R4] Edge $\{a,b\} \in E \setminus A$ is oriented as $a \to b$ if $\exists$ $c,d \in V$ such that $d \sim a \sim c$, $d \to c \to b$, and $b \not\sim d$.
\end{description}
\end{definition}

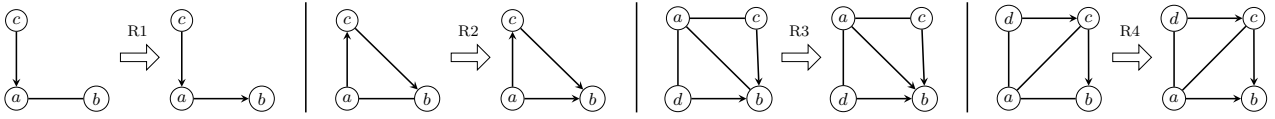
\begin{figure}[htbp]
\centering
\resizebox{\linewidth}{!}{%
\begin{tikzpicture}
%
%
\node[draw, circle, inner sep=2pt] at (0,0) (R1a-before) {\small $a$};
\node[draw, circle, inner sep=2pt, right=of R1a-before] (R1b-before) {\small $b$};
\node[draw, circle, inner sep=2pt, above=of R1a-before](R1c-before) {\small $c$};
\draw[thick, -stealth] (R1c-before) -- (R1a-before);
\draw[thick] (R1a-before) -- (R1b-before);

\node[draw, circle, inner sep=2pt] at (3,0) (R1a-after) {\small $a$};
\node[draw, circle, inner sep=2pt, right=of R1a-after] (R1b-after) {\small $b$};
\node[draw, circle, inner sep=2pt, above=of R1a-after](R1c-after) {\small $c$};
\draw[thick, -stealth] (R1c-after) -- (R1a-after);
\draw[thick, -stealth] (R1a-after) -- (R1b-after);

\node[single arrow, draw, minimum height=2em, single arrow head extend=1ex, inner sep=2pt] at (2.2,0.75) (R1arrow) {};
\node[above=5pt of R1arrow] {\footnotesize R1};

%
%
\node[draw, circle, inner sep=2pt] at (6,0) (R2a-before) {\small $a$};
\node[draw, circle, inner sep=2pt, right=of R2a-before] (R2b-before) {\small $b$};
\node[draw, circle, inner sep=2pt, above=of R2a-before](R2c-before) {\small $c$};
\draw[thick, -stealth] (R2a-before) -- (R2c-before);
\draw[thick, -stealth] (R2c-before) -- (R2b-before);
\draw[thick] (R2a-before) -- (R2b-before);

\node[draw, circle, inner sep=2pt] at (9,0) (R2a-after) {\small $a$};
\node[draw, circle, inner sep=2pt, right=of R2a-after] (R2b-after) {\small $b$};
\node[draw, circle, inner sep=2pt, above=of R2a-after](R2c-after) {\small $c$};
\draw[thick, -stealth] (R2a-after) -- (R2c-after);
\draw[thick, -stealth] (R2c-after) -- (R2b-after);
\draw[thick, -stealth] (R2a-after) -- (R2b-after);

\node[single arrow, draw, minimum height=2em, single arrow head extend=1ex, inner sep=2pt] at (8.2,0.75) (R2arrow) {};
\node[above=5pt of R2arrow] {\footnotesize R2};

%
%
\node[draw, circle, inner sep=2pt] at (12,0) (R3d-before) {\small $d$};
\node[draw, circle, inner sep=2pt, above=of R3d-before](R3a-before) {\small $a$};
\node[draw, circle, inner sep=2pt, right=of R3a-before] (R3c-before) {\small $c$};
\node[draw, circle, inner sep=2pt, right=of R3d-before](R3b-before) {\small $b$};
\draw[thick, -stealth] (R3c-before) -- (R3b-before);
\draw[thick, -stealth] (R3d-before) -- (R3b-before);
\draw[thick] (R3c-before) -- (R3a-before) -- (R3d-before);
\draw[thick] (R3a-before) -- (R3b-before);

\node[draw, circle, inner sep=2pt] at (15,0) (R3d-after) {\small $d$};
\node[draw, circle, inner sep=2pt, above=of R3d-after](R3a-after) {\small $a$};
\node[draw, circle, inner sep=2pt, right=of R3a-after] (R3c-after) {\small $c$};
\node[draw, circle, inner sep=2pt, right=of R3d-after](R3b-after) {\small $b$};
\draw[thick, -stealth] (R3c-after) -- (R3b-after);
\draw[thick, -stealth] (R3d-after) -- (R3b-after);
\draw[thick] (R3c-after) -- (R3a-after) -- (R3d-after);
\draw[thick, -stealth] (R3a-after) -- (R3b-after);

\node[single arrow, draw, minimum height=2em, single arrow head extend=1ex, inner sep=2pt] at (14.2,0.75) (R3arrow) {};
\node[above=5pt of R3arrow] {\footnotesize R3};

%
%
\node[draw, circle, inner sep=2pt] at (18,0) (R4a-before) {\small $a$};
\node[draw, circle, inner sep=2pt, above=of R4a-before](R4d-before) {\small $d$};
\node[draw, circle, inner sep=2pt, right=of R4d-before] (R4c-before) {\small $c$};
\node[draw, circle, inner sep=2pt, right=of R4a-before](R4b-before) {\small $b$};
\draw[thick, -stealth] (R4d-before) -- (R4c-before);
\draw[thick, -stealth] (R4c-before) -- (R4b-before);
\draw[thick] (R4d-before) -- (R4a-before) -- (R4c-before);
\draw[thick] (R4a-before) -- (R4b-before);

\node[draw, circle, inner sep=2pt] at (21,0) (R4a-after) {\small $a$};
\node[draw, circle, inner sep=2pt, above=of R4a-after](R4d-after) {\small $d$};
\node[draw, circle, inner sep=2pt, right=of R4d-after] (R4c-after) {\small $c$};
\node[draw, circle, inner sep=2pt, right=of R4a-after](R4b-after) {\small $b$};
\draw[thick, -stealth] (R4d-after) -- (R4c-after);
\draw[thick, -stealth] (R4c-after) -- (R4b-after);
\draw[thick] (R4d-after) -- (R4a-after) -- (R4c-after);
\draw[thick, -stealth] (R4a-after) -- (R4b-after);

\node[single arrow, draw, minimum height=2em, single arrow head extend=1ex, inner sep=2pt] at (20.2,0.75) (R4arrow) {};
\node[above=5pt of R4arrow] {\footnotesize R4};

\draw[thick] (5.25,1.75) -- (5.25,-0.25);
\draw[thick] (11.25,1.75) -- (11.25,-0.25);
\draw[thick] (17.25,1.75) -- (17.25,-0.25);
\end{tikzpicture}
}
\caption{An illustration of the four Meek rules}
\label{fig:meek-rules}
\end{figure}

There exists an algorithm (Algorithm 2 of \cite{wienobst2021extendability}) that runs in $\cO(d \cdot |E|)$ time and computes the closure under Meek rules, where $d$ is the degeneracy of the graph skeleton\footnote{A $d$-degenerate graph is an undirected graph in which every subgraph has a vertex of degree at most $d$. Note that the degeneracy of a graph is typically smaller than the maximum degree of the graph.}.

\section{Additional known results}
\label{sec:additional-known-results}

\begin{lemma}[Yao's lemma \cite{yao1977probabilistic}]
\label{lem:yao}
Let $\cA$ be the space of all possible deterministic algorithms over probability distribution $p$, and $\cX$ be the space of problem inputs over probability distribution $q$.
Denote probability distributions over $\cA$ and $\cX$ by $p_a$ and $q_x$ respectively.
Then,
\[
\max_{x \in \cX} \E_p[c(A,x)] \geq \min_{a \in \cA} \E_q[c(a,X)]
\]
\end{lemma}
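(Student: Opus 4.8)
The plan is to prove the inequality by the standard averaging argument underlying Yao's principle: a maximum is at least an average, an average is at least a minimum, and the two averages one obtains in between coincide after swapping the order of integration.

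First I would fix the algorithm distribution $p$ over $\cA$ and the input distribution $q$ over $\cX$. Since the left-hand side takes a maximum over inputs $x \in \cX$, it dominates the $q$-average over inputs, giving
\[
\max_{x \in \cX} \E_p[c(A,x)] \;\geq\; \E_q\!\left[ \E_p[c(A,X)] \right].
\]
Next I would interchange the two expectations. When $\cA$ and $\cX$ are finite this is merely a finite double sum; in general it is justified by Tonelli's theorem, using that the cost function $c$ is nonnegative — which holds in all of our applications, where $c$ measures an interventional cost. This yields
\[
\E_q\!\left[ \E_p[c(A,X)] \right] \;=\; \E_p\!\left[ \E_q[c(A,X)] \right].
\]
Finally, since the right-hand expression is a $p$-average of the quantities $\E_q[c(a,X)]$ indexed by $a \in \cA$, it is at least the smallest of those quantities,
\[
\E_p\!\left[ \E_q[c(A,X)] \right] \;\geq\; \min_{a \in \cA} \E_q[c(a,X)],
\]
and chaining the three displays gives the claim.

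The only step needing any care is the interchange of expectations, and that is the sole (very mild) obstacle: one must ensure the relevant sums or integrals are well-defined, which is immediate here because costs are nonnegative (so Tonelli applies) and because in the discrete settings we actually invoke the lemma for, both $\cA$ and $\cX$ are finite. Everything else is just \emph{max} $\geq$ \emph{average} $\geq$ \emph{min}.
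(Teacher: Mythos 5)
Your proof is correct: it is the standard averaging argument (max $\geq$ mean, swap the order of expectation, mean $\geq$ min), and your care about the interchange step — finite sums in the discrete case, Tonelli for nonnegative costs in general — is exactly the right caveat. The paper does not include a proof of this lemma at all; it is stated as a known result cited from Yao (1977), so there is nothing to compare against, but your argument is the canonical one and would serve as a complete proof.
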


In other words, \cref{lem:yao} tells us that in order to lower bound the cost of any randomized algorithm, it suffices to find a ``bad'' input distribution such that any deterministic incurs a high cost.

\begin{lemma}[Modified lemma 1 of \cite{hauser2014two}; Appendix B of \cite{choo2022verification}]
\label{lem:hauser-bulmann-strengthened}
Let $\cI \subseteq 2^V$ be an intervention set.
Consider the $\cI$-essential graph $\cE_{\cI}(G^*)$ of some DAG $G^*$ and let $H \in CC(\cE_{\cI}(G^*))$ be one of its chain components.
Then, for any additional interventional set $\cI' \subseteq 2^V$ such that $\cI \cap \cI' = \emptyset$, we have
\[
\cE_{\cI \cup \cI'}(G^*)[V(H)] = \cE_{\{S \cap V(H)~:~S \in \cI'\}}(G^*[V(H)]).
\]
\end{lemma}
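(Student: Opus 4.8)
The plan is to prove the identity by showing that the two partially directed graphs in the statement are ``assembled'' from the same family of orientations. Abbreviate $W := G^*[V(H)]$ and $\cI'_H := \{\, S \cap V(H) : S \in \cI' \,\}$. Since an edge is oriented in an (interventional) essential graph exactly when every DAG of the corresponding equivalence class agrees on its direction, it suffices to prove that $\{\, D[V(H)] : D \text{ is } (\cI \cup \cI')\text{-Markov equivalent to } G^* \,\}$ and $\{\, W' : W' \text{ is } \cI'_H\text{-Markov equivalent to } W \,\}$ are the \emph{same} set of DAGs on vertex set $V(H)$; then intersecting orientations edge by edge over the two (equal) families gives $\cE_{\cI \cup \cI'}(G^*)[V(H)] = \cE_{\cI'_H}(W)$.

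\textbf{Preliminaries I would record first.} Because $H$ is a chain component of $\cE_{\cI}(G^*)$, no edge with both endpoints in $V(H)$ is oriented in $\cE_{\cI}(G^*)$; since v-structures are preserved across an interventional Markov equivalence class, $W = G^*[V(H)]$ has no v-structure, i.e.\ it is a moral DAG whose essential graph is $H$ itself. Also by the very definition of chain components, every edge with exactly one endpoint in $V(H)$ is oriented in $\cE_{\cI}(G^*)$; hence any DAG consistent with $\cE_{\cI}(G^*)$ agrees with $G^*$ on all such ``straddling'' edges. I will use the known chain-component decomposition of interventional Markov equivalence classes (see e.g.\ \cite{hauser2014two}): a DAG $D$ is $\cJ$-Markov equivalent to $G^*$ iff $D$ is consistent with $\cE_{\cJ}(G^*)$ and for each chain component $H'$ of $\cE_{\cJ}(G^*)$ the induced subgraph $D[V(H')]$ has no v-structure, with the orientations of distinct chain components choosable independently. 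Finally, intervention and vertex-restriction commute: for any $D$ and any $S \subseteq V$, $D_S[V(H)] = (D[V(H)])_{S \cap V(H)}$, since an arc inside $V(H)$ survives intervening on $S$ iff its head is not in $S \cap V(H)$.

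\textbf{Core step: a bijection $D \mapsto D[V(H)]$.} In the forward direction, if $D$ is $(\cI \cup \cI')$-Markov equivalent to $G^*$ then in particular $D$ is $\cI$-Markov equivalent to $G^*$, so $D$ is consistent with $\cE_{\cI}(G^*)$ and $D[V(H)]$ is a moral orientation of $H$ (thus $\emptyset$-equivalent to $W$); moreover $D$ and $G^*$ agree on all straddling edges. Now fix $S \in \cI'$: $D_S$ and $G^*_S$ are Markov equivalent, and since the straddling edges are identical in $D$ and $G^*$, the v-structures of $D_S$ confined to $V(H)$-vertices coincide with those of $G^*_S$ confined to $V(H)$ exactly when the v-structures of $D_S[V(H)] = (D[V(H)])_{S\cap V(H)}$ coincide with those of $G^*_S[V(H)] = W_{S\cap V(H)}$; together with equality of skeletons this yields that $(D[V(H)])_{S\cap V(H)}$ and $W_{S\cap V(H)}$ are Markov equivalent. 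Hence $D[V(H)]$ is $\cI'_H$-Markov equivalent to $W$. Conversely, given $W'$ that is $\cI'_H$-Markov equivalent to $W$, I would form $D$ from $G^*$ by replacing the orientations strictly inside $V(H)$ by those of $W'$ and leaving all other arcs as in $G^*$; by the ``freeness'' of chain components this $D$ is acyclic and consistent with $\cE_{\cI}(G^*)$, hence $\cI$-Markov equivalent to $G^*$, and running the same restriction/agreement argument in reverse gives $D_S$ Markov equivalent to $G^*_S$ for each $S \in \cI'$, so $D$ is $(\cI \cup \cI')$-Markov equivalent to $G^*$. The two maps are mutually inverse because the part of $D$ outside $V(H)$ is pinned down by $G^*$ together with the oriented part of $\cE_{\cI}(G^*)$. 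This establishes the set equality and hence the lemma.

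\textbf{Main obstacle.} The delicate point is the ``pinning'' argument in the bijection: one must verify carefully that, under interventions drawn from $\cI'$, no interaction between $V(H)$ and $V \setminus V(H)$ can create or destroy a v-structure among $V(H)$-vertices in a way that is not already visible inside $W = G^*[V(H)]$ --- in other words, that all the data governing $\cI'_H$-equivalence of $(D[V(H)], W)$ restricts cleanly from the data governing $(\cI\cup\cI')$-equivalence of $(D,G^*)$, and conversely. This is exactly where the definition of chain components (straddling edges are fixed) and the chordality/freeness of chain components are doing the work. A more mechanical but bookkeeping-heavy alternative is induction on $|\cI'|$: the $|\cI'| \le 1$ case is the original Lemma~1 of \cite{hauser2014two}, and each inductive step re-applies it inside the refined chain components of $\cE_{\cI \cup \cI''}(G^*)$, using that these refinements nest inside the chain components of $\cE_{\cI}(G^*)$; the price of that route is tracking this nesting explicitly.
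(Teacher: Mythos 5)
The paper contains no proof of this lemma: it is imported as a known result (``Modified lemma 1 of \cite{hauser2014two}; Appendix B of \cite{choo2022verification}'') in \cref{sec:additional-known-results} and is only ever invoked, inside the proofs of \cref{thm:interventional-metric-lower-bound} and \cref{thm:interventional-metric-lower-bound-generalized}, never re-derived. So there is no in-paper argument to compare yours against, and I can only judge the proposal on its own terms.

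Your architecture is the standard one and is essentially right: reduce the identity of the two partially directed graphs to the identity of the two families of DAGs on $V(H)$, and exhibit the bijection $D \mapsto D[V(H)]$. The forward direction goes through as you sketch it (v-structures with all three vertices in $V(H)$ are exactly the v-structures of the induced subgraph, and intervention commutes with restriction). The gap is in the backward (patching) direction, and it sits precisely where you park it as ``the delicate point.'' After replacing the orientation of $H$ in $G^*$ by $W'$, you must verify for each $S \in \cI'$ that $D_S$ and $G^*_S$ have the same v-structures \emph{including} mixed colliders $a \to b \gets c$ with $a,b \in V(H)$ and $c \notin V(H)$; these are not ``v-structures among $V(H)$-vertices'' and are not controlled by the hypothesis $W'_{S \cap V(H)} \equiv W_{S \cap V(H)}$. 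A priori, $W'$ reversing an edge $a \sim b$ of $H$ could create such a collider at $b$ with a straddling arc $c \to b$. Ruling this out takes an actual argument using Meek-rule closure of $\cE_{\cI}(G^*)$: a disagreement on $a \sim b$ forces $|\{a,b\} \cap S| \neq 1$ (else the restricted skeletons differ); if both endpoints lie in $S$ the offending arc into $b$ is deleted; if neither does, R1 applied to $c \to b \sim a$ forces $a \sim c$ in the skeleton, and that straddling edge can vanish from $D_S$ only if it is $a \to c$ with $c \in S$, in which case R2 applied to $a \to c \to b \sim a$ would already have oriented $a \to b$ in $\cE_{\cI}(G^*)$, a contradiction. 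Naming this obstacle is not the same as discharging it, so as written the proposal is a correct plan with its central verification missing. Your fallback of inducting on $|\cI'|$ and reducing each step to the single-intervention Lemma 1 of \cite{hauser2014two}, tracking how the refined chain components nest inside $H$, is the safer route to a complete writeup and is closer in spirit to how the cited strengthening is obtained.
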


\begin{lemma}[Lemma 21 of \cite{choo2022verification}]
\label{lem:strengthened-lb}
Fix an essential graph $\cE(G^*)$ and $G \in [G^*]$.
Then,
\[
\nu_1(G) \geq \max_{\cI \subseteq V} \sum_{H \in CC(\cE_{\cI}(G^*))} \left\lfloor \frac{\omega(H)}{2} \right\rfloor
\]
\end{lemma}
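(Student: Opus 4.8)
My plan is to prove the inequality componentwise. I would fix an arbitrary atomic intervention set $\cI \subseteq V$, show that $\nu_1(G) \ge \sum_{H} \lfloor \omega(H)/2 \rfloor$ where the sum runs over the chain components $H$ of $\cE_{\cI}(G^*)$ with $|V(H)| \ge 2$, and then take the maximum over $\cI$. I will carry this out for the case $G = G^*$ (the typical use: lower-bounding the verification number of the ground-truth DAG), and note at the end the small extra step needed for general $G \in [G^*]$. Let $\cI^*$ be a minimum-size atomic verifying set for $G$, so $\nu_1(G) = |\cI^*|$; since adding singletons to a verifying set keeps it one, assume $\cI^*$ is a set of distinct singletons.

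First I would localize $\cI^*$ to each chain component. Since $\cI \cup \cI^*$ is again a verifying set for $G$ and contains $\cI$, applying \cref{lem:hauser-bulmann-strengthened} (with $\cI^*$ replaced by $\cI^* \setminus \cI$ to ensure disjointness) gives, for each chain component $H$ of $\cE_{\cI}(G)$,
\[
\cE_{\cI \cup \cI^*}(G)[V(H)] \;=\; \cE_{\cI^*_H}\big(G[V(H)]\big), \qquad \cI^*_H := \{\, \{v\} : v \in (\cI^*\setminus\cI) \cap V(H) \,\}.
\]
The left side is fully oriented, so $\cI^*_H$ is a verifying set for $G[V(H)]$. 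Since chain components of a chain graph carry no arcs, $\cE_{\cI}(G)[V(H)] = H$ is undirected and chordal, and $G[V(H)]$ is an acyclic completion of $H$; it has no v-structures, because any v-structure inside $V(H)$ would be a v-structure of $G$ whose two edges lie in $H$, hence would already be oriented in $\cE_{\cI}(G)$ -- a contradiction. Thus $G[V(H)]$ is a moral DAG with skeleton $H$, so $|\cI^*_H| \ge \nu_1(G[V(H)])$.

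Next I would use the single-component clique bound $\nu_1(M) \ge \lfloor \omega(M)/2 \rfloor$ for every moral DAG $M$, which one may regard as the base case of the statement. I would deduce it from the characterization of $\nu_1(M)$ as the minimum vertex cover of the covered edges of $M$ \cite{choo2022verification}: taking a maximum clique $C$ with its total internal order $c_1 \to \dots \to c_{\omega}$, it suffices to produce $\lfloor \omega/2 \rfloor$ pairwise vertex-disjoint covered edges (supported on $C$, or feeding into it), since any verifying set must contain an endpoint of every covered edge. Putting the pieces together and using that the $V(H)$ are pairwise disjoint (so $\sum_H |\cI^*_H| \le |\cI^*\setminus\cI| \le |\cI^*|$), I get $\nu_1(G) = |\cI^*| \ge \sum_H |\cI^*_H| \ge \sum_H \nu_1(G[V(H)]) \ge \sum_H \lfloor \omega(H)/2 \rfloor$, and maximizing over $\cI$ gives the claim. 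For general $G \in [G^*]$ the same argument runs with the chain components of $\cE_{\cI}(G)$; one then needs that the resulting bound dominates the one built from $\cE_{\cI}(G^*)$ -- equivalently, that $\max_{\cI} \sum_H \lfloor \omega(H)/2 \rfloor$ is an invariant of the Markov equivalence class -- which I would check separately.

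The hard part will be the single-component clique bound, specifically ruling out that Meek-rule propagation from vertices \emph{outside} the clique could orient a maximum clique with fewer than $\lfloor \omega/2 \rfloor$ atomic interventions; routing through covered edges, which provably cannot be oriented by any intervention set avoiding both their endpoints, is what makes this robust, but one still has to show that a maximum clique always carries $\lfloor \omega/2 \rfloor$ vertex-disjoint covered edges. The remaining things to check are mostly routine: that iterating \cref{lem:hauser-bulmann-strengthened} legitimately reduces ``verify $G$'' to ``verify each $G[V(H)]$'' with nothing left over between components, together with the Markov-equivalence-class point noted above.
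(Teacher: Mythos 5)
The paper itself does not prove this statement: it is imported verbatim as Lemma 21 of \cite{choo2022verification} in the appendix of known results, so there is no in-paper proof to compare against. Your skeleton --- localize a minimum verifying set to the chain components via \cref{lem:hauser-bulmann-strengthened}, observe that each $G[V(H)]$ is a moral DAG, apply a per-component clique bound, sum over components, and maximize over $\cI$ --- is exactly the template the authors say they mirror when proving \cref{thm:interventional-metric-lower-bound}, so the architecture is the intended one.

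However, the step you isolate as ``the hard part'' is not merely unproven; the sub-claim you propose to establish is false. A maximum clique of a moral DAG need not carry $\lfloor \omega/2 \rfloor$ vertex-disjoint covered edges --- it can carry none. Take $V = \{a,u,b,c\}$ with arcs $a \to u \to b$, $a \to b$, $a \to c$, $b \to c$ and $u \not\sim c$: this DAG is moral with chordal skeleton, $\{a,b,c\}$ is a maximum clique ($\omega = 3$), yet $a \to b$, $a \to c$, $b \to c$ are all non-covered because $u$ interleaves (e.g.\ $\Pa(b) = \{a,u\} \neq \Pa(a) \cup \{a\}$), and the covered edges are exactly $a \to u$ and $u \to b$. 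The hedge ``or feeding into it'' cannot be made clique-local either: append a pendant source $v_0 \to v_1$ to a directed clique on $v_1,\dots,v_4$; the unique maximum clique carries only the covered path $v_2 \to v_3 \to v_4$, whose maximum matching is $1 < \lfloor 4/2 \rfloor$, and the second disjoint covered edge must be taken from $v_0 \to v_1$ outside the clique. The bound $\nu_1(M) \ge \lfloor \omega(M)/2 \rfloor$ is true, but it requires an argument that handles interleaving vertices --- note that \cref{thm:clique-covered-edges-and-lower-bound} is only asserted for embedded cliques with \emph{no} interleaving --- for instance by harvesting disjoint covered edges along the chain of covered edges descending from the source into the clique. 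Separately, your reduction naturally yields the bound with $CC(\cE_{\cI}(G))$ rather than $CC(\cE_{\cI}(G^*))$, and for a fixed $\cI$ these genuinely differ: on the path $a \sim b \sim c$ with $\cI = \{a\}$, the per-$\cI$ sum is $0$ when $a$ is the source and $1$ when $c$ is, so the Markov-equivalence-class invariance of the maximum that you defer to a ``separate check'' is carrying real weight and should not be treated as routine.
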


\begin{theorem}[\cite{choo2023subset}]
\label{thm:moral-patching}
For any intervention set $\cI \subseteq 2^V$, define $R(G, \cI) = A(\cE_{\cI}(G)) \subseteq E$ as the set of oriented arcs in the $\cI$-essential graph of a DAG $G$ and define $G^{\cI} = G[E \setminus R(G,\cI)]$ as the \emph{fully directed} subgraph DAG induced by the \emph{unoriented arcs} in $G$, where $G^{\emptyset}$ is the graph obtained after removing all the oriented arcs in the observational essential graph due to v-structures.
Then, for any DAG $G = (V,E)$ and intervention sets $\cA, \cB \subseteq 2^V$,
\begin{enumerate}
    \item \textbf{``Suffices to study moral DAGs''}: $R(G,\cA \cup \cB) = R(G^{\cA},\cB) \;\dot\cup\ R(G^{\cB},\cA) \;\dot\cup\; (R(G,\cA) \cap R(G,\cB))$
    \item \textbf{``Patching''}: Any acyclic completion of $\cE(G^{\cA})$ can be combined with $R(G,\cA)$ to obtain a valid DAG that belongs to both $\cE(G)$ and $\cE_{\cA}(G)$.
\end{enumerate}
\end{theorem}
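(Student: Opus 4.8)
The plan is to reduce both parts to a chain-component-by-chain-component analysis via \cref{lem:hauser-bulmann-strengthened}, using two standard structural facts about interventional essential graphs: first, each chain component $H$ of $\cE_{\cI}(G)$ induces an undirected (chordal) subgraph of $\cE_{\cI}(G)$ whose corresponding sub-DAG $G[V(H)]$ is \emph{moral} (a v-structure inside $H$ would force its arcs to be oriented, contradicting that $H$ is a chain component); second, the orientation process producing $\cE_{\cI}(G)$ from $\cE(G)$ is order-independent, so $\cE_{\cA\cup\cB}(G)$ may be computed by first forming $\cE_{\cA}(G)$ and then intervening with $\cB$.

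For the first part, I would first establish $R(G,\cA\cup\cB) = R(G,\cA)\;\dot\cup\;R(G^{\cA},\cB)$. Every arc oriented in $\cE_{\cA}(G)$ stays oriented in $\cE_{\cA\cup\cB}(G)$, and every edge left unoriented in $\cE_{\cA}(G)$ lies strictly inside some $H \in CC(\cE_{\cA}(G))$; by \cref{lem:hauser-bulmann-strengthened} the arcs that $\cA\cup\cB$ orients inside $H$ are exactly those of $\cE_{\{S\cap V(H)\,:\,S\in\cB\}}(G[V(H)])$. Since $G^{\cA}$ is the disjoint union $\dot\cup_{H}\,G[V(H)]$ of the moral DAGs on these chain components, intervening with $\cB$ on $G^{\cA}$ factors over the components, giving $R(G^{\cA},\cB)=\bigcup_{H}A(\cE_{\{S\cap V(H)\,:\,S\in\cB\}}(G[V(H)]))$; matching the two expressions yields the equality, and symmetrically $R(G,\cA\cup\cB)=R(G,\cB)\;\dot\cup\;R(G^{\cB},\cA)$. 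Writing $U=R(G,\cA\cup\cB)$, $X=R(G,\cA)$, $Y=R(G,\cB)$, these read $R(G^{\cA},\cB)=U\setminus X$ and $R(G^{\cB},\cA)=U\setminus Y$, so the claimed identity reduces to the purely set-theoretic statement $U=(U\setminus X)\cup(U\setminus Y)\cup(X\cap Y)$, and the three pieces are pairwise disjoint exactly when $U=X\cup Y$.

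The hard part is this last point: that merging two intervention sets never orients an edge that neither orients by itself, i.e.\ $R(G,\cA\cup\cB)=R(G,\cA)\cup R(G,\cB)$. I would derive it from $R(G^{\cA},\cB)\subseteq R(G,\cB)$, which together with $R(G,\cA)\subseteq R(G,\cA\cup\cB)$ and monotonicity of orientations under adding interventions forces $U=X\cup Y$. For the inclusion I would induct on a Meek-rule derivation of each arc of $\cE_{\{S\cap V(H)\,:\,S\in\cB\}}(G[V(H)])$: adjacencies and non-adjacencies among vertices of $V(H)$ are inherited from $G$, and every $(S\cap V(H))$-cut edge inside $H$ is an $S$-cut edge of $G$, so each derivation step applies verbatim in $\cE_{\cB}(G)$; soundness of the Meek rules then forbids $\cE_{\cB}(G)$ from orienting that arc the opposite way, so it is oriented the same, i.e.\ lies in $R(G,\cB)$.

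For the patching part, let $D$ be an acyclic completion of $\cE(G^{\cA})$ (a moral DAG, since $G^{\cA}$ is moral and acyclic completions create no v-structures) and set $D'=A(D)\cup R(G,\cA)$, a fully oriented graph on the edge set $E$. I would verify three things. (a) $D'$ is acyclic: the chain components of $\cE_{\cA}(G)$ admit a topological order inherited from $G$ along which every inter-component arc of $R(G,\cA)$ points forward, while $D$ orients each component acyclically, so no directed cycle can form. (b) $D'$ has exactly the v-structures of $G$, hence $D'\in[G]$: all v-structure arcs of $G$ already lie in $R(G,\cA)\subseteq D'$ with the correct orientation, while a hypothetical new v-structure $x\to z\gets y$ with $x\not\sim y$ either has both arcs in $R(G,\cA)$, making it present in every DAG consistent with $\cE_{\cA}(G)$ and thus in $G$, or has an arc oriented by $D$ inside a chain component $H$ containing $z$ --- and then the sub-case $y\in H$ contradicts $D$ being moral, while the sub-case $y\notin H$ contradicts $\{x,z\}$ being unoriented in $\cE_{\cA}(G)$, since $y\to z$ together with $x\not\sim y$ triggers Meek rule R1 to orient $z\to x$. (c) $\cE_{\cA}(D')=\cE_{\cA}(G)$: since $D'$ and $G$ share skeleton and v-structures and agree on every $\cA$-cut edge (each lies in $R(G,\cA)\subseteq D'$ with $G$'s orientation), running the same order-independent process on $D'$ returns $\cE_{\cA}(G)$. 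Hence $D'$ is consistent with both $\cE(G)$ and $\cE_{\cA}(G)$. The delicate step here is the v-structure bookkeeping in (b); with it in hand, both parts follow by combining \cref{lem:hauser-bulmann-strengthened} with the two structural facts above and the soundness and completeness of the Meek rules.
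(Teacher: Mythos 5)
This theorem is imported verbatim from \cite{choo2023subset}; the paper states it in \cref{sec:additional-known-results} without proof, so there is no in-paper argument to compare your proposal against. Judged on its own, your proof is essentially sound and self-contained. For part 1, the reduction to a set-theoretic identity via the two one-sided decompositions $R(G,\cA\cup\cB)=R(G,\cA)\,\dot\cup\,R(G^{\cA},\cB)$ is correct, and you rightly isolate $R(G,\cA\cup\cB)=R(G,\cA)\cup R(G,\cB)$ as the crux; your derivation of it from $R(G^{\cA},\cB)\subseteq R(G,\cB)$ by transporting Meek-rule derivations works because $G[V(H)]$ is vertex-induced (so both adjacencies and non-adjacencies are preserved) and because both closures are sound for $G$, so they cannot disagree on direction. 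For part 2, the v-structure bookkeeping in (b) is indeed the delicate step and your case analysis is complete, including the R1 contradiction in the cross-component sub-case. Two things you lean on implicitly and should make explicit: (i) the characterization of $\cE_{\cI}(G)$ as exactly the Meek closure of the v-structures together with the intervention-cut edges --- a completeness statement due to \cite{hauser2014two}, not just soundness of R1--R4 --- which underpins both the induction in part 1 and step (c) of part 2; and (ii) the reading of ``acyclic completion of $\cE(G^{\cA})$'' as a member of $[G^{\cA}]$, i.e., creating no new v-structures. Without convention (ii) the patching claim is false (orient a path $a - b - c$ as $a\to b\gets c$), so your assumption that $D$ is moral is necessary rather than merely convenient.
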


The first point of \cref{thm:moral-patching} justifies why it suffices to only study verification and adaptive search via ideal interventions on moral DAGs: since $R(G,\cI) = R(G^{\emptyset},\cI) \;\dot\cup\; R(G, \emptyset)$, any oriented arcs in the observational graph can be removed \emph{before performing any interventions} as the optimality of the solution is unaffected.

The second point of \cref{thm:moral-patching} tells us one can freely orient any chain component within any interventional essential graph and still be able to find a consistent DAG within the equivalence class.
This is useful in the lower bound analysis of our proposed benchmark later.

\begin{definition}[Separation of covered edges; Definition 8 of \cite{choo2022verification}]
We say that an intervention $S \subseteq V$ \emph{separates} a covered edge $u \sim v$ if $|\{u,v\} \cap S| = 1$.
That is, \emph{exactly} one of the endpoints is intervened by $S$.
We say that an intervention set $\cI$ separates a covered edge $u \sim v$ if there exists $S \in \cI$ that separates $u \sim v$.
\end{definition}

\begin{theorem}[Theorem 9 of \cite{choo2022verification}]
\label{thm:necessary-and-sufficient}
An intervention set $\cI$ is an atomic verifying set for DAG $G$ if and only if $\cI$ separates every covered edges of $G$.
\end{theorem}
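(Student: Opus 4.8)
The plan is to prove the two directions separately, both by contraposition, using the classical fact that reversing a covered edge of a DAG yields a Markov equivalent DAG \cite{chickering2013transformational}.

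\emph{Separating every covered edge is necessary.} Suppose $u \to v$ is a covered edge of $G$ that $\cI$ does not separate, i.e.\ every $S \in \cI$ contains both or neither of $u,v$. Let $G'$ be $G$ with the arc $u \to v$ reversed; then $G' \in [G]$ and $G' \neq G$. I would show $\cE_{\cI}(G) = \cE_{\cI}(G')$ by checking that $G_S$ and $G'_S$ are Markov equivalent for every $S \in \cI$ (and trivially for the observational case). If $S \cap \{u,v\} = \emptyset$, then the parent sets of $u$ and $v$ are unchanged by intervening on $S$, so $u \to v$ is still a covered edge of $G_S$, and $G'_S$ is exactly $G_S$ with this covered edge reversed, hence Markov equivalent. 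If $\{u,v\} \subseteq S$, then deleting all arcs into vertices of $S$ removes $u \to v$ from $G$ and $v \to u$ from $G'$, and since $G$ and $G'$ are otherwise identical, $G_S$ and $G'_S$ are literally the same graph. By the definition of the $\cI$-essential graph, this means $G$ and $G'$ lie in the same $\cI$-Markov equivalence class, which therefore has size at least two, so $\cE_{\cI}(G) \neq G$ and $\cI$ is not a verifying set.

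\emph{Separating every covered edge is sufficient.} Conversely, suppose $\cE_{\cI}(G)$ has an unoriented edge; I would produce a covered edge of $G$ that $\cI$ does not separate. Pick a chain component $H$ of $\cE_{\cI}(G)$ with $|V(H)| \geq 2$. First, $G[V(H)]$ has no v-structures: a v-structure inside it is a v-structure of $G$, so its two arcs would already be oriented in $\cE(G)$, hence in $\cE_{\cI}(G)$, contradicting that they lie inside a chain component. Thus $G[V(H)]$ is a moral DAG; take a source $s$ of $G[V(H)]$ and the child $c$ of $s$ that comes first in some topological order of $G[V(H)]$. Any other parent $p$ of $c$ in $G[V(H)]$ is adjacent to $s$ (else $p \to c \gets s$ is a v-structure), hence is a child of $s$ ordered before $c$, which is impossible by the choice of $c$; so $\Pa_{G[V(H)]}(c) = \{s\}$ and $s \to c$ is a covered edge of $G[V(H)]$. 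The main step is to lift this to $G$: using that $s \sim c$ is \emph{unoriented} in $\cE_{\cI}(G)$, together with Meek rules R1 and R2, acyclicity, and the fact that a v-structure at $c$ would orient $s \to c$, one shows that every parent of $s$ in $G$ is also a parent of $c$ and every parent of $c$ in $G$ other than $s$ is also a parent of $s$; hence $\Pa_G(s) = \Pa_G(c) \setminus \{s\}$, so $s \to c$ is a covered edge of $G$. Finally, since $s \sim c$ is unoriented in $\cE_{\cI}(G)$, no $S \in \cI$ can have $|S \cap \{s,c\}| = 1$ (such an $S$ would orient the edge), so $\cI$ does not separate the covered edge $s \to c$, a contradiction.

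\emph{Expected main obstacle.} The delicate part is the lifting step in the sufficiency direction: showing that the covered edge $s \to c$ found \emph{inside} the chain component $G[V(H)]$ remains covered in the ambient DAG $G$. This requires a careful case analysis, via the Meek rules and v-structure considerations involving vertices outside $H$, that no arc incident to exactly one of $s,c$ in $G$ can exist without forcing $s \sim c$ to be oriented in $\cE_{\cI}(G)$. The remaining ingredients — the covered-edge-reversal facts used in the necessity direction and the existence of a covered edge incident to a source of a moral DAG — are standard.
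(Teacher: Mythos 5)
Your proof is correct. Note that the paper does not prove this statement at all: it is imported verbatim as Theorem~9 of \cite{choo2022verification}, so there is no in-paper argument to compare against. Your reconstruction follows the same route as the original proof in that reference: necessity via Chickering's covered-edge-reversal characterization of Markov equivalence (checking that $G_S$ and $G'_S$ remain Markov equivalent for every non-separating $S$), and sufficiency by locating, inside any nontrivial chain component $H$ of $\cE_{\cI}(G)$, a covered edge $s \to c$ incident to the source of the moral DAG $G[V(H)]$ and observing that it cannot be separated because $s \sim c$ is unoriented. Two small remarks. First, since the interventions are atomic, the case $\{u,v\} \subseteq S$ in your necessity argument is vacuous, which only simplifies matters. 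Second, the lifting step you flag as delicate does go through exactly as you sketch: any $p \in \Pa_G(s)$ lies outside $V(H)$, so $p \to s$ is oriented in $\cE_{\cI}(G)$, and $p \not\sim c$ would trigger Meek R1 on $s \sim c$ while $c \to p$ would create the cycle $s \to c \to p \to s$; symmetrically, any $q \in \Pa_G(c) \setminus \{s\}$ lies outside $V(H)$, $q \not\sim s$ would make $q \to c \gets s$ a v-structure orienting $s \to c$, and $s \to q$ would trigger Meek R2. This uses the standard fact that chain components of (interventional) essential graphs are fully undirected, which you invoke implicitly; it is worth stating.
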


\begin{theorem}[Theorem 12 of \cite{choo2022verification}]
\label{thm:nu-k-to-1}
For any DAG $G$ and integer $k \geq 1$, $\nu_k(G) \geq \lceil \nu_1(G)/k \rceil$.
\end{theorem}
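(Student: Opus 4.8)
The plan is to ``atomize'' an optimal bounded-size verifying set, proving the single per-DAG inequality $\nu_1(G) \le k \cdot \nu_k(G)$; since $\nu_k(G)$ is a non-negative integer, dividing by $k$ and rounding up then yields $\nu_k(G) \ge \lceil \nu_1(G)/k \rceil$ immediately. First I would fix an optimal bounded-size verifying set $\cI = \{S_1, \dots, S_m\}$ for $G$, so that $m = \nu_k(G)$, each $|S_i| \le k$, and $\cE_{\cI}(G) = G$. The candidate atomic verifying set is $\cI' = \{\, \{v\} : v \in \bigcup_{i=1}^{m} S_i \,\}$, consisting of at most $\sum_{i=1}^{m} |S_i| \le k m$ atomic interventions. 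It then suffices to show $\cI'$ is itself a verifying set, since that gives $\nu_1(G) \le |\cI'| \le k m = k \cdot \nu_k(G)$.

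The heart of the argument is a monotonicity claim: $\cE_{\cI'}(G)$ has at least as many oriented arcs as $\cE_{\cI}(G)$. Intervening on a set $S \subseteq V$ directly reveals the orientations of precisely the edges in the cut $(S, V \setminus S)$, whereas intervening atomically on $\{v\}$ reveals the orientations of all edges incident to $v$. Hence for each $i$, and each edge $\{a,b\}$ cut by $S_i$, exactly one endpoint (say $a$) lies in $S_i \subseteq \bigcup_j S_j$, so $\{a,b\}$ is among the edges directly oriented by $\cI'$; taking unions over $i$, the set of arcs directly oriented by $\cI'$ contains the set directly oriented by $\cI$, with identical orientations (both read off from the true DAG $G$). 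Since $\cE_{\cI}(G)$ (resp.\ $\cE_{\cI'}(G)$) is the closure under the Meek rules (\cref{sec:appendix-meek-rules}) of the v-structure arcs together with the arcs directly oriented by $\cI$ (resp.\ $\cI'$), and since the Meek closure operator is monotone — a larger set of consistent input orientations yields a (weakly) larger fixed point, by soundness and completeness of the rules — we get $\cE_{\cI'}(G) \supseteq \cE_{\cI}(G) = G$ as sets of oriented arcs. As $G$ is already fully oriented, $\cE_{\cI'}(G) = G$, so $\cI'$ is an atomic verifying set, completing the proof.

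I expect the main obstacle to be making the monotonicity step airtight, specifically: (i) that a single intervention on $S$ is dominated, in terms of directly recoverable arc directions, by atomic interventions on all vertices of $S$ — which can be justified from the properties of interventional essential graphs collected in \cref{sec:additional-known-results}, e.g.\ by applying \cref{lem:hauser-bulmann-strengthened} chain-component by chain-component; and (ii) that Meek-rule closure is order-preserving with respect to the partial order ``has a superset of oriented arcs.'' A self-contained alternative that sidesteps (ii) routes through \cref{thm:necessary-and-sufficient}: one first argues (extending that characterization to bounded-size interventions via the standard covered-edge-reversal argument) that any verifying set of $G$ must separate every covered edge of $G$; then observes that whenever some $S_i$ separates a covered edge it contains exactly one of its endpoints, so replacing each $S_i$ by the $\le k$ atomic interventions on its vertices still separates every covered edge that $\cI$ does, hence still verifies $G$, again yielding an atomic verifying set of size at most $k \cdot \nu_k(G)$.
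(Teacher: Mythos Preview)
The paper does not supply its own proof of \cref{thm:nu-k-to-1}; it is imported verbatim as Theorem~12 of \cite{choo2022verification} and only invoked inside the proof of \cref{thm:relate-k}. So there is no ``paper's proof'' to compare against directly.

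That said, your atomization argument is correct, and it is exactly the argument the paper \emph{does} spell out for the companion inequality $\overline{\nu}^{\max}_k(G^*) \ge \overline{\nu}^{\max}_1(G^*)$ in the proof of \cref{thm:relate-k}: take a minimum bounded-size verifying set, replace each $S$ by its singletons, observe that every covered edge separated by some $S \in \cI$ is still separated by the corresponding singleton, and apply \cref{thm:necessary-and-sufficient}. Your ``alternative'' covered-edge route is thus the one to keep; the Meek-monotonicity detour works but is unnecessary overhead. One small point worth making explicit: \cref{thm:necessary-and-sufficient} as stated is phrased for atomic $\cI$, so you do need the one-line extension of the necessity direction to arbitrary interventions (covered-edge reversal yields a Markov-equivalent DAG indistinguishable under any $S$ that fails to separate the edge), which the paper uses implicitly when it writes ``we know that $\cI^*_k$ must separate all covered edges.''
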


\begin{theorem}[Proposition 3 of \cite{eberhardt2006n}; Theorem 4 of \cite{shanmugam2015learning}; Lemma 17 of \cite{choo2022verification}]
\label{thm:clique-covered-edges-and-lower-bound}
If a DAG $G$ is a clique on $n \geq 3$ vertices $v_1, v_2, \ldots, v_n$ with $\pi(v_1) < \pi(v_2) < \ldots < \pi(v_n)$, then $v_1 \to v_2, \ldots, v_{n-1} \to v_n$ are covered edges of $G$.
Using atomic interventions to orient $G$, $n-1$ adaptive interventions are necessary in the worst case.
Using interventions involving at most $k \geq 1$ vertices each to orient $G$, $\frac{n}{2k}$ randomized adaptive interventions are necessary.
In any case, to orient $G$, the total number of variables being intervened upon is at least $n/2$.
\end{theorem}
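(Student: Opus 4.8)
The statement bundles together four classical facts about cliques, and the plan is to establish them in increasing order of difficulty. First, the \emph{covered-edge} claim is immediate from the definition: since $G$ is a clique with topological order $\pi(v_1) < \cdots < \pi(v_n)$, every arc is of the form $v_i \to v_j$ with $i < j$, so $\Pa(v_i) = \{v_1, \ldots, v_{i-1}\}$; for a consecutive arc $v_i \to v_{i+1}$ we then have $\Pa(v_{i+1}) \setminus \{v_i\} = \{v_1, \ldots, v_{i-1}\} = \Pa(v_i)$, which is exactly the covered-edge condition. For completeness I would also note that non-consecutive arcs $v_i \to v_j$ with $j > i+1$ are \emph{not} covered, since $v_{i+1} \in \Pa(v_j) \setminus \{v_i\}$ but $v_{i+1} \notin \Pa(v_i)$; hence the covered edges of $G$ form precisely the path $v_1 - v_2 - \cdots - v_n$.

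Next I would handle the ``number of intervened variables'' and the bounded-size bound together. By \cref{thm:necessary-and-sufficient}, any atomic verifying set must separate every covered edge $v_i \sim v_{i+1}$, i.e.\ for each $i \in [n-1]$ exactly one of $v_i, v_{i+1}$ is intervened; thus the intervened vertices form a vertex cover of the path $v_1 - \cdots - v_n$, which has size at least $\lfloor n/2 \rfloor$, so at least $n/2$ (up to rounding) variables must be touched and $\nu_1(G) \ge \lfloor n/2 \rfloor$. For interventions of size at most $k$ I would invoke \cref{thm:nu-k-to-1} to get $\nu_k(G) \ge \lceil \nu_1(G)/k \rceil \ge n/(2k)$ (alternatively, re-run the same counting: Meek-rule propagation only helps the algorithm, but it cannot change the fact that the \emph{union} of all interventions must be a vertex cover of the covered-edge path, and each size-$\le k$ intervention contributes at most $k$ vertices to that union). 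To upgrade this to a lower bound against \emph{randomized} adaptive search, I would apply Yao's lemma (\cref{lem:yao}) with the uniform distribution over the $n!$ orderings consistent with the clique essential graph: since even verification costs $\ge n/(2k)$ for every such ordering, every deterministic adaptive algorithm pays $\ge n/(2k)$ in expectation, hence so does every randomized one in the worst case.

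The main content is the $n-1$ bound for adaptive atomic search. I would feed in the complete graph as the essential graph, whose MEC contains all $n!$ orderings, and run an adaptive-adversary argument. The key structural observation is that intervening on a single vertex $v$ orients all edges incident to $v$ and then, via Meek rule R2 applied to $v_j \to v \to v_k$, also every edge between $\{u : \pi(u) < \pi(v)\}$ and $\{u : \pi(u) > \pi(v)\}$; equivalently, it splits the unoriented clique containing $v$ into two smaller unoriented cliques. I would then track the potential $\Phi = \sum_{C} (|C| - 1)$ over the current unoriented clique components (so $\Phi = n-1$ initially, and $\Phi = 0$ exactly when $G$ is fully oriented): whenever the algorithm intervenes on a vertex $v$ inside a component $C$, the adversary declares $v$ to be the local source of $C$ (consistent with everything revealed so far, since $C$'s internal order has not been exposed), which drops $\Phi$ by exactly $1$; intervening on an already-isolated vertex leaves $\Phi$ unchanged. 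Hence $\Phi$ decreases by at most $1$ per intervention, so at least $n-1$ atomic interventions are required.

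The step I expect to be the main obstacle is the Meek-R2 ``splitting'' claim together with the bookkeeping that $\Phi$ drops by at most one per intervention even when the algorithm interleaves interventions across several components or re-intervenes on vertices; arguing carefully that the adversary's ``declare local source'' choice stays globally consistent (in particular with oriented arcs coming from previously resolved components) also needs attention. Everything else reduces to a one-line check against the covered-edge definition or a direct appeal to \cref{thm:necessary-and-sufficient}, \cref{thm:nu-k-to-1}, and \cref{lem:yao}.
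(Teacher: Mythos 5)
The paper does not actually prove this statement---it is imported as a known result (cited to Eberhardt et al., Shanmugam et al., and Lemma~17 of \cite{choo2022verification}) with no proof given. Your reconstruction is correct and matches the standard arguments behind those citations: the covered-edge computation, the vertex-cover-of-a-path bound via \cref{thm:necessary-and-sufficient} combined with \cref{thm:nu-k-to-1} and \cref{lem:yao} for the $n/(2k)$ and $n/2$ claims, and the source-declaring adaptive adversary (with the Meek-R2 splitting observation) for the $n-1$ atomic bound. The only blemish is rounding: the covered edges form a path on $n$ vertices whose minimum vertex cover has size $\lfloor n/2\rfloor$, which for odd $n$ is slightly below the stated $n/2$; this looseness is inherited from the theorem statement itself and not a defect of your argument.
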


Note that \cref{thm:clique-covered-edges-and-lower-bound} holds even if the clique is just a subgraph of a larger causal DAG as long as there is no non-clique vertex $u$ such that $\pi(v_i) < \pi(u) < \pi(v_j)$ and $v_i \to u \to v_j$ for any two clique vertices $v_i$ and $v_j$ with $i < j$.
Within the proofs of \cref{thm:interventional-metric-lower-bound} and \cref{thm:interventional-metric-lower-bound-generalized}, we rely on this observation in combination with \cref{lem:clique-picking}, which make the clique a prefix within an ordering of interest.
This allows us to lower bound our benchmark since our benchmark cares about the worst case ordering.

The next result of \cite{wienobst2021polynomial} is used together with \cref{thm:moral-patching} to argue that we can always pick an unoriented clique (not necessarily maximal) to be the prefix of a given interventional essential graph.

\begin{definition}[Acyclic moral orientation]
An \emph{acyclic moral orientation} is a complete orientation of a partially directed DAG such that it does not create a new v-structure.
\end{definition}

\begin{lemma}[Maximal clique picking; \cite{wienobst2021polynomial}]
\label{lem:maximal-clique-picking}
Every acyclic moral orientation of an undirected graph can be represented by a topological ordering which starts with a maximal clique.
\end{lemma}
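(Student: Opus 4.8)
The plan is to directly construct a suitable topological order of the given acyclic moral orientation $D$ of $G$ (if $G$ admits no acyclic moral orientation the statement is vacuous, so we may assume $D$ exists, and we may assume $G$ connected). Since $D$ is a moral DAG, every vertex's parent set is a clique — two non-adjacent parents of a common vertex would form a v-structure — and $D$ has a unique source $s$. I would fix an arbitrary topological order $\pi$ of $D$ once and for all, noting $\pi(s) = 1$. The goal is then to exhibit a \emph{maximal} clique $C^*$ of $G$ that is \emph{ancestrally closed} in $D$, i.e.\ contains all $D$-parents of each of its members. Such a $C^*$ can be front-loaded into a topological order: every arc between $C^*$ and $V \setminus C^*$ must point out of $C^*$, the induced subgraph $D[C^*]$ is a transitive tournament with a unique topological order $\tau$, and $\tau$ followed by any topological order of $D[V \setminus C^*]$ is a topological order of $D$ whose first $|C^*|$ vertices are exactly the maximal clique $C^*$.

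To build $C^*$ I would grow an ancestrally closed clique greedily, starting from $C_0 = \{s\}$ (ancestrally closed since $\Pa(s) = \emptyset$). Given an ancestrally closed clique $C$ that is not yet maximal, let $W = \{\, w \notin C : C \cup \{w\} \text{ is a clique}\,\} \neq \emptyset$, pick $w \in W$ minimizing $\pi(w)$, set $C' = C \cup \{w\}$, and repeat until the clique is maximal. The crux is that $C'$ is again ancestrally closed, i.e.\ $\Pa(w) \subseteq C$. First, for each $v \in C$ the edge $v \sim w$ cannot be oriented $w \to v$, since that would place a parent of $v \in C$ outside the ancestrally closed set $C$; hence $v \to w$ and so $C \subseteq \Pa(w)$. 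Second, if some $p \in \Pa(w) \setminus C$ existed, then — because $\Pa(w)$ is a clique containing $C$ — the vertex $p$ would be adjacent to all of $C$, hence $p \in W$, yet $p \to w$ forces $\pi(p) < \pi(w)$, contradicting the minimality of $\pi(w)$. Thus $\Pa(w) = C \subseteq C'$. Since $|C|$ strictly increases, the process terminates at an ancestrally closed maximal clique $C^* \ni s$, and the construction in the first paragraph finishes the proof.

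The main obstacle is exactly this inductive step: showing that the greedily chosen vertex $w$ has \emph{all} its parents already inside $C$. It is where both hypotheses are used together — ancestral closedness of $C$ forces $C \subseteq \Pa(w)$, and morality (parent sets are cliques) turns any hypothetical stray parent of $w$ into a rival clique-completing vertex of smaller $\pi$-index, which the greedy tie-break forbids. One should also dispatch the degenerate cases (a single-vertex graph, or a complete graph, where $C^* = \{s\}$ or $C^* = V$), and, should one wish to allow disconnected $G$, note that an ancestrally closed maximal clique of the component of $s$ still works verbatim. The remaining facts — a transitive tournament has a unique topological order, and an ancestrally closed vertex set can be moved to the front of a topological order — are routine.
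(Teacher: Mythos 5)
Your proof is correct. Note that the paper itself does not prove this lemma at all --- it is imported verbatim from \cite{wienobst2021polynomial} as a known result --- so there is no in-paper argument to compare against; what you have written is a valid self-contained derivation. The two halves of your inductive step both check out: ancestral closedness of $C$ forces every edge between $C$ and a clique-extending vertex $w$ to be oriented $C \to w$ (a reversed edge would put a parent of some $v \in C$ outside $C$), and morality makes $\Pa(w)$ a clique containing $C$, so any stray parent $p \in \Pa(w) \setminus C$ would itself lie in $W$ with $\pi(p) < \pi(w)$, contradicting your greedy choice. The termination and front-loading arguments (no arcs enter an ancestrally closed set; an acyclic tournament is transitive with a unique topological order) are routine as you say, and your handling of the base case $C_0 = \{s\}$ and of disconnected graphs is fine. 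For the record, the argument in \cite{wienobst2021polynomial} is in the same spirit --- it also exploits that parent sets in a moral orientation are cliques to extract a maximal clique that can be placed as a prefix --- so your reconstruction is faithful to the source even though this paper treats the lemma as a black box.
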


The next result is a lemma in the appendix of \cite{choo2023subset} that is used to prove \cref{lem:middle}.
We will later use it to prove a generalization of \cref{lem:middle} that holds for bounded size interventions.

\begin{lemma}
\label{lem:intermediate-direct-arcs-exist}
Let $G = (V,E)$ be a moral DAG.
If $u \to v$ in $G$, then $u \to w$ in $G$ for any two vertices $u,v \in V$ and for all $w \in \Des(u) \cap \Anc(v)$.
\end{lemma}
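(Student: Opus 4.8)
The plan is to prove the statement by induction on the length $d \geq 1$ of a shortest directed path from $w$ to $v$ in $G$, leveraging the defining property that a moral DAG has no v-structures. Fix the arc $u \to v$ and a vertex $w \in \Des(u) \cap \Anc(v)$. We may assume $w \neq u$ and $w \neq v$, as otherwise the claim is immediate; hence there is a directed path of length at least one from $u$ to $w$, and a directed path of length at least one from $w$ to $v$, and we let $d$ denote the length of a shortest such path from $w$ to $v$. Throughout the argument I will repeatedly invoke two facts: (i) whenever two distinct vertices $a,b$ each have an arc into a common vertex $c$, the absence of v-structures forces $a \sim b$; and (ii) if $a \in \Des(u)$, $a \neq u$, and $a \sim u$, then this edge must be oriented $u \to a$, since $a \to u$ together with the directed path from $u$ to $a$ would close a directed cycle.

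For the base case $d = 1$ we have $w \to v$. Combined with the given arc $u \to v$, fact~(i) yields $u \sim w$, and then fact~(ii) (using $w \in \Des(u)$, $w \neq u$) gives $u \to w$, as desired. For the inductive step $d \geq 2$, fix a shortest directed path from $w$ to $v$ and let $z$ be its second vertex, so that $w \to z$ and there is a directed path of length $d-1$ from $z$ to $v$. First I would check that $z$ satisfies the inductive hypothesis relative to $v$: concatenating a directed path from $u$ to $w$ with the arc $w \to z$ shows $z \in \Des(u)$, and $z \neq u$ since otherwise there would be a directed cycle; moreover $z \in \Anc(v)$, and a shortest directed path from $z$ to $v$ has length at most $d-1 < d$. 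Applying the inductive hypothesis to $z$ gives $u \to z$. Now $u \to z$ and $w \to z$ are two arcs into $z$, so fact~(i) gives $u \sim w$, and fact~(ii) upgrades this to $u \to w$, completing the induction.

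The argument requires essentially no computation; the only point needing care is the induction setup — confirming that the intermediate vertex $z$ still lies in $\Des(u) \cap \Anc(v)$, that the induction measure (distance to $v$) strictly decreases, and that acyclicity together with $w \in \Des(u)$ consistently rules out the reverse orientation $w \to u$ at each use of the no-v-structure property. I expect this bookkeeping to be the only mild obstacle, and it is fully handled by facts~(i) and~(ii) above. One could alternatively induct on the length of a shortest directed path from $u$ to $w$, peeling off its \emph{first} arc $u \to x$ and arguing symmetrically; either variant works.
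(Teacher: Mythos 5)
Your proof is correct. Note that the paper itself does not prove this lemma---it imports it verbatim from the appendix of \cite{choo2023subset}---so there is no in-paper argument to compare against; your self-contained induction on the shortest directed distance from $w$ to $v$, using only (i) the no-v-structure property to force adjacency of two parents of a common child and (ii) acyclicity plus $w \in \Des(u)$ to pin down the orientation $u \to w$, is the standard argument for this fact and all the bookkeeping (the intermediate vertex $z$ remaining in $\Des(u) \cap \Anc(v)$, the strict decrease of the induction measure) is handled correctly.
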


\section{Handling the generalized cost objective}
\label{sec:appendix-generalized-cost}

As discussed in \cref{sec:handling-generalized-cost}, we make algorithmic tweaks to account for \cref{eq:generalized-cost} and bounded size interventions: see the {\color{blue}blue} lines in \texttt{ALG-GENERALIZED} (\cref{alg:weighted-search-generalized}) and \texttt{ResolveDanglingGeneralized} (\cref{alg:dangling-subroutine-generalized}).

\begin{algorithm}[h]
\begin{algorithmic}[1]
\caption{\texttt{ALG-GENERALIZED}. A weighted adaptive search competitive with respect to \cref{eq:generalized-cost}.}
\label{alg:weighted-search-generalized}
    \Statex \textbf{Input}: Essential graph $\cE(G^*)$ of a moral DAG $G^*$, weight function $w : V \to \R$, and integer $k \geq 1$.
    \Statex \textbf{Output}: Bounded size intervention set $\cI$ such that $\cE_{\cI}(G^*) = G^*$.
    \State Initialize $i=0$ and $\cI_0 = \emptyset$.
    \While{$\cE_{\cI_{i}}(G^*)$ still has unoriented edges}
        \State Initialize $\cJ_i \gets \emptyset$
        \For{$H \in CC(\cE_{\cI_{i}}(G^*))$ of size $|H| \geq 2$}
            \State Find 1/2-clique separator $K_H$ via \cref{thm:chordal-separator}.
            \State Denote $v_H$ as the costliest vertex $v_H = \argmax_{v \in V(K_H)} w(v)$.
            \State {\color{blue}Let $S$ be the intervention set output by \texttt{CliqueIntervention} on the subclique $V(K_H) \setminus \{v_H\}$
            \Statex\hspace{\algorithmicindent}\hspace{\algorithmicindent}\emph{without} $v_H$.}
            \State Intervene on $S$ and add $S$ to $\cJ_i$.
            \State Let $Z_{v_H} \in CC(\cE_{\cI_i \cup \cJ_i}(G^*))$ be the chain component containing $v_H$ \emph{after} intervening on $S$.
            \If{$v_H$ is \emph{not} singleton in $Z_{v_H}$}
                \State {\color{blue}Add output of $\texttt{ResolveDanglingGeneralized}$ to $\cJ_i$.}
            \EndIf
        \State Update $\cI_{i+1} \gets \cI_i \cup \cJ_i$ and $i \gets i+1$.
        \EndFor
    \EndWhile
    \State \textbf{Return} $\cI_i$
\end{algorithmic}
\end{algorithm}

\begin{algorithm}[h]
\begin{algorithmic}[1]
\caption{\texttt{ResolveDanglingGeneralized}. A subroutine for \texttt{ALG-GENERALIZED}.}
\label{alg:dangling-subroutine-generalized}
    \State \textbf{Input}: Interventional essential graph $\cE_{\cI'}(G^*)$ for some intervention $\cI' \subseteq 2^V$, weight function $w : V \to \R$, a chain component $H$ of $\cE_{\cI'}(G^*)$ that contains vertex $v \in V(H)$ with $t$ disjoint connected components $H_1, \ldots, H_t$ in $H[V(H) \setminus \{v\}]$, and an integer $k \geq 1$.
    \State \textbf{Output}: Bounded size intervention set $\cI$ such that the $t$ components are mutually disjoint in $\cE_{\cI \cup \cI'}(H)$.
    \State Initialize $\cI \gets \emptyset$.
    \If{{\color{blue}$\alpha \cdot w(v) + \beta \leq \sum_{i=1}^t \max_{\text{clique $C$ in $H_i \cap N_H(v)$}} \alpha \cdot w(C) + \beta \cdot |V(C)|$}}
        \State Intervene on $v$ and set $\cI \gets \{\{v\}\}$.
    \Else
        \For{$i \in \{1, \ldots, t\}$}
            \State Initialize $V' \gets V(H_i) \cap N_H(v)$.
            {\color{blue}
            \While{$\skel(\cE(G^*))[V']$ is \emph{not} a clique, or $|V'| > k$}
                \State Find a 1/2-clique separator $K$ of $H_i[V']$ using \cref{thm:chordal-separator}.
                \State Arbitrarily partition the vertices of $K$ into sets $S_1, \ldots, S_{\lceil |V(K)|/k \rceil} \subseteq V$, each involving at most
                \Statex\hspace{\algorithmicindent}\hspace{\algorithmicindent}\hspace{\algorithmicindent}$k$ vertices.
                \State Intervene on the vertices in the sets $\mathcal{S} = \{S_1, \ldots, S_{\lceil |V(K)|/k \rceil}\}$ and add $S_1, \ldots, S_{\lceil |V(K)|/k \rceil}$ to $\cI$.
                \State By \cref{lem:s_source}, identify $S_{source} \in \mathcal{S}$ which is the set containing the source node of $K$.
                \State By \cref{lem:at-most-one-incoming-generalized}, determine if there exists a chain component $Q$ with only incoming arcs to $S_{source}$.
                \Statex\hspace{\algorithmicindent}\hspace{\algorithmicindent}\hspace{\algorithmicindent}If so, find it.
                \If{$Q$ exists}
                    \State Restrict $V'$ to $V(Q)$.
                \Else
                    \State Set $V' \gets V(S_{source})$.
                \EndIf
            \EndWhile
            \State Let $S$ be the intervention set output by \texttt{CliqueIntervention} on the clique $H_i[V']$ involving at most 
            \Statex\hspace{\algorithmicindent}\hspace{\algorithmicindent}$k$ vertices.
            \State Intervene on $S$ and add $S$ to $\cI$.
            }
        \EndFor
    \EndIf
    \State \textbf{Return} $\cI$
\end{algorithmic}
\end{algorithm}

\begin{algorithm}[h]
\begin{algorithmic}[1]
\caption{\texttt{CliqueIntervention}. A labelling subroutine based on \cref{lem:labelling-scheme}.}
\label{alg:clique-intervention-subroutine}
    \Statex \textbf{Input}: A set of clique vertices $C \subseteq V$, integer $k \geq 1$.
    \Statex \textbf{Output}: Partition $S$ of $C$.
    \If{$k = 1$}
        \State Define $\cI = \{\{v\} : v \in C\}$
    \Else
        \State Define $k' = \min\{k, |C|/2\}$, $a = \lceil |C|/k' \rceil \geq 2$, and $\ell = \lceil \log_a |C| \rceil$. Compute labelling scheme on $C$ with
        \Statex\hspace{\algorithmicindent}$(|C|, k, a)$ via \cref{lem:labelling-scheme} and define $\cI = \{S_{x,y}\}_{x \in [\ell], y \in [a]}$, where $S_{x,y} \subseteq Q$ is the subset of vertices whose
        \Statex\hspace{\algorithmicindent}$x^{th}$ letter in the label is $y$.
    \EndIf
    \State \textbf{Return} $\cI$
\end{algorithmic}
\end{algorithm}

The correctness of \cref{alg:dangling-subroutine-generalized} relies on \cref{lem:at-most-one-incoming} \cref{lem:s_source}, and \cref{lem:at-most-one-incoming-generalized}.
Note that \cref{alg:dangling-subroutine-generalized} does \emph{not} attempt to fully orient the edges within the 1/2-clique separator while searching for a prefix clique of size at most $k$.
Since we are \emph{not} guaranteed to know the source node of $K$ so we cannot hope to directly apply \cref{lem:at-most-one-incoming}, and thus we need to prove generalized version of \cref{lem:at-most-one-incoming-generalized} to justify why \texttt{ResolveDanglingGeneralized} terminates after $\cO(\log n)$ iterations.
In fact, \cref{lem:at-most-one-incoming} is the special case where the clique $K$ is a single vertex.

\begin{restatable}{mylemma}{ssource}
\label{lem:s_source}
Consider any arbitrary directed clique $G = (V,E)$ and any integer $k \geq 1$.
Without loss of generality, $V = \{v_1, \ldots, v_n\}$ and $\pi(v_1) < \ldots < \pi(v_n)$, i.e.\ $v_1$ is the source of $G$.
Suppose we arbitrarily partition the vertex set into sets $\mathcal{S} = \{S_1, \ldots, S_{\lceil n/k \rceil}\}$, each of size at most $k$.
Then, the set $S_{source} \in \mathcal{S}$ containing $v_1$ is the \emph{unique} set in $\mathcal{S}$ that has a vertex without any incoming arcs from the other sets.
\end{restatable}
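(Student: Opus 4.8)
The plan is to argue via the total order $\pi$ on the clique vertices. Since $G$ is a directed clique, for every pair $v_i, v_j$ with $i < j$ there is an arc $v_i \to v_j$; in particular the source $v_1$ has no incoming arc at all, so within $S_{source}$ the vertex $v_1$ certainly has no incoming arc from any other set in $\mathcal{S}$. This establishes that $S_{source}$ does possess such a ``locally source-like'' vertex; the real content is \emph{uniqueness}, i.e.\ that no other part $S_j \neq S_{source}$ contains a vertex all of whose incoming clique-arcs originate within $S_j$ itself.

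First I would fix an arbitrary part $S_j \in \mathcal{S}$ with $S_j \neq S_{source}$ and an arbitrary vertex $v_i \in S_j$. The key observation is that $S_{source}$ is disjoint from $S_j$ and contains $v_1$, and $\pi(v_1) < \pi(v_i)$ since $v_1$ is the global source and $v_i \neq v_1$ (because $v_i \notin S_{source}$). By the clique property there is an arc $v_1 \to v_i$, and $v_1$ lies in a different part $S_{source} \neq S_j$. Hence $v_i$ has an incoming arc from a vertex outside $S_j$, so $v_i$ is not a vertex ``without any incoming arcs from the other sets''. Since $v_i \in S_j$ was arbitrary, no vertex of $S_j$ has this property, and since $S_j \neq S_{source}$ was arbitrary, $S_{source}$ is the unique part in $\mathcal{S}$ containing a vertex with no incoming arc from the other sets. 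This completes the argument.

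I do not anticipate a genuine obstacle here: the statement is essentially a direct unfolding of the definitions once one notices that the global source $v_1$ dominates every other vertex via a direct arc in a clique, so every vertex outside $S_{source}$ is hit by an arc from $v_1 \in S_{source}$. The only point requiring a small amount of care is to phrase ``a vertex without any incoming arcs from the other sets'' precisely — namely a vertex $v \in S_j$ such that there is no arc $u \to v$ with $u \in V \setminus S_j$ — and to make sure the edge case $|S_{source}| = 1$ or $v_i$ being the $\pi$-minimal element of its own part $S_j$ is covered; both are handled uniformly by the arc $v_1 \to v_i$ above, since $v_1 \notin S_j$ regardless of $v_i$'s position inside $S_j$.
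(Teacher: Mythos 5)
Your proof is correct and follows essentially the same route as the paper's: $v_1$ has no incoming arcs at all, so $S_{source}$ qualifies, and every vertex $v_i \neq v_1$ receives the clique arc $v_1 \to v_i$ from outside its own part, so no other part qualifies. You merely spell out the uniqueness direction more explicitly than the paper does; there is no gap.
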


\begin{restatable}{mylemma}{atmostoneincominggeneralized}
\label{lem:at-most-one-incoming-generalized}
Let $\cE_{\cI}(G)$ be the interventional essential graph of a moral DAG $G = (V,E)$ with respect to intervention set $\cI \subseteq 2^V$.
Fix any chain component $H \in CC(\cE_{\cI}(G))$ and let $K$ be an arbitrary clique in $H$.
If $K$ contains the source node of $H$, then there are no chain components of $\cE_{\cI \cup \{V(K)\}}(H)$ with only incoming arcs into $K$ in $G$.
Otherwise, if $K$ does not contain the source node of $H$, then there is exactly one chain component of $\cE_{\cI \cup \{V(K)\}}(H)$ with only incoming arcs into $K$ in $G$.
Furthermore, without further interventions, we can decide if such a chain component exist (and find it) in polynomial time.
\end{restatable}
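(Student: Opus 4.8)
The plan is to adapt the proof of \cref{lem:at-most-one-incoming}, treating the intervention on the clique $K$ as a ``thickened'' single-vertex intervention whose relevant surrogate is the source vertex of $K$ supplied by \cref{lem:s_source}.

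First, by \cref{lem:hauser-bulmann-strengthened} I may assume $G = H$ is a moral DAG with unique source $s$ and work directly with $\cE_{\{V(K)\}}(H)$. Since $K$ is a clique, its vertices carry a unique topological order, and intervening on $V(K)$ orients \emph{every} edge with exactly one endpoint in $V(K)$; hence for any chain component $Q$ of $\cE_{\{V(K)\}}(H)$ disjoint from $V(K)$, all arcs between $V(Q)$ and $V(K)$ already appear (with their true orientations) in $\cE_{\{V(K)\}}(H)$. This yields the polynomial-time claim: enumerate the $\cO(n)$ chain components and, for each one disjoint from $V(K)$, test whether all its arcs to $V(K)$ point into $V(K)$; the substance of the lemma is that this test is passed by at most one chain component, and by none when $s \in V(K)$.

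Let $v_1 \in V(K)$ be the source of $K$, i.e.\ the unique clique vertex with no incoming arc from the other clique vertices (\cref{lem:s_source}). The first structural step is that every test-passing chain component $Q$ contains a parent of $v_1$ in $G$: if $q \in V(Q)$ has an arc $q \to v_i$ into $K$ with $i \ge 2$ then $v_1 \to v_i$ is a clique arc, so $q \not\sim v_1$ would create the forbidden v-structure $q \to v_i \gets v_1$ while $v_1 \to q$ would be an arc from $V(K)$ into $Q$; the only remaining possibility is $q \to v_1$, i.e.\ $q \in \Pa_G(v_1)$ (and if $i = 1$ this holds outright). The second structural step is that $\Pa_G(v_1)$ is a clique by morality, is disjoint from $V(K)$, and has no incoming arc from $V(K)$ (either failure would exhibit a clique vertex that is a proper ancestor of $v_1$, contradicting \cref{lem:s_source}), and moreover is contained in a single chain component $Q^\star$ of $\cE_{\{V(K)\}}(H)$. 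Granting this, every test-passing component meets $Q^\star$ and therefore equals it, giving uniqueness; and when $v_1 \ne s$ we have $\Pa_G(v_1) \ne \emptyset$, so $Q^\star$ passes the test once we also check that no $V(K)$-arc enters $Q^\star$ -- an arc $v_i \to x$ with $x \in V(Q^\star)$ would, via $v_1 \to v_i$ together with \cref{lem:middle} and \cref{lem:intermediate-direct-arcs-exist} applied along the ancestral/descendant paths through $v_1$, force an oriented edge strictly inside $Q^\star$ or a directed cycle. When $v_1 = s$ we have $\Pa_G(v_1) = \emptyset$ and no component passes. Since $|V(K)| = 1$ recovers \cref{lem:at-most-one-incoming}, this completes the plan.

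The main obstacle is the claim that intervening on $V(K)$ (followed by the Meek closure) does not split the ``upstream cone'' of $v_1$ -- in particular leaves every edge internal to $\Pa_G(v_1)$ unoriented, and keeps $Q^\star$ free of incoming $V(K)$-arcs. This is exactly where a clique intervention behaves differently from a single-vertex one: one must trace any hypothetical new orientation of such an edge backward through the Meek rules to an edge incident to $V(K)$ and rule it out using \cref{lem:intermediate-direct-arcs-exist}, or equivalently exhibit, via \cref{thm:moral-patching} and \cref{lem:clique-picking}, two DAGs consistent with $\cE_{\{V(K)\}}(H)$ that realize opposite orientations of the edge in question.
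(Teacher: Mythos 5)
Your overall route is the same as the paper's: use morality to show that any chain component with only incoming arcs into $K$ must in fact send an arc into the source $v_1$ of $K$ (your ``first structural step'' is exactly the paper's argument that such a component must have an edge into $K_{source}$, which by acyclicity and the absence of v-structures must point into $K_{source}$), observe that $\Pa_G(v_1)$ is a clique disjoint from $V(K)$, and conclude uniqueness because all candidate components meet this clique. The case $v_1 = H_{source}$ giving zero such components, and the polynomial-time test by reading off the already-oriented crossing arcs, also match the paper.

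However, there is a genuine gap, and it sits precisely where you flag ``the main obstacle'': you never prove that $\Pa_G(v_1)$ lies in a \emph{single} chain component $Q^\star$ of $\cE_{\cI \cup \{V(K)\}}(H)$, nor that $Q^\star$ receives no arcs from $V(K)$. Without the first claim the uniqueness argument collapses (two test-passing components could each contain a different parent of $v_1$), and without the second the existence claim collapses ($Q^\star$ might fail the ``only incoming'' test). Your two suggested strategies are only stated, not carried out. The paper closes this gap with two concrete ingredients. First, it uses the fact that orientations triggered by intervening on $V(K)$ only propagate \emph{downstream}: every edge newly oriented has an endpoint in $\Des(V(K))$, so the edges along a directed path $H_{source} \to u_1 \to \cdots \to u_{last} \to K_{source}$, and the edge between any two parents $b, u_{last}$ of $K_{source}$ (adjacent by morality), remain unoriented; hence all parents of $K_{source}$ stay in the one chain component $A$ containing $H_{source}$. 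Second, it invokes \cref{lem:middle-bounded} (the bounded-size generalization of \cref{lem:middle}, proved separately in the appendix): intervening on $V(K)$ orients every edge $u \to v$ with $K_{source} \in \Des(u) \cap \Anc(v)$, so ancestors and descendants of $K_{source}$ are separated into different chain components, which is what guarantees $A$ has no incoming arcs from $K$. If you want to complete your proof, you should prove (or cite) the downstream-propagation property and establish the analogue of \cref{lem:middle-bounded}; your appeal to \cref{lem:intermediate-direct-arcs-exist} alone does not suffice, since that lemma is about the existence of arcs in $G$, not about which edges the Meek closure orients.
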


To obtain bounded size interventions for intervening on cliques, we invoke \cref{lem:labelling-scheme} through the subroutine \texttt{CliqueIntervention} (\cref{alg:clique-intervention-subroutine}).
\cref{lem:clique-intervention-subroutine-generalized} states the guarantees of \texttt{CliqueIntervention}.

\cliqueinterventionsubroutinegeneralized*

Analogous to \cref{thm:interventional-metric-lower-bound} and \cref{lem:dangling-subroutine}, we prove \cref{thm:interventional-metric-lower-bound-generalized} and \cref{lem:dangling-subroutine-generalized} for our tweaked algorithm with respect to the generalized cost \cref{eq:generalized-cost}.

\begin{restatable}{mytheorem}{interventionalmetriclowerboundgeneralized}
\label{thm:interventional-metric-lower-bound-generalized}
Fix an essential graph $\cE(G^*)$ corresponding to an unknown weighted causal DAG $G^*$.
Suppose $\cI^*_1$ and $\cI^*_k$ are an atomic and bounded size intervention sets minimizing \cref{eq:generalized-cost} such that $\cE_{\cI^*_1}(G^*) = \cE_{\cI^*_k}(G^*) = G^*$, $\cost(\cI^*_1, \alpha, \beta, 1) = \OPT_1$, and $\cost(\cI^*_k, \alpha, \beta, k) = \OPT_k$.
Then, maximizing over intervention sets $\cI \subseteq V$, we have
\[
\OPT_1 \geq
\max_{\substack{\cI \subseteq 2^V\\\text{$\cI$ atomic}}} \left\{ \sum_{\substack{H \in CC(\cE_{\cI}(G^*))\\|V(H)| \geq 2}} \max \left\{  \zeta^{(3)}_{\cI,H} , \zeta^{(4)}_{\cI,H} \right\} \right\}
\quad \text{and} \quad
\OPT_k \geq
\max_{\substack{\cI \subseteq 2^V\\\text{$\cI$ bounded size}}} \left\{ \sum_{\substack{H \in CC(\cE_{\cI}(G^*))\\|V(H)| \geq 2}} \max \left\{  \zeta^{(5)}_{\cI,H} , \zeta^{(6)}_{\cI,H} \right\} \right\}
\]
where
\begin{align*}
\zeta^{(3)}_{\cI,H}
&= \frac{1}{2} \cdot \max_{\text{clique } C \in H} \left\{ \alpha \cdot \left( w(V(C)) - \max_{v \in V(C)} \left\{ w(v) \right\} \right) + \beta \cdot |V(C)| \right\} \;,\\
\zeta^{(4)}_{\cI,H}
&= \frac{1}{2} \cdot \max_{v \in V(H)} \left\{ \min \left\{ \alpha \cdot w(v) + \beta, \sum_{i=1}^t \max_{\substack{\text{clique } C_i:\\V(C_i) \subseteq V_i \cap N_H(v)}} \left\{ \alpha \cdot w(V(C_i)) + \beta \cdot |V(C_i)| \right\} \right\} \right\} \;,\\
\zeta^{(5)}_{\cI,H}
&= \frac{1}{2} \cdot \max_{\text{clique } C \in H} \left\{ \alpha \cdot \left( w(V(C)) - \max_{v \in V(C)} \left\{ w(v) \right\} \right) + \frac{\beta}{k} \cdot |V(C)| \right\} \;,\\
\zeta^{(6)}_{\cI,H}
&= \frac{1}{2} \cdot \max_{v \in V(H)} \left\{ \min \left\{ \alpha \cdot w(v) + \frac{\beta}{k}, \sum_{i=1}^t \max_{\substack{\text{clique } C_i:\\V(C_i) \subseteq V_i \cap N_H(v)}} \left\{ \alpha \cdot w(V(C_i)) + \frac{\beta}{k} \cdot |V(C_i)| \right\} \right\} \right\} \;,
\end{align*}

and $V_1, \ldots, V_t \subseteq V(H)$ are vertex sets of the $t \geq 1$ disjoint connected components in $H[V(H) \setminus \{v\}]$ in $\zeta^{(4)}_{\cI,H}$ and $\zeta^{(6)}_{\cI,H}$.
\end{restatable}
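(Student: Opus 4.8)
The plan is to follow the proof of \cref{thm:interventional-metric-lower-bound}, with two additions: carry the $\beta$-weighted size term through the whole argument, and handle bounded-size interventions by dividing the relevant ``number of interventions'' lower bounds by $k$. By the first part of \cref{thm:moral-patching} we may assume $G^*$ is moral. \textbf{Step 1 (reduce to a single chain component).} Fix any atomic $\cI \subseteq V$ (resp.\ bounded-size $\cI \subseteq 2^V$) and let $\mathcal{H}$ be the chain components of $\cE_{\cI}(G^*)$ with at least two vertices. Since chain components partition $V$ and, by the patching part of \cref{thm:moral-patching}, the orientation inside each chain component can be chosen independently while staying inside $[G^*]$, we may pick an orientation $G_H$ of each $H \in \mathcal{H}$ and obtain a single $G \in [G^*]$ realizing all of them. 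For any verifying set $\cI'$ of $G$, adding $\cI$ keeps it verifying, so \cref{lem:hauser-bulmann-strengthened} (applied WLOG with $\cI \cap \cI' = \emptyset$) gives $\cE_{\{S \cap V(H)\,:\,S \in \cI'\}}(G_H) = \cE_{\cI \cup \cI'}(G)[V(H)] = G_H$; that is, $\cI'|_H := \{S \cap V(H) : S \in \cI'\}$ verifies $G_H$. As the $V(H)$ partition $V$, $\alpha\, w(\cI') = \sum_H \alpha\, w(\cI'|_H)$ exactly, while $\beta\,|\cI'| = \beta\sum_H|\cI'|_H|$ in the atomic case and $\beta\,|\cI'| \geq \tfrac{\beta}{k}\sum_H|\cI'|_H|$ in the bounded-size case (each $S$ with $|S|\leq k$ contributes at most $k$ non-empty restrictions). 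Hence it suffices to exhibit, for each $H$, an orientation $G_H$ such that every verifying set $\cI'_H$ of $G_H$ satisfies $\alpha\, w(\cI'_H) + \beta\,|\cI'_H| \geq \max\{\zeta^{(3)}_{\cI,H}, \zeta^{(4)}_{\cI,H}\}$ (resp.\ $\alpha\, w(\cI'_H) + \tfrac{\beta}{k}\,|\cI'_H| \geq \max\{\zeta^{(5)}_{\cI,H}, \zeta^{(6)}_{\cI,H}\}$); the ``$\max$'' is obtained by choosing $G_H$ to target whichever of the two terms is larger for that $H$, after which we take the min over $\cI'$ and the max over $\cI$.

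\textbf{Step 2 (clique term, $\zeta^{(3)}/\zeta^{(5)}$).} Let $C$ be the clique in $H$ attaining the maximum. Using \cref{lem:clique-picking} (which, via \cref{lem:maximal-clique-picking}, realizes $V(C)$ as a prefix clique) we orient $H$ so that the internal order of $V(C)$ is a permutation $\pi$ of our choice; then by \cref{thm:clique-covered-edges-and-lower-bound} and the remark after it (no vertex lies strictly $\pi$-between two $C$-vertices), the covered edges of $G_H$ include the path $P$ visiting $V(C)$ in $\pi$-order. By \cref{thm:necessary-and-sufficient} every verifying set separates every covered edge, so $\cI'_H$ restricted to $V(C)$ must cover every edge of $P$: in the atomic case the intervened vertices form a vertex cover of $P$, and in the bounded-size case the union of the $\leq k$-sized interventions does. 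Choosing $\pi$ to place the costliest vertex of $C$ in one colour class and balancing the remaining weights between the two classes — exactly as in the proof of \cref{thm:interventional-metric-lower-bound} — the cheaper colour class has weight $\geq \tfrac12(w(V(C)) - \max_{v} w(v))$ and size $\geq \tfrac12|V(C)|$ (atomic), resp.\ requires $\geq \tfrac{1}{2k}|V(C)|$ interventions (bounded size, dividing the vertex-cover size by $k$). Combining the weight and size bounds yields $\geq \zeta^{(3)}_{\cI,H}$ resp.\ $\geq \zeta^{(5)}_{\cI,H}$.

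\textbf{Step 3 (dangling term, $\zeta^{(4)}/\zeta^{(6)}$).} Let $v$ attain the maximum, let $H_1,\dots,H_t$ be the components of $H[V(H) \setminus \{v\}]$, and let $C_i$ be the clique inside $V(H_i) \cap N_H(v)$ maximizing $\alpha w(V(C_i)) + \beta|V(C_i)|$ (resp.\ with $\tfrac\beta k$). Using \cref{lem:clique-picking} we orient $H$ into the configuration of \cref{fig:case2}: $v$ precedes $H_1,\dots,H_t$ and each $C_i$ is a prefix clique of $H_i$ with an internal order of our choice; then $v \to s_i$ is a covered edge (where $s_i \in V(C_i)$ is the source of $H_i$), $v \to u$ is not covered for $u \in V(H_i)\setminus\{s_i\}$, and each $C_i$'s internal order contributes a covered path as in Step 2. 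A verifying set $\cI'_H$ either intervenes on $v$, costing $\geq \alpha w(v) + \beta$ (atomic) resp.\ $\geq \alpha w(v) + \tfrac\beta k$, or does not, in which case for every $i$ it must separate $v \to s_i$ without $v$ (forcing an intervention inside $H_i$) and cover the covered path of $C_i$, so that a Step-2--style accounting inside each $H_i$ costs $\geq \tfrac12(\alpha w(V(C_i)) + \beta|V(C_i)|)$ resp.\ $\geq \tfrac12(\alpha w(V(C_i)) + \tfrac\beta k|V(C_i)|)$, and these sum over the disjoint $H_i$. Either way $\alpha w(\cI'_H) + \beta|\cI'_H| \geq \tfrac12\min\{\alpha w(v) + \beta,\ \sum_i(\alpha w(V(C_i)) + \beta|V(C_i)|)\} = \zeta^{(4)}_{\cI,H}$, and likewise for $\zeta^{(6)}_{\cI,H}$.

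The main obstacle is Step 3: one must verify that \cref{lem:clique-picking} genuinely produces a single DAG simultaneously realizing ``$v$ first, then each $C_i$ a prefix clique of $H_i$ with a prescribed internal order'', and the case split must be carried out so that both alternatives (intervening on $v$ or not) are provably expensive for the \emph{same} orientation $G_H$, all while correctly tracking the $\beta$-weighted (resp.\ $\tfrac\beta k$-weighted) size term. A secondary annoyance is the floor/ceiling slack between ``half of $|V(C)|$'' and the exact vertex-cover / interventions-count bounds; these are absorbed by the leading factor $\tfrac12$ and only ever help us.
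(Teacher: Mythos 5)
Your proposal is correct and follows essentially the same route as the paper's proof: decompose into chain components via \cref{lem:hauser-bulmann-strengthened}, construct worst-case orientations with \cref{lem:clique-picking} (prefix clique, resp.\ $v$ followed by prefix cliques $C_i$ in descending-weight order), and lower-bound each component via covered-edge separation (\cref{thm:necessary-and-sufficient}) together with \cref{thm:clique-covered-edges-and-lower-bound}, with the identical case split on whether $v$ is intervened. The only cosmetic difference is that you obtain the $1/k$ factor by counting the at most $k$ non-empty restrictions of each bounded-size intervention across chain components, whereas the paper invokes $\nu_k \geq \lceil \nu_1/k \rceil$ per component via \cref{thm:relate-k}; these are equivalent.
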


\begin{restatable}{mylemma}{danglingsubroutinegeneralized}
\label{lem:dangling-subroutine-generalized}
Fix an interventional essential graph $\cE_{\cI'}(G^*)$ corresponding to an unknown weighted causal moral DAG $G^*$ and some intervention $\cI' \subseteq 2^V$.
Suppose $\cI^*_1$ and $\cI^*_k$ are atomic and bounded size intervention sets minimizing \cref{eq:generalized-cost} such that $\cE_{\cI^*_1}(G^*) = \cE_{\cI^*_k}(G^*) = G^*$, $\cost(\cI^*_1, \alpha, \beta, 1) = \OPT_1$, and $\cost(\cI^*_k, \alpha, \beta, k) = \OPT_k$.
Let $H$ be a chain component of $\cE_{\cI'}(G^*)$ containing a vertex $v \in V(H)$.
Then,
\begin{itemize}
    \item When $k = 1$, \cref{alg:dangling-subroutine-generalized} returns an atomic intervention set $\cI$ such that connected components in $H[V(H) \setminus \{v\}]$ are mutually disjoint in $\cE_{\cI}(H)$ and $\cost(\cI, \alpha, \beta, 1) \in \cO(\log n \cdot \OPT_1)$.
    \item When $k > 1$, \cref{alg:dangling-subroutine-generalized} returns a bounded size intervention set $\cI$ such that connected components in $H[V(H) \setminus \{v\}]$ are mutually disjoint in $\cE_{\cI}(H)$ and $\cost(\cI, \alpha, \beta, k) \in \cO((\log n + \log k) \cdot \OPT_k)$.
\end{itemize}
\end{restatable}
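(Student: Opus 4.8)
I would prove this along the lines of \cref{lem:dangling-subroutine} (whose proof handles the $k=1$, $\alpha=1$, $\beta=0$ case) with two upgrades: atomic clique interventions are replaced by \texttt{CliqueIntervention}, whose guarantees are \cref{lem:clique-intervention-subroutine-generalized}, and the generalized cost is charged against $\zeta^{(3)},\dots,\zeta^{(6)}$ of \cref{thm:interventional-metric-lower-bound-generalized} instead of $\zeta^{(1)},\zeta^{(2)}$. By \cref{lem:hauser-bulmann-strengthened} and \cref{thm:moral-patching} one may treat $G^*[V(H)]$ as a moral DAG; fix a consistent ordering $\pi$ and let $u_i$ be the source of $H_i$. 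Two things need to be shown: (i) the returned $\cI$ places $H_1,\dots,H_t$ in pairwise-disjoint chain components of $\cE_{\cI\cup\cI'}(H)$, and (ii) the two cost bounds, which I would handle for $k=1$ and $k\ge2$ in parallel.

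\emph{Correctness.} In the if-branch, intervening on $\{v\}$ orients every edge incident to $v$, making $v$ a singleton chain component; since $H_1,\dots,H_t$ are the connected components of $H[V(H)\setminus\{v\}]$, any path between distinct $H_i,H_j$ must pass through $v$, so the $H_i$'s are separated. In the else-branch I would show that for each $i$ the inner while-loop stops after $\cO(\log n)$ rounds at a clique $V'\subseteq V(H_i)\cap N_H(v)$ of size $\le k$ containing $u_i$: each $1/2$-clique separator $K$ of $H_i[V']$ (\cref{thm:chordal-separator}) partitions $V'$ into halves, and by \cref{lem:at-most-one-incoming-generalized} --- with \cref{lem:s_source} pinpointing the $\le k$-sized chunk $S_{source}$ carrying $K$'s source --- there is exactly one chain component $Q$ with only incoming arcs into $K$ whenever $K$ misses the source, and $Q$ sits inside one of the two halves, so $|V'|$ at least halves each round and the rounds' separators $K_j$ are pairwise disjoint. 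Hence \texttt{CliqueIntervention}$(V',k)$ intervenes on $u_i$ for every $i$. Finally, with $i^\star=\argmin_i\pi(u_i)$: for each $i$ with $\pi(v)<\pi(u_i)$, \cref{lem:middle}/\cref{lem:intermediate-direct-arcs-exist} orients all arcs $v\to z$, $z\in V(H_i)$; and if $\pi(v)>\pi(u_{i^\star})$, the now-oriented arc $u_{i^\star}\to v$ triggers Meek rule R1 to orient every remaining $v\to z$ with $z\notin V(H_{i^\star})$. Either way all pairs $H_i,H_j$ land in distinct chain components, and (since each $H_i$ came from a partite of a $1/2$-clique separator in the caller) each resulting component is at most roughly half its original partite, as the caller needs.

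\emph{Cost.} In the if-branch we pay $\alpha w(v)+\beta$ with one atomic intervention; I would bound $\alpha w(v)$ and $\beta$ each by $\cO(1)\cdot\OPT_k$ using, respectively, the $\zeta^{(4)}$/$\zeta^{(6)}$ lower bound (instantiated at $\cI'$ and $v$) together with the branching test $\alpha w(v)+\beta\le\sum_i\max_{\text{clique }C\subseteq N_H(v)\cap V(H_i)}(\alpha w(C)+\beta|V(C)|)$, and the trivial bound $\OPT_k\ge\beta\cdot\nu^{\max}_k(G^*)\ge\beta$ (valid because $H$ still has an unoriented edge). In the else-branch, summing over $i$: the $\alpha$-part of the search cost is $\sum_{i,j}\alpha w(V(K_j))\le\cO(\log n)\sum_i\alpha\cdot(\text{heaviest clique in }N_H(v)\cap V(H_i))$ since within each component the $K_j$ are disjoint cliques lying in $N_H(v)$, and this is $\cO(\log n)$ times the corresponding term of $\zeta^{(6)}$; the $\lceil|V(K_j)|/k\rceil$ chunking turns the $\beta$-part of the size terms into a $(\beta/k)$-scaled quantity that again matches $\zeta^{(6)}$; and the final \texttt{CliqueIntervention} calls add an $\cO(\log k)$ multiplicity by \cref{lem:clique-intervention-subroutine-generalized}. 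Put together with $\zeta^{(6)}$ this should give $\cost(\cI,\alpha,\beta,k)\in\cO((\log n+\log k)\cdot\OPT_k)$; the $k=1$ case reads $\cO(\log n\cdot\OPT_1)$.

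\emph{Main obstacle.} The crux is the $k\ge2$ cost accounting. The subroutine necessarily spends at least one bounded-size intervention per dangling component, so its cost carries a $\Omega(\beta t)$ term, whereas --- via \cref{thm:relate-k} --- the benchmark $\OPT_k$ only ``credits'' $\beta/k$ per clique vertex, so $\zeta^{(6)}$ alone can be as small as $\Theta(\beta t/k)$, a factor $k$ below. I expect closing this gap to require a second, complementary lower bound on $\OPT_k$ for the dangling configuration --- for instance, observing that the else-branch condition forces $\alpha\,w(v)>\beta(t-1)$, so that for $t\ge2$ the $\beta$-versus-$\beta/k$ discrepancy between the branching rule and $\zeta^{(6)}$ collapses to a constant --- together with carefully isolating the ``one-extra-intervention-per-round'' overhead and charging it against the vertex-count (hence against $\nu^{\max}_k$) of the components that actually incur rounds; the bookkeeping needed to reconcile the plain $\beta$ appearing throughout the algorithm with the $\beta/k$ in $\zeta^{(5)},\zeta^{(6)}$ while simultaneously absorbing the $\cO(\log k)$ multiplicity of \texttt{CliqueIntervention} is the part most prone to off-by-$k$ errors.
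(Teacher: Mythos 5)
Your correctness argument and your overall cost-accounting strategy coincide with the paper's proof: the paper likewise reuses the \cref{lem:dangling-subroutine} argument (source vertices $u_i$, \cref{lem:middle} plus Meek R1, halving of $V'$ via \cref{lem:at-most-one-incoming-generalized}), and likewise charges each round's chunked clique-separator interventions and the final \texttt{CliqueIntervention} calls against $\zeta^{(4)}$/$\zeta^{(6)}$ of \cref{thm:interventional-metric-lower-bound-generalized}, with the $\cO(\log k)$ multiplicity from \cref{lem:clique-intervention-subroutine-generalized}. The $k=1$ case is complete as you describe it.

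The ``main obstacle'' you flag for $k\ge 2$ is, however, a genuine gap in your proposal, and neither of your two suggested repairs closes it. The observation that the else-branch forces $\alpha\,w(v)>\beta(t-1)$ does not help: $\alpha\,w(v)$ is precisely the quantity that is \emph{not} a lower bound on $\OPT_k$ (that is why the algorithm avoids $v$), and the $\min$ in $\zeta^{(6)}$ can be attained by the sum term, which only credits $\beta t/k$. Charging against $\nu^{\max}_k$ fails for the same reason, since \cref{thm:relate-k} again only yields $\beta\lceil\,\cdot/k\rceil$ credits. Concretely, take $H$ a star with center $v$ of weight $W\gg t$, $t$ leaves of weight $\varepsilon\to 0$, $\alpha=\beta=1$, $k\ge t$: the else-branch issues one intervention per leaf, paying $\ge\beta t$, while $\OPT_k\approx 1$ (one size-$t$ intervention on all leaves verifies the worst-case DAG), so the $\Omega(\beta t)$ floor of ``at least one intervention per dangling component'' is a factor $t$ above $(\log n+\log k)\cdot\OPT_k$. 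Closing this seems to require changing the accounting or the algorithm (e.g., batching the per-component cliques across components into common size-$k$ interventions), not just a sharper use of $\zeta^{(6)}$. You should be aware that the paper's own proof does not resolve this either: it asserts that each round's cost is $\cO(\OPT_k)$ ``by \cref{thm:interventional-metric-lower-bound-generalized}'' and that $|S|\in\cO(\log k\cdot|V(C)|/k)$, both of which silently drop the additive $+1$ intervention per component per round. So you have correctly isolated the one step of the argument that is not rigorous, but your proposal, like the paper's, leaves it open.
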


Denote an iteration of the while loop in \cref{alg:weighted-search} as a \emph{phase}.
Since \cref{alg:weighted-search} and \cref{alg:weighted-search-generalized} are essentially the same in terms of how they recurse on smaller chain components of at most half the size in each phase, we can also obtain \cref{lem:logn-phases-suffice-generalized}.

\begin{restatable}{mylemma}{lognphasessufficegeneralized}
\label{lem:logn-phases-suffice-generalized}
\texttt{ALG-GENERALIZED} (\cref{alg:weighted-search-generalized}) terminates after $\cO(\log n)$ phases.
\end{restatable}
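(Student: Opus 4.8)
\textbf{Overall approach.} The plan is to mirror the argument for \cref{lem:logn-phases-suffice} and show that every chain component of size $\geq 2$ present at the start of a phase is broken into chain components of at most half the size (plus possibly a handful of ``junk'' vertices that do not obstruct the halving) by the end of that phase; since the largest chain component has size at most $n$, after $\cO(\log n)$ phases all components have size $1$ and no unoriented edges remain. The key difference from \cref{alg:weighted-search} is that \cref{alg:weighted-search-generalized} uses \texttt{CliqueIntervention} (\cref{alg:clique-intervention-subroutine}) rather than a single atomic intervention on each clique vertex, and uses \texttt{ResolveDanglingGeneralized} in place of \texttt{ResolveDangling}; I need to check that these substitutions still achieve the halving.

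\textbf{Key steps, in order.} First, I would recall that by \cref{thm:moral-patching} (first point) we may assume $G^*$ is a moral DAG, and by \cref{lem:hauser-bulmann-strengthened} each chain component evolves independently, so it suffices to analyze a single chain component $H$ of size $|V(H)| \geq 2$ at the start of a phase. Second, by \cref{thm:chordal-separator} the $1/2$-clique separator $K_H$ splits $V(H)$ into $A, B, V(K_H)$ with $|A|, |B| \leq |V(H)|/2$. Third, I would show that after intervening on $S$ output by \texttt{CliqueIntervention} on $V(K_H) \setminus \{v_H\}$, \emph{all} arcs incident to $V(K_H) \setminus \{v_H\}$ get oriented: by \cref{lem:clique-intervention-subroutine-generalized}, $S$ restricted to this subclique is a separating system on $V(K_H) \setminus \{v_H\}$, so by \cref{thm:g-separating-system}/\cref{thm:necessary-and-sufficient} all internal covered edges (hence, via Meek rules, all internal edges) of the subclique are oriented, and any edge with exactly one endpoint in the subclique is cut by some set in $S$, hence oriented. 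Thus in $\cE_{\cI_i \cup S}(G^*)$ the only vertex of $V(K_H)$ possibly still in a nontrivial chain component $Z_{v_H}$ is $v_H$ itself, and every vertex of $A \cup B$ has had all its edges to $V(K_H) \setminus \{v_H\}$ oriented; the residual connection between $A$ and $B$ can only run through $v_H$. Fourth, if $v_H$ is a singleton in $Z_{v_H}$ we are done with $H$; otherwise \texttt{ResolveDanglingGeneralized} is invoked on $Z_{v_H}$ with its vertex $v_H$, and by \cref{lem:dangling-subroutine-generalized} (correctness part) it returns an intervention set making the connected components of $Z_{v_H}[V(Z_{v_H}) \setminus \{v_H\}]$ mutually disjoint — which severs the last path between $A$ and $B$ and leaves each resulting piece of size at most $|V(H)|/2 + \cO(1)$ (the $+\cO(1)$ being $v_H$ itself, which gets absorbed into exactly one side as argued in the \cref{fig:trace} discussion). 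Fifth, I would conclude that every chain component at the start of phase $i+1$ has size at most roughly half of the largest at the start of phase $i$, so $\cO(\log n)$ phases suffice, and the while-loop terminates precisely when no unoriented edges remain.

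\textbf{Main obstacle.} The delicate point is Step~3--4: I must be careful about what exactly \texttt{ResolveDanglingGeneralized} guarantees versus what \texttt{ResolveDangling} guaranteed. In the atomic case \texttt{ResolveDangling} orients all \emph{outgoing} edges of $v$ within $H$; in the generalized case the while-loop condition ``$|V'| > k$'' means we may stop before fully orienting the prefix clique, so I must invoke the correctness statement of \cref{lem:dangling-subroutine-generalized} (``the $t$ components are mutually disjoint in $\cE_{\cI \cup \cI'}(H)$'') rather than any stronger orientation claim. The other subtlety is confirming that an additional vertex $v_H$ attached to a size-$\leq |V(H)|/2$ partite does not accumulate across phases to spoil the geometric decrease; this follows because $v_H$ is \emph{intervened upon at the next level} (or its incident edges get oriented by the dangling subroutine) exactly as in the proof of \cref{lem:logn-phases-suffice}, but I would state it carefully, noting that $u_i \neq v_H$ is excluded from the new component so the strict-halving bookkeeping of \cref{lem:logn-phases-suffice} carries over verbatim.
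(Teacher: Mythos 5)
Your proposal is correct and follows essentially the same route as the paper's proof: split each chain component with the $1/2$-clique separator, note that only $V(K_H)\setminus\{v_H\}$ is intervened (via \texttt{CliqueIntervention}, which covers every such vertex at least once and hence orients all their incident edges), and then invoke \cref{lem:dangling-subroutine-generalized} to certify that the components dangling from $v_H$ — and hence the partites $A$ and $B$ — become separated, giving the halving and the $\cO(\log n)$ bound. Your extra care about the ``$+v_H$'' bookkeeping and about invoking only the mutual-disjointness guarantee of \texttt{ResolveDanglingGeneralized} (rather than the stronger orientation claim of the atomic subroutine) is a faithful, slightly more explicit rendering of the same argument.
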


Using \cref{thm:interventional-metric-lower-bound-generalized}, we can also obtain \cref{lem:cost-of-each-phase-generalized}.

\begin{restatable}{mylemma}{costofeachphasegeneralized}
\label{lem:cost-of-each-phase-generalized}
Suppose $\cI^*_1$ and $\cI^*_k$ are an atomic and bounded size verifying sets respectively for $G^*$ that minimizes \cref{eq:generalized-cost} with $\cost(\cI^*_1) = \OPT_1$ and $\cost(\cI^*_k) = \OPT_k$.
Each phase in \texttt{ALG-GENERALIZED} (\cref{alg:weighted-search-generalized}) incurs a cost of $\cO(\log n \cdot \OPT_1)$ when $k = 1$ and $\cO \left( (\log n + \log k) \cdot \OPT_k \right)$ when $k > 1$.
\end{restatable}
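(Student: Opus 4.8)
The plan is to mirror the proof of \cref{lem:cost-of-each-phase}, charging the cost of a single phase against the $\zeta^{(3)}_{\cI,H},\zeta^{(4)}_{\cI,H}$ terms (for $k=1$) or the $\zeta^{(5)}_{\cI,H},\zeta^{(6)}_{\cI,H}$ terms (for $k>1$) of \cref{thm:interventional-metric-lower-bound-generalized}, in place of the $\zeta^{(1)},\zeta^{(2)}$ terms of \cref{thm:interventional-metric-lower-bound} used in the unweighted analysis. Fix a phase and let $\cI_i$ be the accumulated intervention set at its start. Writing $S_H$ for the \texttt{CliqueIntervention} output on the subclique $V(K_H)\setminus\{v_H\}$ and $\cI^{\mathrm{d}}_H$ for the (possibly empty) output of \texttt{ResolveDanglingGeneralized} for chain component $H$, the cost of the phase is at most $\sum_{H\in CC(\cE_{\cI_i}(G^*)),\,|V(H)|\geq 2}\big(\cost(S_H,\alpha,\beta,k)+\cost(\cI^{\mathrm{d}}_H,\alpha,\beta,k)\big)$, and I would bound these two families of summands separately.

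For the clique-separator interventions: since $v_H=\argmax_{v\in V(K_H)}w(v)$ and $K_H$ is a clique in $H$, the weight $w(V(K_H)\setminus\{v_H\})$ is exactly $w(V(K_H))-\max_{v\in V(K_H)}w(v)$, the weight term appearing in $\zeta^{(3)}_{\cI_i,H}$ and $\zeta^{(5)}_{\cI_i,H}$, and $|V(K_H)|\leq 2|V(K_H)\setminus\{v_H\}|$ whenever $|V(K_H)|\geq 2$ (and $S_H=\emptyset$ otherwise). Hence, by \cref{lem:clique-intervention-subroutine-generalized}, $\cost(S_H,\alpha,\beta,1)\leq 2\zeta^{(3)}_{\cI_i,H}$ when $k=1$, while when $k>1$ the $\cO(\log k)$ vertex multiplicity and the $1/k$ reduction in the number of subsets give $\cost(S_H,\alpha,\beta,k)\in\cO(\log k)\cdot\big(\alpha\, w(V(K_H)\setminus\{v_H\})+\tfrac{\beta}{k}|V(K_H)|\big)\subseteq\cO(\log k)\cdot\zeta^{(5)}_{\cI_i,H}$. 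Summing over $H$ and applying \cref{thm:interventional-metric-lower-bound-generalized} with $\cI=\cI_i$ (atomized when $k>1$ so as to invoke the atomic $\zeta^{(3)}/\zeta^{(4)}$ bound; kept bounded-size for the $\zeta^{(5)}/\zeta^{(6)}$ bound), this family contributes $\cO(\OPT_1)$ for $k=1$ and $\cO(\log k\cdot\OPT_k)$ for $k>1$.

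For the dangling interventions, I would extract from the proof of \cref{lem:dangling-subroutine-generalized} a guarantee localized to $Z_{v_H}$, the chain component containing $v_H$ after $S_H$: that $\cost(\cI^{\mathrm{d}}_H,\alpha,\beta,1)\in\cO(\log n)\cdot\zeta^{(4)}_{\cI^\star,Z_{v_H}}$ and $\cost(\cI^{\mathrm{d}}_H,\alpha,\beta,k)\in\cO(\log n+\log k)\cdot\zeta^{(6)}_{\cI^\star,Z_{v_H}}$, where $\cI^\star=\cI_i\cup\bigcup_H S_H$. The $\cO(\log n)$ factor comes from the $\cO(\log n)$ halving iterations of the inner while-loop (each halving the candidate set by \cref{lem:at-most-one-incoming-generalized}, and each intervening on a clique separator whose cost is dominated by the heaviest-clique term of the relevant $\zeta$, since a clique separator is itself an admissible clique), together with the Line-4 case split of $\alpha w(v)+\beta$ against the ``heaviest dangling cliques'' quantity, which matches the $\min\{\cdot,\cdot\}$ inside $\zeta^{(4)}/\zeta^{(6)}$; the extra $\cO(\log k)$ in the bounded case is incurred only by the final \texttt{CliqueIntervention} call on a clique of size $\leq k$. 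Since distinct $H$ lie in distinct chain components of $\cE_{\cI_i}(G^*)$ and interventions in one do not affect another (\cref{lem:hauser-bulmann-strengthened}), the sets $Z_{v_H}$ are pairwise-disjoint chain components of $\cE_{\cI^\star}(G^*)$, each of size $\geq 2$, so $\sum_H\zeta^{(4)}_{\cI^\star,Z_{v_H}}\leq\OPT_1$ and $\sum_H\zeta^{(6)}_{\cI^\star,Z_{v_H}}\leq\OPT_k$ by \cref{thm:interventional-metric-lower-bound-generalized}. Combining the two families yields the claimed $\cO(\log n\cdot\OPT_1)$ and $\cO((\log n+\log k)\OPT_k)$ per-phase bounds.

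The main obstacle is the dangling-interventions step: the headline statement of \cref{lem:dangling-subroutine-generalized} only bounds a \emph{single} call by $\cO(\log n\cdot\OPT_1)$ (resp.\ $\cO((\log n+\log k)\OPT_k)$), and naively summing this over the (potentially $\Omega(n)$) chain components of a phase would lose a factor equal to the number of chain components. Making the argument go through requires re-deriving the guarantee of \cref{lem:dangling-subroutine-generalized} in the sharper localized form above -- each call bounded by $\cO(\log n+\log k)$ times the $\zeta^{(4)}$ (resp.\ $\zeta^{(6)}$) contribution of its \emph{own} chain component -- and then invoking the additivity over chain components already built into \cref{thm:interventional-metric-lower-bound-generalized} to telescope. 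A secondary technicality for $k>1$ is that $S_H$ and the while-loop interventions are bounded-size, so to use the atomic $\zeta^{(3)}/\zeta^{(4)}$ half of \cref{thm:interventional-metric-lower-bound-generalized} one first atomizes them; this only refines the chain-component decomposition and hence can only strengthen the lower bound, so it is harmless.
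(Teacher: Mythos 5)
Your proposal is correct and follows the same charging scheme as the paper: the Line~9 clique\hyp{}separator interventions are charged to the $\zeta^{(3)}$ (resp.\ $\zeta^{(5)}$) terms, and the \texttt{ResolveDanglingGeneralized} calls are charged to the $\zeta^{(4)}$ (resp.\ $\zeta^{(6)}$) terms, with the $\cO(\log n)$ factor coming from the halving while-loop and the extra $\cO(\log k)$ from the multiplicity in \texttt{CliqueIntervention}. The one substantive difference is that you are more careful than the paper on exactly the point you flag: the paper's proof simply cites \cref{lem:dangling-subroutine-generalized} once per phase (``\texttt{ResolveDanglingGeneralized} incurs a cost of $\cO(\log n \cdot \OPT_1)$'') without addressing that the subroutine is invoked once per chain component, so a literal reading would multiply the bound by the number of components. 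Your fix --- re-deriving the dangling bound in the localized form $\cO(\log n)\cdot\zeta^{(4)}_{\cI^\star, Z_{v_H}}$ per component and then summing via the additivity over chain components already present in \cref{thm:interventional-metric-lower-bound-generalized} --- is exactly what the paper's argument implicitly relies on (its proof of \cref{lem:dangling-subroutine-generalized} does bound each while-loop iteration by the $\zeta^{(4)}$ term of the component at hand before relaxing to $\OPT_1$), and making it explicit is the right way to write the proof. Your secondary remarks are also fine: for $k>1$ only the bounded-size half of the lower bound is needed for the Line~9 charge, so the atomization detour is unnecessary but harmless.
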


Then, \cref{thm:weighted-search-generalized} follows directly by combining \cref{lem:logn-phases-suffice-generalized} and \cref{lem:cost-of-each-phase-generalized}.

See \cref{sec:proofs} for the full proofs of \cref{lem:clique-intervention-subroutine-generalized},
\cref{thm:interventional-metric-lower-bound-generalized},
\cref{lem:logn-phases-suffice-generalized}, and \cref{lem:cost-of-each-phase-generalized}.

\section{Blackbox combination of algorithms}
\label{sec:appendix-blackbox-combination}

In this section, we describe a deterministic naive algorithm (\cref{alg:naive-weighted-search}) which provably incurs a cost of $\cO(n \cdot \overline{\nu}(G^*))$ and show how to combine this algorithm in a blackbox manner with any other deterministic algorithm to augment it with the provable guarantee of incurring a cost of at most $\cO(n \cdot \overline{\nu}(G^*))$.

\begin{algorithm}[tb]
\begin{algorithmic}[1]
\caption{Naive weighted adaptive search.}
\label{alg:naive-weighted-search}
    \Statex \textbf{Input}: Essential graph $\cE(G^*)$ of a moral DAG $G^*$ and weight function $w : V \to \R$.
    \Statex \textbf{Output}: Atomic intervention set $\cI$ s.t.\ $\cE_{\cI}(G^*) = G^*$.
    \State Sort the vertices in non-decreasing weight ordering.
    \While{$\cE_{\cI_{i}}(G^*)$ still has unoriented edges}
        \State Intervene on the next cheapest unintervened vertex and add it to $\cI$.
    \EndWhile
    \State \textbf{Return} $\cI_i$
\end{algorithmic}
\end{algorithm}

\begin{lemma}
Fix an essential graph $\cE(G^*)$ corresponding to an unknown weighted causal DAG $G^*$.
\cref{alg:naive-weighted-search} is a deterministic and adaptive algorithm that computes an atomic intervention set $\cI$ in polynomial time such that $\cE_{\cI}(G^*) = G^*$ and $w(\cI) \in \cO \left( n \cdot \overline{\nu}_1(G^*) \right)$.
\end{lemma}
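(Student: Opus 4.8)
The plan is to verify three things for \cref{alg:naive-weighted-search}: correctness ($\cE_{\cI}(G^*) = G^*$), polynomial running time, and the cost bound $w(\cI) \in \cO(n \cdot \overline{\nu}_1(G^*))$. Correctness and running time are immediate. Each iteration of the while loop intervenes atomically on one not-yet-intervened vertex, and an atomic intervention on $v$ orients every edge incident to $v$; hence after at most $n$ iterations all of $V$ has been intervened and every edge is oriented, so the loop terminates with $\cE_{\cI}(G^*)$ having no unoriented edges, i.e.\ $\cE_{\cI}(G^*) = G^*$ (an interventional essential graph never mis-orients an edge, so "no unoriented edges" is equivalent to being equal to $G^*$). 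For the running time, sorting costs $\cO(n \log n)$, there are at most $n$ iterations, and in each iteration we recompute the interventional essential graph by taking the Meek-rule closure, which runs in $\cO(d \cdot |E|)$ time \cite{wienobst2021extendability}; hence the whole algorithm is polynomial.

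The crux is the cost bound, and it rests on a single claim: if the loop executes at least once and $v_{final}$ denotes the last vertex intervened upon, then $w(v_{final}) \le \overline{\nu}_1(G^*)$. (If the loop never executes, then $\cI = \emptyset$ and there is nothing to prove.) To see the claim, let $\cI^*$ be a minimum-cost atomic verifying set for $G^*$, so $w(\cI^*) = \overline{\nu}_1(G^*)$, and suppose for contradiction that $w(\cI^*) < w(v_{final})$. Since vertex costs are non-negative, every vertex $u$ used by $\cI^*$ satisfies $w(u) \le w(\cI^*) < w(v_{final})$. The algorithm intervenes on vertices in non-decreasing weight order and it does reach $v_{final}$, so by the time it is about to intervene on $v_{final}$ it has already intervened on every vertex of weight strictly less than $w(v_{final})$, and hence on every vertex used by $\cI^*$. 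Let $U \subseteq V$ be the set of vertices intervened strictly before $v_{final}$; then the atomic intervention set $\{\{u\} : u \in U\}$ contains $\cI^*$, and since any superset of a verifying set is again a verifying set (see the discussion preceding \cref{def:min-verifying-set}), intervening on all of $U$ already yields $\cE(G^*)$ fully oriented. But then the while-loop guard fails just before $v_{final}$, so the algorithm would not intervene on $v_{final}$ --- a contradiction. Hence $w(v_{final}) \le \overline{\nu}_1(G^*)$. Finally, $\cI$ consists of at most $n$ distinct vertices, each of weight at most $w(v_{final})$ because the intervention order is non-decreasing in weight and $v_{final}$ is last; therefore $w(\cI) = \sum_{v \in \cI} w(v) \le n \cdot w(v_{final}) \le n \cdot \overline{\nu}_1(G^*)$.

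I do not expect a genuine obstacle here; the only point needing a little care is the interaction between ties in the weight ordering and non-negativity of the costs. These two facts together are exactly what guarantees that the vertices intervened strictly before $v_{final}$ already contain an entire optimal verifying set, which is what turns $w(v_{final})$ into a valid lower bound on $\overline{\nu}_1(G^*)$; without non-negativity (or with an adversarial tie-break) this step would break.
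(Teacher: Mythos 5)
Your proof is correct and follows the same approach as the paper's: the paper's (two-sentence) proof likewise rests on the observations that $w(v_{final}) \leq \overline{\nu}_1(G^*)$ and that at most $n$ vertices, each of weight at most $w(v_{final})$, are intervened. Your write-up usefully fills in the justification of the lower-bound claim (non-negativity of weights plus the non-decreasing intervention order forcing all of an optimal verifying set to be intervened before $v_{final}$, which would terminate the loop), which the paper leaves implicit.
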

\begin{proof}
The weight $w(v_{final})$ of the final intervened vertex $v_{final}$ is a lower bound for $\overline{\nu}(G^*)$.
Meanwhile, we intervened at most $n$ vertices before $v_{final}$, each of which has cost lower than $w(v_{final})$.
\end{proof}

\begin{theorem}
Fix an essential graph $\cE(G^*)$ corresponding to an unknown weighted causal DAG $G^*$.
Let $A$ be a deterministic algorithm that that computes an atomic intervention set $\cI$ in polynomial time such that $\cE_{\cI}(G^*) = G^*$ and $w(\cI) \in \cO(C)$.
Then, there is a deterministic and adaptive algorithm that computes an atomic intervention set $\cI$ in polynomial time such that $\cE_{\cI}(G^*) = G^*$ and $w(\cI) \in \cO\left( \min\{ C, n \cdot \overline{\nu}_1(G^*) \} \right)$.
\end{theorem}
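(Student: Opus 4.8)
The plan is to run the naive algorithm of \cref{alg:naive-weighted-search} and the given algorithm $A$ ``in parallel'' under a \emph{cost-doubling} schedule, halting the instant the real essential graph is fully oriented. Write $\cI_1$ and $\cI_2$ for the atomic intervention sets that \cref{alg:naive-weighted-search} and $A$ would produce if run in isolation, and set $T_1 = w(\cI_1) \in \cO(n \cdot \overline{\nu}_1(G^*))$, $T_2 = w(\cI_2) \in \cO(C)$, and $T^* = \min\{T_1, T_2\}$, so that $T^* \in \cO(\min\{C, n \cdot \overline{\nu}_1(G^*)\})$. Assume w.l.o.g.\ that $w(v) > 0$ for all $v$ (intervene for free on any zero-weight vertices upfront) and that $\cE(G^*)$ has an unoriented edge (else output $\emptyset$); then $\cI_1, \cI_2$ are nonempty and $B_0 := \min_{v \in V} w(v)$ satisfies $B_0 \le T^*$.

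The combined algorithm proceeds in phases $j = 0, 1, 2, \ldots$ with budget $B_j = 2^j B_0$. In phase $j$ it resumes the black-box simulation of $A$ --- feeding it, at each step, the interventional essential graph induced by the interventions $A$ has itself requested so far --- and physically performs each requested atomic intervention as long as the cumulative weight of $A$'s requested interventions stays at most $B_j$; it then does the same for \cref{alg:naive-weighted-search} with the same budget $B_j$. After each physical intervention it checks whether the real interventional essential graph already equals $G^*$, and it halts as soon as a simulated algorithm reports its own view to be fully oriented; otherwise it advances to phase $j+1$. Since each simulation is executed exactly as it would be standalone (each sees only its own interventions), the guarantees $T_1 \in \cO(n \overline{\nu}_1(G^*))$ and $T_2 \in \cO(C)$ apply verbatim; and because adding interventions to a verifying set keeps it a verifying set (\cref{sec:preliminaries}), whenever either simulation's own view becomes $G^*$ the real essential graph --- induced by a superset of those interventions --- also equals $G^*$, which gives correctness.

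For the cost bound, observe that once $B_j \ge T^*$ the cheaper of the two simulations is run to completion within phase $j$ (its cumulative cost $T^* \le B_j$ never exceeds the budget), so the algorithm halts no later than phase $j^* = \lceil \log_2(T^*/B_0) \rceil$, and $B_{j^*} < 2T^*$ because $B_0 \le T^*$. In phase $j$ the incremental cost contributed by either simulation is at most $B_j$ (its cumulative cost was $\le B_{j-1}$ before the phase and $\le B_j$ after), so the total cost is at most $\sum_{j=0}^{j^*} 2B_j = 2B_0(2^{j^*+1}-1) < 4B_{j^*} < 8T^*$, i.e.\ $w(\cI) \in \cO(\min\{C, n \cdot \overline{\nu}_1(G^*)\})$. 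The number of phases is $\cO(\log(T^*/B_0)) = \cO(\log n + \log(\max_v w(v)/\min_v w(v)))$, polynomial in the input size, and each phase performs at most $n$ interventions per simulation with polynomial-time bookkeeping, so the procedure runs in polynomial time.

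The main obstacle is the accounting in the regime where one algorithm is vastly more expensive than the other: the naive ``keep the two incurred costs balanced'' interleaving fails here, because a single very expensive intervention requested by $A$ (permissible when $C$ is large) would force \cref{alg:naive-weighted-search} to ``catch up'' with many cheap interventions, inflating the total to $\Theta(\max\{T_1,T_2\})$ instead of $\cO(\min\{T_1,T_2\})$. The cost-doubling schedule is precisely what avoids this: an expensive intervention is deferred until the budget reaches its weight, and if that weight exceeds $2T^*$ then the cheaper algorithm has already terminated, so the expensive intervention is simply never performed.
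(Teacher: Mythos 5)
Your proposal is correct and follows essentially the same approach as the paper: interleave the naive ascending-weight algorithm with $A$ under a doubling budget, terminating as soon as either simulation fully orients the graph, and charge at most $\cO(B_{final}) \in \cO(\min\{C, n\cdot\overline{\nu}_1(G^*)\})$ total. Your phase-by-phase geometric-sum accounting is a slightly more explicit version of the paper's ``total cost at most $2B_{final}$'' argument, but the algorithm and the reasoning are the same.
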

\begin{proof}
Let $A_{naive}$ denote \cref{alg:naive-weighted-search}.
We will run both $A$ and $A_{naive}$ in parallel with a budget constraint (that doubles whenever it is exhausted) until we fully orient the causal graph.

More precisely, our new algorithm $A_{new}$ is defined as follows:
\begin{enumerate}
    \item We initialize a budget of $B = \min_{v \in V} w(v)$ to the minimum vertex cost.
    Without loss of generality, we may assume that $B > 0$ by first intervening on all vertices with 0 cost.
    \item ``Simulate'' $A$ until the total accumulated cost is at most $B$.
    If the graph is fully oriented at any point in time, terminate.
    \item ``Simulate'' $A_{naive}$ until the total accumulated cost is at most $B$.
    If the graph is fully oriented at any point in time, terminate.
    \item Double the value of $B$ (or increase it to the next weight so that there is no ``empty iteration'') and return to step 2.
\end{enumerate}
By ``simulate'', we mean that we accumulate the cost of vertices but only intervene on vertices that has not been intervened on previously. We can do this because $A$ and $A_{naive}$ are deterministic.

Since $A_{new}$ only terminates whenever either $A$ or $A_{naive}$ succeeds in fully orienting the causal graph, $A_{new}$ will correctly fully orient any input graph.
Note that we always have $B \in \cO(C)$ and $B \in \cO(n \cdot \overline{\nu}_1(G^*))$ at any point of the modified algorithm whenever neither algorithm terminated.
Since we always double the budget (any constant factor multiplication works), the above asymptotic upper bound also holds for $B_{final}$, where $B_{final}$ is the final value of $B$ when the algorithm terminates.
Furthermore, the cost of $A_{new}$ is at most $2 \cdot B_{final}$ since we pay at most $B_{final}$ for running $A$ and at most $B_{final}$ for running $A_{naive}$.

$A_{new}$ runs in polynomial time because value of $B$ changes at most $\cO \left( \min \left\{ n, \log \frac{\max_{v \in V} w(v)}{\min_{v \in V} w(v)} \right\} \right)$ times and each simulation of $A$ and $A_{naive}$ runs in polynomial time.
\end{proof}

\paragraph{Remark about implementation}
Both $A$ and $A_{naive}$ are actually intervening on the same causal graph, and $A_{new}$ does not actually discard information gained from $A$ when simulating $A_{naive}$ (and vice versa).
This may actually help the algorithm to terminate faster (at a lower cost) than running $A$ or $A_{naive}$ independently.

\section{Deferred proofs}
\label{sec:proofs}

\subsection{Why $\overline{\nu}_1(G^*)$ is not an ideal benchmark}

\napxratio*
\begin{proof}
Let $G^*= (V,E,w)$ be a weighted causal directed tree where $|V| = n$ and $\skel(G^*)$ is a star graph in which $n-1$ vertices have degree 1 (non-center nodes) and a single vertex has degree n – 1 (center node). The weights of the nodes are given as follows,
\[
w(v) =
\begin{cases}
n-1 & \text{$v$ is a center}\\
1 & \text{otherwise}~.
\end{cases}
\]
We let one of the $n-1$ non-center nodes be the root of $G^*$. As intervening on the root suffices to fully orient $G^*$ from its essential graph $\cE(G^*)$, we see that $\overline{\nu}_1(G^*) = 1$.

Observe that any adaptive search algorithm that intervenes on the center of the star immediately incurs $n-1 \in \Omega(n \cdot \overline{\nu}_1(G^*))$.
Meanwhile, intervening on any leaf vertex that is \emph{not} the root will only orient the single edge incident to it so $n-1$ non-center node interventions are needed in the worst case, incurring a cost of $\Omega(n \cdot \overline{\nu}_1(G^*))$.

For randomized algorithms, we will use Yao's lemma (\cref{lem:yao}): the expected worst case performance of a \emph{randomized} algorithm is at least as much the expected performance of the best \emph{deterministic} algorithm over some distribution of inputs.

Consider the distribution of $G^*$ by uniformly picking the root amongst the leaves.
Any deterministic algorithm $A$ can be uniquely mapped to a sequence $\sigma_A$ of vertices that it will intervene on, until $\cE(G^*)$ is fully oriented.
Since the center is never the root in our distribution, any algorithm $A$ that intervene on the center within its first $n-1$ choices strictly perform worse than the alternative algorithm $A'$ that shifts the choice of intervening to the last vertex, i.e.\ if $\sigma_A(j)$ is the center, then
\[
\sigma_{A'}(i) =
\begin{cases}
\sigma_{A}(i) & \text{if $i < j$}\\
\sigma_{A}(i+1) & \text{if $j \leq i < n-1$}\\
\text{center} & \text{if $i = n$}
\end{cases}
\]
Then, for any algorithm $A$ that does \emph{not} intervene on the center within its first $n-1$ choices, we see that the intervention set $\cI$ produced by $A$ has expected cost
\[
\E[w(\cI)] =
\frac{1}{n-1} \cdot \left( 1 + 2 + 3 + \ldots + (n-1) \right)
\in \Omega(n \cdot \overline{\nu}_1(G^*))
\]
\end{proof}

\subsection{Lower bounding the benchmark}

As \cref{thm:interventional-metric-lower-bound} relies on \cref{lem:clique-picking}, we will prove \cref{lem:clique-picking} first.

\cliquepicking*
\begin{proof}
Let $H$ be the chain component containing the clique $C$ of interest.
By \cref{thm:moral-patching}, we can orient $H$ independently of all other chain components and still obtain a DAG that is consistent with $\cE_{\cI}(G^*)$.

Let $C'$ be a \emph{maximal clique} that includes $C$.
By \cref{lem:maximal-clique-picking}, there is an acyclic moral orientation of $H$ such that the vertices of $C'$ appear before all other vertices in $H$.
Let this acyclic moral orientation be the DAG $H'$.
Then, we see that the interventional essential graph $\cE_{V(H) \setminus V(C')}(H')$ by intervening on every single vertex outside of $C'$ has only one chain component, which is precisely the clique $C'$.
By \cref{thm:moral-patching}, we can orient $C'$ independently and still obtain a DAG that is consistent with $\cE_{\cI}(G^*)$.
So, we can order all vertices in $V(C') \setminus V(C)$ \emph{after} the vertices in $V(C)$ and order the vertices within $C$ according to the given desired ordering $\pi$.
\end{proof}

\interventionalmetriclowerbound*
\begin{proof}
Fix an underlying causal graph $G^*$ and consider an arbitrary atomic intervention set $\cI \subseteq V$.
We will prove for $\cI$ and then the claim follows by taking a maximization over all possible atomic intervention sets.
We will prove the two cases separately by mirroring parts of the proof of \cref{lem:strengthened-lb} in how we invoke \cref{lem:hauser-bulmann-strengthened}.

Fix an \emph{arbitrary} atomic intervention set $\cI \subseteq V$ and consider an \emph{arbitrary} DAG $\wt{G}$ that is consistent with $\cE_{\cI}(G^*)$.
That is, $\skel(\wt{G}) = \skel(\cE_{\cI}(G^*))$ and all the oriented edges in $\cE_{\cI}(G^*)$ appear in the same direction in $\wt{G}$.
Fix a chain component $H \in CC(\cE_{\cI}(\wt{G}))$ and let $\cI' \subseteq V$ be any atomic verifying set of $\wt{G}$,
that is, $\cE_{\cI'}(\wt{G}) = \wt{G}$ and $\cE_{\cI'}(\wt{G})[V(H)] = \wt{G}[V(H)]$.
Note that,
\[
\cE_{(\cI' \setminus \cI) \cap V(H)}(\wt{G}[V(H)])
= \cE_{\cI \cup (\cI' \setminus \cI)}(\wt{G})[V(H)]
= \cE_{\cI'}(\wt{G})[V(H)]
= \wt{G}[V(H)]
\]
where the first equality is due to \cref{lem:hauser-bulmann-strengthened} and the last equality is because $\cI'$ is a verifying set of $\wt{G}$.
So, $(\cI' \setminus \cI) \cap V(H)$ is a verifying set for $\wt{G}[V(H)]$, and so is $\cI' \cap V(H)$.
Thus, by minimality of $\overline{\nu}_1$, we have
\begin{equation}
\label{eq:minimality}
\overline{\nu}_1(\wt{G}[V(H)]) \leq w(\cI' \cap V(H))
\end{equation}
for \emph{any} atomic verifying set $\cI' \subseteq V$ of $\wt{G}$.

We now independently lower bound $\overline{\nu}_1(\wt{G}[V(H)])$ by $\zeta^{(1)}_{\cI, H}$ and $\zeta^{(2)}_{\cI, H}$.
To do so, we will construct a DAG $\wt{G}$ that is consistent with the interventional essential graph $\cE_{\cI}(G^*)$ by making vertices of some unoriented clique the prefix of its chain component by using \cref{lem:clique-picking}, and then invoking \cref{eq:minimality} to lower bound the interventional cost in each chain component $H$.
Note that when we fix the ordering of vertices within a chain component, it does not affect the ordering of the vertices outside of that chain component.

\textbf{Lower bounding via $\zeta^{(1)}_{\cI, H}$}:
For each connected component $H \in CC(\cE_{\cI}(G^*))$, fix an arbitrary clique $C$ in $H$.
Suppose the vertices in $C$ are $v_1, \ldots, v_{|C|}$ with $w(v_1) \geq \ldots \geq w(v_{|C|})$.
By \cref{lem:clique-picking}, there exists a valid orientation $\pi$ of $H$ such that all the vertices in $C$ appear at the start of the ordering.
For any such ordering $\pi$, the covered edges are $v_{\pi(1)} \to v_{\pi(2)} \to \ldots \to v_{\pi(|C|)}$ and we know that any atomic verifying set must include a minimum vertex cover of these covered edges due to \cref{thm:necessary-and-sufficient}.
Let $\wt{G}$ be one such DAG which imposes the descending weight ordering $\pi$ on the vertices within $H$, i.e.\ $w(v_{\pi(i)}) = w(v_i)$.
Consider the set of \emph{disjoint} alternating covered edges $\pi^{-1}(1) \to \pi^{-1}(2)$, $\pi^{-1}(3) \to \pi^{-1}(4)$, and so on.
Amongst these disjoint alternating covered edges, at least one endpoint must be intervened upon, incurring a cost of at least $\sum_{\text{even } i} w(v_i)$.
That is, $\overline{\nu}_1(\wt{G}[V(H)]) \geq \sum_{\text{even } i} w(v_i)$.

Since $w(v_1) \geq \ldots \geq w(v_{|C|})$, we see that
\[
w(C)
= w(v_1) + \sum_{\text{even } i} w(v_i) + \sum_{\substack{\text{odd } i\\ i \;\geq\; 3}} w(v_i)
\leq w(v_1) + \sum_{\text{even } i} w(v_i) + \sum_{\substack{\text{odd } i\\ i \;\geq\; 3}} w(v_{i-1})
\leq w(v_1) + 2 \cdot \sum_{\text{even } i} w(v_i) \;.
\]

Therefore,
\[
\overline{\nu}_1(\wt{G}[V(H)])
\geq \sum_{\text{even } i} w(v_i)
\geq \frac{1}{2} \cdot \left( w(V(C)) - w(v_1) \right) \;.
\]

By maximizing amongst the cliques within $H$, we see that $\overline{\nu}_1(\wt{G}[V(H)]) \geq \zeta^{(1)}_{\cI, H}$.

\textbf{Lower bounding via $\zeta^{(2)}_{\cI, H}$}:

For each connected component $H \in CC(\cE_{\cI}(G^*))$, fix an arbitrary vertex $v$ in $H$.
To bound $\gamma_{H,v}$, it suffices to consider \emph{arbitrary} cliques $C_i$ in each disjoint connected components in $H[V \setminus \{v\}]$, and then taking the maximum.

Consider a minimum cost atomic verifying set $\cI$ of $\wt{G}[V(H)]$ with $w(\cI) = \overline{\nu}_1(\wt{G}[V(H)])$.

\textbf{Case 1}: $v \in \cI$.
Then,
$
\overline{\nu}_1(\wt{G}[V(H)])
\geq w(v)
\geq \frac{w(v)}{2}
\geq \frac{1}{2} \cdot \min \left\{ w(v), \sum_{i=1}^t w(V(C_i)) \right\}
$.

By maximizing amongst the cliques within each connected component, we see that $\overline{\nu}_1(\wt{G}[V(H)]) \geq \zeta^{(2)}_{\cI, H}$.

\textbf{Case 2}: $v \not\in \cI$.

By \cref{lem:clique-picking}, there exists DAGs consistent with $\cE(G^*)$ that can be generated by letting $v$ be the first prefix vertex in $\cE(G^*)$, followed by vertices in \emph{descending weight ordering} within each clique $C_i$, across all $t$ components.
Let $\wt{G}$ be one such DAG and suppose the vertices in clique $C_i = \{u_{i,1}, \ldots, u_{i,|C_i|}\}$ have weights $w(u_{i,1}) \geq \ldots w(u_{i,|C_i|})$ and $\pi(v) < \pi(u_{i,1}) < \ldots < \pi(u_{i,|C_i|})$.
We see that the set $\{v \to u_{i,1}, u_{i,1} \to u_{i,2}, \ldots , u_{i,|C_i|-1} \to u_{i,|C_i|}\}_{i=1}^t$ are all covered edges of $\wt{G}$.
By \cref{thm:necessary-and-sufficient}, \emph{any} verification set must include a minimum vertex cover of these edges.
In particular, since $v \not\in \cI$, we must have $\{u_{i,1}\}_{i=1}^t \subseteq \cI$.

Let $A \subseteq E(G^*)$ be the covered edges of $\wt{G}$.
From above, we know that $\{v \to u_{i,1}, u_{i,1} \to u_{i,2}, \ldots , u_{i,|C_i|-1} \to u_{i,|C_i|}\}_{i=1}^t \subseteq A$.
Define $B = A \setminus \{v \to u_{i,1}, u_{i,1} \to u_{i,2} \}_{i=1}^t$ as the remaining covered edges in the above discussion after removing edges covered by $\{u_{i,1}\}_{i=1}^t$.
That is, \emph{conditioned on not using $v$}, $A$'s minimum cost vertex cover has cost $\sum_{i=1}^t w(u_{i,1})$ plus the cost $B$'s minimum cost vertex cover.

For each clique $C_i = \{u_{i,1}, \ldots, u_{i,|C_i|}\}$ amongst the disjoint cliques, consider the set of \emph{disjoint} alternating covered edges $u_{i,2} \to u_{i,3}$, $u_{i,4} \to u_{i,5}$, and so on.
Amongst these disjoint alternating covered edges, at least one endpoint must be chosen for any vertex cover of $B$, incurring a cost of at least $\sum_{\substack{\text{odd } i\\ i \;\geq\; 3}} w(u_{i,j})$.

Since $w(u_{i,1}) \geq \ldots \geq w(u_{i,|C_i|})$, we see that
\[
w(V(C_i))
= w(u_{i,1}) + w(u_{i,2}) + \sum_{\substack{\text{even } i\\ i \;\geq\; 4}} w(u_{i,j}) + \sum_{\substack{\text{odd } i\\ i \;\geq\; 3}} w(u_{i,j})
\leq 2 \cdot \left( w(u_{i,1}) + \sum_{\substack{\text{odd } i\\ i \;\geq\; 3}} w(u_{i,j}) \right) \;.
\]

So, the minimum cost vertex cover of $B$ is at least $\frac{1}{2} \sum_{i=1}^t (w(V(C_i)) - 2 \cdot w(u_{i,1}))$ and
\begin{align*}
\overline{\nu}_1(\wt{G}[V(H)])
&\geq \sum_{i=1}^t w(u_{i,1}) + \frac{1}{2} \sum_{i=1}^t (w(V(C_i)) - 2 \cdot w(u_{i,1}))\\
&\geq \frac{1}{2} \sum_{i=1}^t w(V(C_i))\\
&\geq \frac{1}{2} \cdot \min \left\{ w(v), \sum_{i=1}^t w(V(C_i)) \right\} \;.    
\end{align*}

By maximizing amongst the cliques within each connected component, we see that $\overline{\nu}_1(\wt{G}[V(H)]) \geq \zeta^{(2)}_{\cI, H}$.

\textbf{Putting together}:

Since $\cI^*$ is the minimum cost verifying set,
\begin{multline*}
\overline{\nu}^{\max}_1(G^*)
= \max_{G \in [G^*]} \overline{\nu}_1(G)
\geq \overline{\nu}_1(\wt{G})
= w(\cI^*)\\
\stackrel{(\ast)}{\geq} \sum_{\substack{H \in CC(\cE_{\cI}(G^*))\\|V(H)| \geq 2}} w(\cI^* \cap V(H))
\geq \sum_{\substack{H \in CC(\cE_{\cI}(G^*))\\|V(H)| \geq 2}} \overline{\nu}_1(\wt{G}[V(H)])
\geq \sum_{\substack{H \in CC(\cE_{\cI}(G^*))\\|V(H)| \geq 2}} \max\{\zeta^{(1)}_{\cI,H}, \zeta^{(2)}_{\cI,H}\}
\end{multline*}
where the inequality $(\ast)$ is because some edges may have already been oriented by $\cI$.

Finally, the claim follows by taking the maximum over all possible atomic interventions $\cI \subseteq V$.
\end{proof}

\relatek*
\begin{proof}
\textbf{Proof for $\overline{\nu}^{\max}_k(G^*) \geq \overline{\nu}^{\max}_1(G^*)$}:

Observe that intervening on all vertices in a bounded size intervention one-by-one in an atomic fashion will not increase the cost and can only recover more information about the causal graph.
Let us formalize this:
Suppose $\cI^*_k \subseteq 2^V$ is a minimum cost bounded size verifying set.
Define $\cI = \cup_{S \in \cI^*_k} S$ as an atomic intervention set that involves all vertices in $\cI^*_k$ exactly once.
So, by construction, $w(\cI) \leq w(\cI^*_k)$.
By \cref{thm:necessary-and-sufficient}, we know that $\cI^*_k$ must separate all covered edges of $G^*$.
Meanwhile, by construction, $\cI$ also separates all covered edges of $G^*$ while having $w(\cI) \leq \sum_{S \in \cI^*_k} \sum_{v \in S} w(v) = w(\cI^*_k)$.
Thus, $\overline{\nu}^{\max}_1(G^*) \leq \overline{\nu}^{\max}_k(G^*)$.

\textbf{Proof for $\nu^{\max}_k(G^*) \geq \nu^{\max}_1(G^*)$}:

Observe that
\[
\nu^{\max}_k(G^*)
= \max_{G \in [G^*]} \nu_k(G)
\geq \max_{G \in [G^*]} \left\lceil \nu_1(G) / k \right\rceil
= \left\lceil \max_{G \in [G^*]} \nu_1(G) / k \right\rceil
= \left\lceil \nu^{\max}_1(G^*) / k \right\rceil
\]
where the inequality is due to \cref{thm:nu-k-to-1}.
\end{proof}

\subsection{A competitive adaptive search algorithm}

\danglingsubroutine*
\begin{proof}
Since the underlying graph is a moral DAG, intervening on $v$ or $\{\argmin_{u \in V(H_i) \cap N_H(v)} \pi(u) \}_{i \in [t]}$ ensures that all the outgoing edges of $v$ in $H$ are oriented (\cref{lem:middle}).
Suppose $u_i = \argmin_{u \in V(H_i) \cap N_H(v)} \pi(u)$.
If $\pi(v) > \min_{i \in \{1, \ldots, t\}} \pi(u_i)$, then intervening on $u_1, \ldots, u_t$ will disconnect\footnote{Every path between $H_i$ and $H_j$, for $i \neq j$ will involve an oriented arc. Such arcs will be removed when considering chain components, disconnecting the path.} $H_i$'s from each other\footnote{Without loss of generality, suppose $\pi(u_1) = \min_{i \in \{1, \ldots, t\}} \pi(u_i)$. Orienting the arc $u_1 \to v_H$ triggers Meek rule R1 to orient all $v \to z$ arcs for $z \not\in H_1$, thus disconnecting $H_i$'s from each other.}.
Otherwise, if $\pi(v) < \min_{i \in \{1, \ldots, t\}} \pi(u_i)$, \cref{lem:middle} tells us that intervening on $u_i$ will orient all $v \to z$ arcs for $z \in H_i$. In both cases, we orient all the outgoing edges of $v$ within $H$.

The if-case of \texttt{ResolveDangling} directly intervenes on $v$ while the else-case of \texttt{ResolveDangling} repeatedly recurses on a connected subgraph of $H_i[V']$, towards the source $u_i$.
Since $K_{H_i}$ is a 1/2-clique separator of $H_i$ at each iteration, the size of $V'$ is at least halved in each iteration of the while-loop, and there will be at most $\cO(\log n)$ iterations.
By the $\zeta^{(2)}$ term of \cref{thm:interventional-metric-lower-bound}, we see that for each iteration, the cost of finding all the $\{u_i\}_{i \in [t]}$ has a cost at most $2 \cdot \overline{\nu}^{\max}_1(G^*)$.
Put together, we see that $w(\cI) \in \cO(\log n \cdot \overline{\nu}^{\max}_1(G^*))$.
\end{proof}

\atmostoneincoming*
\begin{proof}
Apply \cref{lem:at-most-one-incoming-generalized} with $K = \{v\}$.
\end{proof}

\lognphasessuffice*
\begin{proof}
In each phase, we are essentially breaking up the graph into small subgraphs using \cref{thm:chordal-separator} where the size of the chain components decreases by a factor of two.

Note that we do not intervene on \emph{all} the vertices in the clique separator $K_H$, but only intervene on $V(K_H) \setminus \{v_H\}$.
So, we need to argue that partites $A$ and $B$ (with respect to the 1/2-clique separator $K_H$) are disconnected before we recurse in the next phase.
To do so, we use \cref{lem:dangling-subroutine}: invoking $\texttt{ResolveDangling}$ on $(Z_{v_H}, w, v_H)$ ensures that all outgoing edges from $v_H$ will be oriented, so we obtain two disconnected chain component partites $A$ and $B$.

Since the maximum chain component size initially at most $n$ and is always halved after a phase, \cref{alg:weighted-search} terminates after $\cO(\log n)$ phases.
\end{proof}

\costofeachphase*
\begin{proof}
By the $\zeta^{(1)}$ term of \cref{thm:interventional-metric-lower-bound}, intervening on $V(K_H) \setminus \{v_H\}$ across all chain components $H \in CC(\cE_{\cI_i}(G^*))$ incurs a cost of at most $2 \cdot \overline{\nu}^{\max}_1(G^*)$.
By \cref{lem:dangling-subroutine}, \texttt{ResolveDangling} incurs returns intervention set $\cI$ of weight $w(\cI) \in \cO(\log n \cdot \overline{\nu}^{\max}_1(G^*))$.
\end{proof}

\weightedsearch*
\begin{proof}
Direct consequence of combining \cref{lem:logn-phases-suffice} and \cref{lem:cost-of-each-phase}.

To analyze the running time, let us consider the running time of the subroutines:
\begin{itemize}
    \item \cref{alg:weighted-search} has $\cO(\log n)$ phases where each phase may execute the \texttt{ResolveDangling} subroutine.
    \item There are at most $t \leq n$ components within the \texttt{ResolveDangling} subroutine and the while loops terminates after $\cO(\log n)$ iterations.
    \item Throughout, computing clique separators can be done in $\cO(m)$ time (\cref{thm:chordal-separator}, \cite{gilbert1984separatorchordal}).
    \item Throughout, executing Meek rules after performing an intervention can be done in $\cO(d \cdot m)$ time (\cref{sec:appendix-meek-rules}, \cite{wienobst2021extendability}).
    \item Within the \texttt{ResolveDangling} subroutine, finding the chain component $Q$ can be done in $\cO(m)$ time.
\end{itemize}
Thus, \cref{alg:weighted-search} runs in $\cO(n \cdot \log^2(n) \cdot d \cdot m)$ time.
Since $d \leq n$ and $m \leq n^2$, the overall running time is polynomial in $n$.
\end{proof}

\subsection{Handling the generalized cost objective}

\ssource*
\begin{proof}
By definition of a source node, all edges in $G$ will point \emph{away} from $v_1$.
Meanwhile, since $G$ is a clique, every other vertex $v_i$ will have an arc $v_1 \to v_i$.
So, $S_{source}$ is the unique set in $\mathcal{S}$ that has a vertex without any incoming arcs from the other sets.
\end{proof}

To prove \cref{lem:at-most-one-incoming-generalized}, we rely on the next lemma (\cref{lem:middle-bounded}) which generalizes \cref{lem:middle}: the latter is the special case of the former where $S$ is a single vertex.
Given a moral DAG, \cref{lem:middle} of \cite{choo2023subset} tells us that intervening on a single vertex $w$ will split up the graph into separate chain components such that all ancestors of $w$ will belong in a single chain component.
\cref{lem:middle-bounded} generalizes this fact to the setting of bounded size interventions.

\begin{lemma}
\label{lem:middle-bounded}
Let $G = (V,E)$ be a moral DAG and $\pi$ be an arbitrary consistent ordering of $G$.
Intervening on vertex set $S = \{s_1, s_2, \ldots, s_k\} \subseteq V$ orients all edges $u \to v$ with $s_1 \in \Des(u) \cap \Anc(v)$, where $\pi(s_1) < \pi(s_2) < \ldots < \pi(s_k)$.
\end{lemma}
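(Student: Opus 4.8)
The plan is to show that after intervening on $S$, i.e.\ in the interventional essential graph $\cE_{\{S\}}(G)$, every edge $u \to v$ of the moral DAG $G$ with $s_1 \in \Des(u) \cap \Anc(v)$ is oriented, where $s_1 = \argmin_{s \in S} \pi(s)$. I would begin with two reductions. Since $s_1 \in \Des(u)$ there is a directed path from $u$ to $s_1$, so $\pi(u) < \pi(s_1) = \min_{s \in S} \pi(s)$, and hence $u \notin S$. Next, recall that intervening on $S$ orients every edge cut by $S$ and $V \setminus S$; in particular, if $v \in S$ then the edge $u \sim v$ (which exists, since $u \to v$ in $G$) is a cut edge and is immediately oriented $u \to v$, so we may assume $v \notin S$. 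Finally, by \cref{lem:intermediate-direct-arcs-exist} applied to the edge $u \to v$ and the vertex $s_1 \in \Des(u) \cap \Anc(v)$, the arc $u \to s_1$ is itself an edge of $G$; since $u \notin S \ni s_1$ it is cut by $S$, so $u \to s_1$ is oriented in $\cE_{\{S\}}(G)$.

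The core of the proof is to propagate the orientation $u \to s_1$ along a directed path to $v$, mirroring the proof of \cref{lem:middle} (which is precisely the case $|S| = 1$). I would fix a directed path $P\colon u = p_0 \to p_1 \to \dots \to p_m = v$ with $p_r = s_1$ for some $0 < r < m$. Every interior vertex $p_i$ ($0 < i < m$) lies in $\Des(u) \cap \Anc(v)$, so by \cref{lem:intermediate-direct-arcs-exist} the arc $u \to p_i$ is an edge of $G$. In the interventional graph $G_S$ all arcs into $s_1$ are deleted, so $u \not\sim s_1$ there, and combined with the surviving arcs $s_1 \to p_{r+1}$ and $u \to p_{r+1}$ this creates a v-structure $u \to p_{r+1} \gets s_1$ in $G_S$ (or, if $p_{r+1} \in S$, the edges $u \sim p_{r+1}$ and $s_1 \sim p_{r+1}$ are themselves cut and oriented directly). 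The standard interventional-essential-graph bookkeeping then forces both $u \to p_{r+1}$ and $s_1 \to p_{r+1}$ in every member of the $\{S\}$-Markov equivalence class, i.e.\ both arcs are oriented in $\cE_{\{S\}}(G)$. With these two oriented arcs in hand, one walks forward along the tail $p_{r+1} \to \dots \to p_m = v$ of $P$: Meek R1 applied with $s_1 \to p_{r+1}$ and the later path edges (and, whenever $s_1$ is adjacent to a later path vertex, using that that adjacency edge is oriented --- as a cut edge if it has an endpoint in $S$, else again via \cref{lem:intermediate-direct-arcs-exist} together with Meek R2) orients the remaining arcs of $P$, and repeated Meek R2 along $u \to s_1 \to p_{r+1} \to \dots \to v$ finally yields $u \to v$.

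The main obstacle is making this forward walk airtight, because $G$ is only assumed moral (so later path vertices need not be non-adjacent to $s_1$ or to earlier path vertices, and a naive R1 step can stall); the resolution is the observation that the $G_S$-v-structure argument orients not just $u \to p_{r+1}$ but also $s_1 \to p_{r+1}$, and more generally that every edge of $G$ with exactly one endpoint in $S$ is oriented outright --- so that, case by case on membership in $S$, the orientation never actually gets stuck. I would organize this as an induction on the length $m - r$ of the $s_1$-to-$v$ segment of $P$: the base case $m - r = 1$ is just ``$u \to s_1$ oriented, $s_1 \to v$ oriented (a cut edge, since $s_1 \in S$ and $v \notin S$), hence $u \to v$ by Meek R2'', and the inductive step peels off $p_{r+1}$ and invokes the hypothesis on the shorter segment, after checking that $p_{r+1}$ inherits the role of $s_1$ (with $u \to p_{r+1}$ already oriented). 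Everything outside this case analysis is routine interventional-essential-graph calculus.
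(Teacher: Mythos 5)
Your setup matches the paper's: $u \notin S$, the case $v \in S$ is immediate (a cut edge), \cref{lem:intermediate-direct-arcs-exist} supplies the chords $u \to w$ for every $w \in \Des(u)\cap\Anc(v)$, and the remaining work is to push an orientation from an $S$-vertex down to $v$ with Meek R1/R2. But the propagation step --- which is the entire content of the lemma --- is not established, and the missing idea is exactly what the paper's proof is built around. The paper does \emph{not} anchor at $s_1$ and an arbitrary path. It first replaces $s_1$ by $s_i$, the \emph{maximum}-$\pi$ element of $S$ lying in $\Des(u)\cap(\Anc(v)\cup\{v\})$, and then takes $w$ to be the \emph{maximum}-$\pi$ out-neighbour of $s_i$ in $\Anc(v)\cup\{v\}$. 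Maximality of $s_i$ guarantees that $w$ and every subsequent vertex on a $w$-to-$v$ path lie outside $S$; maximality of $w$ guarantees that $s_i$ is non-adjacent to every subsequent path vertex, which is precisely the non-adjacency hypothesis that lets Meek R1 fire at each step. Your walk anchored at the minimum $s_1$ has neither guarantee, and your case analysis does not recover them.

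Two concrete failure points. First, the parenthetical claim that when $p_{r+1}\in S$ the edge $s_1 \sim p_{r+1}$ is ``cut and oriented directly'' is false: that edge has \emph{both} endpoints in $S$, so it is not cut by $S$, and since the arc is deleted in $G_S$ the intervention imposes no constraint on it. Second, and more seriously, the inductive step ``$p_{r+1}$ inherits the role of $s_1$'' does not close: the property of $s_1$ your base case uses is membership in $S$ (so that every edge from $s_1$ to a non-$S$ vertex is a cut edge, oriented for free), and $p_{r+1}\notin S$ does not inherit it. To orient $p_{r+1}\to p_{r+2}$ you must apply R1 with some already-oriented $c \to p_{r+1}$ and $c \not\sim p_{r+2}$; the available pivots are $u$ and $s_1$, but $u \sim p_{r+2}$ is \emph{forced} by \cref{lem:intermediate-direct-arcs-exist} whenever $p_{r+2}$ is interior, and $s_1 \sim p_{r+2}$ can also hold, in which case your described walk stalls (one can sometimes escape by jumping ahead to a later path vertex adjacent to $s_1$, but that is a different argument from the one you outline, and you would still have to show it always succeeds). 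The clean fix is not more case analysis along a fixed path but the paper's two greedy maximality selections, which eliminate both obstructions uniformly.
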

\begin{proof}
Note that $u \not\in S$ as $s_1 \in \Des(u)$, but $v$ could possibly be a vertex in $S$.

By \cref{lem:intermediate-direct-arcs-exist}, we know that there are arcs $u \to w$ for all $w \in \Des(u) \cap \Anc(v)$.

Let
\[
s_i = \argmax_{\substack{z \in S;\\ z \in \Des(u) \cap (\Anc(v) \cup \{v\})}} \{ \pi(z) \}
\]
be a vertex in $S$ that lies between $u$ and $v$, with the largest ordering.
The vertex $s_i$ is well-defined because $s_1 \in \Des(u) \cap \Anc(v) \subseteq \Des(u) \cap (\Anc(v) \cup \{v\})$.

If $s_i = v$, then $u \to v$ is trivially oriented when we intervene on $S$ because $u \not\in S$.
In the rest of the proof, we may assume that $s_i \neq v$, i.e.\ $s_i \in \Anc(v)$.
Let
\[
w = \argmax_{\substack{z \in \Des(s_i) \cap (\Anc(v) \cup \{v\});\\ (s_i \to z) \in E}} \{ \pi(z) \}
\]
denote the vertex with incoming arc from $s_i$ and ancestral to $v$, with the largest ordering.
The vertex $w$ is well-defined because $s_i \in \Anc(v)$ and thus there is a sequence of directed arcs from $s_i$ to $v$.
Note that $w$ could be $v$ and $w \not\in S$ by maximality of $s_i$.

When we intervene on $S$, we recover all arc directions incident to $s_i$, except maybe the arcs internal within $S$.
In particular, we will recover the arcs $u \to s_i$ and $s_i \to w$.

If $w = v$, then Meek rule R2 recovers $u \to w = v$ via $u \to s_i \to w \sim u$.

Otherwise, if $w \neq v$, then let $w = w_0 \to w_1 \to \ldots \to w_\ell = v$ be the sequence of directed arcs from $w$ to $v$ in $G$.
By maximality of $w$, there is no arc from $s_i$ to any of the vertices $\{w_1, \ldots, w_\ell\}$.
So, by repeatedly applying Meek R1, we recover
\begin{itemize}
    \item $w_0 \to w_1$ via $s_i \to w_0 \sim w_1$
    \item $w_1 \to w_2$ via $w_0 \to w_1 \sim w_2$
    \item $\ldots$
    \item $w_{\ell-1} \to w_{\ell}$ via $w_{\ell-2} \to w_{\ell-1} \sim w_{\ell}$
\end{itemize}
Furthermore, we know that the arcs $u \to w_0, u \to w_1, \ldots u \to w_{\ell}$ exist due to \cref{lem:intermediate-direct-arcs-exist}.
So, by repeatedly applying Meek R2, we recover
\begin{itemize}
    \item $u \to w_0$ via $u \to s_i \to w_0 \sim u$
    \item $u \to w_1$ via $u \to w_0 \to w_1 \sim u$
    \item $\ldots$
    \item $u \to w_{\ell}$ via $u \to w_{\ell-1} \to w_{\ell} \sim u$
\end{itemize}
That is, the arc $u \to w_{\ell} = v$ will be oriented.
\end{proof}

\atmostoneincominggeneralized*
\begin{proof}
Let us denote $H_{source}$ as the source node of $H$ and $K_{source}$ as the source node of $K$.

\textbf{Case 1:} $K_{source} = H_{source}$

Suppose, for a contradiction, that there was a chain component in $\cE_{\cI \cup \{V(K)\}}(H)$ with incoming arcs into $K$ in $G$.
Since $G$ is moral, this chain component must have an edge with $K_{source}$.
However, since $K_{source} = H_{source}$, this arc must be \emph{outgoing} from $K_{source}$.
Contradiction.

\textbf{Case 2:} $K_{source} \neq H_{source}$, i.e.\ $K_{source} \in \Des(H_{source})$

Recall that chain components do not have oriented arcs, so $H$ must be moral.
Since $K$ is a clique in the chain component $H$, there was an unoriented directed path from $H_{source} \to u_1 \to \ldots \to u_{last} \to K_{source}$ before intervening on $K$.
Since Meek rules can only orient arcs with an endpoint that is a descendant of vertices in $K$, we see that the arcs $H_{source} \to u_1 \to \ldots \to u_{last}$ remain unoriented after intervening on $K$.

\emph{Claim 2.1: There exists one such chain component.}
Let $A$ be the chain component containing $H_{source}$ after intervening on $K$.
From the above discussion, $A$ has an arc into $K$ in $G$, namely $u_{last} \to K_{source}$.
For $A$ to have any incoming arcs from $K$, $A$ must contain some descendant of $K_{source}$.
However, by \cref{lem:middle-bounded}, any arc joining an ancestor and descendant of $K_{source}$ would be oriented, thus ancestors and descendants of $K_{source}$ will belong in different chain components in $\cE_{\cI \cup \{V(K)\}}(H)$.
Thus, $A$ only has incoming arcs into $K$ in $G$.

\emph{Claim 2.2: There does not exist two such chain components.}
Suppose, for a contradiction, that there is another chain components $B$ in $\cE_{\cI \cup \{V(K)\}}(H)$ with incoming arcs into $K$ in $G$.
Since $G$ is moral, $B$ must have an edge into $K_{source}$, say $b \to K_{source}$.
Again, since $G$ is moral, there must be an edge between $b$ and $u_{last}$.
Since Meek rules can only orient arcs with an endpoint the arc $b \sim u_{last}$ remains unoriented after intervening on $K$, so $A$ and $B$ are actually the same chain component.
Contradiction.

\textbf{Running time}
We can enumerate over all chain components of $H$ and checking each edge at most twice in order to determine whether there is a chain component in $\cE_{\cI \cup \{V(K)\}}(H)$ with incoming arcs into $K$ in $G$, and if so find it.
\end{proof}

\cliqueinterventionsubroutinegeneralized*
\begin{proof}
By construction and \cref{lem:labelling-scheme}, each partite in $S$ has at most $k$ vertices.

When $k = 1$, the output $|S| = |V(C)|$ and each vertex appears exactly once in $S$.

When $k > 1$, the output $|S| \leq \left\lceil \frac{|V(C)|}{k'} \right\rceil \cdot \left\lceil \log_{\left\lceil \frac{|V(C)|}{k'} \right\rceil} |V(C)| \right\rceil$ and each vertex appears $\left\lceil \log_{\left\lceil \frac{|V(C)|}{k'} \right\rceil} |V(C)| \right\rceil$ times in $S$, where $k' = \min\{k, |V(C)|/2\} > 1$.
Since $k' \leq k$, we have $\left\lceil \frac{|V(C)|}{k'} \right\rceil \in \cO \left( \frac{|V(C)|}{k} \right)$.
So, it remains to bound $\left\lceil \log_{\left\lceil \frac{|V(C)|}{k'} \right\rceil} |V(C)| \right\rceil$.

When $1 < k \leq \frac{|V(C)|}{2}$, we see that $k' = k$.
So,
\[
\left\lceil \log_{\left\lceil \frac{|V(C)|}{k'} \right\rceil} |V(C)| \right\rceil
= \left\lceil \log_{\left\lceil \frac{|V(C)|}{k} \right\rceil} |V(C)| \right\rceil
= \left\lceil \frac{\log |V(C)|}{\log \left\lceil \frac{|V(C)|}{k} \right\rceil} \right\rceil
\in \cO(\log k)
\]

For the final asymptotic inclusion, consider the following argument with $\log$ being base 2 and $1 < k \leq x/2$:
\begin{align*}
&\; \frac{\log x}{\log(x/k)} \leq \log k + 1\\
\iff &\; \log x \leq \log k \cdot \log(x/k) + \log(x/k)\\
\iff &\; \log k \leq \log k \cdot \log(x/k)\\
\iff &\; 1 \leq \log(x/k)\\
\iff &\; 2 \leq x/k
\end{align*}

When $k > \frac{|V(C)|}{2}$, we see that $k' = \frac{|V(C)|}{2}$.
So,
\[
\left\lceil \log_{\left\lceil \frac{|V(C)|}{k'} \right\rceil} |V(C)| \right\rceil
= \left\lceil \log_2 |V(C)| \right\rceil
\leq \left\lceil \log_2 2k \right\rceil
\in \cO(\log k)
\]

The claim follows since we always have
\[
\left\lceil \log_{\left\lceil \frac{|V(C)|}{k'} \right\rceil} |V(C)| \right\rceil
\in \cO(\log k)
\]
\end{proof}

\interventionalmetriclowerboundgeneralized*
\begin{proof}
The proof is similar to \cref{thm:interventional-metric-lower-bound} but we specialize the bounds to take into account of \cref{eq:generalized-cost}.

\textbf{Common argument}

Fix an \emph{arbitrary} intervention set $\cI \subseteq 2^V$.
We will prove the two cases separately by mirroring parts of the proof of \cref{lem:strengthened-lb} in how we invoke \cref{lem:hauser-bulmann-strengthened}.

Consider an arbitrary DAG $\wt{G} \in [G^*]$.
Let $\cI' \subseteq V$ be any atomic verifying set of $\wt{G}$ and fix a chain component $H \in CC(\cE_{\cI}(G^*))$.
That is, suppose $\cE_{\cI'}(G^*) = \wt{G}$ and $\cE_{\cI'}(G^*)[V(H)] = \wt{G}[V(H)]$.
Then,
\[
\cE_{(\cI' \setminus \cI) \cap V(H)}(\wt{G}[V(H)])
= \cE_{\cI \cup (\cI' \setminus \cI)}(\wt{G})[V(H)]
= \cE_{\cI'}(\wt{G})[V(H)]
= \wt{G}[V(H)]
\]
where the first equality is due to \cref{lem:hauser-bulmann-strengthened} and the last equality is because $\cI'$ is a verifying set of $\wt{G}$.
So, $(\cI' \setminus \cI) \cap V(H)$ is a verifying set for $\wt{G}[V(H)]$, and so is $\cI' \cap V(H)$.
Thus, by minimality of $\nu_1$ and $\nu_k$, we have
\begin{equation}
\label{eq:minimality-generalized-atomic}
\nu_1(\wt{G}[V(H)]) \leq |\cI' \cap V(H)|
\qquad \text{and} \qquad
\overline{\nu}_1(\wt{G}[V(H)]) \leq w(\cI' \cap V(H))
\end{equation}
for \emph{any} atomic verifying set $\cI' \subseteq V$ of $\wt{G}$.

Repeating the exact same argument for bounded size verifying sets, we have
\begin{equation}
\label{eq:minimality-generalized-bounded-size}
\nu_k(\wt{G}[V(H)]) \leq |\cI' \cap V(H)|
\qquad \text{and} \qquad
\overline{\nu}_k(\wt{G}[V(H)]) \leq w(\cI' \cap V(H))
\end{equation}
for \emph{any} bounded size verifying set $\cI' \subseteq 2^V$ of $\wt{G}$.

We now independently lower bound via $\zeta^{(3)}_{\cI, H}$, $\zeta^{(4)}_{\cI, H}$, $\zeta^{(5)}_{\cI, H}$, and $\zeta^{(6)}_{\cI, H}$ by using \cref{lem:clique-picking}: in any interventional essential graph, we can always pick a consistent ordering by making any unoriented clique the prefix of its chain component.

\textbf{Case A: Lower bounding via $\zeta^{(3)}_{\cI, H}$ when $k = 1$}:

Fix an arbitrary clique $C$ in $H$.
Suppose the vertices in $C$ are $v_1, \ldots, v_{|C|}$ with $w(v_1) \geq \ldots \geq w(v_{|C|})$.
By \cref{lem:clique-picking}, there exists a valid orientation $\pi$ of $H$ such that all the vertices in $C$ appear at the start of the ordering.
For any such ordering $\pi$, the covered edges are $v_{\pi(1)} \to v_{\pi(2)} \to \ldots \to v_{\pi(|C|)}$ and we know that any atomic verifying set must include a minimum vertex cover of these covered edges due to \cref{thm:necessary-and-sufficient}.

Fix the ordering $\pi$ where $w(v_{\pi(i)}) = w(v_i)$ and let the DAG $\wt{G} \in [G^*]$ correspond to this ordering, i.e.\ $\pi$ is in descending weight ordering.
Consider the set of \emph{disjoint} alternating covered edges $\pi^{-1}(1) \to \pi^{-1}(2)$, $\pi^{-1}(3) \to \pi^{-1}(4)$, and so on.
Amongst these disjoint alternating covered edges, at least one endpoint must be intervened upon, incurring a cost of at least $\sum_{\text{even } i} w(v_i)$.
That is, $\nu_1(\wt{G}) \geq \sum_{\text{even } i} w(v_i)$.
From the proof of \cref{thm:interventional-metric-lower-bound}, we know that
\[
\overline{\nu}_1(\wt{G}[V(H)]) \geq \frac{1}{2} \cdot \left( w(V(C)) - \max_{v \in V(C)} w(v) \right) \;.
\]
Meanwhile, \cref{thm:clique-covered-edges-and-lower-bound} tells us that orienting $C$ requires at least $|V(C)|/2$ atomic interventions even if we allow randomization and adaptivity.
So,
\[
\nu_1(\wt{G}[V(H)]) \geq |V(C)|/2 \;.
\]

Therefore, for \emph{any} atomic verifying set $\cI$ of $\wt{G}[V(H)]$,
\begin{align*}
\alpha \cdot w(\cI) + \beta \cdot |\cI|
&\geq \alpha \cdot \overline{\nu}_1(\wt{G}[V(H)]) + \beta \cdot \nu_1(\wt{G}[V(H)])\\
&\geq \alpha \cdot \left( \frac{1}{2} \cdot \left( w(V(C)) - \max_{v \in V(C)} w(v) \right) \right) + \beta \cdot \left( |V(C)|/2 \right)\\
&= \frac{1}{2} \cdot \left\{ \alpha \cdot \left( w(V(C)) - \max_{v \in V(C)} w(v) \right) + \beta \cdot |V(C)| \right\} \;.    
\end{align*}
By maximizing amongst the cliques within $H$, we see that $\alpha \cdot w(\cI) + \beta \cdot |\cI| \geq \zeta^{(3)}_{\cI, H}$.

\textbf{Case B: Lower bounding via $\zeta^{(4)}_{\cI, H}$ when $k = 1$}:

It suffices to prove this for \emph{arbitrary} cliques $C_i$ in each disjoint connected components in $H[V \setminus \{v\}]$, and then taking the maximum.
Consider a minimum cost atomic verifying set $\cI$ of $\wt{G}[V(H)]$.

\textbf{Case 1}: $v \in \cI$.
Then,
\begin{align*}
\alpha \cdot w(\cI) + \beta \cdot |\cI|
&\geq \alpha \cdot \overline{\nu}_1(\wt{G}[V(H)]) + \beta \cdot \nu_1(\wt{G}[V(H)])\\
&\geq \alpha \cdot w(v) + \beta\\
&\geq \frac{1}{2} \cdot \left\{ \alpha \cdot w(v) + \beta, \sum_{i=1}^t \alpha \cdot w(V(C_i)) + \beta \cdot |V(C_i)| \right\}
\end{align*}

By maximizing amongst the cliques within each connected component, we see that $\alpha \cdot w(\cI) + \beta \cdot |\cI| \geq \zeta^{(4)}_{\cI, H}$.

\textbf{Case 2}: $v \not\in \cI$.

By \cref{lem:clique-picking}, there exists DAGs consistent with $\cE(G^*)$ that can be generated by letting $v$ be the first prefix vertex in $\cE(G^*)$, followed by vertices in \emph{descending weight ordering} within each clique $C_i$, across all $t$ components.
Let $\wt{G}$ be one such DAG and suppose the vertices in clique $C_i = \{u_{i,1}, \ldots, u_{i,|C_i|}\}$ have weights $w(u_{i,1}) \geq \ldots w(u_{i,|C_i|})$ and $\pi(v) < \pi(u_{i,1}) < \ldots < \pi(u_{i,|C_i|})$.
We see that the set $\{v \to u_{i,1}, u_{i,1} \to u_{i,2}, \ldots , u_{i,|C_i|-1} \to u_{i,|C_i|}\}_{i=1}^t$ are all covered edges of $\wt{G}$.
By \cref{thm:necessary-and-sufficient}, \emph{any} verification set must include a minimum vertex cover of these edges.
In particular, since $v \not\in \cI$, we must have $\{u_{i,1}\}_{i=1}^t \subseteq \cI$.

\emph{Conditioned on not using $v$}, we know, from the proof of \cref{thm:interventional-metric-lower-bound}, that
\[
\overline{\nu}_1(\wt{G}[V(H)])
\geq \frac{1}{2} \cdot \sum_{i=1}^t w(V(C_i)) \;.
\]
Meanwhile, \cref{thm:clique-covered-edges-and-lower-bound} tells us that orienting all the $C_i$'s require at least $\sum_{i=1}^t |V(C_i)|/2$ atomic interventions, even if we allow randomization and adaptivity.
So,
\[
\nu_1(\wt{G}[V(H)])
\geq \frac{1}{2} \cdot \sum_{i=1}^t |V(C_i)| \;.
\]
Therefore,
\begin{align*}
\alpha \cdot w(\cI) + \beta \cdot |\cI|
&\geq \alpha \cdot \overline{\nu}_1(\wt{G}[V(H)]) + \beta \cdot \nu_1(\wt{G}[V(H)])\\
&\geq \alpha \cdot \left( \frac{1}{2} \cdot \sum_{i=1}^t w(V(C_i)) \right) + \beta \cdot \left( \frac{1}{2} \cdot \sum_{i=1}^t |V(C_i)| \right)\\
&= \frac{1}{2} \cdot \left( \alpha \cdot \sum_{i=1}^t w(V(C_i)) + \beta \cdot |V(C_i)| \right)
\end{align*}

By maximizing amongst the cliques within each connected component, we see that $\alpha \cdot w(\cI) + \beta \cdot |\cI| \geq \zeta^{(4)}_{\cI, H}$.

\textbf{Case C: Lower bounding via $\zeta^{(5)}_{\cI, H}$ when $k > 1$}:

We use the exact same proof outline as $\zeta^{(3)}_{\cI, H}$ while invoking \cref{thm:relate-k}.
This gives the following inequalities:
\[
\overline{\nu}_k(\wt{G}[V(H)]) 
\geq \overline{\nu}_1(\wt{G}[V(H)])
\geq \frac{1}{2} \cdot \left( w(V(C)) - \max_{v \in V(C)} w(v) \right) \;.
\]
and
\[
\nu_k(\wt{G}[V(H)])
\geq \left\lceil \frac{\nu_1(\wt{G}[V(H)])}{k} \right\rceil
\geq \left\lceil \frac{|V(C)|}{2k} \right\rceil \;.
\]

Therefore, for \emph{any} bounded size verifying set $\cI$ of $\wt{G}[V(H)]$,
\begin{align*}
\alpha \cdot w(\cI) + \beta \cdot |\cI|
&= \frac{1}{2} \cdot \left\{ \alpha \cdot \left( w(V(C)) - \max_{v \in V(C)} w(v) \right) + \beta \cdot \frac{|V(C)|}{k} \right\} \;.    
\end{align*}
By maximizing amongst the cliques within $H$, we see that $\alpha \cdot w(\cI) + \beta \cdot |\cI| \geq \zeta^{(5)}_{\cI, H}$.

\textbf{Case D: Lower bounding via $\zeta^{(6)}_{\cI, H}$ when $k > 1$}:

We use the exact same proof outline as $\zeta^{(4)}_{\cI, H}$ while invoking \cref{thm:relate-k}.
Let $\cI$ be an \emph{arbitrary} bounded size verifying set of $\wt{G}[V(H)]$.

\emph{Conditioned on using $v$}, we trivially get $\alpha \cdot w(\cI) + \beta \cdot |\cI| \geq \alpha \cdot w(v) + \beta$ like before.
By maximizing amongst the cliques within $H$, we see that $\alpha \cdot w(\cI) + \beta \cdot |\cI| \geq \zeta^{(6)}_{\cI, H}$.

Meanwhile, \emph{conditioned on not using $v$}, we get the following inequalities:
\[
\overline{\nu}_k(\wt{G}[V(H)])
\geq \overline{\nu}_1(\wt{G}[V(H)])
\geq \frac{1}{2} \cdot \sum_{i=1}^t w(V(C_i)) \;.
\]
and
\[
\nu_k(\wt{G}[V(H)])
\geq \left\lceil \frac{\nu_1(\wt{G}[V(H)])}{k} \right\rceil
\geq \left\lceil \frac{1}{2} \cdot \sum_{i=1}^t \frac{|V(C_i)|}{k} \right\rceil
\geq \frac{1}{2} \cdot \sum_{i=1}^t \frac{|V(C_i)|}{k} \;.
\]

Therefore,
\begin{align*}
\alpha \cdot w(\cI) + \beta \cdot |\cI|
&= \frac{1}{2} \cdot \left\{ \alpha \cdot \sum_{i=1}^t w(V(C_i)) + \beta \cdot \frac{|V(C_i)|}{k} \right\} \;.
\end{align*}
By maximizing amongst the cliques within $H$, we see that $\alpha \cdot w(\cI) + \beta \cdot |\cI| \geq \zeta^{(6)}_{\cI, H}$.

\textbf{Putting together}

For $k = 1$, recall that $\cI^*_1 \subseteq V$ is the atomic intervention set optimizing \cref{eq:generalized-cost} such that $\cE_{\cI^*_1}(G^*) = G^*$.
So,
\begin{align*}
\OPT_1
&= \alpha \cdot w(\cI^*_1) + \beta \cdot |\cI^*_1|\\
&\stackrel{(\ast)}{\geq} \sum_{\substack{H \in CC(\cE_{\cI}(G^*))\\|V(H)| \geq 2}} \alpha \cdot w(\cI^*_1 \cap V(H)) + \beta \cdot |\cI^*_1 \cap V(H)|\\
&\geq \sum_{\substack{H \in CC(\cE_{\cI}(G^*))\\|V(H)| \geq 2}} \alpha \cdot \overline{\nu}_1(\wt{G}[V(H)]) + \beta \cdot \nu_1(\wt{G}[V(H)])\\
&\geq \sum_{\substack{H \in CC(\cE_{\cI}(G^*))\\|V(H)| \geq 2}} \max\{\zeta^{(3)}_{\cI,H}, \zeta^{(4)}_{\cI,H}\}
\end{align*}
where the inequality $(\ast)$ is because some edges may have already been oriented by $\cI$ and the last two inequalities follow from arguments in cases A and B.
Finally, the claim follows by taking the maximum over all possible atomic interventions $\cI \subseteq V$.

For $k > 1$, recall that $\cI^*_k \subseteq 2^V$ is the bounded size intervention set optimizing \cref{eq:generalized-cost} such that $\cE_{\cI^*_k}(G^*) = G^*$.
So,
\begin{align*}
\OPT_k
&= \sum_{S \in \cI^*_k} \alpha \cdot w(S) + \beta \cdot |S|\\
&\stackrel{(\ast)}{\geq} \sum_{\substack{H \in CC(\cE_{\cI}(G^*))\\|V(H)| \geq 2}} \sum_{S \in \cI^*_k} \alpha \cdot w(S \cap V(H)) + \beta \cdot |S \cap V(H)|\\
&\geq \sum_{\substack{H \in CC(\cE_{\cI}(G^*))\\|V(H)| \geq 2}} \alpha \cdot \overline{\nu}_k(\wt{G}[V(H)]) + \beta \cdot \nu_k(\wt{G}[V(H)])\\
&\geq \sum_{\substack{H \in CC(\cE_{\cI}(G^*))\\|V(H)| \geq 2}} \max\{\zeta^{(5)}_{\cI,H}, \zeta^{(6)}_{\cI,H}\}
\end{align*}
where the inequality $(\ast)$ is because some edges may have already been oriented by $\cI$ and the last two inequalities follow from arguments in cases C and D.
Finally, the claim follows by taking the maximum over all possible bounded size interventions $\cI \subseteq 2^V$.
\end{proof}

\danglingsubroutinegeneralized*
\begin{proof}
The proof strategy exactly follows \cref{lem:dangling-subroutine} except we have to account for the subroutine call to \texttt{CliqueIntervention} via \cref{lem:clique-intervention-subroutine-generalized}.

Since the underlying graph is a moral DAG, intervening on $v$ or $\argmin_{u \in ( \cup_{i=1}^t V(H_1) ) \cap N_H(v)} \pi(u)$ ensures that the partites will indeed become separated.
Suppose $u_i = \argmin_{u \in V(H_i) \cap N_H(v)} \pi(u)$.
If $\pi(v_H) > \min_{i \in \{1, \ldots, t\}} \pi(u_i)$, then intervening on $u_1, \ldots, u_t$ will disconnect\footnote{Every path between $H_i$ and $H_j$, for $i \neq j$ will involve an oriented arc. Such arcs will be removed when considering chain components, disconnecting the path.} $H_i$'s from each other\footnote{Without loss of generality, suppose $\pi(u_1) = \min_{i \in \{1, \ldots, t\}} \pi(u_i)$. Orienting the arc $u_1 \to v_H$ triggers Meek rule R1 to orient all $v \to z$ arcs for $z \not\in H_1$, thus disconnecting $H_i$'s from each other.}.
Otherwise, if $\pi(v_H) < \min_{i \in \{1, \ldots, t\}} \pi(u_i)$, \cref{lem:middle} tells us that intervening on $u_i$ will orient all $v_H \to z$ arcs for $z \in H_i$.

The if-case of \texttt{ResolveDanglingGeneralized} directly intervenes on $v$ while the else-case of \texttt{ResolveDanglingGeneralized} repeatedly recurses on a connected subgraph of $H_i[V']$, towards the source $\argmin_{u \in ( \cup_{i=1}^t V(H_1) ) \cap N_H(v)} \pi(u)$.
Since the size of $V'$ is at least halved in each iteration of the while-loop, it can have at most $\cO(\log n)$ iterations.

Note that, in each iteration (out of $O(\log n)$ iterations) of \texttt{ResolveDanglingGeneralized} except the last one, we partition the clique seperators into sets of size at most $k$ and intervene on them.  Suppose $S \subseteq \cI$ is the intervention set output of that iteration, by \cref{thm:interventional-metric-lower-bound-generalized} the cost of this step, that is $\cost(S, \alpha, \beta, k)\in \cO(\OPT_k)$ for all $k \geq 1$. In the last step of our prcoedure \texttt{ResolveDanglingGeneralized}, we invoke \texttt{CliqueIntervention} and in the remainder of the proof we bound the cost incurred by this subroutine.

\textbf{Accounting for \texttt{CliqueIntervention} subroutine calls}

Suppose $S \subseteq \cI$ is the intervention set output of \texttt{CliqueIntervention} on some clique $C$ in the last step.

When $k = 1$, we know from \cref{lem:clique-intervention-subroutine-generalized} that $|S| = |V(C)|$ and each vertex appears exactly once in $S$.
By $\zeta^{(4)}$ term of \cref{thm:interventional-metric-lower-bound-generalized}, $\cost(S, \alpha, \beta, 1) \in \cO(\OPT_1)$.
So, across all $\cO(\log n)$ iterations, $\cost(\cI, \alpha, \beta, 1) \in \cO(\log n \cdot \OPT_1)$.

When $k > 1$, we know from \cref{lem:clique-intervention-subroutine-generalized} that $|S| \in \cO(\log k \cdot |V(C)| / k)$ and each vertex appears at most $\cO(\log k)$ in $S$.
By $\zeta^{(6)}$ term of \cref{thm:interventional-metric-lower-bound-generalized}, $\cost(S, \alpha, \beta, k) \in \cO(\log k \cdot \OPT_k)$, where the $\cO(\log k)$ multiplicity of each vertex occurrence increases the $\alpha$ term while the $\cO(\log k)$ multiplicative factor size overhead increases the $\beta$ term.
As we invoke, \texttt{CliqueIntervention} only in the last step and as we incur only a cost of $\cO(\OPT_k)$ in all the remaining steps, our total cost across all $\cO(\log n)$ iterations is, $\cost(\cI, \alpha, \beta, k) \in \cO(\log n \cdot \OPT_k + \log k \cdot \OPT_k)$. We conclude the proof.
\end{proof}

\lognphasessufficegeneralized*
\begin{proof}
The proof exactly follows \cref{lem:logn-phases-suffice} except we use \texttt{ResolveDanglingGeneralized} instead of \texttt{ResolveDangling} to ensure that the partites $A$ and $B$ (with respect to 1/2-clique separator $K_H$) are separated before we recurse in the next phase.
For completeness, we repeat the entire argument below.

In each phase, we are essentially breaking up the graph into small subgraphs using \cref{thm:chordal-separator} where the size of the chain components decrease by a factor of two.

Note that we do not intervene on \emph{all} the vertices in the clique separator $K_H$, but only intervene on $V(K_H) \setminus \{v_H\}$ on line 8, we need to argue that partites $A$ and $B$ (with respect to the 1/2-clique separator $K_H$) are separated before we recurse in the next phase.
\cref{lem:dangling-subroutine-generalized} ensures that $\texttt{ResolveDanglingGeneralized}$ on $(Z_{v_H}, w, v_H)$ separates any connected components that may be ``dangling'' from $v_H$ after intervening on $V(K_H) \setminus \{v_H\}$.

Since the maximum chain component size initially at most $n$ and is always halved after a phase, \cref{alg:weighted-search-generalized} terminates after $\cO(\log n)$ phases.
\end{proof}

\costofeachphasegeneralized*
\begin{proof}
Fix an arbitrary phase $i$ with an intermediate interventional essential graph $\cE_{\cI}(G^*)$.
Suppose that $\cJ_i \subseteq 2^V$ is the intervention set computed by \texttt{ALG-GENERALIZED} in phase $i$.
By construction, $\cJ_i$ is made up by at most two calls from \texttt{CliqueIntervention} -- one from line 9 and one within \texttt{ResolveDanglingGeneralized}.

\textbf{Case $k = 1$:}
By the $\zeta^{(3)}$ term of \cref{thm:interventional-metric-lower-bound-generalized}, a function call to \texttt{CliqueIntervention} from line 9 incurs a cost of $\cO(\OPT_1)$.
By \cref{lem:dangling-subroutine-generalized}, \texttt{ResolveDanglingGeneralized} incurs a cost of $\cO(\log n \cdot \OPT_1)$.

\textbf{Case $k > 1$:}
By the $\zeta^{(5)}$ term of \cref{thm:interventional-metric-lower-bound-generalized}, a function call to \texttt{CliqueIntervention} from line 9 incurs a cost of $\cO(\OPT_k)$.
By \cref{lem:dangling-subroutine-generalized}, \texttt{ResolveDanglingGeneralized} incurs a cost of $\cO((\log n + \log k) \cdot \OPT_k)$.
\end{proof}

\weightedsearchgeneralized*
\begin{proof}
Direct consequence of combining \cref{lem:logn-phases-suffice-generalized} and \cref{lem:cost-of-each-phase-generalized}.

The algorithm runs in polynomial time because the following it uses a polynomial number of phases and each phase can be computed in polynomial time:
\begin{itemize}
    \item Computation of 1/2-clique separators run in polynomial time (\cref{thm:chordal-separator})
    \item Enumerating all maximal cliques in chordal graph can be done in polynomial time\footnote{e.g.\ see \url{https://en.wikipedia.org/wiki/Chordal_graph}}
    \item Labelling scheme computation of \cref{lem:labelling-scheme} can be computed in polynomial time
    \item Applying Meek rules till convergence can be made to run in polynomial time \cite{wienobst2021extendability}
\end{itemize}
\end{proof}
\section{Experiments}
\label{sec:appendix-experiments}

In this section, we provide more details about our experiments.

All our experiments are conducted on an Ubuntu server with two AMD EPYC 7532 CPU and 256GB DDR4 RAM.
Source code implementation and experimental scripts are available at \url{https://github.com/cxjdavin/new-metrics-and-search-algorithms-for-weighted-causal-DAGs}.

We base our evaluation on the experimental framework of \cite{choo2022verification}\footnote{Available at \url{https://github.com/cxjdavin/subset-verification-and-search-algorithms-for-causal-DAGs}}, which in turn is based on \cite{squires2020active}\footnote{Available at \url{https://github.com/csquires/dct-policy}}.
In the following, we replicate some of the experimental setup details from Appendix H of \cite{choo2022verification}.

\subsection{Synthetic graph classes}

The synthetic graphs are random connected moral DAGs.

\textbf{1. Erd\H{o}s-R\'{e}nyi styled graphs}\quad
These graphs are parameterized by 2 parameters: $n$ and density $\rho$.
Generate a random ordering $\sigma$ over $n$ vertices.
Then, set the in-degree of the $n^{th}$ vertex (i.e.\ last vertex in the ordering) in the order to be $X_n = \max\{1, \texttt{Binomial}(n-1, \rho)\}$, and sample $X_n$ parents uniformly form the nodes earlier in the ordering.
Finally, chordalize the graph by running the elimination algorithm of \cite{koller2009probabilistic} with elimination ordering equal to the reverse of $\sigma$.

\textbf{2. Tree-like graphs}\quad
These graphs are parameterized by 4 parameters: $n$, degree $d$, $e_{\min}$, and $e_{\max}$. First, generate a complete directed $d$-ary tree on $n$ nodes.
Then, add $\texttt{Uniform}(e_{\min}, e_{\max})$ edges to the tree.
Finally, compute a topological order of the graph by DFS and triangulate the graph using that order.

\subsection{Weights and generalized cost parameters}

We ran experiments for $\alpha \in \{0,1\}$ and $\beta = 1$ on two different types of weight classes for a graph on $n$ vertices:

\textbf{Type 1}\quad
The weight of each vertex is independently sampled from an exponential distribution $\exp(n^2)$ with parameter $n^2$. This is to simulate the setting where there is a spread in the costs of the vertices.

\textbf{Type 2}\quad
A randomly chosen $p=0.1$ fraction of vertices are assigned weight $n^2$ while the others are assigned weight $1$. This is to simulate the setting where there are a few randomly chosen high cost vertices.

So, we have 4 sets of experiments in total, where each set follows the 5 experiments performed in \cite{choo2022verification}.

\textbf{Experiment 1}\quad
Graph class 1 with $n \in \{10, 15, 20, 25\}$ and density $\rho = 0.1$.

\textbf{Experiment 2}\quad
Graph class 1 with $n \in \{8, 10, 12, 14\}$ and density $\rho = 0.1$.

\textbf{Experiment 3}\quad
Graph class 2 with $n \in \{100, 200, 300, 400, 500\}$ and $(\text{degree}, e_{\min}, e_{\max}) = (4, 2, 5)$.

\textbf{Experiment 4}\quad
Graph class 1 with $n \in \{10, 15, 20, 25\}$ and density $\rho = 0.1$.

\textbf{Experiment 5}\quad
Graph class 2 with $n \in \{100, 200, 300, 400, 500\}$ and $(\text{degree}, e_{\min}, e_{\max}) = (40, 20, 50)$.

\subsection{Algorithms benchmarked}

The following algorithms perform \emph{atomic interventions}.
Our algorithm \texttt{weighted\_separator} perform atomic interventions when given $k=1$ and \emph{bounded size interventions} when given $k > 1$.

\texttt{random}:\quad
A baseline algorithm that repeatedly picks a random non-dominated node (a node that is incident to some unoriented edge) from the interventional essential graph

\texttt{dct}:\quad
\texttt{DCT Policy} of \cite{squires2020active}

\texttt{coloring}:\quad
\texttt{Coloring} of \cite{shanmugam2015learning}

\texttt{opt\_single}:\quad
\texttt{OptSingle} of \cite{hauser2014two}

\texttt{greedy\_minmax}:\quad
\texttt{MinmaxMEC} of \cite{he2008active}

\texttt{greedy\_entropy}:\quad
\texttt{MinmaxEntropy} of \cite{he2008active}

\texttt{separator}:\quad
Algorithm of \cite{choo2022verification}; Takes in a parameter $k$ for bounded-size interventions.

\texttt{weighted\_separator}:\quad
Our \cref{alg:weighted-search-generalized}; Takes in a parameter $k$ for bounded-size interventions.

\subsection{Experimental results}

In all experiments, \texttt{ALG-GENERALIZED} has a similar run time\footnote{\texttt{ALG-GENERALIZED} is faster than all benchmarked algorithms except \cite{choo2022verification}. This is expected as both use an approach based on 1/2-clique separators but \texttt{ALG-GENERALIZED} has additional computational overhead to handle dangling components.}.
When $\alpha = 0$ and $\beta = 1$, the generalized cost function is simply the number of interventions used, and \texttt{ALG-GENERALIZED} incurs a similar cost to the existing state-of-the-art algorithms.
Meanwhile, when $\alpha = 1$ and $\beta = 1$, the generalized cost function is affected by the vertex weights, and \texttt{ALG-GENERALIZED} incurs noticeably less generalized cost.

We also tested the bounded size implementation for $k \in \{1,3,5\}$ and observe that the lines ``flip'', for both weight types.
When $(\alpha, \beta) = (0,1)$, $k = 1$ is worst and $k = 3$ is best. 
When $(\alpha, \beta) = (1,1)$, $k = 3$ is worst and $k = 1$ is best.
This matches what we expect from our theoretical analyses.

\begin{figure}[htbp]
\centering
\begin{subfigure}[t]{0.23\linewidth}
    \centering
    \includegraphics[width=\linewidth]{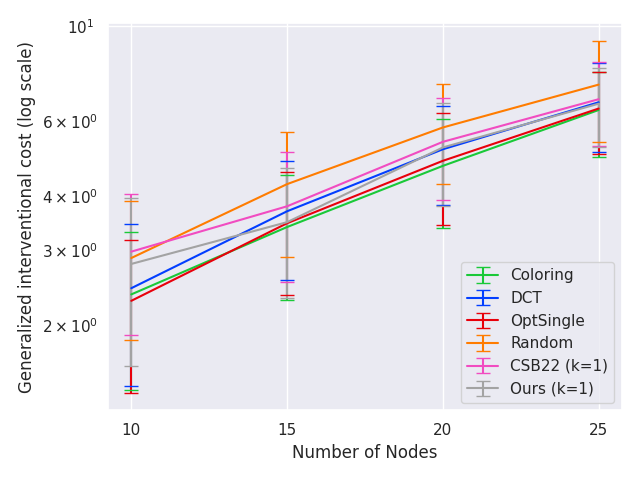}
    \caption{Generalized cost (log scale)}
\end{subfigure}
\begin{subfigure}[t]{0.23\linewidth}
    \centering
    \includegraphics[width=\linewidth]{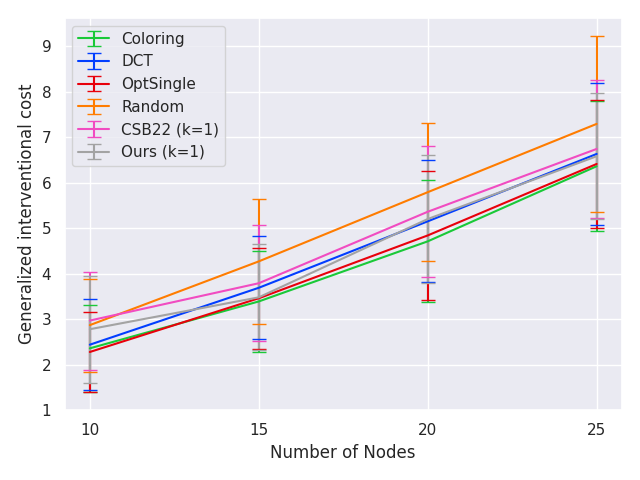}
    \caption{Generalized cost}
\end{subfigure}
\begin{subfigure}[t]{0.23\linewidth}
    \centering
    \includegraphics[width=\linewidth]{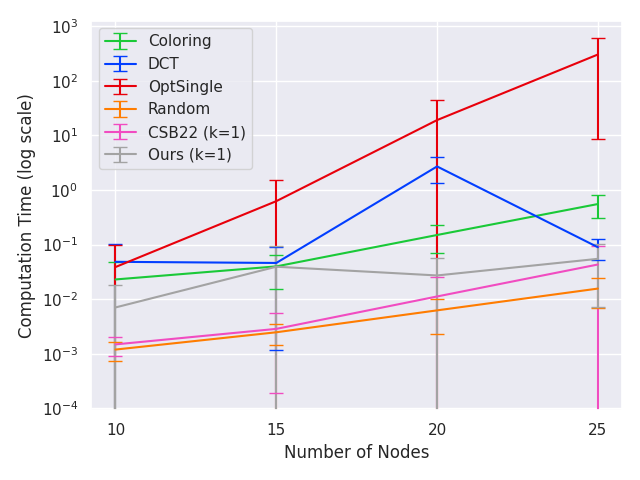}
    \caption{Time taken, in secs (log scale)}
\end{subfigure}
\begin{subfigure}[t]{0.23\linewidth}
    \centering
    \includegraphics[width=\linewidth]{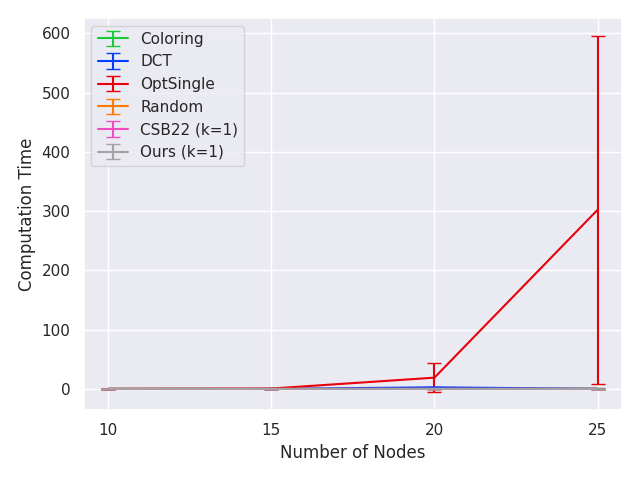}
    \caption{Time taken, in secs}
\end{subfigure}
\caption{Experiment 1, Type 1, $\alpha = 0$, $\beta = 1$}
\label{fig:exp1_type1_alpha0_beta1}
\end{figure}

\begin{figure}[htbp]
\centering
\begin{subfigure}[t]{0.23\linewidth}
    \centering
    \includegraphics[width=\linewidth]{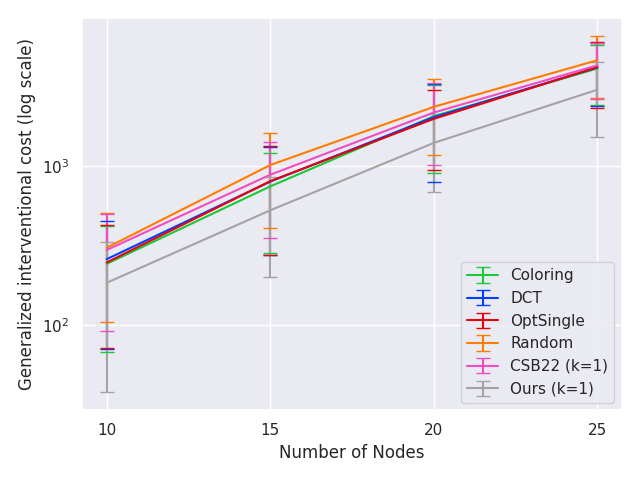}
    \caption{Generalized cost (log scale)}
\end{subfigure}
\begin{subfigure}[t]{0.23\linewidth}
    \centering
    \includegraphics[width=\linewidth]{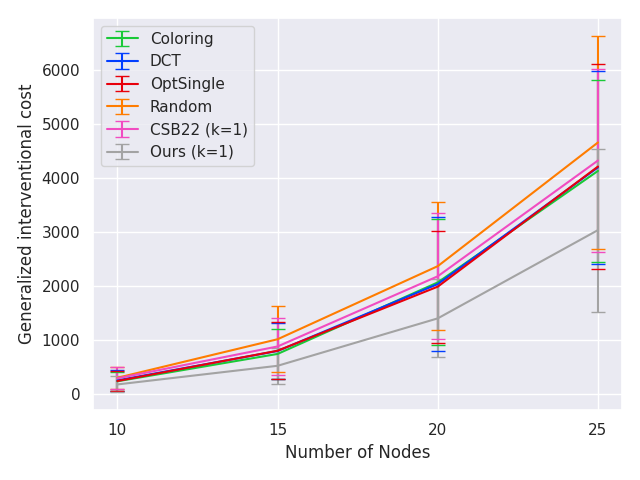}
    \caption{Generalized cost}
\end{subfigure}
\begin{subfigure}[t]{0.23\linewidth}
    \centering
    \includegraphics[width=\linewidth]{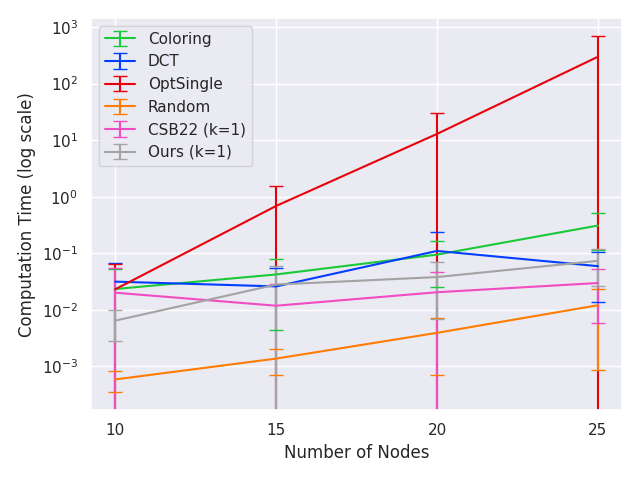}
    \caption{Time taken, in secs (log scale)}
\end{subfigure}
\begin{subfigure}[t]{0.23\linewidth}
    \centering
    \includegraphics[width=\linewidth]{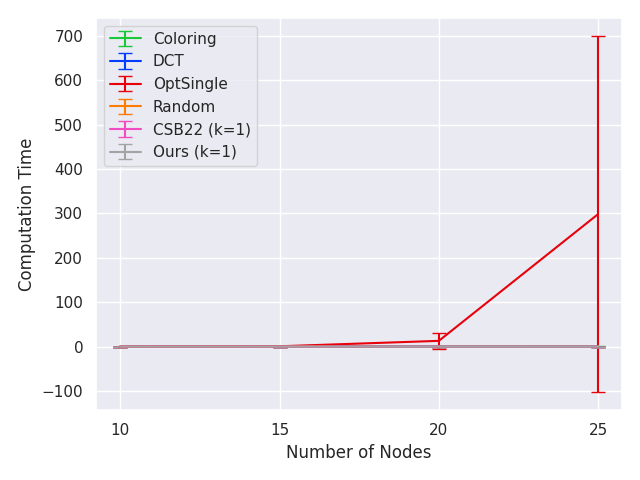}
    \caption{Time taken, in secs}
\end{subfigure}
\caption{Experiment 1, Type 1, $\alpha = 1$, $\beta = 1$}
\label{fig:exp1_type1_alpha1_beta1}
\end{figure}

\begin{figure}[htbp]
\centering
\begin{subfigure}[t]{0.23\linewidth}
    \centering
    \includegraphics[width=\linewidth]{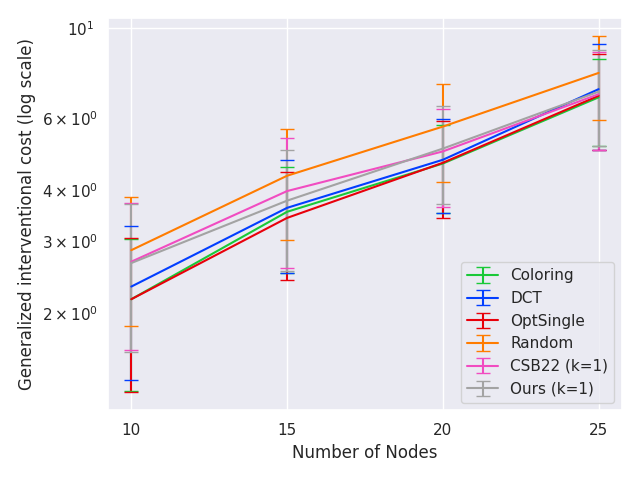}
    \caption{Generalized cost (log scale)}
\end{subfigure}
\begin{subfigure}[t]{0.23\linewidth}
    \centering
    \includegraphics[width=\linewidth]{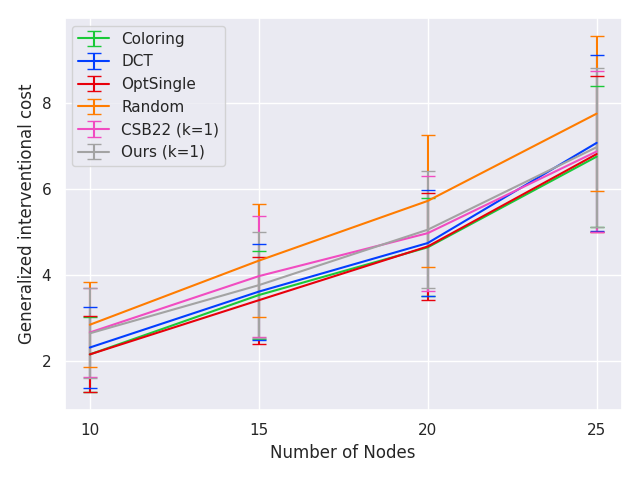}
    \caption{Generalized cost}
\end{subfigure}
\begin{subfigure}[t]{0.23\linewidth}
    \centering
    \includegraphics[width=\linewidth]{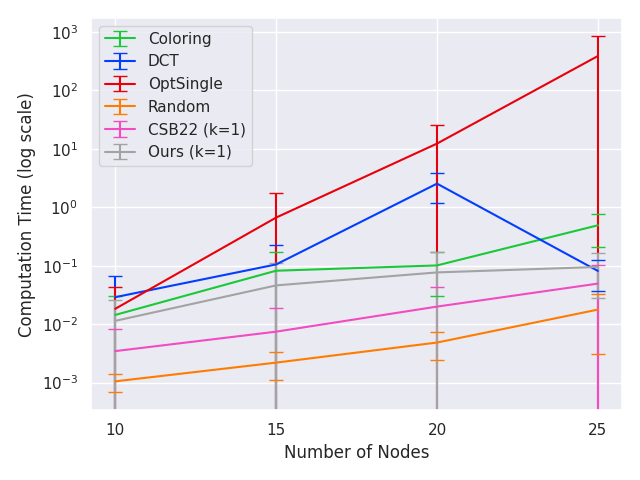}
    \caption{Time taken, in secs (log scale)}
\end{subfigure}
\begin{subfigure}[t]{0.23\linewidth}
    \centering
    \includegraphics[width=\linewidth]{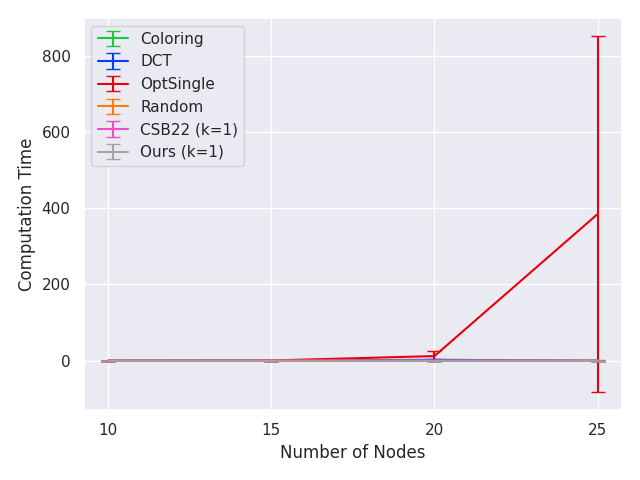}
    \caption{Time taken, in secs}
\end{subfigure}
\caption{Experiment 1, Type 2, $\alpha = 0$, $\beta = 1$}
\label{fig:exp1_type2_alpha0_beta1}
\end{figure}

\begin{figure}[htbp]
\centering
\begin{subfigure}[t]{0.23\linewidth}
    \centering
    \includegraphics[width=\linewidth]{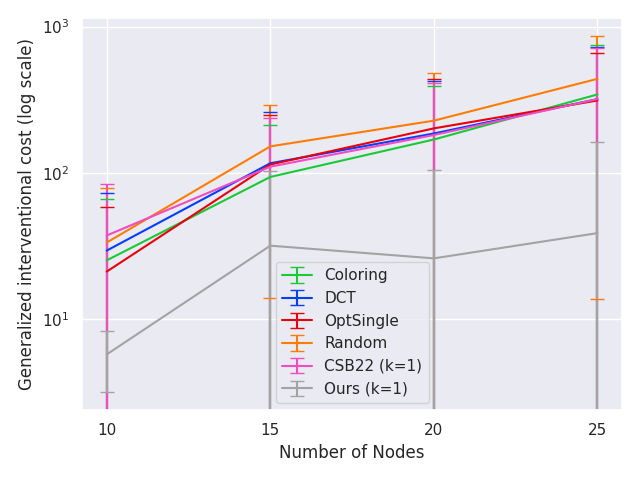}
    \caption{Generalized cost (log scale)}
\end{subfigure}
\begin{subfigure}[t]{0.23\linewidth}
    \centering
    \includegraphics[width=\linewidth]{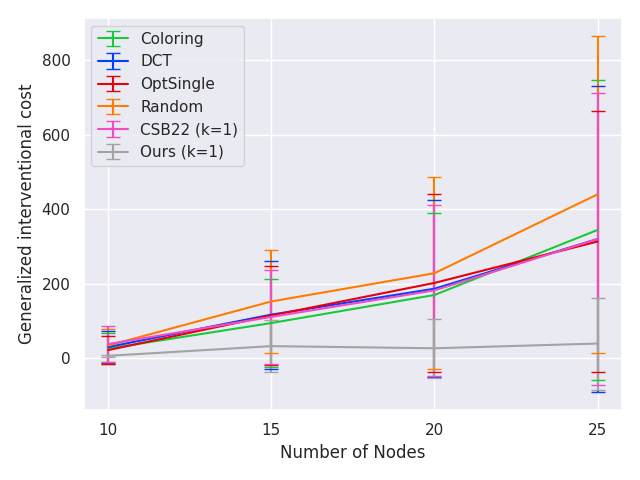}
    \caption{Generalized cost}
\end{subfigure}
\begin{subfigure}[t]{0.23\linewidth}
    \centering
    \includegraphics[width=\linewidth]{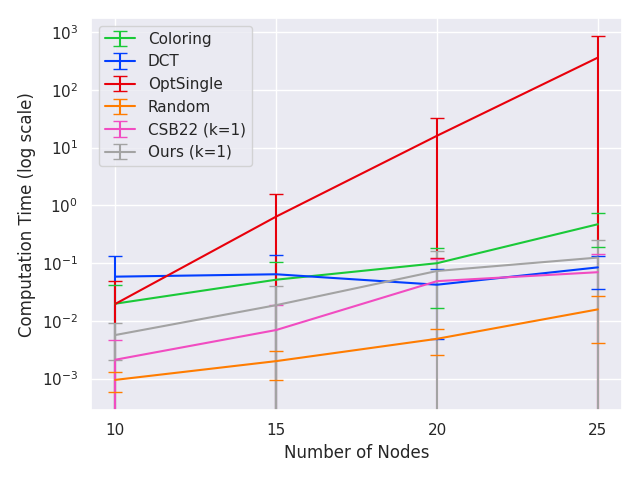}
    \caption{Time taken, in secs (log scale)}
\end{subfigure}
\begin{subfigure}[t]{0.23\linewidth}
    \centering
    \includegraphics[width=\linewidth]{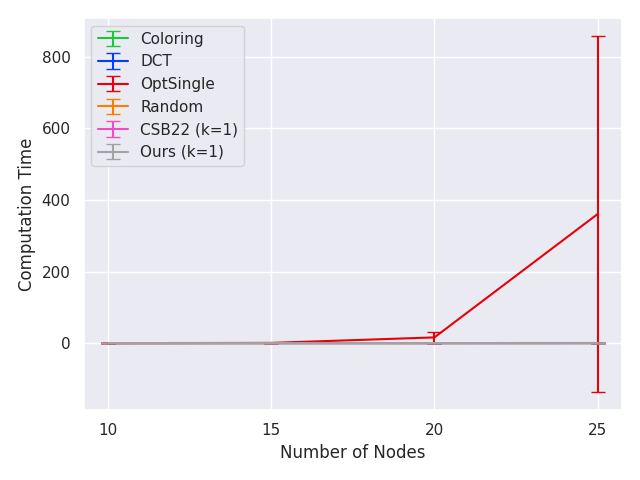}
    \caption{Time taken, in secs}
\end{subfigure}
\caption{Experiment 1, Type 2, $\alpha = 1$, $\beta = 1$}
\label{fig:exp1_type2_alpha1_beta1}
\end{figure}

\begin{figure}[htbp]
\centering
\begin{subfigure}[t]{0.23\linewidth}
    \centering
    \includegraphics[width=\linewidth]{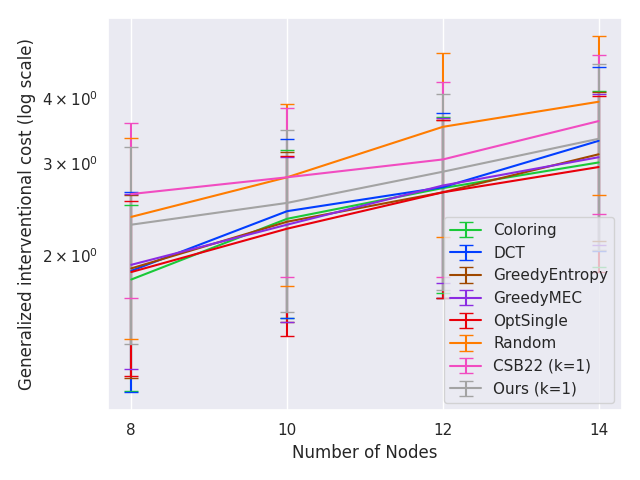}
    \caption{Generalized cost (log scale)}
\end{subfigure}
\begin{subfigure}[t]{0.23\linewidth}
    \centering
    \includegraphics[width=\linewidth]{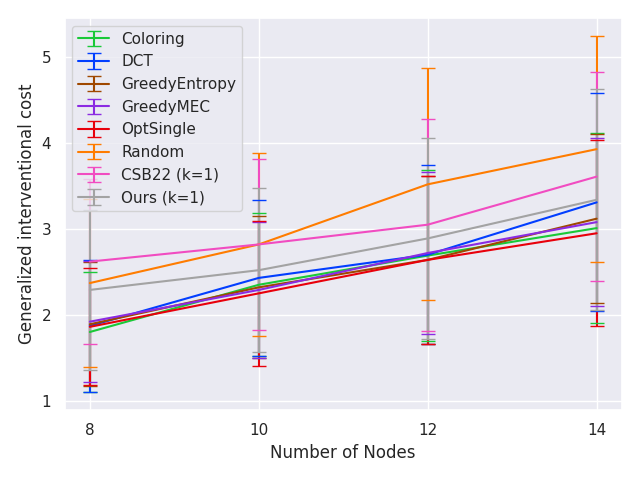}
    \caption{Generalized cost}
\end{subfigure}
\begin{subfigure}[t]{0.23\linewidth}
    \centering
    \includegraphics[width=\linewidth]{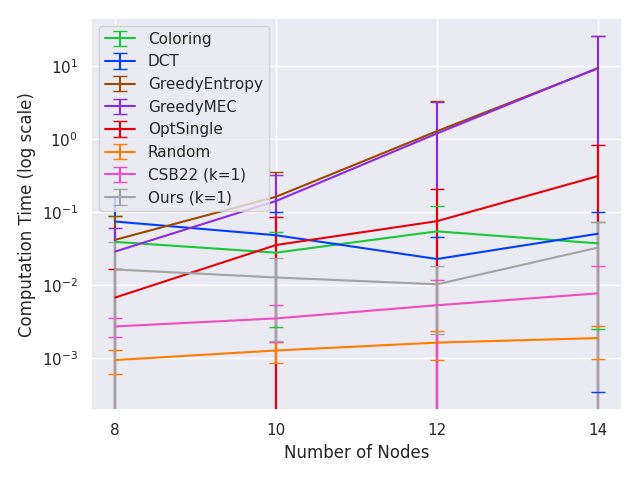}
    \caption{Time taken, in secs (log scale)}
\end{subfigure}
\begin{subfigure}[t]{0.23\linewidth}
    \centering
    \includegraphics[width=\linewidth]{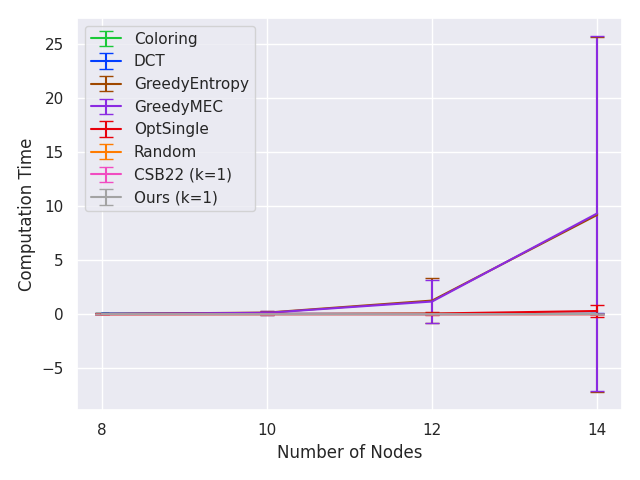}
    \caption{Time taken, in secs}
\end{subfigure}
\caption{Experiment 2, Type 1, $\alpha = 0$, $\beta = 1$}
\label{fig:exp2_type1_alpha0_beta1}
\end{figure}

\begin{figure}[htbp]
\centering
\begin{subfigure}[t]{0.23\linewidth}
    \centering
    \includegraphics[width=\linewidth]{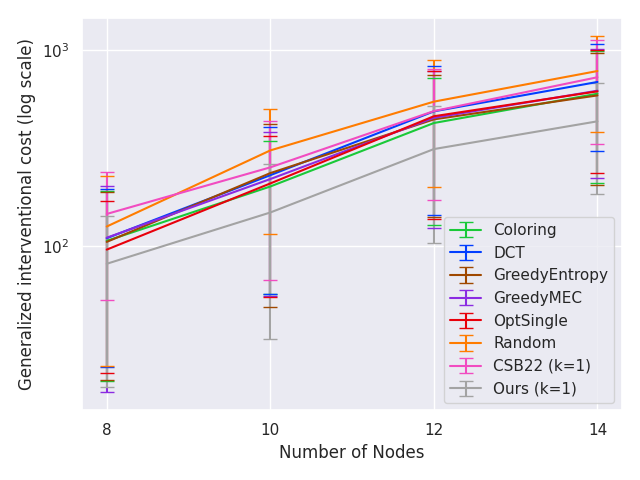}
    \caption{Generalized cost (log scale)}
\end{subfigure}
\begin{subfigure}[t]{0.23\linewidth}
    \centering
    \includegraphics[width=\linewidth]{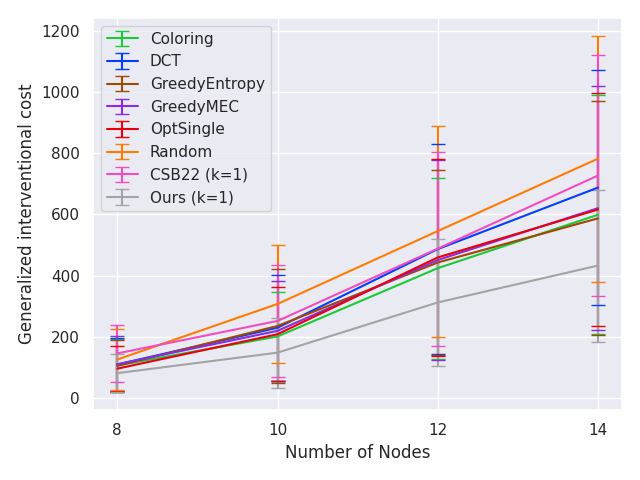}
    \caption{Generalized cost}
\end{subfigure}
\begin{subfigure}[t]{0.23\linewidth}
    \centering
    \includegraphics[width=\linewidth]{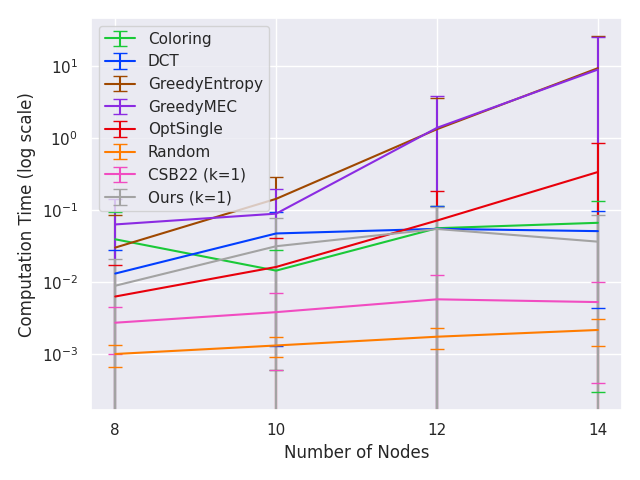}
    \caption{Time taken, in secs (log scale)}
\end{subfigure}
\begin{subfigure}[t]{0.23\linewidth}
    \centering
    \includegraphics[width=\linewidth]{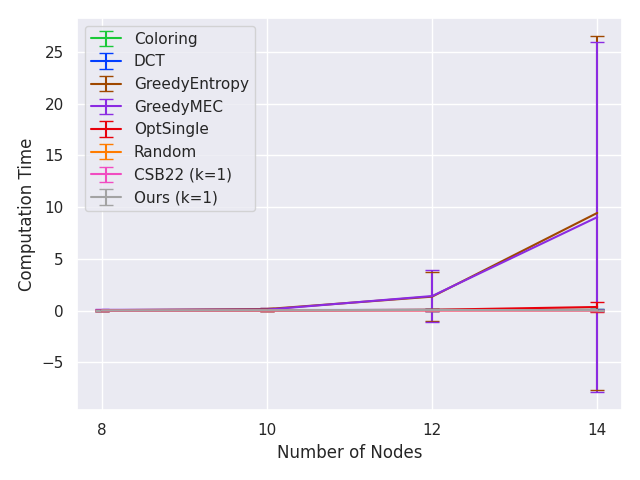}
    \caption{Time taken, in secs}
\end{subfigure}
\caption{Experiment 2, Type 1, $\alpha = 1$, $\beta = 1$}
\label{fig:exp2_type1_alpha1_beta1}
\end{figure}

\begin{figure}[htbp]
\centering
\begin{subfigure}[t]{0.23\linewidth}
    \centering
    \includegraphics[width=\linewidth]{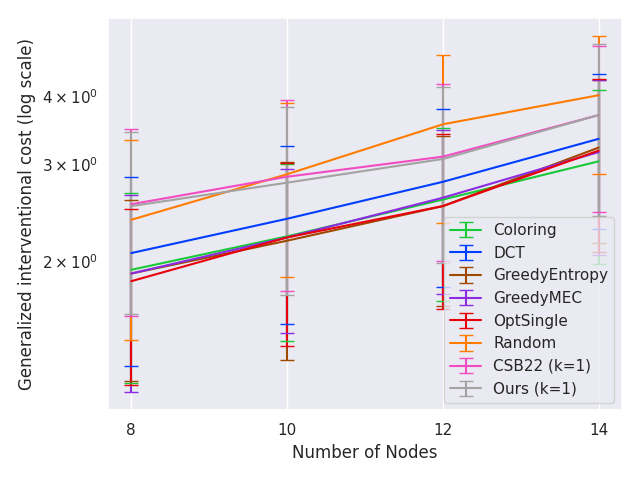}
    \caption{Generalized cost (log scale)}
\end{subfigure}
\begin{subfigure}[t]{0.23\linewidth}
    \centering
    \includegraphics[width=\linewidth]{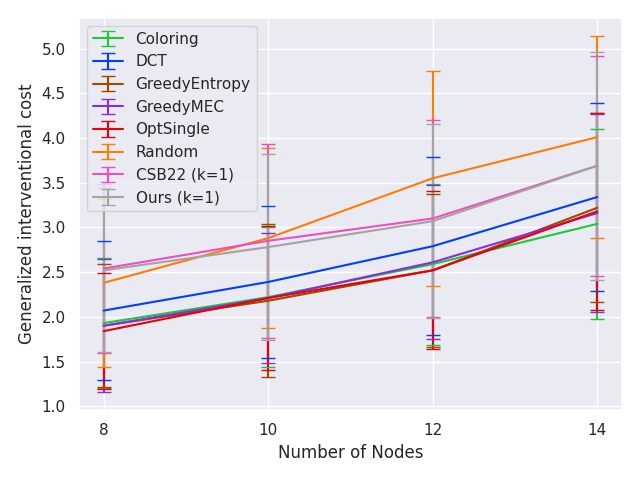}
    \caption{Generalized cost}
\end{subfigure}
\begin{subfigure}[t]{0.23\linewidth}
    \centering
    \includegraphics[width=\linewidth]{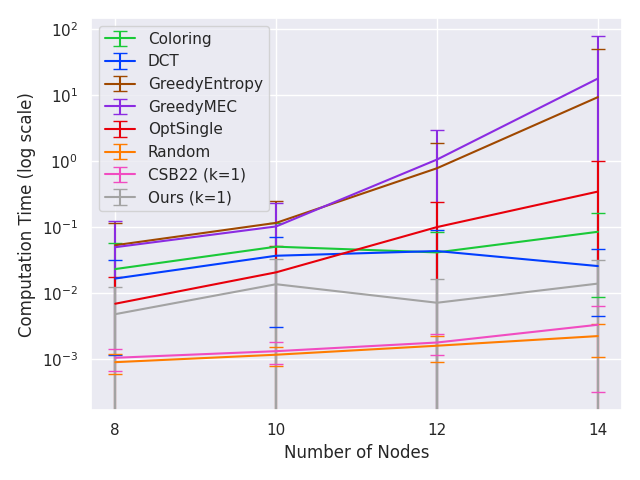}
    \caption{Time taken, in secs (log scale)}
\end{subfigure}
\begin{subfigure}[t]{0.23\linewidth}
    \centering
    \includegraphics[width=\linewidth]{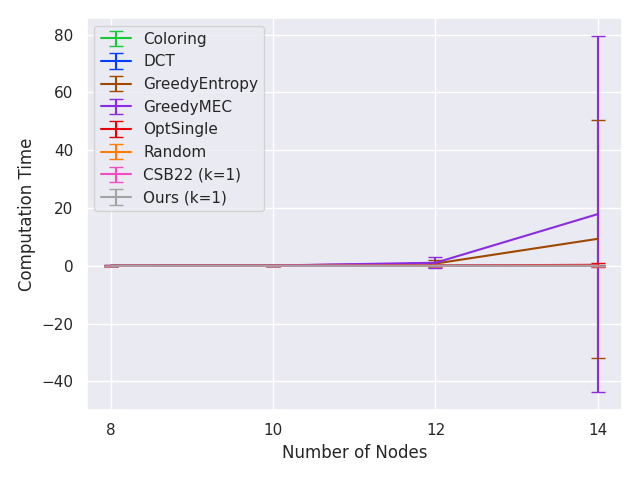}
    \caption{Time taken, in secs}
\end{subfigure}
\caption{Experiment 2, Type 2, $\alpha = 0$, $\beta = 1$}
\label{fig:exp2_type2_alpha0_beta1}
\end{figure}

\begin{figure}[htbp]
\centering
\begin{subfigure}[t]{0.23\linewidth}
    \centering
    \includegraphics[width=\linewidth]{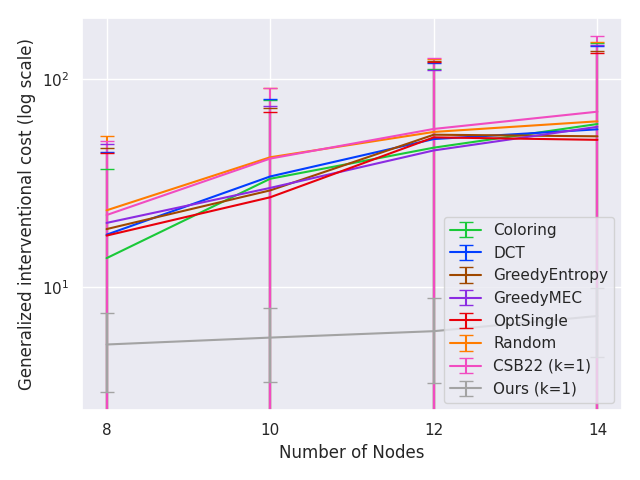}
    \caption{Generalized cost (log scale)}
\end{subfigure}
\begin{subfigure}[t]{0.23\linewidth}
    \centering
    \includegraphics[width=\linewidth]{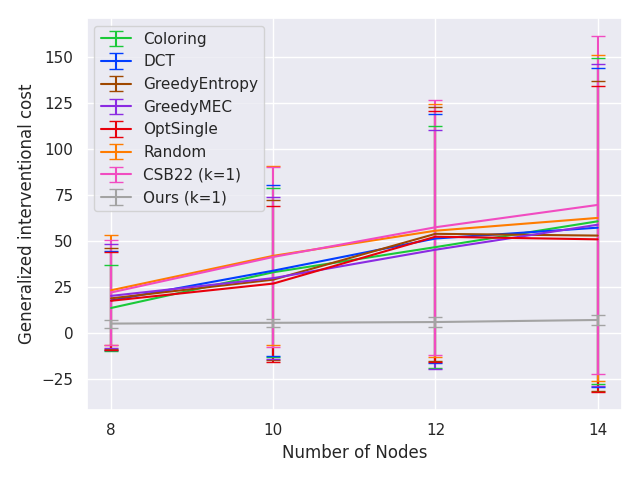}
    \caption{Generalized cost}
\end{subfigure}
\begin{subfigure}[t]{0.23\linewidth}
    \centering
    \includegraphics[width=\linewidth]{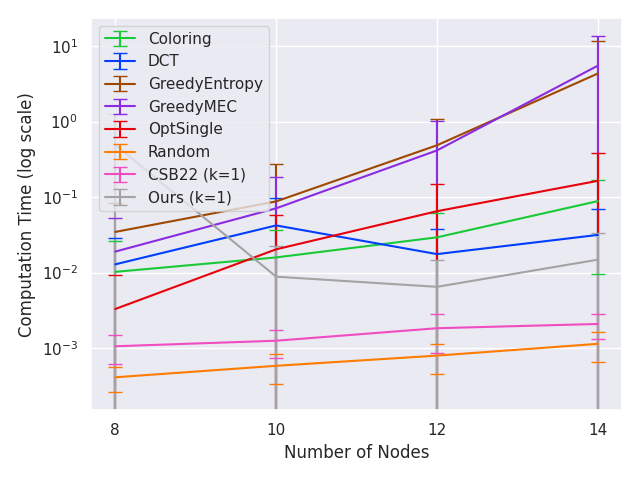}
    \caption{Time taken, in secs (log scale)}
\end{subfigure}
\begin{subfigure}[t]{0.23\linewidth}
    \centering
    \includegraphics[width=\linewidth]{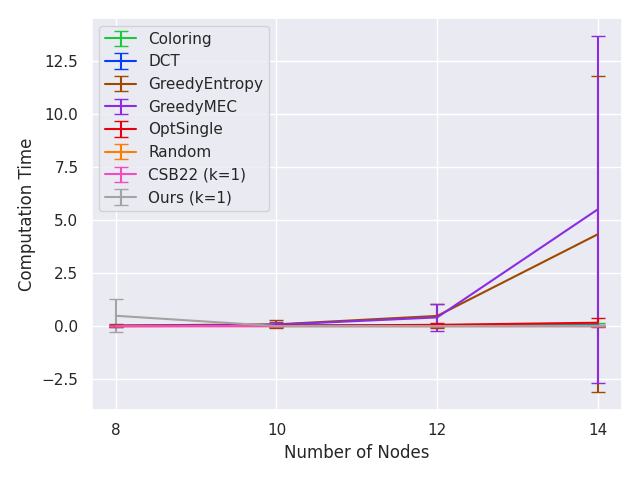}
    \caption{Time taken, in secs}
\end{subfigure}
\caption{Experiment 2, Type 2, $\alpha = 1$, $\beta = 1$}
\label{fig:exp2_type2_alpha1_beta1}
\end{figure}

\begin{figure}[htbp]
\centering
\begin{subfigure}[t]{0.23\linewidth}
    \centering
    \includegraphics[width=\linewidth]{plots/exp3_type1_alpha0_beta1_generalized_cost_log.png}
    \caption{Generalized cost (log scale)}
\end{subfigure}
\begin{subfigure}[t]{0.23\linewidth}
    \centering
    \includegraphics[width=\linewidth]{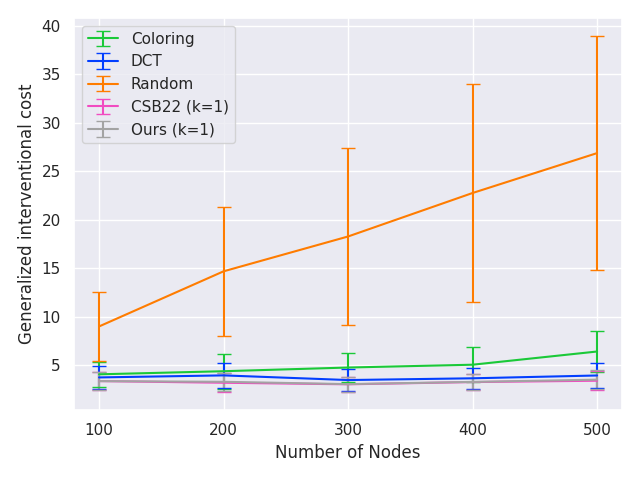}
    \caption{Generalized cost}
\end{subfigure}
\begin{subfigure}[t]{0.23\linewidth}
    \centering
    \includegraphics[width=\linewidth]{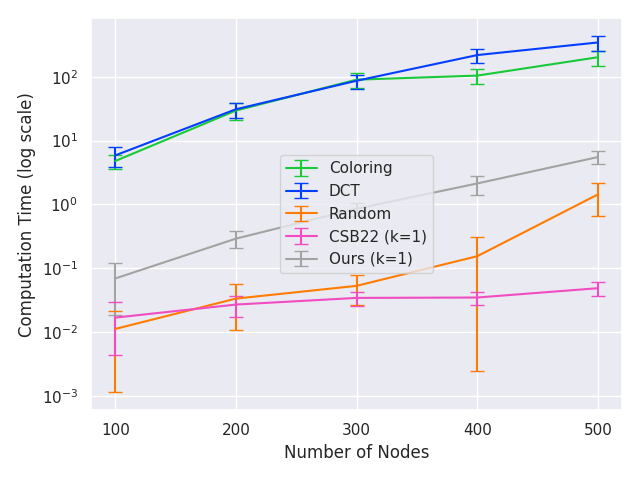}
    \caption{Time taken, in secs (log scale)}
\end{subfigure}
\begin{subfigure}[t]{0.23\linewidth}
    \centering
    \includegraphics[width=\linewidth]{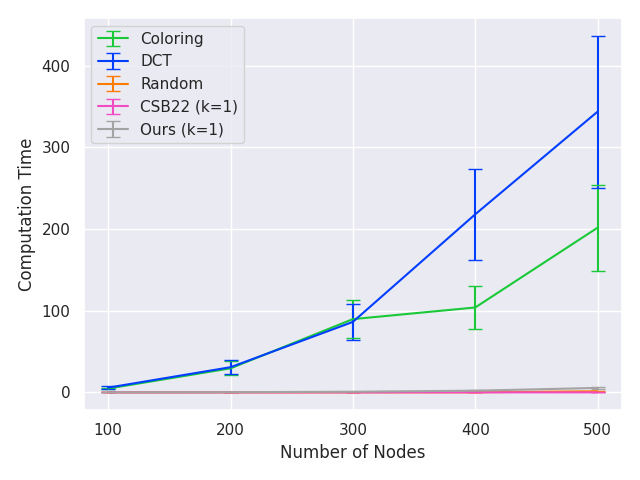}
    \caption{Time taken, in secs}
\end{subfigure}
\caption{Experiment 3, Type 1, $\alpha = 0$, $\beta = 1$}
\label{fig:exp3_type1_alpha0_beta1}
\end{figure}

\begin{figure}[htbp]
\centering
\begin{subfigure}[t]{0.23\linewidth}
    \centering
    \includegraphics[width=\linewidth]{plots/exp3_type1_alpha1_beta1_generalized_cost_log.png}
    \caption{Generalized cost (log scale)}
\end{subfigure}
\begin{subfigure}[t]{0.23\linewidth}
    \centering
    \includegraphics[width=\linewidth]{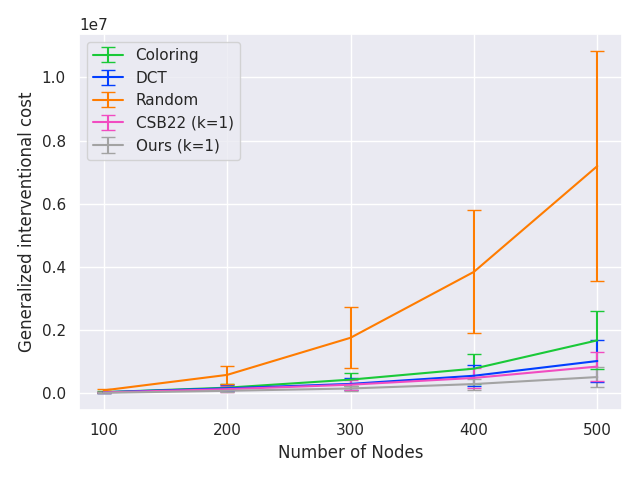}
    \caption{Generalized cost}
\end{subfigure}
\begin{subfigure}[t]{0.23\linewidth}
    \centering
    \includegraphics[width=\linewidth]{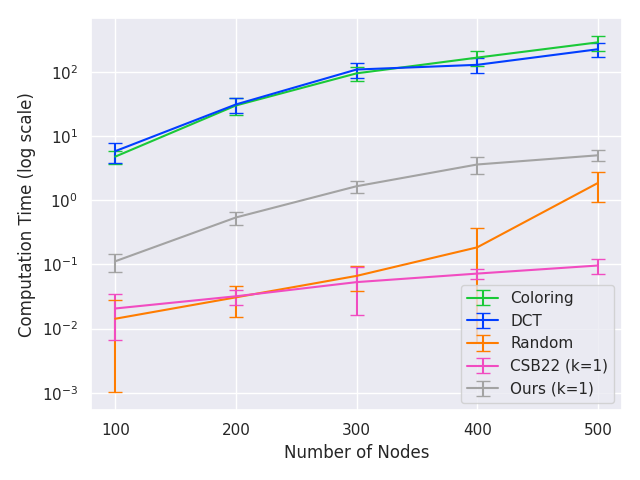}
    \caption{Time taken, in secs (log scale)}
\end{subfigure}
\begin{subfigure}[t]{0.23\linewidth}
    \centering
    \includegraphics[width=\linewidth]{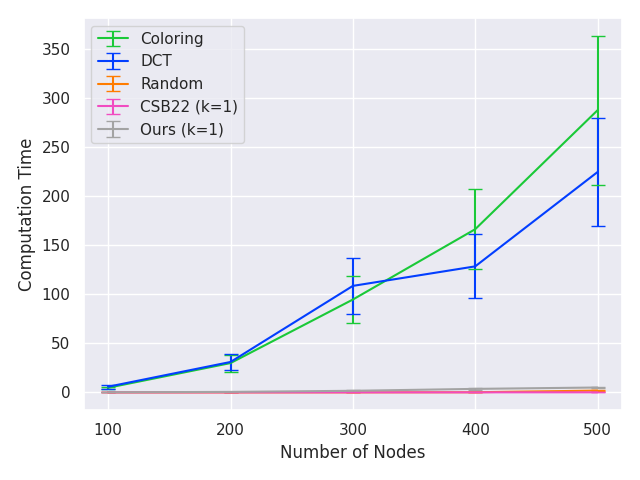}
    \caption{Time taken, in secs}
\end{subfigure}
\caption{Experiment 3, Type 1, $\alpha = 1$, $\beta = 1$}
\label{fig:exp3_type1_alpha1_beta1}
\end{figure}

\begin{figure}[htbp]
\centering
\begin{subfigure}[t]{0.23\linewidth}
    \centering
    \includegraphics[width=\linewidth]{plots/exp3_type2_alpha0_beta1_generalized_cost_log.png}
    \caption{Generalized cost (log scale)}
\end{subfigure}
\begin{subfigure}[t]{0.23\linewidth}
    \centering
    \includegraphics[width=\linewidth]{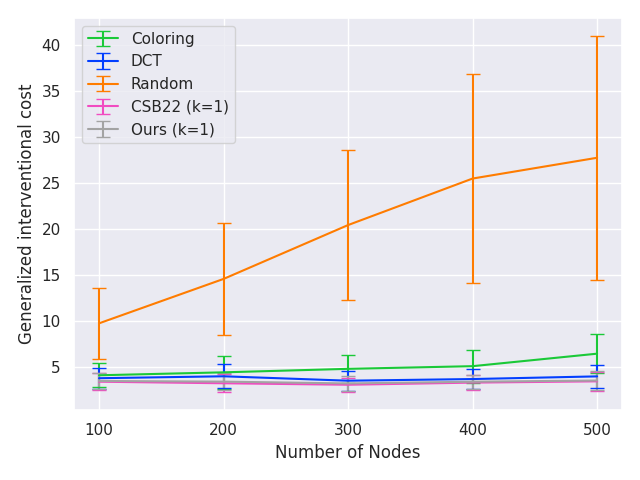}
    \caption{Generalized cost}
\end{subfigure}
\begin{subfigure}[t]{0.23\linewidth}
    \centering
    \includegraphics[width=\linewidth]{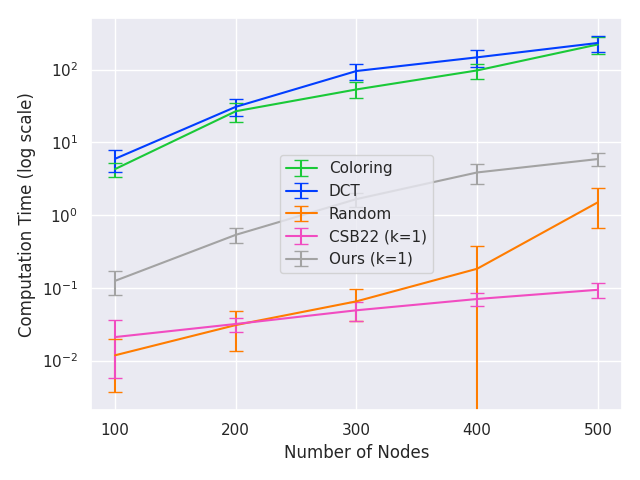}
    \caption{Time taken, in secs (log scale)}
\end{subfigure}
\begin{subfigure}[t]{0.23\linewidth}
    \centering
    \includegraphics[width=\linewidth]{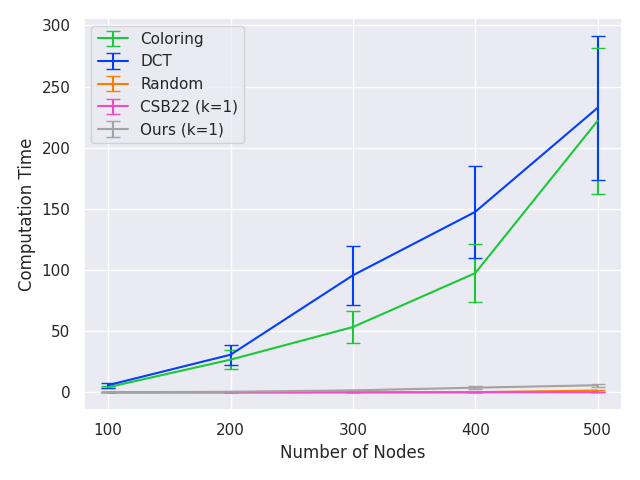}
    \caption{Time taken, in secs}
\end{subfigure}
\caption{Experiment 3, Type 2, $\alpha = 0$, $\beta = 1$}
\label{fig:exp3_type2_alpha0_beta1}
\end{figure}

\begin{figure}[htbp]
\centering
\begin{subfigure}[t]{0.23\linewidth}
    \centering
    \includegraphics[width=\linewidth]{plots/exp3_type2_alpha1_beta1_generalized_cost_log.png}
    \caption{Generalized cost (log scale)}
\end{subfigure}
\begin{subfigure}[t]{0.23\linewidth}
    \centering
    \includegraphics[width=\linewidth]{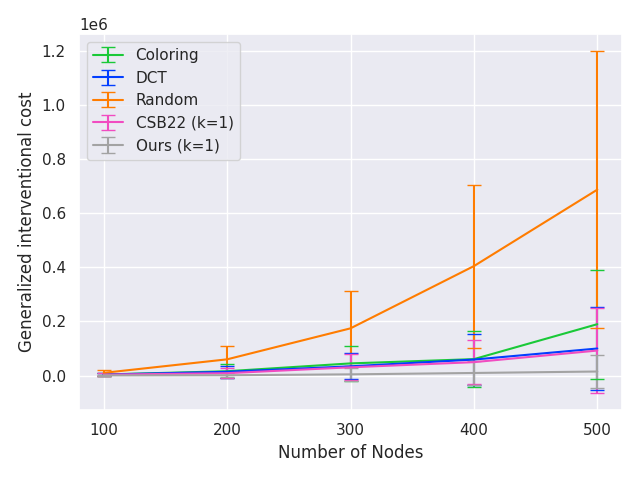}
    \caption{Generalized cost}
\end{subfigure}
\begin{subfigure}[t]{0.23\linewidth}
    \centering
    \includegraphics[width=\linewidth]{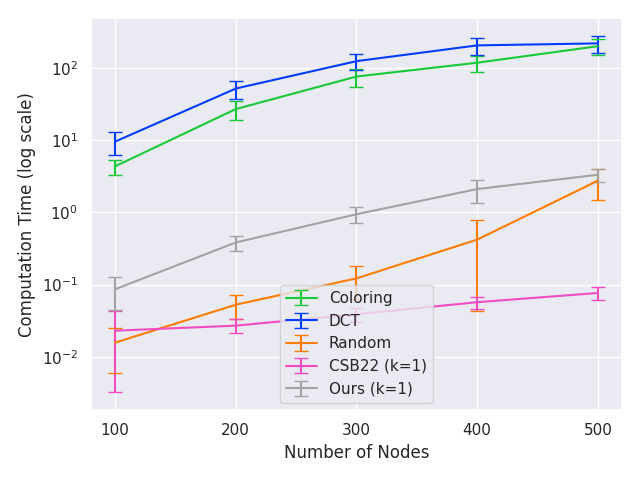}
    \caption{Time taken, in secs (log scale)}
\end{subfigure}
\begin{subfigure}[t]{0.23\linewidth}
    \centering
    \includegraphics[width=\linewidth]{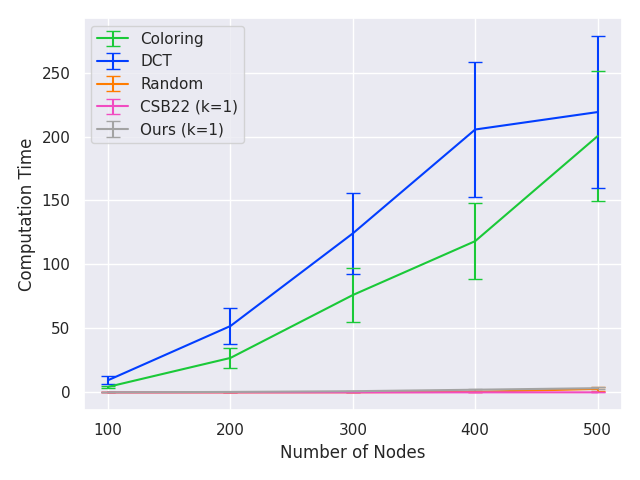}
    \caption{Time taken, in secs}
\end{subfigure}
\caption{Experiment 3, Type 2, $\alpha = 1$, $\beta = 1$}
\label{fig:exp3_type2_alpha1_beta1}
\end{figure}

\begin{figure}[htbp]
\centering
\begin{subfigure}[t]{0.23\linewidth}
    \centering
    \includegraphics[width=\linewidth]{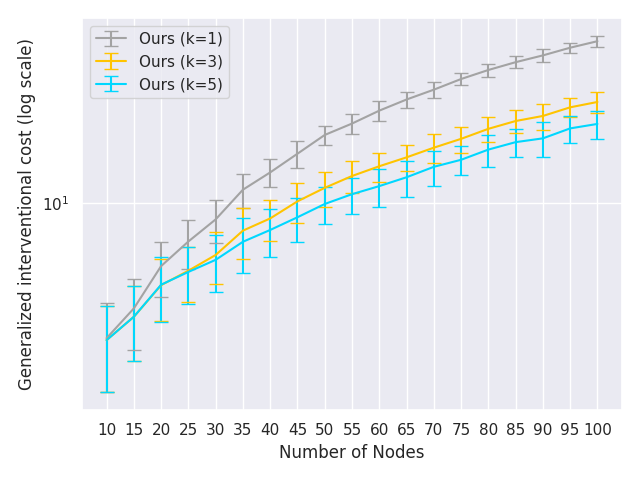}
    \caption{Generalized cost (log scale)}
\end{subfigure}
\begin{subfigure}[t]{0.23\linewidth}
    \centering
    \includegraphics[width=\linewidth]{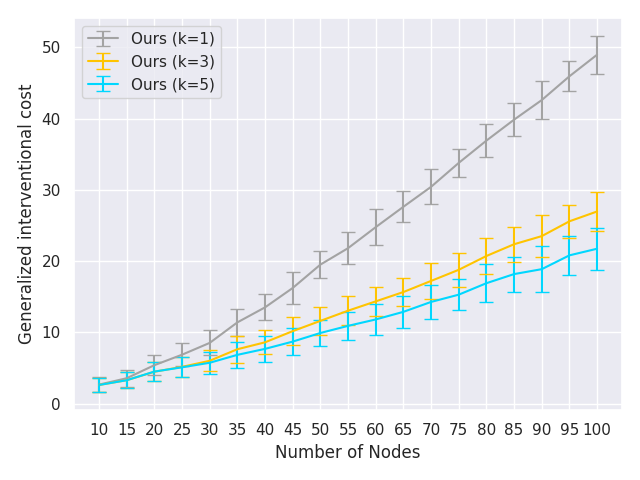}
    \caption{Generalized cost}
\end{subfigure}
\begin{subfigure}[t]{0.23\linewidth}
    \centering
    \includegraphics[width=\linewidth]{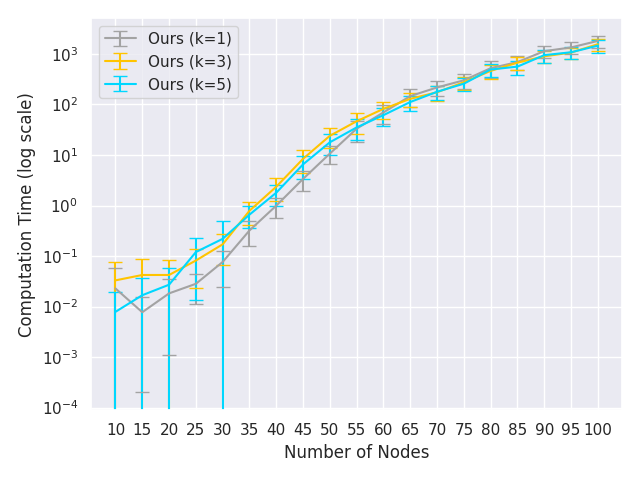}
    \caption{Time taken, in secs (log scale)}
\end{subfigure}
\begin{subfigure}[t]{0.23\linewidth}
    \centering
    \includegraphics[width=\linewidth]{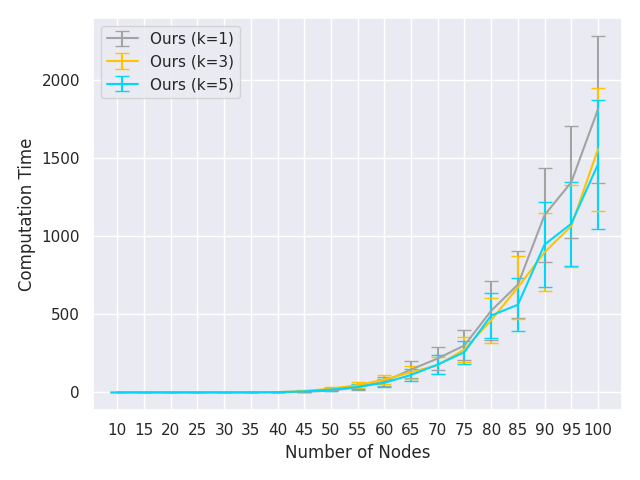}
    \caption{Time taken, in secs}
\end{subfigure}
\caption{Experiment 4, Type 1, $\alpha = 0$, $\beta = 1$}
\label{fig:exp4_type1_alpha0_beta1}
\end{figure}

\begin{figure}[htbp]
\centering
\begin{subfigure}[t]{0.23\linewidth}
    \centering
    \includegraphics[width=\linewidth]{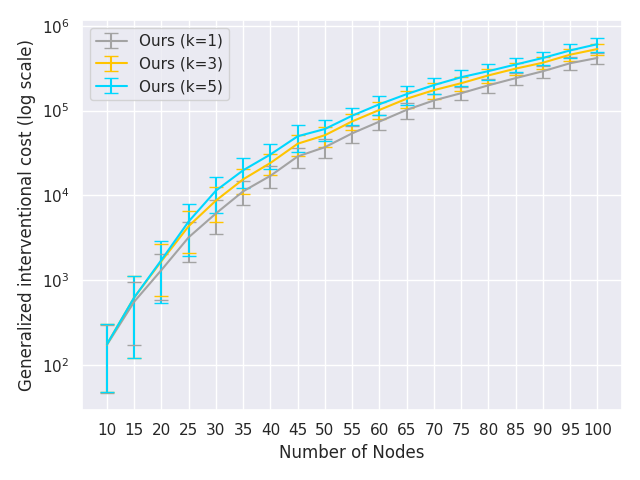}
    \caption{Generalized cost (log scale)}
\end{subfigure}
\begin{subfigure}[t]{0.23\linewidth}
    \centering
    \includegraphics[width=\linewidth]{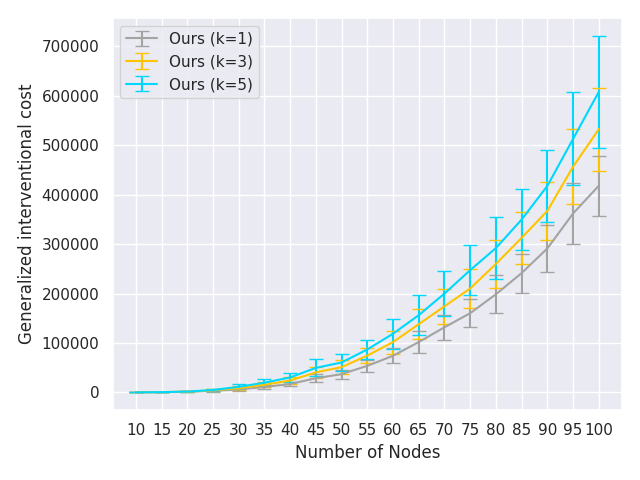}
    \caption{Generalized cost}
\end{subfigure}
\begin{subfigure}[t]{0.23\linewidth}
    \centering
    \includegraphics[width=\linewidth]{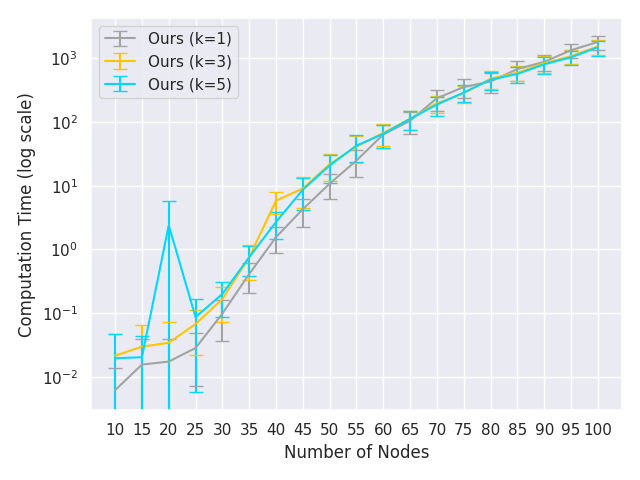}
    \caption{Time taken, in secs (log scale)}
\end{subfigure}
\begin{subfigure}[t]{0.23\linewidth}
    \centering
    \includegraphics[width=\linewidth]{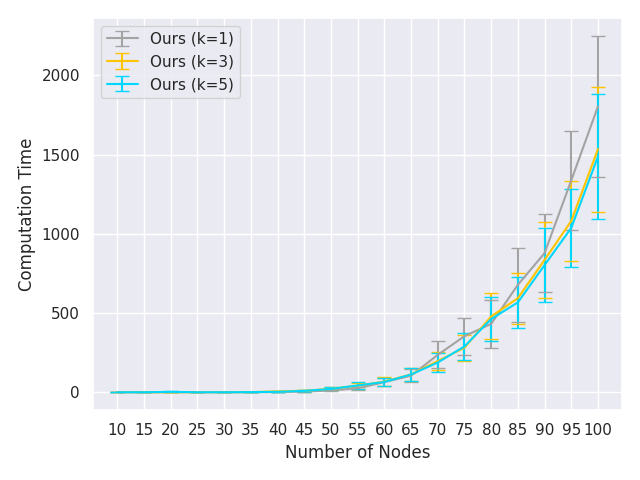}
    \caption{Time taken, in secs}
\end{subfigure}
\caption{Experiment 4, Type 1, $\alpha = 1$, $\beta = 1$}
\label{fig:exp4_type1_alpha1_beta1}
\end{figure}

\begin{figure}[htbp]
\centering
\begin{subfigure}[t]{0.23\linewidth}
    \centering
    \includegraphics[width=\linewidth]{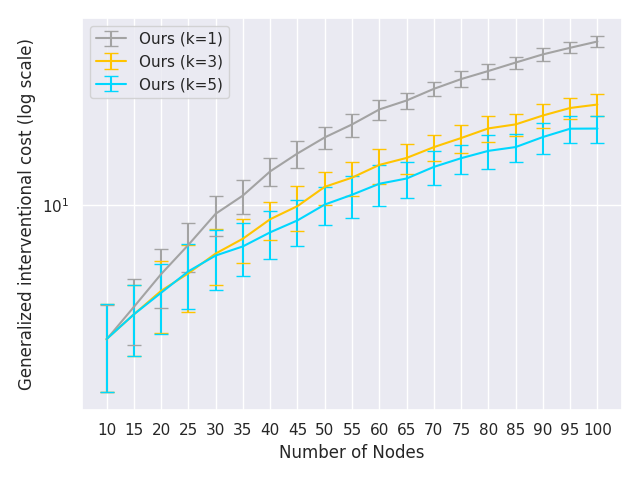}
    \caption{Generalized cost (log scale)}
\end{subfigure}
\begin{subfigure}[t]{0.23\linewidth}
    \centering
    \includegraphics[width=\linewidth]{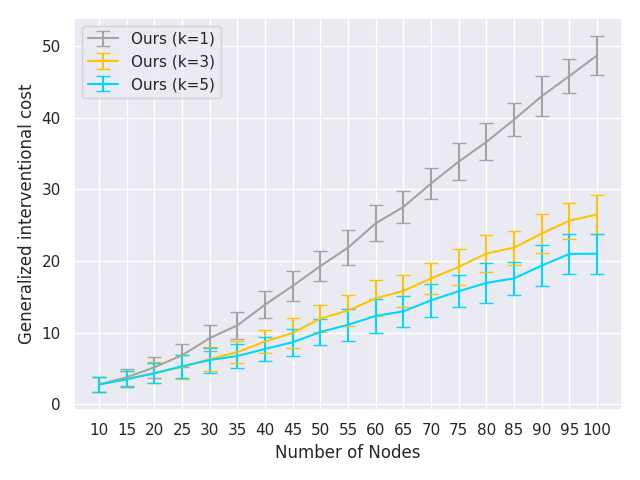}
    \caption{Generalized cost}
\end{subfigure}
\begin{subfigure}[t]{0.23\linewidth}
    \centering
    \includegraphics[width=\linewidth]{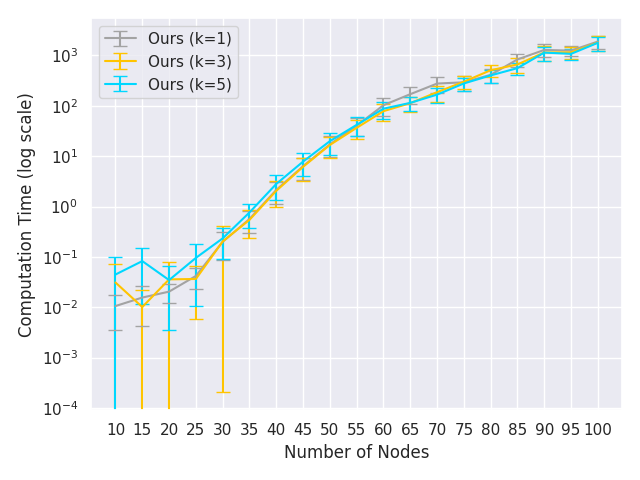}
    \caption{Time taken, in secs (log scale)}
\end{subfigure}
\begin{subfigure}[t]{0.23\linewidth}
    \centering
    \includegraphics[width=\linewidth]{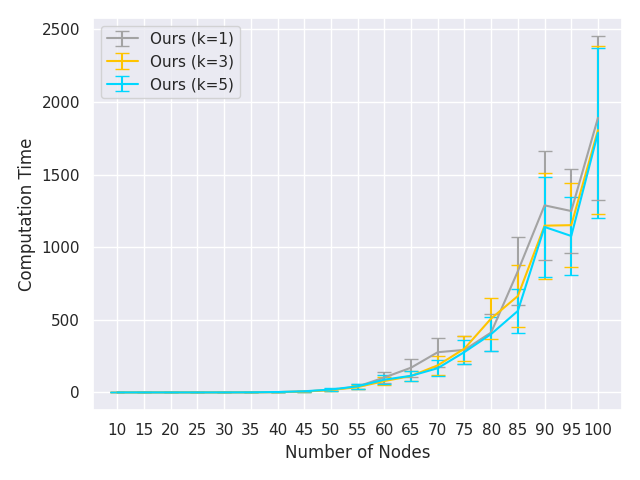}
    \caption{Time taken, in secs}
\end{subfigure}
\caption{Experiment 4, Type 2, $\alpha = 0$, $\beta = 1$}
\label{fig:exp4_type2_alpha0_beta1}
\end{figure}

\begin{figure}[htbp]
\centering
\begin{subfigure}[t]{0.23\linewidth}
    \centering
    \includegraphics[width=\linewidth]{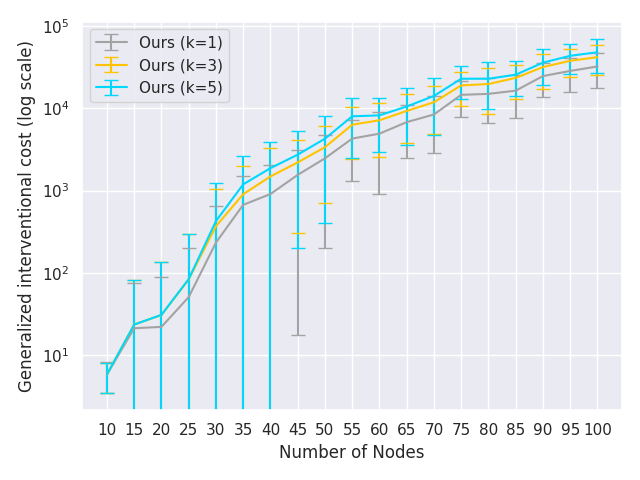}
    \caption{Generalized cost (log scale)}
\end{subfigure}
\begin{subfigure}[t]{0.23\linewidth}
    \centering
    \includegraphics[width=\linewidth]{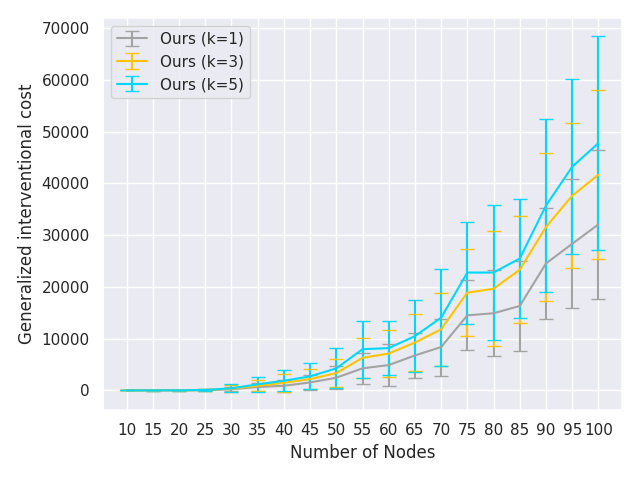}
    \caption{Generalized cost}
\end{subfigure}
\begin{subfigure}[t]{0.23\linewidth}
    \centering
    \includegraphics[width=\linewidth]{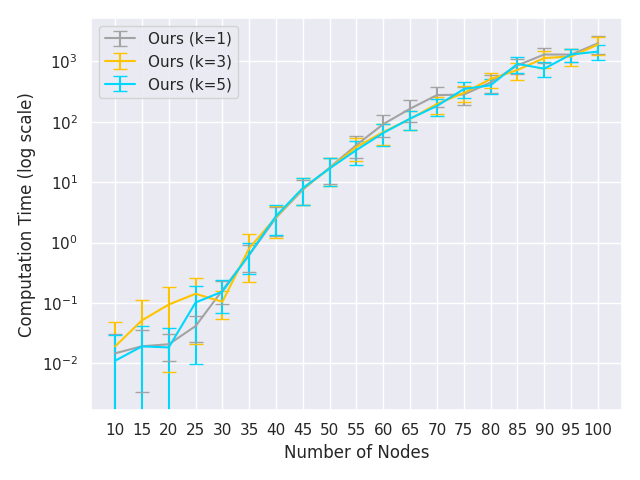}
    \caption{Time taken, in secs (log scale)}
\end{subfigure}
\begin{subfigure}[t]{0.23\linewidth}
    \centering
    \includegraphics[width=\linewidth]{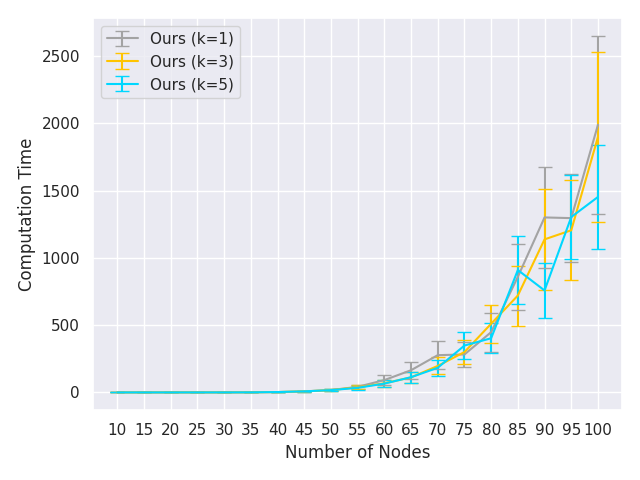}
    \caption{Time taken, in secs}
\end{subfigure}
\caption{Experiment 4, Type 2, $\alpha = 1$, $\beta = 1$}
\label{fig:exp4_type2_alpha1_beta1}
\end{figure}

\begin{figure}[htbp]
\centering
\begin{subfigure}[t]{0.23\linewidth}
    \centering
    \includegraphics[width=\linewidth]{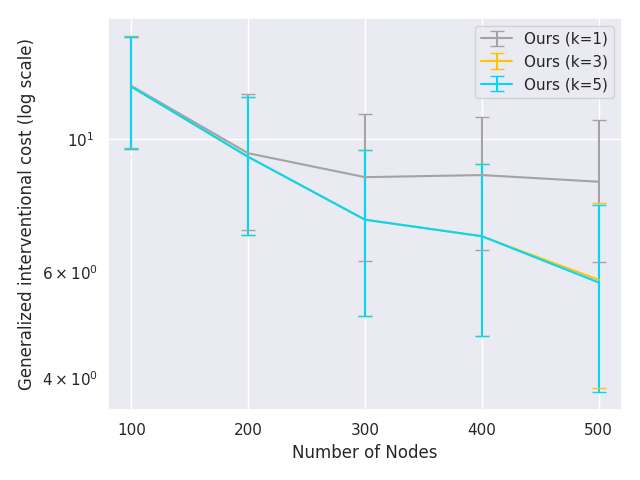}
    \caption{Generalized cost (log scale)}
\end{subfigure}
\begin{subfigure}[t]{0.23\linewidth}
    \centering
    \includegraphics[width=\linewidth]{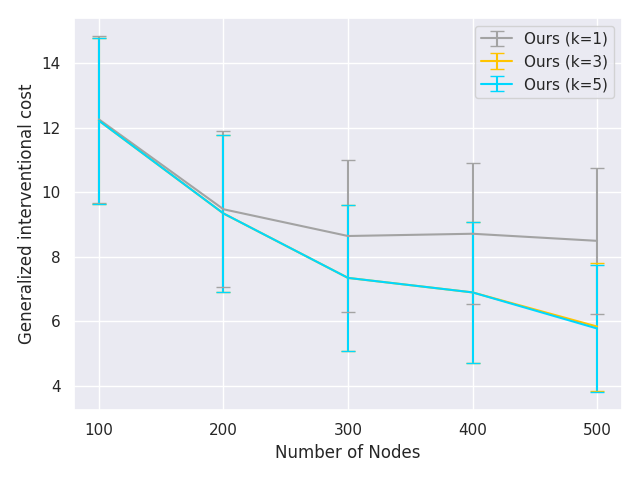}
    \caption{Generalized cost}
\end{subfigure}
\begin{subfigure}[t]{0.23\linewidth}
    \centering
    \includegraphics[width=\linewidth]{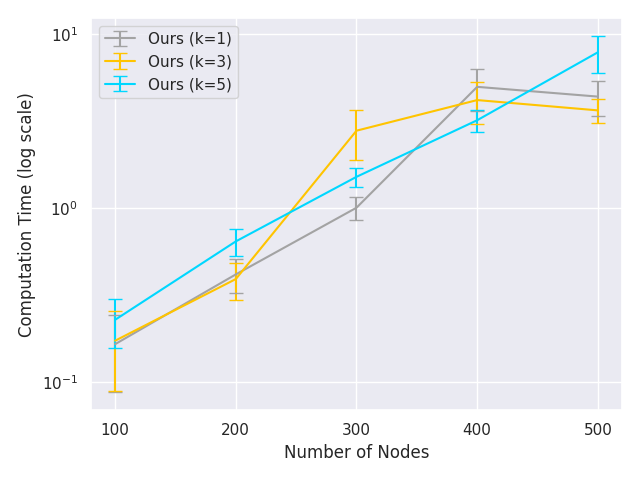}
    \caption{Time taken, in secs (log scale)}
\end{subfigure}
\begin{subfigure}[t]{0.23\linewidth}
    \centering
    \includegraphics[width=\linewidth]{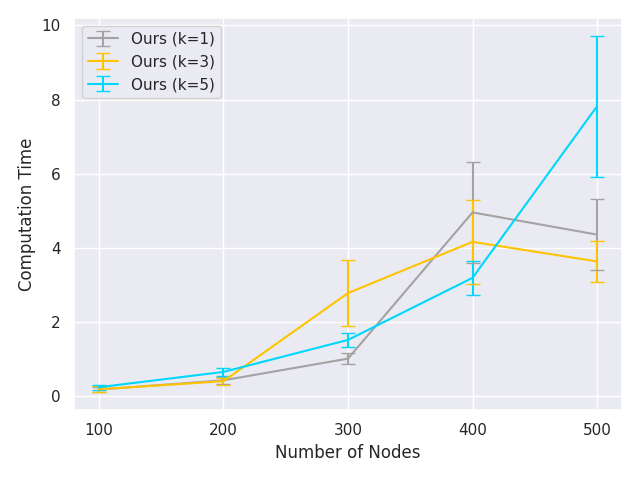}
    \caption{Time taken, in secs}
\end{subfigure}
\caption{Experiment 5, Type 1, $\alpha = 0$, $\beta = 1$}
\label{fig:exp5_type1_alpha0_beta1}
\end{figure}

\begin{figure}[htbp]
\centering
\begin{subfigure}[t]{0.23\linewidth}
    \centering
    \includegraphics[width=\linewidth]{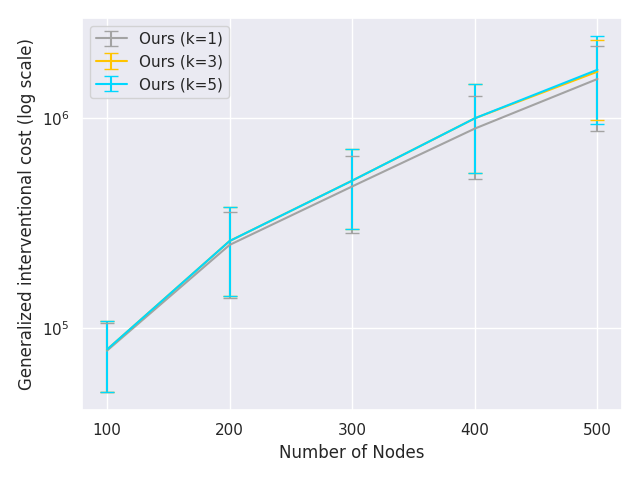}
    \caption{Generalized cost (log scale)}
\end{subfigure}
\begin{subfigure}[t]{0.23\linewidth}
    \centering
    \includegraphics[width=\linewidth]{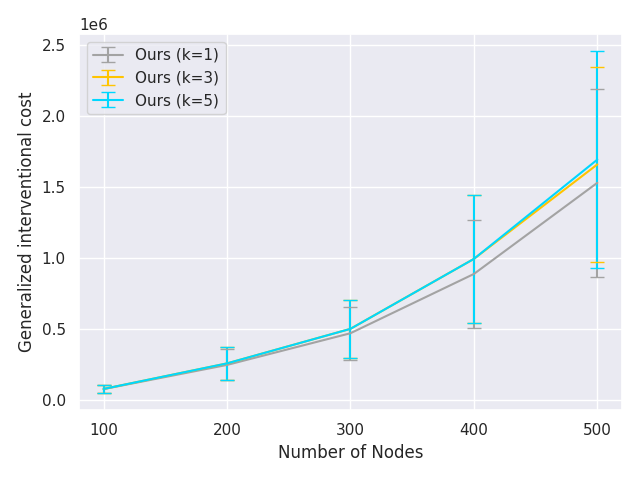}
    \caption{Generalized cost}
\end{subfigure}
\begin{subfigure}[t]{0.23\linewidth}
    \centering
    \includegraphics[width=\linewidth]{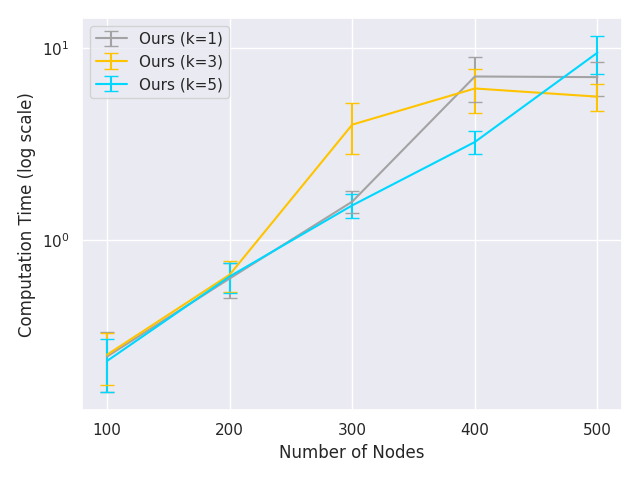}
    \caption{Time taken, in secs (log scale)}
\end{subfigure}
\begin{subfigure}[t]{0.23\linewidth}
    \centering
    \includegraphics[width=\linewidth]{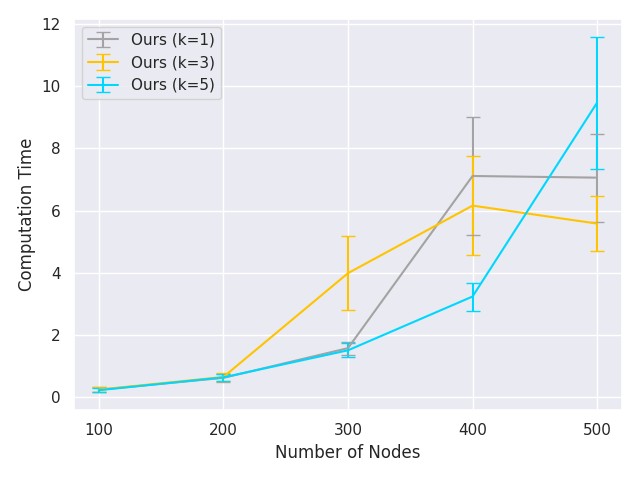}
    \caption{Time taken, in secs}
\end{subfigure}
\caption{Experiment 5, Type 1, $\alpha = 1$, $\beta = 1$}
\label{fig:exp5_type1_alpha1_beta1}
\end{figure}

\begin{figure}[htbp]
\centering
\begin{subfigure}[t]{0.23\linewidth}
    \centering
    \includegraphics[width=\linewidth]{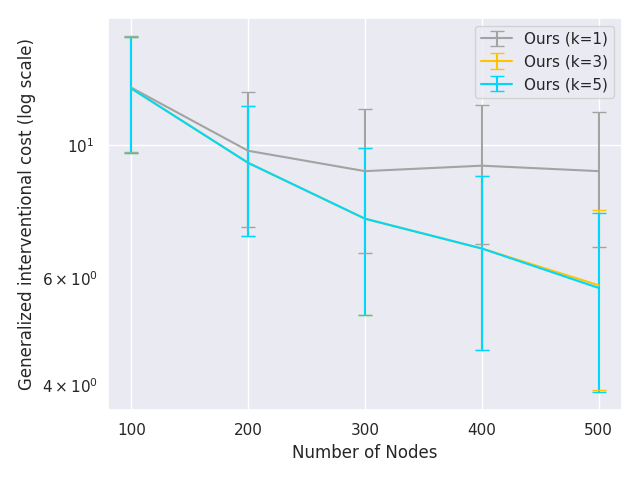}
    \caption{Generalized cost (log scale)}
\end{subfigure}
\begin{subfigure}[t]{0.23\linewidth}
    \centering
    \includegraphics[width=\linewidth]{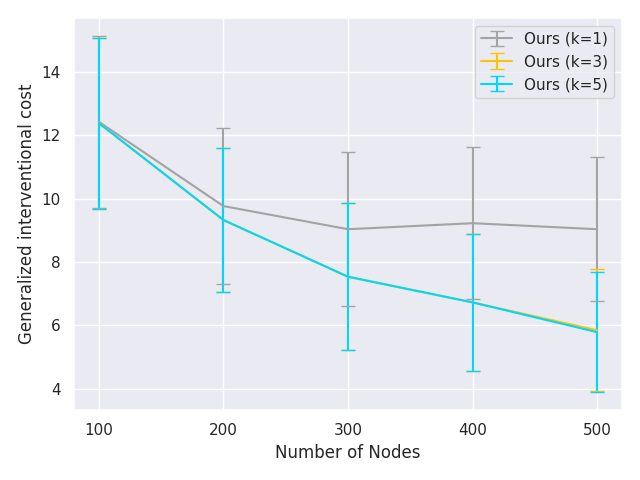}
    \caption{Generalized cost}
\end{subfigure}
\begin{subfigure}[t]{0.23\linewidth}
    \centering
    \includegraphics[width=\linewidth]{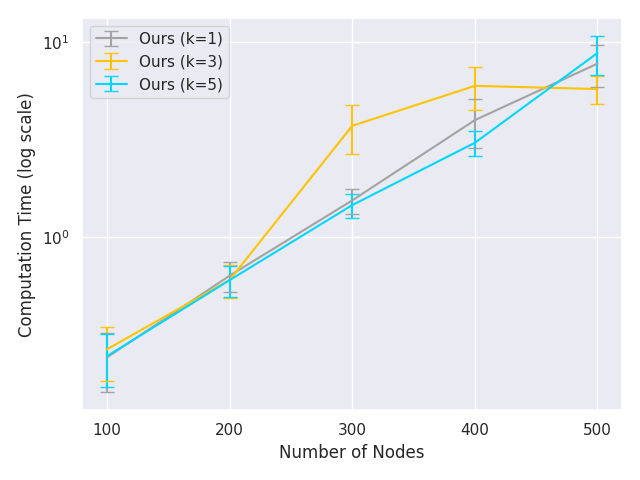}
    \caption{Time taken, in secs (log scale)}
\end{subfigure}
\begin{subfigure}[t]{0.23\linewidth}
    \centering
    \includegraphics[width=\linewidth]{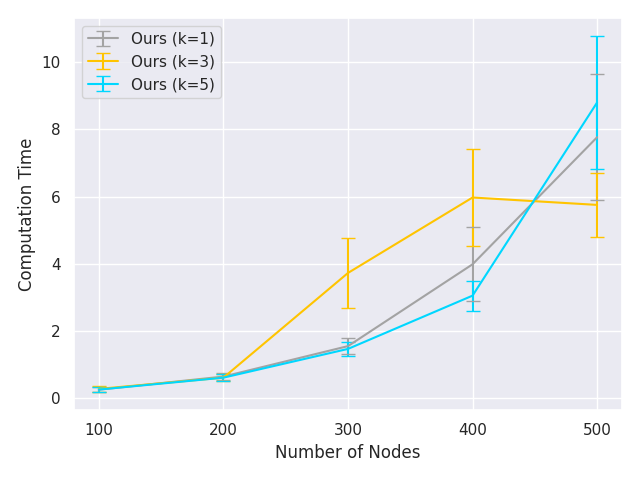}
    \caption{Time taken, in secs}
\end{subfigure}
\caption{Experiment 5, Type 2, $\alpha = 0$, $\beta = 1$}
\label{fig:exp5_type2_alpha0_beta1}
\end{figure}

\begin{figure}[htbp]
\centering
\begin{subfigure}[t]{0.23\linewidth}
    \centering
    \includegraphics[width=\linewidth]{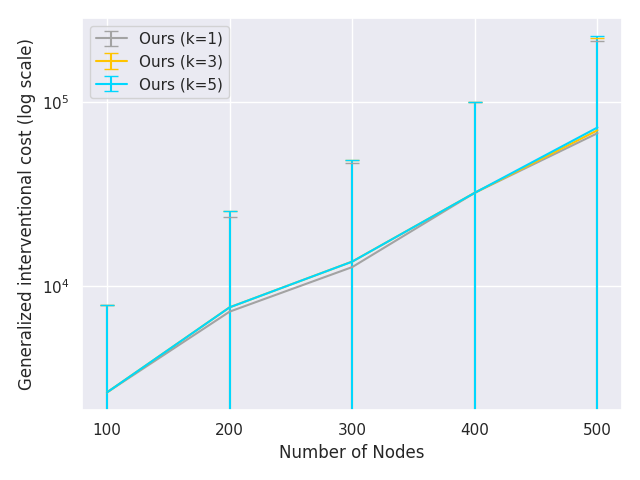}
    \caption{Generalized cost (log scale)}
\end{subfigure}
\begin{subfigure}[t]{0.23\linewidth}
    \centering
    \includegraphics[width=\linewidth]{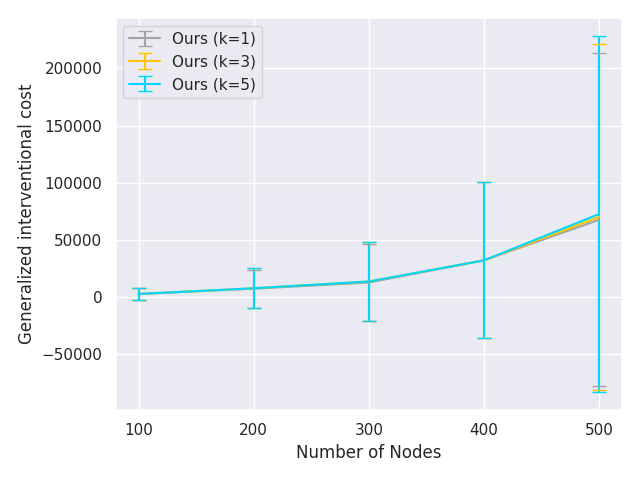}
    \caption{Generalized cost}
\end{subfigure}
\begin{subfigure}[t]{0.23\linewidth}
    \centering
    \includegraphics[width=\linewidth]{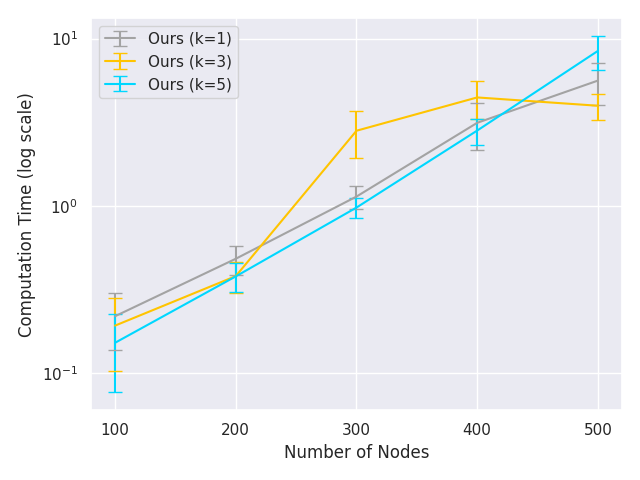}
    \caption{Time taken, in secs (log scale)}
\end{subfigure}
\begin{subfigure}[t]{0.23\linewidth}
    \centering
    \includegraphics[width=\linewidth]{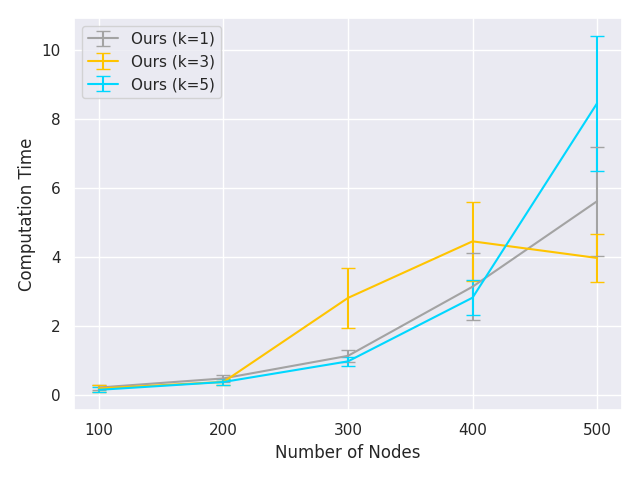}
    \caption{Time taken, in secs}
\end{subfigure}
\caption{Experiment 5, Type 2, $\alpha = 1$, $\beta = 1$}
\label{fig:exp5_type2_alpha1_beta1}
\end{figure}

\end{document}